\documentclass{article}

\usepackage{PRIMEarxiv}

\makeatletter
\renewcommand{\normalsize}{\@setfontsize\normalsize{11.5pt}{13.2pt}}
\normalsize
\makeatother

\usepackage[utf8]{inputenc} 
\usepackage[T1]{fontenc}    
\usepackage{url}            
\usepackage{booktabs}       
\RequirePackage{amsthm,amsmath,amsfonts,amssymb}     
\usepackage{nicefrac}       
\usepackage{microtype}      
\usepackage{lipsum}
\usepackage{fancyhdr}       

\usepackage{physics,mathtools,mathrsfs}

\RequirePackage[authoryear]{natbib}

\RequirePackage[colorlinks,citecolor=blue,urlcolor=blue]{hyperref}
\RequirePackage{graphicx}
\usepackage{subfigure}
\usepackage{caption}
\usepackage{subcaption}  
\usepackage{makecell}
\usepackage{multirow}

\theoremstyle{plain}

\newtheorem{theorem}{Theorem}[section]
\newtheorem{lemma}[theorem]{Lemma}
\newtheorem{corollary}[theorem]{Corollary}
\newtheorem{proposition}[theorem]{Proposition}
\newtheorem{example}{Example}

\theoremstyle{definition}
\newtheorem{assumption}{Assumption} 

\theoremstyle{remark}
\newtheorem{definition}[theorem]{Definition}

\newtheorem{remark}[theorem]{Remark}

\newcommand{\bbC}{\mathbb{C}}

\newcommand{\bbP}{\mathbb{P}}

\newcommand{\bbR}{\mathbb{R}}

\newcommand{\calC}{\mathcal{C}}

\newcommand{\calH}{\mathcal{H}}

\newcommand{\calN}{\mathcal{N}}

\newcommand{\calX}{\mathcal{X}}

\newcommand{\reg}{\varphi_\lambda}
\newcommand{\rem}{\psi_\lambda}

\newcommand{\R}{\mathbb{R}}

\newcommand{\mr}{\mathrm}

\providecommand{\ang}[1]{\left\langle{#1}\right\rangle}

\newcommand*{\gf}{\mathtt{GF}}
\newcommand*{\krr}{\mathtt{KRR}}

\newcommand{\green}{\color{green}}

\usepackage{xcolor}
\definecolor{myblue}{HTML}{1f77b4}
\definecolor{mygreen}{HTML}{2ca02c}
\definecolor{myorange}{HTML}{ff7f0e}

\usepackage{comment}
\newcommand{\dhmargin}[2]{{\color{cyan}#1}\marginpar{\color{red}\raggedright\footnotesize [DH]:#2}}
\newcommand{\dhcom}[2]{{\color{cyan}#1}{\color{red}[#2]} }
\newcommand{\dhnote}[1]{{\color{red}[#1]} } 

\title{
    Learning Curves and Benign Overfitting of Spectral Algorithms in Large Dimensions
}

\author{
  Weihao Lu\thanks{Department of Statistics and Data Science, National University of Singapore. Email: \texttt{weihaolu@nus.edu.sg}} 
  \And
  Qian Lin\thanks{Department of Statistics and Data Science, Tsinghua University.  Email: \texttt{qianlin@tsinghua.edu.cn}} 
  \And
  Yingcun Xia\thanks{Department of Statistics and Data Science, National University of Singapore. Email: \texttt{yingcun.xia@nus.edu.sg}}
  \And
  Dongming Huang\thanks{Department of Statistics and Data Science, National University of Singapore. (Corresponding author). \\ Email: \texttt{huang.dongming@nus.edu.sg}} 
}

\begin{document}
\maketitle

\begin{abstract}
    Existing large-dimensional theory for spectral algorithms resolves either the optimally tuned point or the interpolation limit, but leaves the under-regularized regime unexplored. We study the learning curve and benign overfitting of spectral algorithms in the large-dimensional setting where the sample size and dimension are of comparable order, i.e., $n \asymp d^{\gamma}$ for some $\gamma>0$. We first consider inner-product kernels on the sphere $\mathbb{S}^{d-1}$ and establish a sharp asymptotic characterization of the excess risk across the full regularization path under various source conditions $s \geq 0$, where $s$ measures the relative smoothness of the regression function. Our results reveal that the learning curve is not simply U-shaped but instead consists of three distinct regimes: over-regularized, under-regularized, and interpolation regimes. This characterization allows us to fully capture the benign overfitting phenomenon, demonstrating that benign overfitting arises consistently across both the under-regularized and interpolation regimes whenever $s$ is positive but no larger than a critical threshold. We further show that, in the sufficiently regularized regime, the kernel learning curve is recovered by an associated sequence model. Finally, we extend the learning-curve analysis to large-dimensional KRR for a class of kernels on general domains in $\mathbb{R}^d$ whose low-degree eigenspaces satisfy spectral-scaling and hyper-contractivity conditions.
\end{abstract}

\noindent {\bf Keywords: }
Spectral algorithms, learning curves, high dimension, benign overfitting.


\section{Introduction}\label{sec_intro}

Nonparametric regression studies the estimation of an unknown function $f_{\star}:\mathbb{R}^d\to\mathbb{R}$ from $n$ i.i.d. samples $\{(x_i,y_i)\}_{i=1}^n$, where $y_i = f_{\star}(x_i) + \epsilon_i$ and $\epsilon_i$ represents noise. 
The performance of an estimator $\hat{f}$ is typically measured by the \textit{excess risk} 
\[
\|\hat f-f_{\star}\|_{L^2}^2:=\mathbb{E}_x[(\hat f(x)-f_{\star}(x))^2].
\]
Among the most widely studied estimators for this problem are spectral algorithms, including kernel ridge regression (KRR) and kernel gradient flow (KGF) \citep{Caponnetto2006OptimalRF,caponnetto2007optimal}. 
These methods are attractive both for their theoretical tractability and for their strong empirical performance across a range of kernel regression problems, and they are typically analyzed under source conditions of the form $f_{\star}\in[\mathcal H]^s$, where $[\mathcal H]^s$ denotes the interpolation space associated with the reproducing kernel Hilbert space $\mathcal H$ and $s>0$ measures the smoothness of the regression function; see Section~\ref{subsec:interpolation_space}. 

In the fixed-dimensional regime, the performance of spectral algorithms is well understood. Under polynomial eigenvalue decay of the kernel operator, properly regularized spectral methods such as KRR and KGF can achieve minimax-optimal excess-risk rates for $0<s\leq 2$ \citep{Caponnetto2006OptimalRF, caponnetto2007optimal, raskutti2014early, Yao2007OnES, Lin_Optimal_2020, zhang2023optimality}. 
For smoother signals, however, this picture changes: KRR exhibits the \emph{saturation effect} and fails to attain the optimal rate \citep{bauer2007_RegularizationAlgorithms, Gerfo_spectral_2008, dicker2017kernel, li2022saturation}, whereas KGF can still remain rate-optimal. At the interpolation limit, several works also show that kernel interpolation can be inconsistent \citep{pmlr-v99-rakhlin19a,beaglehole2022kernel,buchholz2022_KernelInterpolation,li2023kernel}. 
These results provide the fixed-dimensional benchmark against which the large-dimensional regime should be compared.

Recent work has renewed interest in spectral algorithms in the large-dimensional regime, where the input dimension $d$ grows simultaneously with the sample size $n$ as $n\asymp d^{\gamma}$ for some $\gamma>0$, and the input is distributed on or near the sphere $\mathbb{S}^{d-1}$. 
This regime is also closely connected to the modern benign-overfitting literature and to the lazy-training view of wide neural networks, which links gradient-based training to kernel methods \citep{bartlett2020benign,Jacot_NTK_2018,Arora_on_2019,Hu_Regularization_2021}. 
In this regime, the behavior of spectral algorithms can differ substantially from the fixed-dimensional benchmark. 
Existing results have characterized the spectra of empirical inner-product kernels, identified the polynomial-approximation barrier for KRR and KGF, and established sharp asymptotic properties such as minimax optimality, saturation, and Pinsker-type risk bounds \citep{Liang_Just_2019,ghorbani2021linearized,mei2022generalization,lu2023optimal,zhang2024optimal,lu2024saturation,lu2024pinsker}. 
At the interpolation limit, several recent works further showed that kernel interpolation can still generalize and can even achieve minimax-optimal rates under suitable relationships between the smoothness parameter $s$ and the dimensional scaling exponent $\gamma$. 
Such a phenomenon is known as \emph{benign overfitting} in large-dimensional kernel interpolation, which challenges the traditional view of the bias-variance trade-off. 

Despite these advances, existing theory for large-dimensional spectral algorithms is confined to two extremes: 
\begin{enumerate}
    \item the optimally tuned case, where the regularization parameter $\lambda$ is chosen to minimize the excess risk; 
    \item the perfectly interpolated case, where $\lambda=0$ and the spectral algorithm degenerates to kernel interpolation.
\end{enumerate}
Much less is understood about the intermediate { under-regularized} regime, where $\lambda$ is positive but too small to yield optimal performance. 
This intermediate regime is particularly important from a practical perspective, because computational limitations often yield solutions in the { under-regularized} regime rather than exact interpolation. 
For example, \cite{zhang2016understanding} trained neural networks using gradient descent, and stopped training at a large but finite time, which corresponds to a small but nonzero $\lambda$. 
The benign overfitting phenomenon they observed likely occurs in the { under-regularized} regime rather than in the exact interpolation limit ($\lambda=0$).
Therefore, a fundamental question remains unresolved: 
\begin{itemize}
    \item []
   \textit{ Is benign overfitting a phenomenon that is unique to exact interpolation (i.e., $\lambda=0$), or does it persist throughout the { under-regularized} regime? }
\end{itemize}


A natural way to address this question is through the \emph{learning curve} of a spectral algorithm, namely the dependence of its excess risk on the regularization level $\lambda$. 
Studying the learning curve does more than identify the optimal choice of $\lambda$; it reveals the entire transition from underfitting to overfitting and then to interpolation, and therefore provides the right framework for locating where benign overfitting begins. 
While learning curves of spectral algorithms are relatively well understood in fixed-dimensional settings, the corresponding large-dimensional theory remains fragmentary and is available mainly for special cases of KRR and interpolation.
Determining these large-dimensional learning curves is therefore a necessary step toward a complete understanding of the generalization behavior of spectral algorithms.

In this paper, we study the learning curves of large-dimensional spectral algorithms in the spherical inner-product-kernel setting, and we extend the analysis for KRR to a structured class of general-domain kernels.
Prior results treat either optimal tuning or exact interpolation, whereas we determine the full regularization path and its consequences for benign overfitting. 
This fills a major gap in the large-dimensional theory of spectral algorithms.

Our main contributions are as follows.
\begin{itemize}
    \item For inner-product kernels on $\mathbb{S}^{d-1}$, we derive a sharp asymptotic characterization of the excess risk of large-dimensional spectral algorithms for regression functions in $[\mathcal H]^s$; see Theorem~\ref{theorem_learn_curve}. 
    As $\lambda$ decreases, the learning curve exhibits three distinct regimes, namely {over-regularized}, {under-regularized}, and interpolation.
    

    \item We then identify the benign-overfitting region for large-dimensional spectral algorithms; see Theorem~\ref{coroll_benign_overfit}. 
    In particular, benign overfitting is not confined to interpolation; instead, it persists throughout the under-regularized regime and at interpolation if and only if
    \[
    0<s\leq \Gamma(\gamma),
    \]
    where $\Gamma(\gamma)$ is defined in \eqref{eqn_threshold_s}.

    \item 

    We further show that, in the sufficiently regularized regime $\lambda\gg n^{-1}$, the learning curve of kernel regression is recovered by an associated sequence model; see Proposition~\ref{prop_learning_curve_of_sequence}. This comparison isolates the mechanism behind the kernel learning curve in a simpler and more tractable setting.

    \item We show that the spherical KRR learning-curve result extends to a structured class of general-domain kernels satisfying the spectral-scaling and hyper-contractivity assumptions; see Theorem~\ref{theorem_learn_curve_general}.

\end{itemize}

The remainder of the paper is organized as follows. Section~\ref{sec_settings} introduces the kernel model, interpolation spaces, and spectral algorithms used throughout the paper. Section~\ref{sec_main_results} states the main learning-curve and benign-overfitting results for inner-product kernels on the sphere. 
Section~\ref{sec_general} extends the analysis to KRR on general domains, and Section~\ref{sec_experiments} reports numerical experiments.

\subsection{ Literature review}
Recent research has renewed its focus on spectral algorithms in large-dimensional settings, where the sample size $n$ scales with the data dimension $d$ as $n \asymp d^{\gamma}$ for some $\gamma > 0$, and the input distributions are defined on (or near) the sphere  $\mathbb{S}^{d-1}$ (e.g., the standard normal distribution).
For instance, \cite{Karoui_spectrum_2010} investigated the spectral properties of empirical inner-product kernels when the inputs are drawn from normal distributions with $\gamma = 1$. 
A series of studies \citep{Liang_Just_2019, liang2020multiple, Ghorbani_When_2021, ghorbani2021linearized, mei2021learning, mei2022generalization, misiakiewicz_learning_2021, aerni2023strong, barzilai2023generalization, Ghosh_three_2021} further analyzed the phenomena of the polynomial approximation barrier for KRR and KGF in the regime $\gamma>0$ and $s=0$. 
However, when \( s > 0 \), these results are not precise enough to provide an exact convergence rate of the excess risk.

Other works \citep{lu2023optimal, zhang2024optimal, lu2024saturation} addressed the minimax rate-optimality and saturation effect of spectral algorithms, while \cite{lu2024pinsker} established the Pinsker bound (the precise asymptotic minimax risk bound) for kernel regression problems. 
Nevertheless, these results do not apply to the under‑regularized or interpolation regimes.
In addition, some studies have focused on kernel interpolation algorithms to explore the phenomenon of ``benign overfitting'' in large-dimensional settings. For example,  \cite{Liang_Just_2019} demonstrated that kernel interpolation can generalize, and further research \citep{liang2020multiple, aerni2023strong, barzilai2023generalization, zhang2024phase} determined the convergence rate of the excess risk of kernel interpolation and showed that kernel interpolation is minimax rate-optimal under certain relationships between $s$ and $\gamma$.
It is worth noting that these results either provide only upper bounds on the excess risk, or are difficult to extend to the over‑regularized and under‑regularized regimes, as well as to more general spectral algorithms.

In the fixed-dimensional regime, several studies \citep{Bordelon_Spectrum_2020, Cui2021GeneralizationER, jin2021learning, li2023asymptotic, cheng2024comprehensive} demonstrated an exact bias-variance trade-off, yielding U-shaped learning curves for the excess risk of KRR. 
Moreover, \cite{li2024generalization} emphasized the importance of studying the learning curves of KGF and related spectral algorithms, particularly because neural networks are typically trained via gradient descent. By developing an analytic functional framework, \cite{li2024generalization} derived U-shaped excess risk curves for fixed-dimensional analytic spectral algorithms.
Unfortunately, their method depends on integral operator concentration inequalities that demand a lower bound on $\lambda$; in large dimensions, these inequalities are valid only in the under‑regularized regime, which prevents us from extending their approach to obtain complete learning curves.

Studies on the learning curves of large-dimensional spectral algorithms remain relatively scarce and mainly focus on KRR. 
For instance, 
\cite{xiao2022precise, mei2022generalization} characterized the behavior of the learning curve of KRR when $s=0$, 
\cite{barzilai2023generalization} provided upper bounds on the learning curve of kernel interpolation, 
\cite{aerni2023strong} characterized the behavior of the learning curve of KRR for certain types of regression functions,
and \cite{cheng2022dimension, misiakiewicz2024non} derived tractable approximations for prediction risk for ridge regression with random design and for KRR under specific conditions. 
However, when $s>0$, the results in \cite{cheng2022dimension} and \cite{misiakiewicz2024non} rely on assumptions that become increasingly restrictive as $d$ grows.

A comprehensive comparison between our results and the existing literature is provided in Appendix B.
Moreover, a detailed discussion specifically addressing the most closely related works \citep{mei2022generalization, misiakiewicz2024non, zhang2024phase} is provided in Remark \ref{remark_compare_with_mei_mis} and after Proposition B.3.

\subsection{Notations}\label{subsec_notations}
For any integer $q \geq 2$, we denote the norm in $L^q:=L^q(\mathcal{X}, \rho_{\mathcal{X}})$ by $\|\cdot\|_{L^q}$. 
For \( x \in [0, \infty] \), denote by \( \lfloor x \rfloor \) the greatest integer less than or equal to \( x \) if \( x < \infty \), with the convention  \( \lfloor x \rfloor = \infty \) if \( x = \infty \). Similarly, let \( \lceil x \rceil \) denote the smallest integer greater than or equal to \( x \) if \( x < \infty \), and \( \lceil x \rceil = \infty \) if \( x = \infty \).

We use asymptotic notations $O_d(\cdot),~o_d(\cdot),~\Omega_d(\cdot)$ and $\Theta_d(\cdot)$ to describe the behavior of sequences as $d \to \infty$.
For instance, we say two sequences of (deterministic) quantities $\{U(d), V(d)\}_{d=1}^{\infty}$ satisfy $U(d) =  o_{d}(V(d))$ if and only if for any $\varepsilon > 0$, there exists a constant $D_{\varepsilon}$ that depends only on $\varepsilon$ and the absolute positive constants listed in Definition A.1, such that for any $d > D_{\varepsilon}$, we have $U(d)< \varepsilon V(d)$. 
We also use the notation $V(d) \gg U(d)$ and $U(d)\ll V(d)$ if $U(d) = o_{d}(V(d))$.

For random variables, the probabilistic asymptotic notations $O_{d, \mathbb{P}}(\cdot), o_{d, \mathbb{P}}(\cdot), \Omega_{d, \mathbb{P}}(\cdot), \Theta_{d, \mathbb{P}}(\cdot)$ are also used. For instance, we say two sequences of random variables $\{X(d), Y(d)\}_{d=1}^{\infty}$ satisfy $ X(d) =  O_{d, \mathbb{P}}(Y(d)) $ if and only if for any $\delta > 0$, there exist two constants $C_{\delta} $ and $ D_{\delta}$ that depend only on $\delta$ and the absolute positive constants listed in Definition A.1,  such that for any $d > D_{\delta}$, we have $\mathbb{P}\left( |X(d)| > C_{\delta} |Y(d)| \right) < \delta$.

For symmetric matrix \( A\) of dimension \( k \), and any positive integer \( j \leq k \), let \( \lambda_j(A) \) denote the \( j \)-th largest eigenvalue of \( A \). 
In particular, let \( \lambda_{\text{max}}(A) \) and \( \lambda_{\text{min}}(A) \) denote the largest and smallest eigenvalues of \( A \), respectively. 
For another symmetric matrix \( \tilde{A} \) of the same dimension, we write $\tilde{A}\geq A$, if $\tilde{A}-A$ is positive semi-definite. 
We write $\tilde{A}=\Omega_{d, \mathbb{P}}(1) A$ if and only if for any $\delta > 0$, there exist two constants $C_{\delta}>0$ and $ D_{\delta}$ that depend only on $\delta$ and the absolute positive constants listed in Definition A.1,  such that for any $d > D_{\delta}$, we have $\tilde{A} \geq C_{\delta}A$ with probability at least $1-\delta$.
For any non-symmetric matrix $U$, define $\|U\|_{2} := \sqrt{\|UU^{\top}\|_{2}}$.

\section{Problem setting on the sphere}\label{sec_settings}

Suppose that we observe $n$ i.i.d. samples $(x_i, y_i)$ for $i =1,2,\ldots, n$ from the model:
\begin{equation}\label{equation:true_model}
    y=f_{\star}(x)+\epsilon,
\end{equation}
where $x_i$'s are sampled from $\rho_{\calX}$, which is the uniform distribution on $\mathcal{X} =\mathbb{S}^{d-1}\subset \bbR^{d}$,
$y \in \mathcal{Y} \subset \mathbb{R}$,
$f_{\star}$ is the regression function defined on $\mathcal{X}$, and  $\epsilon_1, \ldots, \epsilon_n$ are conditionally independent given $(x_1, \ldots, x_n)$, with conditional mean 0 and variance $\sigma^2$. 
Sections~\ref{sec_settings} and \ref{sec_main_results} focus on this spherical design, whereas Section~\ref{sec_general} later considers KRR on general domains. 
In high dimensions, standardized covariates are often concentrated near a thin spherical shell \citep{talagrand1996new}, so the spherical model is a natural reference setting. 
The spherical setting is also analytically convenient, because inner-product kernels admit an explicit spectral block structure that can be analyzed sharply.

We denote the response vector by $Y=\left(y_1, \ldots, y_n\right)^{\top} \in \mathbb{R}^n$ and the design matrix by $X=\left[x_1, \ldots, x_n\right]^{\top} \in \mathbb{R}^{n \times d}$.
In this work, we consider the large-dimensional setting where the dimension $d$ grows with the sample size  $n$. 
Specifically, the sample size and the dimension satisfy the following assumption:

\begin{assumption}\label{assump_asymptotic}
    There exist positive constants $c_1$, $c_2$, and $\gamma$ such that the sample size satisfies
\begin{align}\label{Asym}
    c_{1} d^{\gamma} \leq n \leq c_{2} d^{\gamma},
\end{align}
and we denote $\ell_{\gamma} := \lfloor \gamma\rfloor$.
\end{assumption}

\subsection{Inner product kernels}\label{subsec_sphere_kernel}

Throughout this paper, we consider the following inner product kernels.
A continuous inner product kernel $\mathscr{K}$ defined on $\mathbb{S}^{d-1}$ is given by 
\begin{equation*}
    \mathscr{K}(x, x^\prime) = \Phi(\left\langle x, x^\prime \right\rangle), \forall~ x, x^\prime \in \mathbb{S}^{d-1},
\end{equation*}
where  $\Phi:[-1,1] \to \mathbb{R}$ is a continuous function satisfying the following assumption. 

\begin{assumption}\label{assu:coef_of_inner_prod_kernel} 
$\Phi(t) \in \mathcal{C}^{\infty} \left([-1,1]\right)$ is a fixed function (independent of $d$) and there exists a non-negative sequence of absolute constants $\{a_i\}_{i = -1}^{\infty}$ such that
    \begin{displaymath}
        \Phi(t) = \sum\nolimits_{i=0}^\infty a_i t^i
        \quad \text{ and } \quad 
        \sum\nolimits_{i=0}^\infty a_i \leq a_{-1}, 
    \end{displaymath}
    where the constant $a_{-1}$ denotes an upper bound on the sum of coefficients. 
    Furthermore, we require  
    $a_{i} > 0$ for any $0 \leq i \leq \ell_{\gamma} + 2$, where $\ell_{\gamma}$ is defined in Assumption \ref{assump_asymptotic}.
\end{assumption}


The primary purpose of Assumption \ref{assu:coef_of_inner_prod_kernel} is to ensure the positive definiteness of the kernel and to derive the asymptotic order for the 
eigenvalues $\mu_0, \cdots, \mu_{\ell_{\gamma} + 2}$ defined later in (\ref{spherical_decomposition_of_inner_main}). 
First, Theorem 1.b in \cite{gneiting2013strictly} states that an inner product kernel on the sphere is positive definite in all dimensions if and only if all coefficients $a_{i}$ are non-negative. 
Second, the strict positivity of the coefficients $a_k$ for $0 \leq k \leq \ell_{\gamma} + 2$ implies that the eigenvalues  satisfy $\mu_k = \Theta_d( d^{-k} )$ for all $k$ in this range.
%
%
Furthermore, our main results extend to settings where certain coefficients in $\{a_{j}\}_{j \geq 0}$ are zero. 
For example, one can consider the two-layer NTK defined in \cite{Bietti_on_2019}, where $a_j=0$ for any odd $j\geq 3$.
For more discussion of Assumption \ref{assu:coef_of_inner_prod_kernel}, readers can refer to the discussions below Assumption 4 in \cite{lu2023optimal}, below 
Assumption 1 in \cite{lu2024saturation}, and below Assumption 1 in \cite{zhang2024phase}.

The analyticity of $\Phi$ is a standard assumption in the related literature (e.g., \cite{liang2020multiple, xiao2022precise, hu2022sharp, lu2023optimal, zhang2024optimal, lu2024saturation, zhang2024phase, lu2024pinsker}). Indeed, many commonly used inner product kernels defined on $\mathbb{S}^{d-1}$ satisfy this assumption:
\begin{itemize}
    \item ReLU NTK: As studied in \cite{Bietti_deep_2021}, the associated $\Phi$ is analytic (see page 6 therein). Moreover, their Corollary 3 guarantees that all coefficients are positive.

    \item Gaussian kernel: When defined on the sphere, this kernel is analytic with positive coefficients.
    
    \item Random feature kernel: Consider kernels of the following form \citep{ghorbani2021linearized, mei2022generalization}:
    $$
    \mathscr{K}(x, x^{\prime}) := \mathbb{E}_{w}[\sigma(\langle w, x\rangle)\sigma(\langle w, x^{\prime}\rangle)],
    $$
    where $\sigma(\cdot)$ is an activation function. As shown in Equation (2) of \cite{Bietti_deep_2021}, $\Phi$ admits a power series expansion with $a_j \geq 0$ for all $j$  (see also \cite{daniely2016toward}).
    
\end{itemize}

Define the integral operator as
\begin{equation}\label{def_T}
    T(f)(x)=\int \mathscr{K}(x, x^{\prime}) f(x^{\prime}) ~\mathsf{d} \rho_{\calX}(x^{\prime}). 
\end{equation}
By Theorem 1.b in \cite{gneiting2013strictly}, Assumption \ref{assu:coef_of_inner_prod_kernel} implies that the kernel $\mathscr{K}$ is positive definite. Moreover, both $\mathscr{K}$ and the trace of $T$ are bounded:
\begin{equation*}
\begin{aligned}
    K_{\max}&:=\sup_{x \in \mathcal X} \mathscr{K}(x, x) 
    = \sup_{x \in \mathcal X} \Phi(\left\langle x, x \right\rangle) 
    \leq \sum\nolimits_{i=0}^{\infty} a_i \leq a_{-1}\\
    \text{Tr}(T)&:=\int \mathscr{K}(x, x)  ~\mathsf{d} \rho_{\calX}(x) \leq a_{-1}. 
\end{aligned}
\end{equation*}
Consequently, standard results (e.g., \cite{steinwart2012mercer}) ensure that $T$ is a positive, self-adjoint, trace-class, and compact operator.  
The celebrated Mercer's theorem then assures that the decomposition
\begin{align}\label{eqn:mercer_decomp}
    \mathscr{K}(x, x^{\prime})=\sum_{j=1}^{\infty}\lambda_{j}\phi_{j}(x)\phi_{j}( x^{\prime}),
\end{align}
where the eigenvalues satisfy $\lambda_{1} \geq \lambda_{2} \geq \ldots \geq 0$, and the corresponding eigenfunctions are $\{\phi_{j}(\cdot)\}_{j\geq 1}$. 
Furthermore, since $\mathscr{K}$ is an inner product kernel defined on the sphere, the Funk-Hecke formula provides a more explicit spectral decomposition:
\begin{equation}\label{spherical_decomposition_of_inner_main}
\begin{aligned}
\mathscr{K}(x,x^\prime) = \sum_{k=0}^{\infty} \mu_{k} \sum_{j=1}^{N(d, k)} \psi_{k, j}(x) \psi_{k, j}\left(x^\prime\right),
\end{aligned}
\end{equation}
where for each $k\geq 0$, $\mu_{k}$ is the distinct eigenvalue with multiplicity $N(d, k)$ associated with degree $k$, and 
$\{\psi_{k, j}(\cdot): j=1,\ldots, N(d, k)\}$ are spherical harmonic polynomials of degree $k$. 
The dimension of the degree-$k$ subspace is given by $N(d,0)=1$ and $N(d, k) = \frac{2k+d-2}{k} \cdot \frac{(k+d-3)!}{(d-2)!(k-1)!}$ for $k\geq 1$. 
Notably, in contrast to the ordered sequence $\{\lambda_j\}$, the sequence of distinct eigenvalues $\{\mu_{k}\}_{k \geq 0}$ is not necessarily non-increasing. 
For more details on inner product kernels, see, e.g.,  \citep{gallier2009notes, ghorbani2021linearized}.

The following proposition, adapted from \cite{lu2024pinsker}, characterizes the asymptotic behavior of the leading eigenvalues $\lambda_j$ (or equivalently, the distinct eigenvalues $\mu_k$).

\begin{proposition}\label{prop:inner_edr}
    Suppose Assumptions 
\ref{assump_asymptotic} and 
\ref{assu:coef_of_inner_prod_kernel}  hold for an absolute constant $\gamma>0$.
    Then,
\begin{itemize}
    \item For any integer $k  = 0, 1, \ldots, \ell_{\gamma} +2$, we have

\begin{equation*}
\begin{aligned}
 \mu_{k} = \Theta_d( d^{-k}) \quad \text{ and } \quad
N(d, k) = \Theta_d( d^k ).
\end{aligned}
\end{equation*}
For ease of notation, denote
$N_{-1}=0$ and $N_{k} = \sum_{k^{\prime}=0}^{k}N(d, k^{\prime}) = \sum_{k^{\prime}=0}^{k}\Theta_d(d^{k^{\prime}}) = \Theta_d(d^k)$.

\item There exists a constant \( \mathfrak{C} \) depending only on \( \gamma, a_0, \ldots, a_{\ell_{\gamma} + 2} \), such that for any \( d \geq \mathfrak{C} \), we have for any $0 \leq k \leq \ell_{\gamma}+1$: 
    \begin{align*}
    &  \lambda_{N_{k-1}+1} = \lambda_{N_{k-1}+2}= \cdots=\lambda_{N_{k}} = \mu_k,\\
     &  \{ 
         \phi_{N_{k-1}+1}, \phi_{N_{k-1}+2}, \ldots, \phi_{N_{k}}\} =\{\psi_{k,1}, \ldots, \psi_{k,N(d,k)}\}.
    \end{align*}
\end{itemize}
\end{proposition}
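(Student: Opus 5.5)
We will establish the two bullets separately: first the asymptotics of $\mu_k$ and $N(d,k)$ for fixed $k\le \ell_\gamma+2$, and then use the separation of these asymptotic orders to identify the ordered eigenvalues $\lambda_j$ with the blocks $\mu_k$.

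\textbf{Multiplicities.} The formula $N(d,k)=\frac{2k+d-2}{k}\cdot\frac{(k+d-3)!}{(d-2)!(k-1)!}$ is, for fixed $k$, a polynomial in $d$ of degree $k$ with leading coefficient $2/k!$ when $k\ge1$ (and $N(d,0)=1$). Hence $N(d,k)=\Theta_d(d^k)$, and summing finitely many terms gives $N_k=\Theta_d(d^k)$.

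\textbf{Eigenvalues.} By the Funk--Hecke formula,
\[
\mu_k=\frac{\omega_{d-2}}{\omega_{d-1}}\int_{-1}^1 \Phi(t)\,P_{k,d}(t)\,(1-t^2)^{(d-3)/2}\,dt,
\]
where $P_{k,d}$ is the normalized Gegenbauer polynomial. We expand $\Phi=\sum_i a_i t^i$ and use the standard identity that the degree-$k$ component of $t^i$ is nonzero only if $i\ge k$ and $i-k$ is even. Its coefficient is an explicit ratio of Gamma functions, of order $d^{-k}\cdot d^{-(i-k)/2}$ up to constants uniform in $i$, and it is nonnegative. This gives
\[
\mu_k=\sum_{i\ge k,\; i\equiv k\ (2)} a_i\, c_{i,k}(d),\qquad c_{i,k}(d)\ge0 .
\]
The lower bound follows from the single term $i=k$: $c_{k,k}(d)=\frac{k!\,\Gamma(d/2)}{2^k\Gamma(k+d/2)}=\Theta_d(d^{-k})$, and $a_k>0$ for $k\le\ell_\gamma+2$. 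For the upper bound we need $\sum_{i>k} a_i c_{i,k}(d)=O_d(d^{-k})$. We get this from the bound $c_{i,k}(d)\le C\,c_{k,k}(d)$ uniformly in $i$, which follows from the Gamma-ratio formula (each extra power $t^2$ contributes a factor $\le1$), combined with $\sum_i a_i\le a_{-1}$. This uniform control over the infinite tail is the main technical step. An alternative route for it is the bound $\mu_k\,N(d,k)\le \sum_j \mu_j N(d,j)=\Phi(1)\le a_{-1}$. However, this only yields $\mu_k=O(d^{-k})$ and does not separate $\mu_k$ from higher-order terms, so the explicit coefficient bound is what I would try first.

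\textbf{Ordering of the $\lambda_j$.} For $0\le k\le\ell_\gamma+1$, the previous step gives $\mu_k\ge c\,d^{-k}$ while $\mu_{k+1}\le C d^{-k-1}$. We must also control all $\mu_j$ with $j\ge\ell_\gamma+2$ uniformly. For this we use the crude bound $\mu_j\le a_{-1}/N(d,j)$, together with the fact that $N(d,j)\ge N(d,\ell_\gamma+2)=\Theta_d(d^{\ell_\gamma+2})$ for all $j\ge\ell_\gamma+2$ once $d\ge$ some constant, since $N(d,j)$ is increasing in $j$ for $d\ge3$. Therefore
\[
\sup_{j\ge k+1}\mu_j\le C' d^{-(k+1)} < c\,d^{-k}\le \mu_k
\]
for $d\ge\mathfrak{C}$, with $\mathfrak{C}$ depending only on $\gamma$ and $a_0,\dots,a_{\ell_\gamma+2}$ (through $a_{-1}$, which is absolute). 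By the same comparison, $\mu_0>\mu_1>\dots>\mu_{\ell_\gamma+1}>\sup_{j\ge\ell_\gamma+2}\mu_j$.

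Sorting the eigenvalues in nonincreasing order therefore places the $N(d,0)$ copies of $\mu_0$ first, then the $N(d,1)$ copies of $\mu_1$, and so on up to degree $\ell_\gamma+1$. This gives $\lambda_{N_{k-1}+1}=\dots=\lambda_{N_k}=\mu_k$ with eigenfunctions the degree-$k$ spherical harmonics. The eigenfunctions are determined only up to rotation within each eigenspace, so the identification should be read with that choice fixed.
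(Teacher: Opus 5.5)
Your argument is correct. The paper itself gives no proof of this proposition. It is imported (``adapted'') from \cite{lu2024pinsker}, and the only in-text justification is the remark after Assumption~\ref{assu:coef_of_inner_prod_kernel} that $a_k>0$ forces $\mu_k=\Theta_d(d^{-k})$. So what you have is a self-contained derivation, not a variant of an in-paper argument; citing buys the paper brevity, while your route buys transparency about which hypotheses and constants matter.

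Your derivation uses four ingredients:
\begin{itemize}
\item the nonnegative per-monomial Funk--Hecke coefficients $c_{i,k}(d)$;
\item the lower bound $\mu_k\ge a_k c_{k,k}(d)$;
\item the trace identity $\sum_j\mu_jN(d,j)=\Phi(1)\le a_{-1}$;
\item monotonicity of $N(d,j)$ in $j$.
\end{itemize}
These need only nonnegativity and summability of the $a_i$, not the $\mathcal{C}^\infty$ hypothesis. They also show that $\mathfrak{C}$ must depend on $a_{-1}$, and that dependence cannot be removed, so your parenthetical is the right reading of the statement. For example, set $a_i=0$ for $i>\ell_\gamma+2$ except $a_{\ell_\gamma+4}=A$. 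Then $\mu_{\ell_\gamma+1}=a_{\ell_\gamma+1}c_{\ell_\gamma+1,\ell_\gamma+1}(d)$ does not depend on $A$. But $\mu_{\ell_\gamma+4}\ge A\,c_{\ell_\gamma+4,\ell_\gamma+4}(d)$ exceeds it unless $d^3\gtrsim A$.

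Three small corrections.
\begin{itemize}
\item The leading coefficient of $N(d,k)$ is $1/k!$, not $2/k!$ (e.g.\ $N(d,1)=d$).
\item Your claim that $c_{k+2m,k}(d)$ is of order $d^{-k-m}$ with constants uniform in $m$ is false: for fixed $d$, $c_{k+2m,k}/c_{k,k}$ decays only like $m^{-(d-1)/2}$. You never use it, though. What you actually use is $c_{k+2m+2,k}/c_{k+2m,k}=\frac{(k+2m+2)(k+2m+1)}{4(m+1)(k+m+d/2)}\le 1$, which holds for every $m$ once $2d\ge k^2-k+2$.
\item You undersell your alternative route. From $c_{i,k}(d)N(d,k)\le\sum_j c_{i,j}(d)N(d,j)=1$, or directly from $\mu_kN(d,k)\le\Phi(1)$, you get $\mu_k\le a_{-1}/N(d,k)$. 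That is exactly the upper half of $\mu_k=\Theta_d(d^{-k})$, and nothing more is needed for the first bullet. The separation between blocks comes from pairing this upper bound at degree $k+1$ with the lower bound $a_kc_{k,k}(d)$ at degree $k$, which is precisely what your ordering step does. So the Gamma-ratio monotonicity is correct but redundant.
\end{itemize}
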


Proposition \ref{prop:inner_edr} reveals the spectral block structure of large-dimensional inner product kernels. 
In comparison, most fixed-dimensional kernels do not have such properties. We list some examples:
\begin{itemize}
    \item  For Sobolev kernels defined on a bounded domain with smooth boundary with smoothness $r>d/2$, we have $C_1(d) j^{-2r/d} \leq \lambda_j \leq C_2(d) j^{-2r/d}$ \citep{edmunds1996function, zhang2023optimality}.

    \item For ReLU NTK defined on $\mathcal{S}^{d-1}$, we have $C_1(d) j^{-d/(d-1)} \leq \lambda_j \leq C_2(d) j^{-d/(d-1)}$ \citep{Bietti_on_2019, Bietti_deep_2021}.

    \item  For Gaussian kernels defined on $\mathcal{S}^{d-1}$, we have $C_1(d) e^{-C_3(d) j} \leq \lambda_j \leq C_2(d) e^{-C_4(d) j}$ (\cite{Minh_Mercer_2006}).
\end{itemize}

In the large-dimensional setting considered throughout this paper, we assume $d \geq \mathfrak{C}$ such that the results of Proposition \ref{prop:inner_edr} hold.

\subsection{The interpolation space}\label{subsec:interpolation_space}

For any $s \geq 0$, the interpolation space $[\mathcal{H}]^s$ associated with the kernel $K$ is defined as
\begin{equation*}
  [\mathcal{H}]^s := 
  \Big\{
  \sum\nolimits_{j=1}^{\infty} f_j \phi_{j}(\cdot): \left(\lambda_j^{-s / 2} f_j\right)_{j} \in \ell^2 
  \Big\} 
  \subseteq L^{2}(\mathcal{X}, \rho_{\mathcal{X}}),
\end{equation*}
where the eigenpairs $(\lambda_j, \phi_j)$ are defined in \eqref{eqn:mercer_decomp}. 
Equipped with the norm
\begin{equation*}\begin{aligned}
\Big\|\sum\nolimits_{j=1}^{\infty} f_j  \phi_j
\Big\|_{[\mathcal{H}]^s} :=
\Big(\sum\nolimits_{j=1}^{\infty} \lambda_j^{-s} f_j^2
\Big)^{1 / 2}, 
\end{aligned}
\end{equation*}
$[\mathcal{H}]^s $ is a separable Hilbert space with orthonormal basis $ \{\lambda_{j}^{s/2} \phi_{j}\}_{j=1}^\infty$. 
Generally speaking, functions in $[\mathcal{H}]^s$ become smoother as $s$ increases (see, e.g., the example of Sobolev spaces in \cite{edmunds1996function, zhang2023optimality}). 
The two most interesting interpolation spaces are $[\calH]^{0} \subseteq L^{2}$ and $[\calH]^{1} = \calH$.


Standard kernel regression analysis typically assumes the well-specified case where the true regression function $f_{\star}$ falls into the RKHS $\calH$ (e.g., \cite{Caponnetto2006OptimalRF, caponnetto2007optimal, Yao2007OnES, raskutti2014early, Liang_Just_2019, lu2023optimal}). 
However, subsequent research has suggested that the RKHS assumption might be too restrictive, prompting interest in the performance of kernel regression in the misspecified case with $s \in (0, 1)$ (see \citet{fischer2020_SobolevNorm}, \citet{zhang2023optimality,zhang2023optimality_2}, \citet{zhang2024optimal}, and \citet{lu2024pinsker,lu2024saturation}). 
Recently, several studies on large-dimensional kernel regression have considered the extreme case where $s=0$ (e.g., \cite{ghorbani2021linearized, mei2021learning, mei2022generalization, misiakiewicz_learning_2021}).
To provide a unified analysis of large-dimensional kernel regression, 
we assume that $f^*$ falls in a ball of radius $\sqrt{R}$ in $[\calH]^{s}$:
\begin{assumption}\label{assump_function_calss}
There exist absolute constants $s \geq 0$ and $R>0$, such that we have
    \begin{equation}\label{eqn:function_calss}
    f_{\star} \in 
    \left\{f \in [\calH]^{s} \mid \|f\|_{[\calH]^{s}} \leq \sqrt{R}\right\}.
\end{equation}
\end{assumption}

From Assumption~\ref{assump_function_calss} and Proposition \ref{prop:inner_edr}, we can denote 
\begin{equation}\label{eqn_decompose_reg_func}
    f_{\star}(x) = \sum_{k=0}^{\ell_{\gamma}+1} \sum_{j=1}^{N(d,k)} \theta_{k,j} \psi_{k,j}(x) + \sum_{j =N_{\ell_{\gamma}+1}+1}^{\infty} f_j \phi_{j}(x),
\end{equation}
where $\sum_{k=0}^{\ell_{\gamma}+1} \sum_{j=1}^{N(d,k)} \mu_k^{-s} \theta_{k,j}^2+ \sum\nolimits_{j=N_{\ell_{\gamma}+1}+1}^{\infty} \lambda_j^{-s} f_j^2 = \|f\|_{[\calH]^{s}}^2 \leq R$.

\subsection{Spectral algorithms}

Below we recall the definition of analytic filter functions (see \cite{bauer2007_RegularizationAlgorithms, li2024generalization, lu2024saturation} for further details).
KRR, KGF, iterated ridge, and kernel gradient descent are all covered by this filter framework. 

\begin{definition}[Analytic filter functions]
  \label{def:filter}
  Let $\left\{ \reg : (0, K_{\max}] \to \R_{\geq 0} \mid \lambda \in [0, 1] \right\}$ be a family of analytic functions indexed by the regularization parameter $\lambda$.
  We call $\left\{ \reg \mid \lambda \in [0, 1]  \right\}$ an analytic filter family with qualification $\tau \in [1,\infty]$, if there exist positive absolute constants $C_1, C_2, C_3$, and $C_4$, such that:
  \begin{enumerate}
      \item 
      For all $\lambda\in[0,1]$ and $z\in(0,K_{\max}]$, we have $z \reg(z) \in [0, 1]$ and the function $\rem(z):= 1- z\reg(z)$ is non-increasing in $z$.
      
      \item 
        For any $z\in(0,K_{\max}]$ and any $1\leq t\leq \tau$,  there exists a constant $\mathfrak{C}_3$ only depending on $C_3$ and $t$, such that 
\begin{equation}\label{eq:Filter_case1}
    \begin{aligned}
        &~ C_1 (z+\lambda)^{-1} \leq \reg(z) \leq C_2 (z+\lambda)^{-1}\\
        &~ \rem(z) \leq \mathfrak{C}_3 \lambda^{t}z^{-t}.
    \end{aligned}
\end{equation}

        \item 
   
            If $\tau<\infty$, then there exists a positive constant $\mathfrak{C}_{4}$ only depending on $\tau$ and $\lambda_1$, such that we have
    \begin{align}
    \label{eq:Filter_Rem_finite_case2}
        \rem(\lambda_1) \geq \mathfrak{C}_{4} \lambda^{\tau},
    \end{align}
    where $\lambda_1$ is the largest eigenvalue of $\mathscr{K}$ defined in (\ref{eqn:mercer_decomp}).
        

    \item 
    
    When $s>0$, we also assume that the analytic filter function $\reg$ satisfies the remaining conditions listed in Appendix C of \cite{lu2024saturation} (i.e., equations (30), (31), (C1), and (C2) in \cite{lu2024saturation}). For convenience, these requirements are restated in Appendix J.1.
    
        
  \end{enumerate}
  \end{definition}


\begin{remark}
    The conditions restated in Appendix J.1 essentially require that $\reg$ admits an extension to a larger domain, while preserving all relevant properties that hold on the original domain. This allows us to apply the ``analytic functional argument'' developed by \cite{li2024generalization}.
\end{remark}

For any $\reg$ in an analytic filter family, we define the associated spectral algorithm as follows. 
Suppose the Taylor series expansion of the analytic function $\reg$ is $\reg(z)= b_{\lambda, 0} + b_{\lambda, 1} z + b_{\lambda, 2} z^2 + \cdots$; 
we extend this to the matrix domain for any square matrix $A \in \mathbb{R}^{n \times n}$ as 
\begin{equation}\label{eqn_def_filter_matrix_function}
    \reg(A) := b_{\lambda, 0} \mathrm{I}_{n} + b_{\lambda, 1} A + b_{\lambda, 2} A^2 + \cdots. 
\end{equation}
The spectral estimator is then given by 
\begin{align}
  \label{eq:SA}
  \hat{f}_{\lambda}(\cdot) = \frac{1}{n}\mathscr{K}(\cdot, X) \reg\left(\frac{1}{n}K\right) Y,
\end{align}
where the matrix $K$ is defined as $(K)_{i, j} := \mathscr{K}(x_i, x_j)$; see, e.g., \cite{bauer2007_RegularizationAlgorithms, li2024generalization}. 

Throughout, we assume that the regularization parameter $\lambda$ follows a polynomial scaling in the dimension $d$. 
\begin{assumption}\label{assume_lambda}
    There exist constants $c_3,c_4>0$ and a constant exponent $u \in (0, \infty]$, such that we have
    $$
    c_3 d^{-u} \leq \lambda \leq c_4 d^{-u},
    $$
    where the case $u = \infty$ corresponds to $\lambda=0$ (i.e., the interpolation limit). Define  $\ell_{\lambda} = \lfloor u \rfloor$ and $\tilde{\ell}=\min\{\ell_{\gamma}, \ell_{\lambda}\}$.
\end{assumption}

The exponent $u$ and the index $\tilde{\ell}$ will be used to formulate the following two technical assumptions. 


Definition \ref{def:filter} only imposes conditions on the scalar function $\reg$ defined on $(0, K_{\max}]$. 
However, these conditions are insufficient for the learning curve analysis that depends on evaluating $\reg$ at random kernel matrices as in \eqref{eq:SA}. 
To avoid unnecessary technical complications in the proof, we introduce the following assumption, which will be verified in Section~\ref{subsec_example_spe} for standard analytic filters.


\begin{assumption}\label{assume_pert_filter}

When $0 < u < \gamma <1$, we require that $\rem$ defined in Definition \ref{def:filter} satisfies
$$
    \rem(K/n) = \Omega_{d, \mathbb{P}}(1)\rem(K_{0}/n), 
    $$
    where 
    $K_{0} = \mu_0 \mathbf{1} \mathbf{1}^{\top}$.
    


\end{assumption}

\subsubsection{Examples of analytic spectral algorithms}\label{subsec_example_spe}

We now illustrate the concept of analytic filter functions using several classical spectral algorithms.


\begin{example}[Kernel ridge regression]
  \label{example:KRR}
For kernel ridge regression (KRR), the filter is  
  \begin{align*}
    \reg^{\krr}(z) = \frac{1}{z+\lambda},\quad \rem^{\krr}(z) = \frac{\lambda}{z+\lambda},\quad \tau=1.
  \end{align*}
\end{example}

\begin{example}[Kernel gradient flow]
\label{example:gradient_flow}
For kernel gradient flow with time parameter $\lambda^{-1}$, the filter is
  \begin{align*}
    \reg^{\gf}(z) = \frac{1-e^{-z / \lambda}}{z},\quad \rem^{\gf}(z) = e^{-z / \lambda}, \quad \tau=\infty.
  \end{align*}
\end{example}

\begin{example}[Iterated ridge regression]
  \label{example:IteratedRidge}
  Let $q \geq 1$ be fixed.
  The $q$-fold iterated ridge estimator corresponds to the filter 
  \begin{align*}
    \reg^{\mathrm{IT}, q}(z) = \frac{1}{z}\left[ 1 - \frac{\lambda^q}{(z+\lambda)^q} \right],
    \quad
    \psi^{\mathrm{IT}, q}_\lambda(z) =\frac{\lambda^q}{(z+\lambda)^q},\quad \tau=q.
  \end{align*}
\end{example}

\begin{example}[Kernel gradient descent]
  \label{example:GradientDescent}
  Kernel gradient descent may be viewed as a time-discretized version of kernel gradient flow.
Fix a step size $\eta>0$, and apply gradient descent to the empirical loss for $t$ iterations. 
The resulting estimator is associated with the spectral filter
  \begin{align*}
    \reg^{\mr{GD}}(z) &= \eta \sum_{k=0}^{t-1} (1-\eta z)^{k} = \frac{1-(1-\eta z)^t}{z},\quad \lambda = (\eta t)^{-1}, \\
    \rem^{\mr{GD}}(z) &= (1-\eta z)^t, \quad\tau=\infty.
  \end{align*}
\end{example}

The filter functions in the above examples 
satisfy Assumption~\ref{assume_pert_filter}; see 
Appendix J.3 
for a detailed verification.

\subsection{Nondegenerate signal}


To derive matching lower bounds for the excess risk, we require that the signal has nontrivial energy in the degrees around $\tilde{\ell}$ defined in Assumption~\ref{assume_lambda}.

\begin{assumption}\label{assump_source condition_lower_bound}
There exists an absolute constant $c_{0} > 0$ such that for any $ d \geq \mathfrak{C}$, we have
\begin{equation*}
        \mu_{m}^{-s} \sum\limits_{j=1}^{N(d, m)}  \theta_{m,j}^{2} \ge c_{0}, ~~m=\tilde{\ell}, \tilde{\ell}+1;
   \end{equation*}
Moreover, if $s> 2\tau$, we further assume that
$$
\sum\limits_{k=0}^{\tilde{\ell}} \sum\limits_{j =1}^{N(d, k)}  \theta_{k, j}^{2} \ge c_{0}.
$$
\end{assumption}

Assumption~\ref{assump_source condition_lower_bound} ensures that $s$ is the effective smoothness level of $f_{\star}$ for the purpose of lower bounding the excess risk.
Assumption \ref{assump_function_calss} alone allows the possibility that $f_{\star}\in[\mathcal{H}]^{t}$ for some $t>s$, in which case the excess risk may decay faster and a sharp lower bound at smoothness level $s$ cannot be identified. 
Assumption~\ref{assump_source condition_lower_bound} rules out this scenario and allows us to obtain a matching lower bound.
Following the discussion in Section~2.5 in the supplementary material of \cite{zhang2024phase}, we can show that Assumptions \ref{assump_function_calss} and \ref{assump_source condition_lower_bound} together imply that
\begin{equation}\label{implication of assumption 2}
\left\{
\begin{gathered}
f_{\star} \in [\mathcal{H}]^{s} \quad \text{for any } d; \\
\forall t > s,\quad f_{\star} \notin [\mathcal{H}]^{t} \quad \text{when } d \text{ is sufficiently large}.
\end{gathered}
\right.
\end{equation}
Moreover, we can further show that functions violating Assumption \ref{assump_source condition_lower_bound} are easier to learn, in the sense that they admit faster convergence rates. Therefore, Assumption~\ref{assump_source condition_lower_bound} is essential for characterizing the worst-case convergence rate of the excess risk over the class $[\mathcal{H}]^{s}$.
Similar assumptions have been adopted when establishing lower bounds for excess risk. Examples include (8) in \cite{Cui2021GeneralizationER} and Assumption 3 in \cite{li2023asymptotic} for fixed-dimensional settings, and Assumption 5 in \cite{zhang2024optimal}, Assumption 2 in \cite{lu2024saturation}, and Assumption 2 in \cite{zhang2024phase} for large-dimensional settings. 

\begin{remark}
    Several lower bound results in the literature remain valid, in the sense that the same proofs apply, if their respective non-degeneracy conditions are replaced by Assumption \ref{assump_source condition_lower_bound}. 
    These include Lemmas~21--23 of \cite{zhang2024optimal}, Lemma~D.14 of \cite{lu2024saturation}, and Lemmas~5.4.3--5.4.4 of \cite{zhang2024phase}. 
    %
\end{remark}

\section{Main results}\label{sec_main_results}

This section is devoted to determining the convergence rate of the excess risk of $\hat{f}_{\lambda}$ under the setup in Section~\ref{sec_settings}. 
We begin with the bias--variance decomposition:
\begin{align}\label{eq bias var decomposition}
    E_{x, \epsilon} \left[ \left(\hat{f}_{\lambda}(x) - f_{\star}(x) \right)^{2} \right] =  \mathrm{var}(\hat{f}_{\lambda}) + \mathrm{bias}^2(\hat{f}_{\lambda}),
\end{align}
where $E_{x, \epsilon}$ denotes expectation with respect to the new sample $x$ and the random noise terms $\epsilon_i$ in the training samples,
\begin{equation}\label{eq var term}
    \mathrm{var}(\hat{f}_{\lambda}) := E_{x, \epsilon}\left[\left(\hat{f}_{\lambda}(x)-E_\epsilon\left(\hat{f}_{\lambda}(x)\right)\right)^2\right],
\end{equation}
and
\begin{equation}\label{eq bias term}
    \mathrm{bias}^2(\hat{f}_{\lambda}) := E_x\left[\left(E_\epsilon\left(\hat{f}_{\lambda}(x)\right)-f_{\star}(x)\right)^2\right].
\end{equation}

By determining the convergence rate of each term in \eqref{eq bias var decomposition}, our main result is the complete learning curve of large-dimensional spectral algorithms.

\begin{theorem}\label{theorem_learn_curve}
    Suppose Assumptions \ref{assump_asymptotic}, \ref{assu:coef_of_inner_prod_kernel}, 
\ref{assump_function_calss},
\ref{assume_lambda}, 
\ref{assume_pert_filter},
and \ref{assump_source condition_lower_bound} hold with $\tau \leq \infty$, $s \geq 0$, and $u \in (0, \infty]$. 
Denote $\tilde{s}=\min\{s, 2\tau\}$. Then the excess risk of the spectral algorithm estimator in (\ref{eq bias var decomposition}) satisfies
\begin{equation*}
\begin{aligned}
    E_{x, \epsilon} \left[ \left(\hat{f}_{\lambda}(x) - f_{\star}(x) \right)^{2} \right]
= \Theta_{d, \mathbb{P}} \left( 
d^{\gamma - \tilde{\ell} - 1 - 2  \max\{\gamma - u, 0\}}
+ d^{\tilde{\ell} - \gamma}
+ d^{-(\tilde{\ell} + 1)  s}
+ \mathbf{1}\{\tau<\infty\} d^{-2  \tau  u + (2  \tau - \tilde{s})  \tilde{\ell}}
\right).
\end{aligned}
\end{equation*}
\end{theorem}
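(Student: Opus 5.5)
We bound the variance and the squared bias in \eqref{eq bias var decomposition} separately, each from above and below, and then add the rates. The key structural input is Proposition~\ref{prop:inner_edr}. It splits the spectrum into a \emph{low-degree} part, namely degrees $k\le\tilde\ell$ with $N_{\tilde\ell}=\Theta_d(d^{\tilde\ell})\ll n$ and $\mu_k\gtrsim\lambda$, and a \emph{high-degree} part, namely degrees $k\ge\tilde\ell+1$. Accordingly we write $K/n=\Psi_{\le}D_{\le}\Psi_{\le}^\top/n+K_{>}/n$.

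For the high-degree part we use the standard large-dimensional kernel-matrix concentration (as in \cite{mei2022generalization,lu2024saturation,zhang2024phase}). It gives $K_{>}/n=\kappa I_n+\Delta$, where $\kappa=\frac1n\sum_{k>\tilde\ell}\mu_kN(d,k)\asymp d^{-\gamma}$ when $\tilde\ell=\ell_\gamma$, the error satisfies $\|\Delta\|_2=o_{d,\mathbb P}(\kappa)$ up to the degree-$(\ell_\gamma+1)$ block, and $\Psi_{\le}^\top\Psi_{\le}/n\approx I$. The effective regularization is therefore $\lambda_{\rm eff}:=\lambda+\kappa\asymp d^{-\min\{u,\gamma\}}$, which explains the factor $\max\{\gamma-u,0\}$.

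\textbf{Variance.} We write $\mathrm{var}=\frac{\sigma^2}{n^2}\mathrm{Tr}\big[\reg(K/n)^2K_{x}\big]$, where $K_x=E_x[\mathscr K(X,x)\mathscr K(x,X)]$. We then approximate $\reg(K/n)$ by its action on the two blocks. For the low-degree block, eigenvalues $\asymp\mu_k$ with $\reg\asymp\mu_k^{-1}$ contribute $\asymp N_{\tilde\ell}/n=d^{\tilde\ell-\gamma}$. For the high-degree block, $\reg(K/n)\asymp\lambda_{\rm eff}^{-1}$ on the bulk, and $\frac1{n^2}\mathrm{Tr}(K_{x,>})\asymp n\mu_{\tilde\ell+1}^2N(d,\tilde\ell+1)/n^2\asymp d^{-\tilde\ell-1}$ gives $d^{-\tilde\ell-1}\lambda_{\rm eff}^{-2}\cdot n^{-1}\cdot n\asymp d^{\gamma-\tilde\ell-1-2\max\{\gamma-u,0\}}$ after normalization. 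The two-sided bounds follow from $C_1(z+\lambda)^{-1}\le\reg\le C_2(z+\lambda)^{-1}$ together with operator monotonicity arguments. When $\lambda$ is large enough ($\lambda\gg n^{-1}$ effectively), we can instead cite the analytic-functional argument of \cite{li2024generalization,lu2024saturation} via Definition~\ref{def:filter}(4).

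\textbf{Bias.} We split $f_\star=f_{\le\tilde\ell}+f_{>\tilde\ell}$. For the high-degree part, the estimator essentially does not learn it, giving $\|f_{>\tilde\ell}\|_{L^2}^2\asymp\mu_{\tilde\ell+1}^{s}\asymp d^{-(\tilde\ell+1)s}$; the lower bound uses Assumption~\ref{assump_source condition_lower_bound} at $m=\tilde\ell+1$. For the low-degree part, the residual is $\rem$ applied to the empirical low-degree eigenvalues, $\sum_{k\le\tilde\ell}\rem(\mu_k)^2\|\theta_k\|^2\lesssim\sum_k(\lambda/\mu_k)^{2t}\mu_k^{s}$. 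For $s\le 2\tau$ the worst term is $k=\tilde\ell$, giving $\lambda^{2\tau}d^{(2\tau-s)\tilde\ell}$. For $s>2\tau$ the worst term is $k=0$, giving $\lambda^{2\tau}$; matching lower bounds come from \eqref{eq:Filter_Rem_finite_case2} and the second half of Assumption~\ref{assump_source condition_lower_bound}. For $\tau=\infty$ this term is dominated by the others, so it does not appear. There is also a cross term, where the high-degree signal leaks into low-degree coefficients through $\Psi_{\le}^\top K_{>}$ and the noise-like part of $f_{>\tilde\ell}(X)$. We show it is bounded by the variance-type rates times $\|f_{>\tilde\ell}\|^2$, hence dominated.

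\textbf{Main obstacle.} The hard step is controlling $\reg$ and $\rem$ at the random matrix $K/n$ uniformly across regimes, especially when $u>\gamma$ or $u=\infty$. There the concentration inequalities of \cite{li2024generalization} fail and we must work directly with $\reg(\kappa I+\Delta+\text{low-rank})$ via Woodbury/perturbation for analytic $\reg$. The degenerate case $0<u<\gamma<1$ is handled separately by Assumption~\ref{assume_pert_filter}, which transfers the bound $\rem(K/n)\gtrsim\rem(K_0/n)$ to the constant mode. Summing the four contributions then yields the stated $\Theta_{d,\mathbb P}$ rate.
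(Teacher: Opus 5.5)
Your variance step is essentially the paper's: $\reg(K/n)$ commutes with $M=K+n\lambda\mathrm I_n$, so $C_1M^{-1}\le\varphi_{\lambda,K}\le C_2M^{-1}$, and everything reduces to KRR-type traces. One caveat: the lower bound on the block $\ell_\lambda<k\le\ell_\gamma$ needs an ordered-eigenvalue/von Neumann argument, since $\lambda_{\max}(M)\asymp n$ rules out a global operator inequality.

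The gap is in the low-degree bias. You claim that, for every $u$ and from above and below, the residual on $f_{\le\tilde\ell}$ is governed by the population proxy $\sum_{k\le\tilde\ell}\rem(\mu_k)^2\|\theta_k\|_2^2$. As a statement about the bias this is false once $\lambda\lesssim n^{-1}$. The matrix $K_{>\tilde\ell}/n\approx\kappa\mathrm I_n$ sits inside the argument of $\reg$ and creates a ridge-type residual of order $n^{-2}\|\Sigma_{\le\tilde\ell}^{-1}\theta_{\le\tilde\ell}\|_2^2$, i.e.\ up to $d^{(2-\min\{s,2\})\tilde\ell-2\gamma}$. This residual has qualification one whatever $\tau$ is. For example, at $\lambda=0$ your formula vanishes, whereas by Sherman--Morrison--Woodbury the interpolant's low-degree residual is $(\Sigma_{\le\ell_\gamma}^{-1}+\Psi_{\le\ell_\gamma}^\top K_{>\ell_\gamma}^{-1}\Psi_{\le\ell_\gamma})^{-1}\Sigma_{\le\ell_\gamma}^{-1}\theta_{\le\ell_\gamma}\neq0$. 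This term drops out of the final rate only because $(2-\min\{s,2\})\tilde\ell-2\gamma\le\tilde\ell-\gamma$, which you never show.

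More fundamentally, the tools you name cannot produce the sharp $\tau$-rate at the random matrix $K/n$.
\begin{itemize}
\item Woodbury is special to the resolvent and has no analogue for a general analytic $\reg$.
\item Plain perturbation fails because the empirical low-degree eigenvectors are only $o_{d,\mathbb P}(1)$-aligned with the degree blocks. A filter with $\tau>1$ can amplify that misalignment by ratios like $(\mu_0/\mu_{\tilde\ell})^{\tau-1}$. The discrepancy may then exceed the $\tau$-term itself, and is small only relative to the total risk.
\item The analytic-functional concentration of \cite{li2024generalization,lu2024saturation}, which does justify the population proxy, is available in large dimensions only for $u$ roughly below $u^{\prime}$. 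It is not available for all $\lambda\gg n^{-1}$, as you state.
\item The scalar bound $\rem(\lambda_1)\ge\mathfrak C_4\lambda^{\tau}$ says nothing about $\rem(K/n)$ without such an argument. Assumption~\ref{assume_pert_filter} supplies that transfer only when $0<u<\gamma<1$.
\end{itemize}

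The missing idea is that the sharp $\tau$-rate is needed only where that machinery works. The paper uses Proposition~\ref{prop_matrix_prop_1_of_filter}, $\mathrm I-A\reg(A+B)=A(A+B)^{-1}\rem(A+B)+B(A+B)^{-1}$ with $A=K_{\le\tilde\ell}/n$ and $B=K_{>\tilde\ell}/n$, together with $\rem(z)\le C\lambda/z$. This yields only the crude bound $\|B_1\|_2^2=O_{d,\mathbb P}(d^{-2u+(2-\min\{s,2\})\tilde\ell}+d^{(2-\min\{s,2\})\tilde\ell-2\gamma})$.

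When $\ell_\lambda<\ell_\gamma$, the piece $B(A+B)^{-1}$ involves $K^{-1}$ rather than $M^{-1}$. So the intermediate block $K_{\ell_\lambda+1,\ell_\gamma}$ cannot simply be absorbed into the regularization, as your $\lambda_{\mathrm{eff}}$ heuristic does. The paper controls it through the cross-Gram estimate $\|Z^\top\tilde Z\Sigma_{\ell_\lambda+1,\ell_\gamma}^2\tilde Z^\top Z\|_2=o_{d,\mathbb P}(n^{-1})$, obtained from matrix Bernstein.

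Exponent bookkeeping (Propositions~\ref{prop_e_1}--\ref{prop_e_3}) then splits the range of $u$:
\begin{itemize}
\item For $u\ge u^{\prime}(1)$, the crude term and the $\tau$-term are both dominated by $d^{\gamma-\tilde\ell-1-2\max\{\gamma-u,0\}}+d^{\tilde\ell-\gamma}+d^{-(\tilde\ell+1)s}$. The lower bound therefore comes only from the variance and $\|B_3-B_5\|_2^2$.
\item For $u<u^{\prime}(1)$, the sharp $\tau$-term is imported from Propositions~\ref{prop_nips_left_learning_curve} and~\ref{prop_verify_bias_special}. Assumption~\ref{assume_pert_filter} and Corollary~\ref{coroll_learn_curve_case_2_complex} handle $\gamma<1$ and the remaining small-$(s,\gamma)$ case.
\end{itemize}
Without this regime split, or a genuinely new uniform control of $\rem(K/n)$, your bias step does not go through.
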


Theorem \ref{theorem_learn_curve} fully characterizes the convergence rate of the excess risk in different asymptotic regimes $n = \Theta_d( d^{\gamma} )$, different smoothness $s \geq 0$, different filter qualification $\tau$, and different choices of the regularization parameter $\lambda=\Theta_d(d^{-u})$. Therefore, it provides the complete learning curve for large-dimensional spectral algorithms.

Compared with interpolation-only or optimally tuned analyses, a key difficulty is the regime $\ell_\lambda < \ell_\gamma$ (where $0<\lambda \ll n^{-1}$). 
In this regime, the proof can no longer be organized solely around the interpolation cutoff $\ell_\gamma$, because the block $\ell_\lambda < k \le \ell_\gamma$ is neither absorbed into the low-degree fitted block nor negligible as a tail term. 
One must isolate and control this intermediate spectral block in both the bias and the variance. 
The remaining ingredients combine the large-dimensional kernel-matrix concentration results of \cite{ghorbani2021linearized} with the analytic functional framework for spectral filters. For the reader's convenience, Appendix~C provides a proof sketch for a representative case.

The full proof is deferred to the appendices. Appendix~D bounds the variance term in \eqref{eq var term}, Appendix~E bounds the bias term in \eqref{eq bias term}, and Appendix~F combines these bounds with Proposition~B.1 to prove Theorem~\ref{theorem_learn_curve}.





Theorem~\ref{theorem_learn_curve} will be interpreted in the following subsections from the perspectives of phase transitions, benign overfitting, saturation, and sequence-model equivalence.

\subsection{Three regimes of the learning curve} 
The rate in Theorem \ref{theorem_learn_curve} comprises two components: the variance error rate $d^{\gamma - \tilde{\ell} - 1 - 2  \max\{\gamma - u, 0\}}
+ d^{\tilde{\ell} - \gamma}$ and the bias error rate $d^{-(\tilde{\ell} + 1)  s}
+ \mathbf{1}\{\tau<\infty\} d^{-2  \tau  u + (2  \tau - \tilde{s})  \tilde{\ell}}$.
In general, as $u$ (the rate of the regularization parameter $\lambda$) increases, the variance error rate gradually decreases while the bias error rate increases. 
By balancing the rates of the bias and variance terms, one can identify the optimal choice of $u$, denoted as $u^{\prime} = u^{\prime}(\tau)$. 
The quantity $u^{\prime}$ is conceptually important, but its explicit expression is not involved in our discussion, so we defer its expression to Proposition F.4 in the appendix.  


Using $u^{\prime}$ and $\gamma$, we divide the learning curve provided by Theorem \ref{theorem_learn_curve} into three regimes as $u$ increases (equivalently, $\lambda$ decreases):
\begin{itemize}
    \item {\bf Over-regularized regime: } When $u < u^{\prime}$, there is too much regularization and the bias term dominates. In this regime, the excess risk rate is non-increasing in $u$ and the learning curve has many plateaus.
    
    \vspace{3pt}
    
    \item {\bf Under-regularized regime: } When $u \in (u^{\prime}, \gamma)$, 
    the rate is non-decreasing in $u$ and the learning curve has many plateaus.

    \vspace{3pt}
    
    \item {\bf Interpolation regime: } When $u \geq \gamma$, the rate remains unchanged.
\end{itemize}

From the above description, we can conclude that the overall learning curve of the large-dimensional spectral algorithm does not follow a simple U-shape. Specifically,  as illustrated in Figure \ref{fig_theory_benign_overfit} (d):
\begin{itemize}
    \item Several plateaus appear in the over-regularized and under-regularized regimes;

    \vspace{3pt}
    
    \item In the interpolation regime, the learning curve is essentially a straight horizontal line over $u\in [\gamma, \infty)$, meaning that the generalization performance of the spectral algorithm is equivalent to that of kernel interpolation (i.e., $u=\infty$).
\end{itemize}



\begin{remark}
Theorem \ref{theorem_learn_curve} reveals an interesting phenomenon.
When $s$ is sufficiently small, the error can remain dominated by the bias even for large $u$ in the under-regularized and interpolation regimes. 
For example, consider the case $\gamma = 0.9$, $s=0.5$, and $\tau=\infty$. In this case, we have $u^{\prime}=0.45$. 
    For any $u \geq 0.45$, the variance term is of order $d^{\gamma - \tilde{\ell} - 1 - 2  \max\{\gamma - u, 0\}}
    + d^{\tilde{\ell} - \gamma} = o_d(d^{-(\tilde{\ell}+1)s})$, so it is negligible relative to the bias term.
    This suggests that, when $s$ is sufficiently small, spectral algorithms can still achieve strong performance, and even benign overfitting, well into the under-regularized and interpolation regime. Section~\ref{sec: benigh overfitting} makes this statement precise by characterizing benign overfitting.
\end{remark}

\subsection{Full characterization of benign overfitting}\label{sec: benigh overfitting}

Theorem \ref{theorem_learn_curve} yields a sharp characterization of benign overfitting across the under-regularized and interpolation regimes. 
To state it, recall the minimax benchmark
\[
d^{p-\gamma}+d^{-(p+1)s},
\qquad
p:=\left\lfloor \frac{\gamma}{s+1}\right\rfloor,
\]
which is the optimal rate of excess risk over the class $[\calH]^s$ (see Proposition B.2).
Benign overfitting occurs when the estimator is under regularized and the risk is minimax rate-optimal. 
We further recall a threshold function from \citet{zhang2024phase}:
\begin{equation}\label{eqn_threshold_s}
\Gamma(\gamma) := \begin{cases}
\infty,  & \gamma \in (0, 0.5], \\
(\gamma-\ell_{\gamma})/\ell_{\gamma}, & \gamma \in (\ell_{\gamma}, \ell_{\gamma}+0.5] ~\text{ and }~ \ell_{\gamma} \geq 1, \\
(\ell_{\gamma}+1-\gamma)/(\ell_{\gamma}+1), & \gamma \in (\ell_{\gamma}+0.5, \ell_{\gamma}+1),\\
0, & \gamma =1, 2, \cdots .
\end{cases}
\end{equation}

The following corollary compares excess risk of the spectral algorithm relative to the minimax rate according to $s\leq \Gamma(\gamma)$  and $s>\Gamma(\gamma)$.

\begin{theorem}\label{coroll_benign_overfit}
    Suppose all assumptions in Theorem \ref{theorem_learn_curve} hold and define \(p := \lfloor \gamma/(s+1) \rfloor\). 
    Recall that $u^{\prime} = u^{\prime}(\tau)$ is the optimal choice of $u$ that balances the rates of the bias and variance terms (see Proposition F.4 for its expression). 
    Then we have the following statements:
    
    \begin{itemize}
        \item[(i)] Suppose $0 \leq s \leq \Gamma(\gamma)$. Then for any $u \in [u^{\prime}, \infty]$, the excess risk of the spectral algorithm estimator in (\ref{eq bias var decomposition}) satisfies
\begin{equation*}
\begin{aligned}
    E_{x, \epsilon} \left[ \left(\hat{f}_{\lambda}(x) - f_{\star}(x) \right)^{2} \right]
= \Theta_{d, \mathbb{P}} \left( 
d^{p-\gamma} + 
    d^{-(p+1)s}
\right).
\end{aligned}
\end{equation*}

\item[(ii)] Suppose $s > \Gamma(\gamma)$. Then there exists $\tilde{u} \in (u^{\prime}, \gamma)$ such that for $\tilde{\lambda}=d^{-\tilde{u}}$, we have
\begin{equation*}
\begin{aligned}
    E_{x, \epsilon} \left[ \left(\hat{f}_{\tilde{\lambda}}(x) - f_{\star}(x) \right)^{2} \right]
 \gg  
d^{p-\gamma} + 
    d^{-(p+1)s} \quad \text{ in probability.}
\end{aligned}
\end{equation*}

    \end{itemize}
\end{theorem}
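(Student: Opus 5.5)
This result reduces to exponent bookkeeping on top of Theorem~\ref{theorem_learn_curve}; no new probabilistic analysis is needed. For $u\in[u',\infty]$ the rate from Theorem~\ref{theorem_learn_curve} is $d^{-r(u)}$. Here $r(u)$ is the minimum of the four exponents $\tilde{\ell}+1+2\max\{\gamma-u,0\}-\gamma$, $\gamma-\tilde{\ell}$, $(\tilde{\ell}+1)s$, and (when $\tau<\infty$) $2\tau u-(2\tau-\tilde{s})\tilde{\ell}$, with $\tilde{\ell}=\min\{\ell_\gamma,\lfloor u\rfloor\}$. The minimax rate is $d^{-r^\ast}$ with $r^\ast=\min\{\gamma-p,(p+1)s\}$ and $p=\lfloor\gamma/(s+1)\rfloor$. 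By Proposition~B.2 the minimax rate is a lower bound, so $r(u)\le r^\ast$ always. Part (i) therefore amounts to showing $r(u)\ge r^\ast$ on $[u',\infty]$ when $s\le\Gamma(\gamma)$. Part (ii) amounts to exhibiting a $\tilde u$ with $r(\tilde u)<r^\ast$ when $s>\Gamma(\gamma)$; a strict exponent gap gives the ``$\gg$ in probability'' conclusion, since $\Theta_{d,\mathbb{P}}$ lower bounds dominate.

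I would first dispose of the qualification term. For $u\ge u'$ it is at most the optimal-point risk, because it is decreasing in $u$ and $u'$ balances bias and variance (Proposition~F.4). The same holds for $d^{-(\tilde\ell+1)s}$ whenever $\tilde\ell$ only grows with $u$. The bias side is then no worse than at $u'$, where the rate is $\le d^{-r^\ast}$ in the relevant cases (the $s>2\tau$ saturation case is excluded since $\Gamma(\gamma)\le1$ for $\gamma>1/2$, and the $\gamma\le 1/2$ case is handled directly). It thus suffices to control the variance exponents on $[u',\infty]$.

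For $u\ge\gamma$ we have $\tilde\ell=\ell_\gamma$, and the variance is $d^{\gamma-\ell_\gamma-1}+d^{\ell_\gamma-\gamma}$, which is $u$-independent; this gives the interpolation regime. For $u\in[u',\gamma)$, I would split into the plateau intervals $u\in[k,k+1)$ with $k=\lfloor u\rfloor$. On each such interval the variance is $d^{\gamma-k-1-2(\gamma-u)}+d^{k-\gamma}$, which is increasing in $u$. Its worst case over $[u',\gamma)$ is therefore approached as $u\uparrow\gamma$ (giving the interpolation value) or at the left end $u=k$ of each plateau, where the term $d^{k-\gamma}$ jumps. For every integer $k$ with $u'\le k\le\ell_\gamma$, this reduces (i) to the inequalities
\[
\gamma-\ell_\gamma-1\ge \ldots\quad\text{i.e.}\quad \min\{\ell_\gamma+1-\gamma,\ \gamma-\ell_\gamma\}\ge r^\ast,\qquad \gamma-k\ge r^\ast .
\]
I would next compute $p$ and $r^\ast$ in the cases of the definition of $\Gamma(\gamma)$. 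When $\gamma\in(\ell_\gamma,\ell_\gamma+1/2]$ and $s\le(\gamma-\ell_\gamma)/\ell_\gamma$, one gets $p=\ell_\gamma$ and $r^\ast=\gamma-\ell_\gamma$, which matches the interpolation variance. When $\gamma\in(\ell_\gamma+1/2,\ell_\gamma+1)$ and $s\le(\ell_\gamma+1-\gamma)/(\ell_\gamma+1)$, one gets $r^\ast=(\ell_\gamma+1)s\le \ell_\gamma+1-\gamma$. Also $\gamma-k\ge\gamma-\ell_\gamma$ handles the plateau endpoints. The cases $\gamma\le1/2$ and integer $\gamma$ (where $s=0$) are checked directly.

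For (ii), in each case with $s>\Gamma(\gamma)$, I would pick $\tilde u$ slightly below $\gamma$, inside $(\max\{u',\ell_\gamma\},\gamma)$. There the variance term $d^{\gamma-\ell_\gamma-1-2(\gamma-\tilde u)}$ and the interpolation term $d^{\ell_\gamma-\gamma}$ already exceed the minimax rate. For instance, if $\gamma\in(\ell_\gamma+1/2,\ell_\gamma+1)$ and $s>(\ell_\gamma+1-\gamma)/(\ell_\gamma+1)$, then $\gamma-\ell_\gamma-1>-r^\ast$, and continuity in $\tilde u$ preserves the strict gap. I would also verify that $u'<\gamma$ in these cases, so that such a $\tilde u$ exists; this comes from the explicit form of $u'$ in Proposition~F.4. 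I expect the main obstacle to be this case bookkeeping: tracking $p$, $u'$, and the plateau structure (especially the jump of $\tilde\ell$ at integers) simultaneously, and confirming that the bias side never exceeds the minimax rate past $u'$.
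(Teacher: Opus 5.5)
Your plan is correct in substance. For part (ii) and for the interpolation regime $u\ge\gamma$, it follows the paper's argument: compare the interpolation exponents $\gamma-\ell_\gamma-1$, $\ell_\gamma-\gamma$, $-(\ell_\gamma+1)s$ with $-r^\ast$ case by case in the definition of $\Gamma(\gamma)$, then move $\tilde u$ slightly below $\gamma$ by continuity.

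For $u\in[u',\gamma)$ your route differs from the paper's. For non-integer $\gamma$, the paper first shows $p=\ell_\gamma$. It combines this with $p\le u'<\gamma$ (Proposition F.4, with $\tilde p=p$ in this regime) to get $\tilde\ell\equiv\ell_\gamma=p$ on all of $[u',\infty]$. It then discards the qualification term via Proposition F.6, namely $\max\{v_1,v_2,b_1\}\ge b_2(u,\tau)$ for $u\ge u'$. The rate becomes $d^{v_1}+d^{p-\gamma}+d^{-(p+1)s}$, and only the comparison of $v_1$ with $v_2$ and $b_1$ remains. You instead bound each exponent against $r^\ast$ separately, using monotonicity in $u$ plus optimality at $u'$ for the bias. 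That works, and your final inequalities coincide with the paper's, but two justifications need repair.

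First, the qualification exponent $-2\tau u+(2\tau-\tilde s)\tilde\ell$ is not decreasing in $u$ in general. It jumps up by $2\tau-\tilde s\ge 0$ each time $\tilde\ell$ increments at an integer, which is exactly why Proposition F.6 needs a separate step at $\lceil u'\rceil$. Your argument works only because $\tilde\ell$ is frozen at $\ell_\gamma=p$ on $[u',\infty]$ in part (i). You should say so explicitly; you already have $p=\ell_\gamma$ and $\tilde p\le u'$. Once you do, your multi-plateau bookkeeping is vacuous, since $\lfloor u\rfloor=\ell_\gamma$ on all of $[u',\gamma)$.

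Second, do not derive $r(u)\le r^\ast$ from Proposition B.2. That is an $\inf\sup$ bound in expectation over the whole ball, not a bound for a fixed $f_\star$ satisfying Assumption 6. With $\tilde\ell=p$, the rate in Theorem 3.1 literally contains $d^{p-\gamma}+d^{-(p+1)s}$ as summands, so the lower bound is immediate. A minor slip: for $\gamma\in(\ell_\gamma,\ell_\gamma+1/2]$ one has $r^\ast=\min\{\gamma-\ell_\gamma,(\ell_\gamma+1)s\}$, not $\gamma-\ell_\gamma$. Only $r^\ast\le\gamma-\ell_\gamma$ is used, so nothing breaks.

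Finally, in part (ii) the interval $(\max\{u',\ell_\gamma\},\gamma)$ is empty for integer $\gamma$, where $s>\Gamma(\gamma)=0$. There, choose $\tilde u\in(\max\{u',\gamma-1\},\gamma)$. Then $\tilde\ell=\gamma-1$ and $v_1(\tilde u)=-2(\gamma-\tilde u)$, which exceeds $-r^\ast<0$ once $\tilde u$ is close enough to $\gamma$. The paper's continuity argument passes over the same point.
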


When $0 < s \le \Gamma(\gamma)$, the theorem shows that once the regularization level enters the range $u \ge u^{\prime}$, the excess risk remains on the minimax scale for every further choice of $u$, including both the under-regularized regime $u \in (u^{\prime},\gamma)$ and the interpolation limit $u=\infty$. 
Therefore, in the less smooth regime, benign overfitting for large-dimensional spectral algorithms occurs not only at interpolation, but throughout the under-regularized regime as well. 
This observation aligns with empirical findings on neural networks, where even without strong smoothness assumptions on the regression function, neural networks generalize well when the training time $t = \lambda^{-1}$ meets or exceeds the optimal stopping time; see \cite{zhang2021understanding, zhang2016understanding, belkin2018understand, belkin2019does, belkin2021fit, nakkiran2021deep}.
In this sense, our result offers a kernel-limit perspective relevant to benign-overfitting phenomena discussed in the neural-network literature.


Moreover, the threshold $\Gamma(\gamma)$ separates benign and non-benign behavior in both the under-regularized and interpolation regimes.
Existing work has primarily focused on the benign overfitting phenomenon in large-dimensional kernel ridge regression within the interpolation regime. 
See, for example, \cite{Liang_Just_2019, liang2020multiple, aerni2023strong, barzilai2023generalization, zhang2024phase}.
In particular, \cite{zhang2024phase} derived the convergence rate of the excess risk for kernel interpolation when $\lambda=0$ (restated in Proposition B.3) and established that benign overfitting occurs if and only if $0 < s \leq \Gamma(\gamma)$. 
Theorem~\ref{coroll_benign_overfit} shows that this same condition is exactly what allows the benign behavior to persist beyond interpolation and throughout the whole region $u \geq u^{\prime}$. When $s > \Gamma(\gamma)$, this persistence fails, since there exists some $\tilde u \in (u^{\prime},\gamma)$ for which the excess risk is strictly above the minimax rate.

The following example provides further insight into why benign overfitting only occurs for small $s$.
Consider the Sobolev RKHS $W^{m,2}(\mathcal{X})$ with $m > d/2$ and the extreme case where $u=\infty$ (i.e., kernel interpolation). The interpolation space of $W^{m,2}(\mathcal{X})$ satisfies $[W^{m,2}(\mathcal{X})]^s \cong W^{m s, 2}(\mathcal{X})$, meaning that larger values of $s$ correspond to higher-order differentiability and smoothness of functions in the interpolation space.
For small $s$, the regression function contains substantial high-frequency components. In this case, kernel interpolation that passes through all data points primarily captures these high-frequency features of the underlying signal, leading to an excess risk that matches the minimax optimal rate. 
In contrast, for large $s$, the regression function is inherently smooth. However, noise in the training data forces the kernel interpolation estimator to oscillate sharply to fit every data point exactly. This results in spurious high-frequency fluctuations between training points, causing significant deviation from the smooth regression function.

Figure \ref{fig_theory_benign_overfit} illustrates the behavior of learning curves when $s \leq \Gamma(\gamma)$ and $s > \Gamma(\gamma)$. As shown in Figure \ref{fig_theory_benign_overfit} (a) and (c), when $s \leq \Gamma(\gamma)$, the learning curve remains flat throughout both the under-regularized regime (indicated by hatched patterns) and the interpolation regime. In contrast, when $s > \Gamma(\gamma)$, the learning curve increases in the under-regularized regime.

\begin{figure}[btp]
\centering
\subfigure[]{\includegraphics[width=0.48\columnwidth]{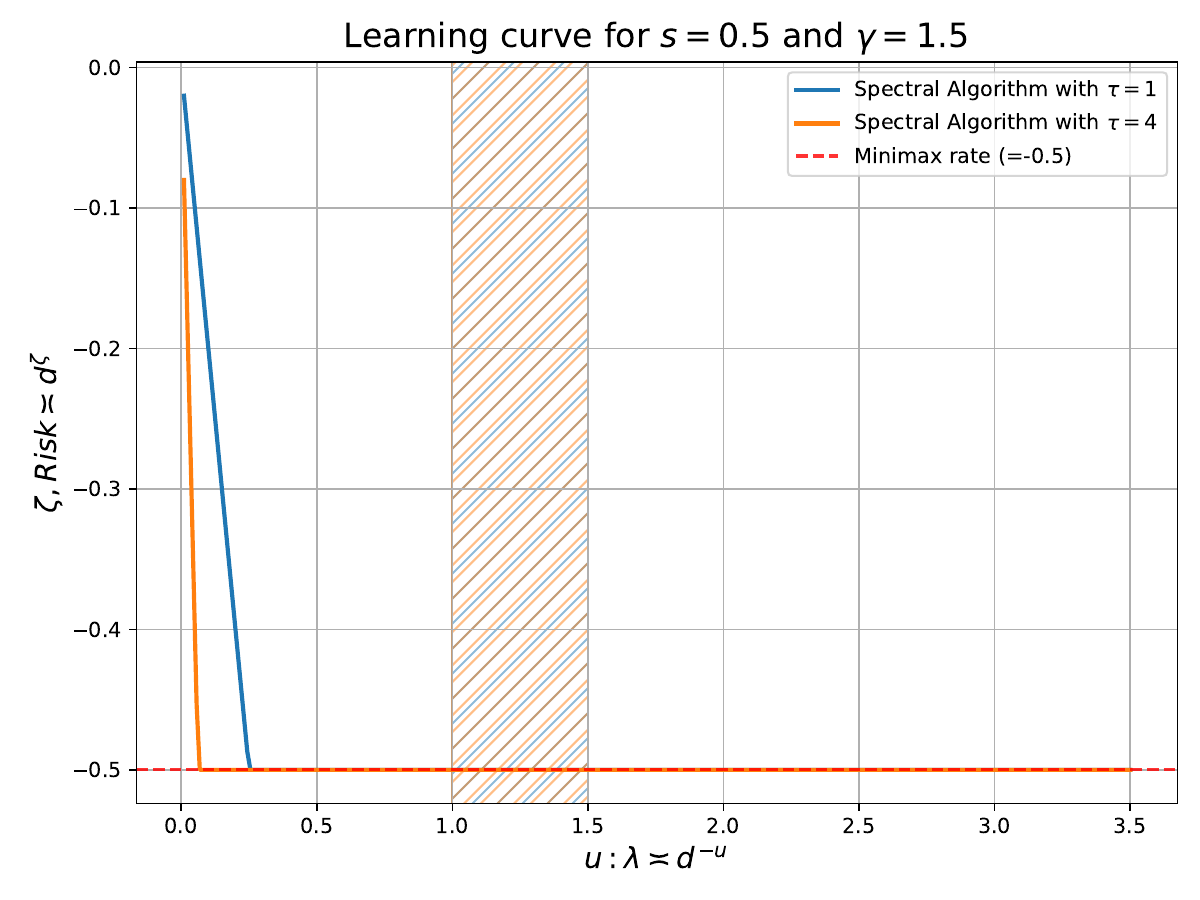}}
\subfigure[]{\includegraphics[width=0.48\columnwidth]{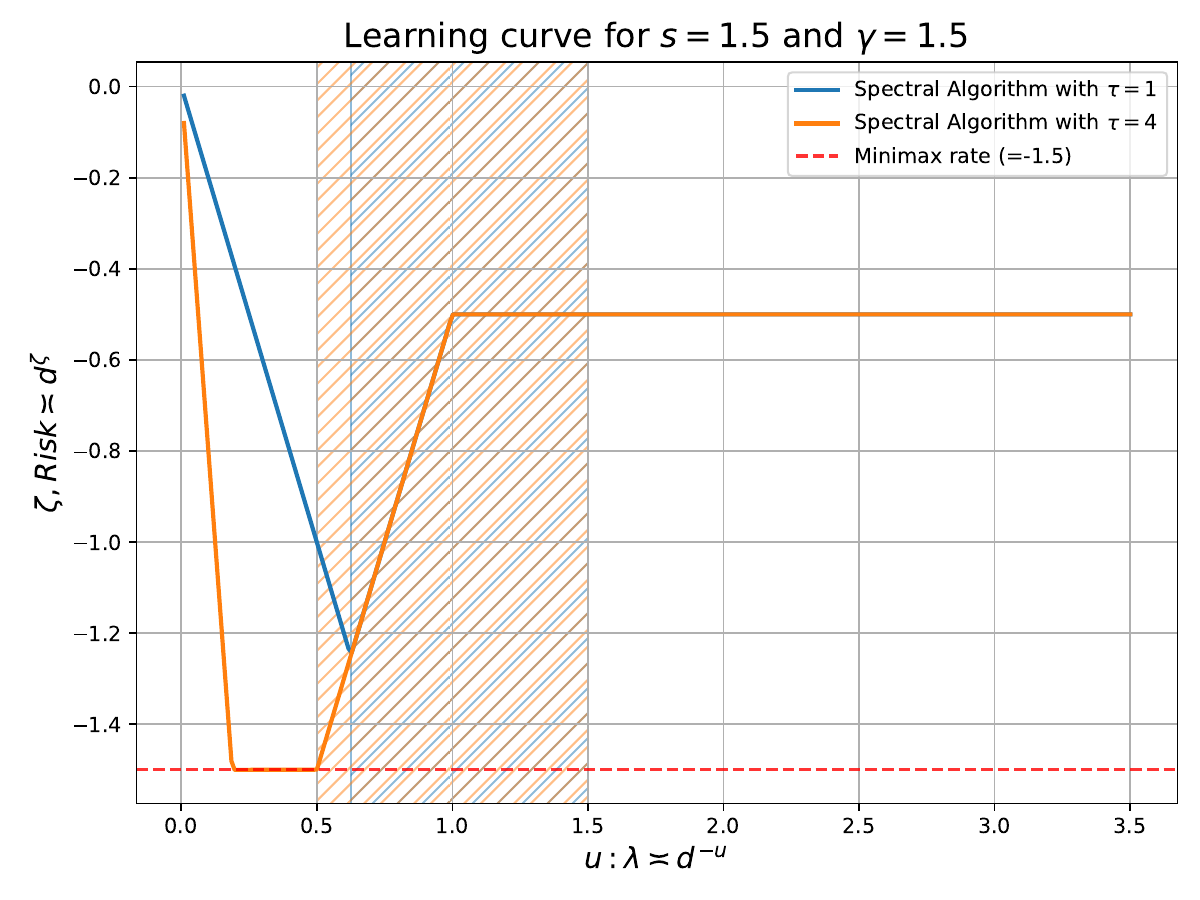}}

\vspace{-10pt}

\subfigure[]{\includegraphics[width=0.48\columnwidth]{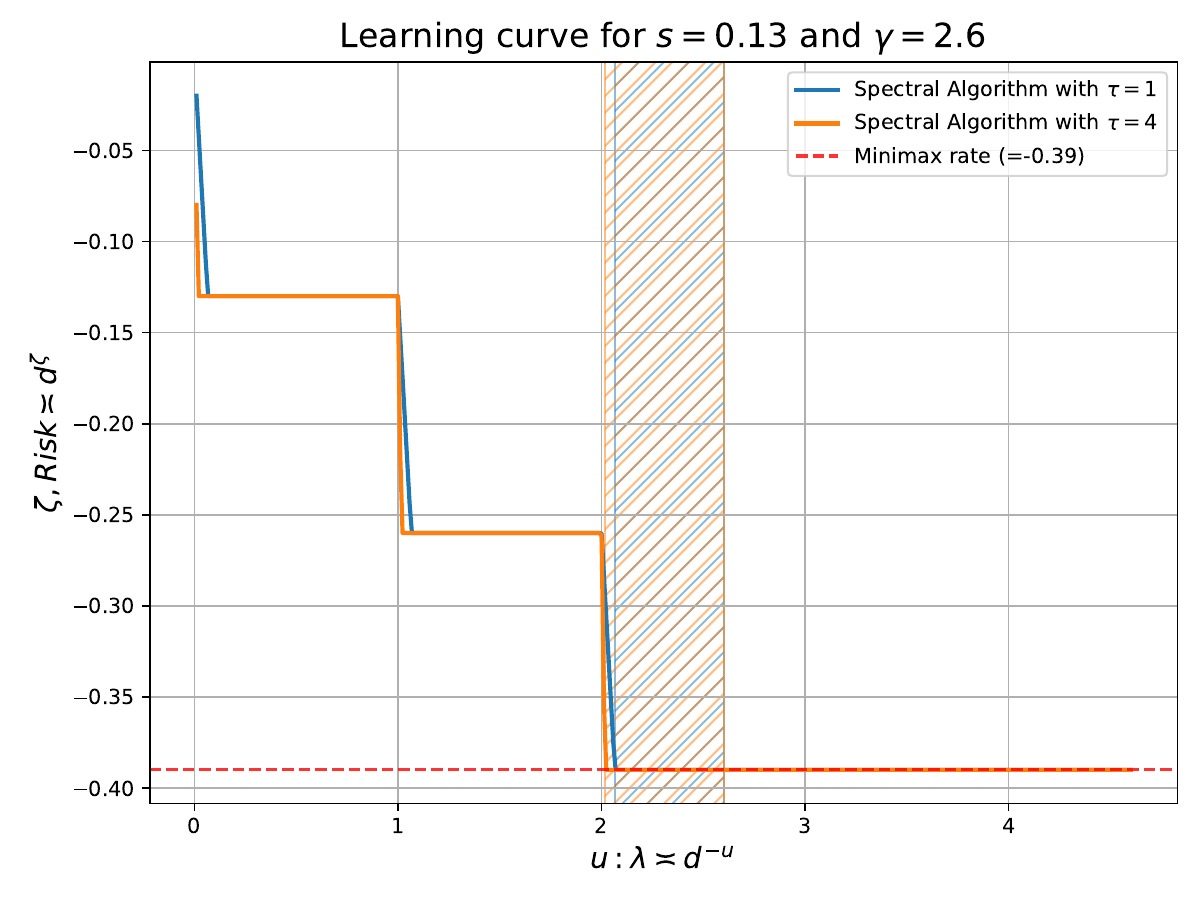}}
\subfigure[]{\includegraphics[width=0.48\columnwidth]{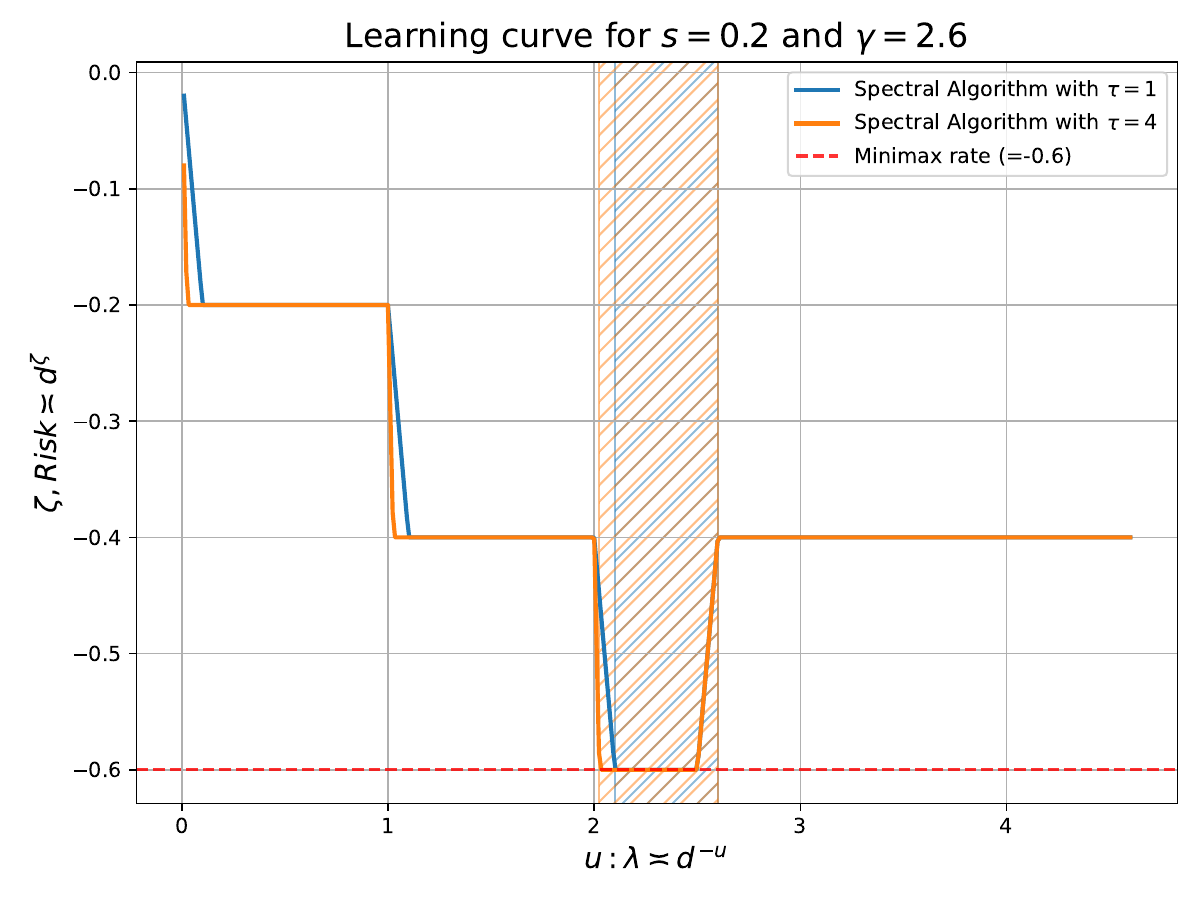}}

\vspace{-10pt}

\caption{
A graphical representation of the learning curves of large dimensional spectral algorithms with $\tau=1$ (blue) and $\tau=4$ (orange) obtained in Theorem \ref{theorem_learn_curve};
and the minimax rate (red) obtained in Proposition B.2.
Specifically, for any given $(s, \gamma, \tau, u)$, we set $\zeta = \max\{\gamma - \tilde{\ell} - 1 - 2  \max\{\gamma - u, 0\}, \tilde{\ell} - \gamma, -(\tilde{\ell} + 1)  s, \ln(\mathbf{1}\{\tau<\infty\}) + -2  \tau  u + (2  \tau - \tilde{s})  \tilde{\ell}\}$ for the spectral algorithm, and set $\zeta = \max\{p-\gamma, -(p+1)s\}$ for minimax rate.
In subfigure (a) and (c), $s \leq \Gamma(\gamma)$; while in subfigure (b) and (d), $s > \Gamma(\gamma)$. 
Under-regularized regimes are marked by hatched patterns with corresponding colors.
}
\label{fig_theory_benign_overfit}
\end{figure}

\subsection{Explaining the saturation effect}

From Theorem \ref{theorem_learn_curve} and the minimax benchmark (Proposition B.2) we obtain the \emph{saturation effect} in large dimensions: spectral algorithms with qualification \(\tau < s\) fail to attain the rate of the information-theoretic lower bound on the excess risk, whereas those with qualification \(\tau \geq s\) do. Specifically, when \(\tau < s\), no matter how carefully one tunes $u$, the convergence rate of the excess risk is much slower than the minimax rate $d^{p-\gamma}+d^{-(p+1)s}$, where $p:=\lfloor \gamma / (s+1) \rfloor$; while when \(\tau \geq s\), the convergence rate of the excess risk attains the above minimax rate with $u = u^{\prime}$ defined in Proposition F.4.

While the same phenomenon was previously reported in \citet{zhang2024optimal, lu2024saturation}, those authors did not explain why this dichotomy occurs. 
Our Theorem \ref{theorem_learn_curve} fills this gap by revealing the geometric mechanism behind the saturation effect: 
\begin{itemize}
    \item []
    \textit{A smaller qualification parameter $\tau$ makes the learning curve decrease more slowly in the over-regularized regime, which shifts the optimal choice $u^{\prime}(\tau)$ to the right and raises the minimum excess risk.}
\end{itemize}

\begin{figure}[btp]
\centering
\subfigure[]{\includegraphics[width=0.48\columnwidth]{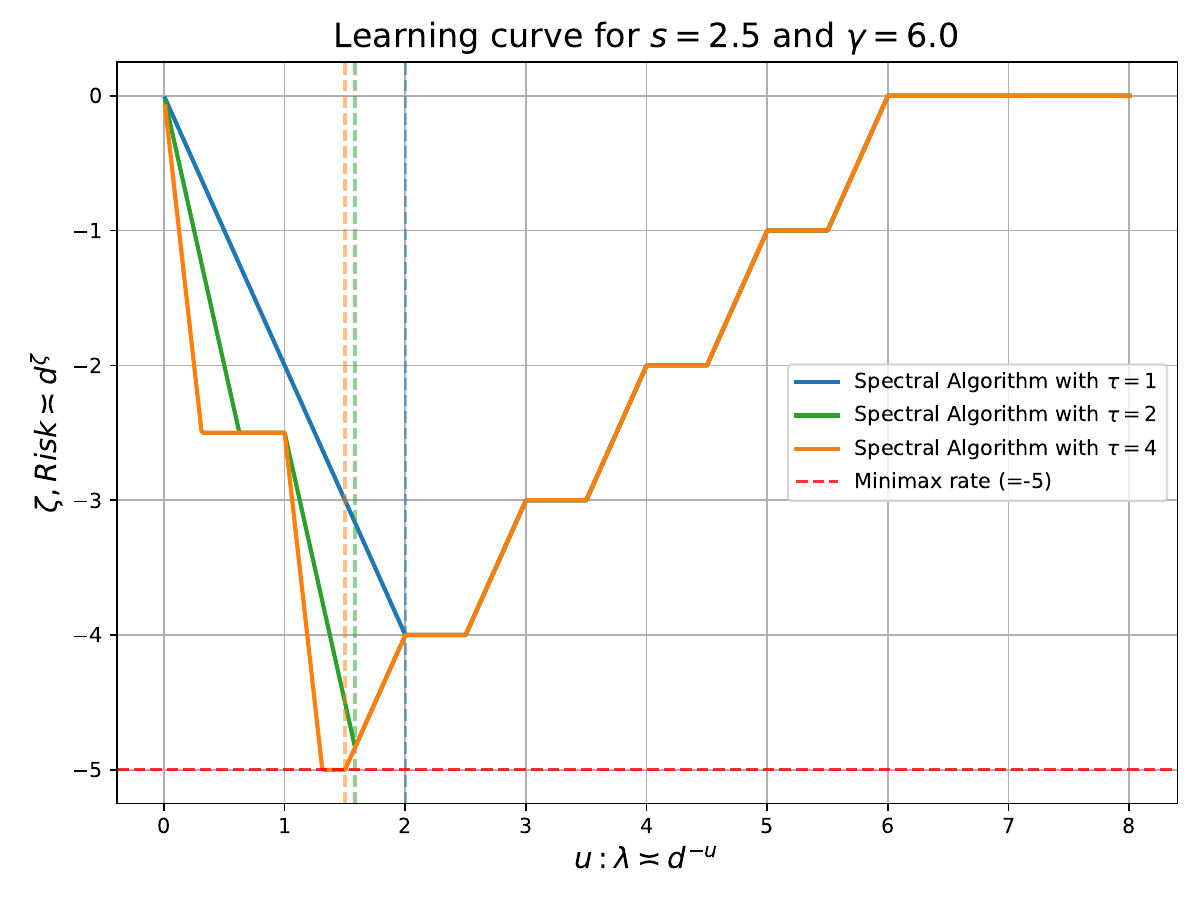}}
\subfigure[]{\includegraphics[width=0.48\columnwidth]{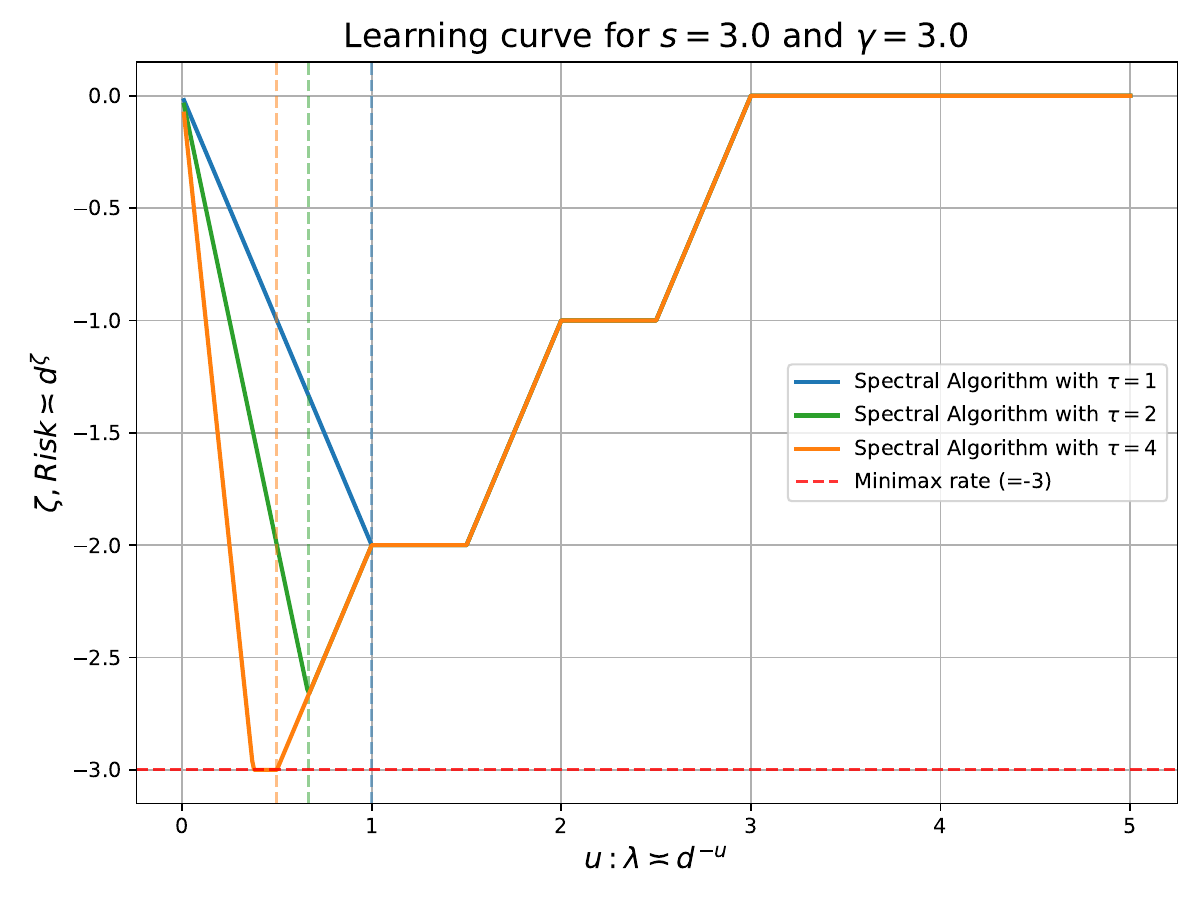}}

\caption{
Another graphical representation of the learning curve of large dimensional spectral algorithms with $\tau=1$ (blue),
$\tau=2$ (green),
and $\tau=4$ (orange) obtained in Theorem \ref{theorem_learn_curve};
and the minimax rate (red) obtained in Proposition B.2. 
Dashed vertical lines indicate the optimal rate of regularization parameter $\lambda$ with qualification $\tau$, that is, the global minimum point $u^{\prime}(\tau)$ of the learning curve.
}
\label{fig_theory_saturation}
\end{figure}

We use Figure~\ref{fig_theory_saturation} to illustrate the above geometric mechanism. 
Consider two learning curves with $s \in \{2.5, 3\}$ and qualification parameters $(\tau_1, \tau_2)=(1, 4)$, and let $u^{\prime}(\tau_i)$, $i=1,2$, denote the corresponding global minimum points. 
Because $\tau_1$ is smaller, its descent is slower, so $u^{\prime}(\tau_1) > u^{\prime}(\tau_2)$.
Now observe from the figure that the $\tau_1$-curve meets the $\tau_2$-curve exactly at $u^{\prime}(\tau_1)$, the point where the $\tau_1$-curve attains its own minimum.
At the point $u^{\prime}(\tau_1)$, the $\tau_2$-curve is already in its increasing regime where the variance term already dominates. 
Since the variance term is shared among all $\tau$, the minimum of the $\tau_1$-curve cannot pass below the $\tau_2$-curve. 
Hence the minimum excess risk achieved by $\tau_1$ is strictly larger than that achieved by $\tau_2$.

\subsection{Equivalence with the sequence model}

Another consequence of Theorem \ref{theorem_learn_curve} is that, in the large-dimensional regime, the learning curve for kernel regression agrees with that of an associated sequence model up to constant factors. This gives a precise sense in which the sequence model serves as an effective proxy for kernel regression under sufficient regularization.

Consider the sequence model
\begin{equation}\label{eqn_sequence_model}
    z_j = f_j + \xi_j,\quad j=1, 2, \cdots,
\end{equation}
where $(z_j)_{j=1}^{\infty}$ is the observation sequence, $f_j$ is the coefficient of $f_{\star}$ with respect to $\phi_j$, that is, we have $f_j = \langle f_{\star}, \phi_j \rangle_{L^2}$, $j=1, 2, \cdots$. Moreover, $\xi_j$, $j=1, 2, \cdots$ are (not necessarily independent) random variables with mean zero and variance $\sigma^2 / n$.
Suppose $\lambda_j$'s are known and consider the parametrization $f_j = \lambda_j^{1/2} \beta_j$, $j=1, 2, \cdots$. 
As shown in Proposition I.3, 
the least squares estimator under ridge regularization is given by $\hat{f}_j^{\lambda} = \lambda_j \reg^{\krr}(\lambda_j) z_j$, $j=1, 2, \cdots$. 
Similarly, the gradient flow estimator is $\hat{f}_j^{\lambda} = \lambda_j \reg^{\gf}(\lambda_j) z_j$. 
More generally, for any filter function $\reg$ in Definition \ref{def:filter},
we define the estimator of $(f_j)_{j=1}^{\infty}$ as:
\begin{equation}\label{eqn_est_sequence_model}
    \hat{f}_j^{\lambda} := \lambda_j \reg(\lambda_j) z_j,\quad j=1, 2, \cdots
\end{equation}
whose excess risk is
\begin{equation}\label{eqn_excess_risk_sequence}
    \begin{aligned}
        E_{\xi}\left[\sum_{j=1}^{\infty}(\hat{f}_j^{\lambda} - f_j)^2\right]
        =
        \frac{\sigma^2}{n}\sum_{j =1}^\infty \left[ \lambda_j \reg(\lambda_j) \right]^2
        + \sum\limits_{j=1}^{\infty} \left( \rem(\lambda_j) f_{j}\right)^{2}.
    \end{aligned}
\end{equation}
Note that the same parametrization and excess risk $E_{\xi}\left[\sum_{j=1}^{\infty}(\hat{f}_j^{\lambda} - f_j)^2\right]$ are also considered in recent work that focuses on the concentration of the excess risks of KRR and ridge regression in sequence models (see, e.g., \cite{cheng2022dimension, misiakiewicz2024non}).

The next proposition provides the learning curve of the estimator in \eqref{eqn_est_sequence_model} and clarifies its relationship with the learning curve for kernel regression in Theorem \ref{theorem_learn_curve}.

\begin{proposition}\label{prop_learning_curve_of_sequence}
    Adopt all the notations and assumptions in Theorem \ref{theorem_learn_curve}. 
    Then the excess risk of $(\hat{f}_j^{\lambda})_{j=1}^{\infty}$ satisfies
\begin{equation}\label{eqn_prop_learning_curve_of_sequence_0}
\begin{aligned}
     E_{\xi}\left[\sum_{j=1}^{\infty}(\hat{f}_j^{\lambda} - f_j)^2\right]
= 
\Theta_{d} \left( 
d^{2u - \gamma - \ell_{\lambda} - 1}
+ d^{\ell_{\lambda} - \gamma}
+ d^{-(\ell_{\lambda} + 1)  s}
+ \mathbf{1}\{\tau<\infty\} d^{-2  \tau  u + (2  \tau - \tilde{s})  \ell_{\lambda}}
\right).
\end{aligned}
\end{equation}
Consequently, Theorem \ref{theorem_learn_curve} implies
\begin{equation}\label{eqn_prop_learning_curve_of_sequence}
    E_{x, \epsilon} \left[ \left(\hat{f}_{\lambda}(x) - f_{\star}(x) \right)^{2} \right] = \Theta_{d, \mathbb{P}} \left(
E_{\xi}\left[\sum_{j=1}^{\infty}(\hat{f}_j^{\max\{\lambda, n^{-1}\}} - f_j)^2\right]
\right).
\end{equation}
\end{proposition}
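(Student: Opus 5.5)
We compute the two sums in \eqref{eqn_excess_risk_sequence} directly, using the block structure of Proposition~\ref{prop:inner_edr}, and then compare the result with Theorem~\ref{theorem_learn_curve}.

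\textbf{Variance.} By the filter bounds in \eqref{eq:Filter_case1}, $\lambda_j\reg(\lambda_j)\asymp \lambda_j/(\lambda_j+\lambda)$. Hence
\[
\frac{\sigma^2}{n}\sum_j[\lambda_j\reg(\lambda_j)]^2 \asymp d^{-\gamma}\sum_{k} N(d,k)\min\{1,\mu_k^2\lambda^{-2}\}.
\]
With $\lambda\asymp d^{-u}$, the blocks $k\le \ell_\lambda$ have $\mu_k\gtrsim\lambda$ and contribute $d^{-\gamma}N_{\ell_\lambda}\asymp d^{\ell_\lambda-\gamma}$. The block $k=\ell_\lambda+1$ contributes $d^{-\gamma}d^{\ell_\lambda+1}d^{-2(\ell_\lambda+1)}d^{2u}=d^{2u-\gamma-\ell_\lambda-1}$. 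For higher blocks the terms decay geometrically in $k$, because $N(d,k)\mu_k^2\asymp d^{-k}$ for $k\le \ell_\gamma+2$. For the far tail we use $\sum_j\lambda_j^2\le \lambda_1\operatorname{Tr}(T)$ restricted to $j>N_{\ell_\gamma+2}$, together with $\lambda_j\le \mu$ of the top blocks; this gives a contribution $O(d^{-\gamma+2u}\,d^{-(\ell_\gamma+3)})$, which is negligible. (If $u>\ell_\gamma+2$, we instead bound the tail by $d^{-\gamma}$ times a count; these cases only matter for $\lambda\ll n^{-1}$, where we will use $\max\{\lambda,n^{-1}\}$, so I restrict attention to $u\le\gamma$ for the comparison.)

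\textbf{Bias.} We need $\sum_j \rem(\lambda_j)^2 f_j^2$. For blocks $k\ge\ell_\lambda+1$, $\rem\le1$ and $\sum_{k>\ell_\lambda}\|P_kf_\star\|^2\le R\,\mu_{\ell_\lambda+1}^s\asymp d^{-(\ell_\lambda+1)s}$. The matching lower bound comes from $\rem(\mu_{\ell_\lambda+1})\gtrsim 1$, which holds since $\mu_{\ell_\lambda+1}\lesssim\lambda$ and $\reg\le C_2/(z+\lambda)$, combined with Assumption~\ref{assump_source condition_lower_bound}. For $k\le\ell_\lambda$ we use $\rem(\mu_k)\le \mathfrak C_3(\lambda/\mu_k)^{t}$ with $t=\tau$. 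This gives
\[
\sum_{k\le\ell_\lambda}\lambda^{2\tau}\mu_k^{s-2\tau}\,\mu_k^{-s}\|P_kf_\star\|^2\lesssim d^{-2\tau u}\max_{k\le\ell_\lambda}d^{(2\tau-s)k}.
\]
If $s<2\tau$ the maximum is attained at $k=\ell_\lambda$; if $s\ge2\tau$ it is attained at $k=0$, which is bounded using $\sum f_j^2\le R$. Together these produce $d^{-2\tau u+(2\tau-\tilde s)\ell_\lambda}$. The lower bound uses the qualification lower bound \eqref{eq:Filter_Rem_finite_case2} together with monotonicity of $\rem$, and needs a comparable lower bound $\rem(\mu_k)\gtrsim(\lambda/\mu_k)^\tau$. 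This is the main obstacle: Definition~\ref{def:filter} only gives a lower bound at $\lambda_1$. I would try to obtain it from the extension conditions in Appendix~J.1 (as in \cite{lu2024saturation}), and then invoke the second part of Assumption~\ref{assump_source condition_lower_bound} for the case $s>2\tau$. When $\tau=\infty$, this term is dominated by the others and is dropped. Summing the variance and bias contributions gives \eqref{eqn_prop_learning_curve_of_sequence_0}.

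\textbf{Comparison.} Replace $\lambda$ by $\lambda'=\max\{\lambda,n^{-1}\}$, so that $u'=\min\{u,\gamma\}$ and $\ell_{\lambda'}=\min\{\ell_\lambda,\ell_\gamma\}=\tilde\ell$ (with the convention that $u=\gamma$ gives $\lfloor\gamma\rfloor$). Substituting into \eqref{eqn_prop_learning_curve_of_sequence_0}:
\[
2u'-\gamma-\tilde\ell-1=\gamma-\tilde\ell-1-2\max\{\gamma-u,0\}.
\]
The remaining three exponents match those of Theorem~\ref{theorem_learn_curve} exactly, except that the qualification term carries $-2\tau u'$ rather than $-2\tau u$. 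When $u>\gamma$ we have $\tilde\ell=\ell_\gamma$, and
\[
-2\tau u+(2\tau-\tilde s)\ell_\gamma\le -2\tau\gamma+(2\tau-\tilde s)\ell_\gamma\le-\tilde s\ell_\gamma,
\]
which is dominated by the term $d^{\ell_\gamma-\gamma}$ (or by the bias term) in both expressions. Checking this domination case by case shows the two rates agree up to constants, and \eqref{eqn_prop_learning_curve_of_sequence} then follows from Theorem~\ref{theorem_learn_curve}.
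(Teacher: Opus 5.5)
Your approach is the paper's. Its proof writes the excess risk as $\frac{\sigma^2}{n}\mathcal N_{2,\varphi}(\lambda)+\mathcal M_{2,\varphi}(\lambda)$ via \eqref{eqn_excess_risk_sequence}, and it imports the orders of these two sums from Lemma D.14 of \cite{lu2024saturation}, which is the block-by-block computation you carry out. The comparison with Theorem~\ref{theorem_learn_curve} is left implicit there; your substitution $u'=\min\{u,\gamma\}$, $\ell_{\lambda'}=\tilde\ell$ is the right way to make it explicit. Since you reprove the lemma, a few steps need repair.

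First, the obstacle you flag is already settled by the assumptions.
\begin{itemize}
\item The first requirement restated in Appendix J.1, imposed for $s>0$ by Definition~\ref{def:filter}, is $(z/\lambda)^{2\tau}\rem^2(z)\ge\mathfrak C_7$ for $z>\lambda$. This gives $\rem(\mu_{\ell_\lambda})\gtrsim\min\{1,(\lambda/\mu_{\ell_\lambda})^{\tau}\}$ directly; use monotonicity if $\mu_{\ell_\lambda}\le\lambda$. No analytic extension is needed.
\item For $s>2\tau$ even this is superfluous. Monotonicity and $\rem(\lambda_1)\ge\mathfrak C_4\lambda^\tau$ give $\rem(\mu_k)\ge\mathfrak C_4\lambda^\tau$ for all $k$, and the second part of Assumption~\ref{assump_source condition_lower_bound} finishes.
\item For $s=0$ no lower bound on this term is needed.
\item For block $\ell_\lambda+1$, the reason $\rem(\mu_{\ell_\lambda+1})\gtrsim1$ is that $\mu_{\ell_\lambda+1}/\lambda\asymp d^{u-\ell_\lambda-1}\to0$. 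Hence $\rem(\mu_{\ell_\lambda+1})\ge1-C_2\mu_{\ell_\lambda+1}/\lambda\to1$. Merely $\mu_{\ell_\lambda+1}\lesssim\lambda$ would not suffice when $C_2>1$.
\end{itemize}

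Second, the step ``when $\tau=\infty$ this term is dominated and dropped'' needs an argument, and it is false in general.
\begin{itemize}
\item For non-integer $u$ it works. Take $t$ in \eqref{eq:Filter_case1} with $2t(u-\ell_\lambda)\ge(\ell_\lambda+1)s$; the degree-$\le\ell_\lambda$ bias is then $O(d^{-2t(u-\ell_\lambda)})$.
\item For integer $u$ it fails. The block $k=\ell_\lambda=u$ has $\mu_{\ell_\lambda}\asymp\lambda$. For gradient flow, $\rem(\mu_{\ell_\lambda})=e^{-\mu_{\ell_\lambda}/\lambda}=\Theta(1)$, so by Assumption~\ref{assump_source condition_lower_bound} this block contributes $\Theta(d^{-\ell_\lambda s})$.
\item That contribution exceeds the displayed rate whenever $s>0$ and $\ell_\lambda(s+1)<\gamma$. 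For example, $\gamma=2.5$, $s=1$, $u=1$ gives $d^{-1}$ against $d^{-1.5}$.
\end{itemize}
So for $\tau=\infty$ your argument, and the first display itself, hold only for non-integer $u$. For $\tau<\infty$ this block is correctly captured by the term $d^{-\tilde s\ell_\lambda}$.

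Third, in the comparison, $-2\tau\gamma+(2\tau-\tilde s)\ell_\gamma\le-\tilde s\ell_\gamma$ does not yield domination by $d^{\ell_\gamma-\gamma}$; take $\ell_\gamma=0$. Use instead $-2\tau(\gamma-\ell_\gamma)-\tilde s\ell_\gamma\le-(\gamma-\ell_\gamma)=\ell_\gamma-\gamma$, valid since $\tau\ge1$. This shows that for $u\ge\gamma$ the qualification term is dominated by $d^{\ell_\gamma-\gamma}$ in both expressions.

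Finally, your computation establishes the first display only when $\ell_\lambda\le\ell_\gamma$. Beyond that, Assumption~\ref{assump_source condition_lower_bound} no longer gives nondegeneracy at degrees $\ell_\lambda$ and $\ell_\lambda+1$, and Proposition~\ref{prop:inner_edr} does not control the ordering of the higher blocks. This restricted range is all that \eqref{eqn_prop_learning_curve_of_sequence} requires.
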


Proposition \ref{prop_learning_curve_of_sequence} shows that for $\lambda=\Omega_d(n^{-1})$, the learning curve of a spectral algorithm in kernel regression has the same asymptotic order as that of the corresponding estimator in the associated sequence model.
In this regime, the sequence model captures the statistical behavior of large-dimensional spectral algorithms up to constant factors, which reinforces the idea that the sequence model serves as a theoretically tractable proxy for kernel regression. See Appendix B.3.1 for more discussions. 
The appearance of $\max\{\lambda,n^{-1}\}$ also clarifies that this equivalence is a sufficiently-regularized phenomenon rather than an exact identity for arbitrarily small $\lambda$.

\section{An extension for KRR on general domains}
\label{sec_general}

Section~\ref{sec_main_results} establishes the full learning curves of large-dimensional analytic spectral algorithms for inner-product kernels on $\mathbb{S}^{d-1}$, where the spherical inner-product structure yields explicit low-degree spectral block properties.
This raises the question of whether these low-degree spectral block properties are the key ingredient behind the large-dimensional learning curve beyond the spherical setting.

As the first attempt, we study this question for KRR, which is the main object in the existing large-dimensional kernel regression literature. 
We consider kernels on general domains in $\mathbb{R}^d$ under structural assumptions that match the low-degree spectral scaling derived in the spherical case, together with a hyper-contractivity condition on the associated eigenfunctions.
Under these assumptions, Theorem~\ref{theorem_learn_curve_general} establishes the learning curve of large-dimensional KRR on general domains. 
The result has the same form as in the spherical KRR case, and therefore the characterization of benign overfitting in Theorem~\ref{coroll_benign_overfit} continues to hold within this class of kernels.

\subsection{Notation and assumptions for general domains}

We introduce the settings for this section, which parallel those in Section \ref{sec_settings}. 
The main components are as follows: 
Let $\mathcal{X} \subset \bbR^{d}$ be a measurable space, and let $\rho_{\mathcal{X}}$ be a probability measure on $\mathcal{X}$.
Let $\tilde{\mathscr{K}}$ be a continuous and positive-definite kernel on $\mathcal{X} \times \mathcal{X}$ with respect to $\rho_{\mathcal{X}}$, and it admits a Mercer decomposition
\begin{equation}\label{eqn_mercer_general}
    \tilde{\mathscr{K}}(x,x^\prime) = \sum_{k=0}^{\infty} \sum_{j=1}^{\tilde{N}(d, k)} \tilde{\lambda}_{k, j}  \tilde{\psi}_{k, j}(x) \tilde{\psi}_{k, j}\left(x^\prime\right),
\end{equation}
where the eigenvalues $\{\tilde{\lambda}_{0, 1}, \cdots, \tilde{\lambda}_{0, \tilde{N}(d, 0)}, \tilde{\lambda}_{1, 1}, \cdots, \tilde{\lambda}_{1, \tilde{N}(d, 1)}, \cdots\}$ form a non-increasing sequence,
and the corresponding eigenfunctions of $\tilde{\lambda}_{k, j}$ are $\tilde{\psi}_{k, j}$ with $k=0, 1, \cdots$ and $j=1, \cdots, \tilde{N}(d, k)$. 
Furthermore, we denote $\tilde{\calH}$ as the RKHS associated with the kernel $\tilde{\mathscr{K}}$, and $[\tilde{\calH}]^{s}$ as the corresponding interpolation space. Finally, we denote the regression function as
    $$
        f_{\star}(x) = \sum_{k=0}^{\infty} \sum_{j=1}^{\tilde{N}(d,k)} \theta_{k,j} \tilde{\psi}_{k,j}(x).
    $$

Next, we replace certain assumptions in Section \ref{sec_settings} with the following ones.

\begin{assumption}\label{assump_asymptotic_general}
    We assume that there exist positive absolute constants $c_1$, $c_2$, and a non-integer $\gamma>0$, such that the sample size satisfies
    $c_{1} d^{\gamma} \leq n \leq c_{2} d^{\gamma}$,
and we denote $\ell_{\gamma} := \lfloor \gamma\rfloor < \gamma$.
\end{assumption}

In the spherical setting, Proposition \ref{prop:inner_edr} identifies the low-degree spectral-block scaling of inner-product kernels on $\mathbb{S}^{d-1}$.
For KRR on general domains, the next assumption abstracts the corresponding spectral scaling needed in our learning-curve analysis.

\begin{assumption}\label{assump_eigenval_general}
    We assume that the sum of the eigenvalues is bounded, that is, there exists an absolute constant $\tilde{\mathfrak{C}}$ such that $\sum_{k=0}^{\infty} \sum_{j=1}^{\tilde{N}(d, k)} \tilde{\lambda}_{k, j}\leq \tilde{\mathfrak{C}}$. Moreover, for any $k  = 0, 1, \cdots, 2\ell_{\gamma} +2$, we assume that
    $$\tilde{N}(d, k) = \Theta_d(d^{k}) \quad \text{ and } \quad 
        \tilde{\lambda}_{k, j} = \Theta_d(d^{-k}), \quad \text{ for } 1\leq j \leq \tilde{N}(d, k).
    $$
\end{assumption}

Assumption \ref{assump_eigenval_general} holds for several types of common kernels in large dimensions. For example:
\begin{itemize}
    \item Under Assumption \ref{assu:coef_of_inner_prod_kernel}, inner product kernels defined on $\mathbb{S}^{d-1}$ with uniform distribution (see, e.g., Proposition \ref{prop:inner_edr});

    \item Under Assumption \ref{assu:coef_of_inner_prod_kernel}, random feature kernels defined on the hypercube $\{-1, 1\}^{d}$ with uniform distribution (see, e.g., Appendix D in \cite{mei2022generalization});

    \item Gaussian kernels defined on $\mathbb{R}^d$ with standard Gaussian distribution (see, e.g., \cite{rasmussen2003gaussian});

    \item Product Laguerre kernels defined on $[0, \infty)^{d}$ with the product of independent Gamma distributions, see Appendix I.4.1 for a detailed definition.
    
\end{itemize}

We provide a detailed
verification in Appendix I.4.
Meanwhile, to the best of our knowledge, we do not find any other form of eigen-decay patterns within the class of kernels currently understood in large dimensions.

The third assumption is known as the hyper-contractivity assumption.

\begin{assumption}\label{assump_hyper_contractivity}
    For any integer $q \geq 1$, there exists a constant $C=C(q, \ell_{\gamma})$ such that
$$
\sup_{k \leq 2\ell_{\gamma}+1} \sup_{j \leq \tilde{N}(d, k)} \|\tilde{\psi}_{k, j}\|_{L^{2q}} \leq C(q, \ell_{\gamma}).
$$
\end{assumption}

Assumption \ref{assump_hyper_contractivity} is originally adopted in \cite{mei2022generalization, misiakiewicz2024non} to replace the addition formula property holding for spherical harmonics (see, e.g., Proposition 1.18 in \cite{gallier2009notes}). 
Assumption \ref{assump_hyper_contractivity} holds for several types of common kernels in large dimensions, such as (i) inner-product kernels defined on $\mathbb{S}^{d-1}$ with uniform distribution and (ii) random feature kernels defined on the hypercube $\{-1, 1\}^{d}$ with uniform distribution (see, e.g., Appendix E in \cite{mei2022generalization}). 
These two examples also satisfy Assumption 8, and hence provide concrete settings covered by Theorem~\ref{theorem_learn_curve_general}.

\begin{remark}
    \cite{misiakiewicz2024non} also considers a hyper-contractivity assumption \textit{on the unknown true regression function $f_{\star}$}, which posits that $\left\|f_{\star}\right\|_{L^q} \leq (C q)^{(\ell_{\gamma}+1) / 2}\left\|f_{\star}\right\|_{L^2}$ for all integers $q \geq 2$. In Appendix I.5, we use a counterexample to demonstrate that the hyper-contractivity of eigen-functions does not imply the hyper-contractivity of the unknown true regression function.
    Therefore, the hyper-contractivity assumption on $f_{\star}$ is not weaker than Assumption \ref{assump_hyper_contractivity}.
\end{remark}

The next two assumptions are for the regression functions. 
Similar to Assumptions \ref{assump_function_calss} and \ref{assump_source condition_lower_bound}, they imply that $f_{\star}$ is exactly in $[\tilde{\calH}]^{s}$.

\begin{assumption}\label{assump_function_calss_general}
There exist absolute constants $s \geq 0$ and $R>0$, such that we have
    \begin{equation*}
    f_{\star} \in 
    \left\{f \in [\tilde{\calH}]^{s} \mid \|f\|_{[\tilde{\calH}]^{s}} \leq \sqrt{R}\right\}.
\end{equation*}
\end{assumption}

\begin{assumption}\label{assump_source condition_lower_bound_general}
Suppose there exists an absolute constant $c_{0} > 0$ such that for any $ d \geq \mathfrak{C}$, we have
\begin{equation*}
        \sum\limits_{j=1}^{\tilde{N}(d, m)} \tilde{\lambda}_{m, j}^{-s} \theta_{m,j}^{2} \ge c_{0}, ~~m=\tilde{\ell}, \tilde{\ell}+1;
   \end{equation*}
Moreover, if $s> 2$, we further assume that
$\sum\limits_{k=0}^{\tilde{\ell}} \sum\limits_{j =1}^{\tilde{N}(d, k)}  \theta_{k, j}^{2} \ge c_{0}$.
\end{assumption}

\subsection{Learning curve of large-dimensional KRR in general domains}

The following theorem determines the learning curve of large-dimensional KRR with general kernels defined on general domains.

\begin{theorem}\label{theorem_learn_curve_general}
    Suppose Assumptions \ref{assume_lambda}, 
    \ref{assump_asymptotic_general}, 
    \ref{assump_eigenval_general}, 
    \ref{assump_hyper_contractivity},
\ref{assump_function_calss_general},
and \ref{assump_source condition_lower_bound_general} hold with $s \geq 0$ and $u \in (0, \infty]$. 
Denote $\tilde{s}=\min\{s, 2\}$. Then the excess risk of the KRR estimator in (\ref{eq bias var decomposition}) satisfies
\begin{equation*}
\begin{aligned}
    E_{x, \epsilon} \left[ \left(\hat{f}_{\lambda}(x) - f_{\star}(x) \right)^{2} \right]
= \Theta_{d, \mathbb{P}} \left( 
d^{\gamma - \tilde{\ell} - 1 - 2  \max\{\gamma - u, 0\}}
+ d^{\tilde{\ell} - \gamma}
+ d^{-(\tilde{\ell} + 1)  s}
+ d^{-2  u + (2 - \tilde{s})  \tilde{\ell}}
\right).
\end{aligned}
\end{equation*}
Consequently, results in Theorem~\ref{coroll_benign_overfit} still hold under the above assumptions and conditions.
\end{theorem}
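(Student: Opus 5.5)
We follow the spherical argument for Theorem~\ref{theorem_learn_curve} in the special case $\reg=\reg^{\krr}$, $\tau=1$, and replace every use of spherical structure (addition formula, exact block eigenvalues from Proposition~\ref{prop:inner_edr}) by Assumptions~\ref{assump_eigenval_general} and~\ref{assump_hyper_contractivity}. We start from the decomposition \eqref{eq bias var decomposition}. KRR has the closed form $\hat f_\lambda(x)=\tilde{\mathscr K}(x,X)(K+n\lambda I)^{-1}Y$, so we do not need the analytic functional calculus or Assumption~\ref{assume_pert_filter}; everything reduces to resolvent estimates for $K=\tilde{\mathscr K}(X,X)$. As in the spherical proof, split the kernel as $K=K_{\le \tilde\ell}+K_{(\tilde\ell,\ell_\gamma]}+K_{>\ell_\gamma}$. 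Here $K_{\le m}=\Psi_{\le m}\Lambda_{\le m}\Psi_{\le m}^\top$ is built from degrees $k\le m$, with $\Psi_{\le m}\in\mathbb R^{n\times \tilde N_m}$ the feature matrix and $\tilde N_m=\Theta_d(d^m)$.

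\textbf{Step 1: kernel-matrix concentration.} We would first establish three facts, with high probability:
\[
\frac1n\Psi_{\le \ell_\gamma}^\top\Psi_{\le \ell_\gamma}=I+o_{d,\mathbb P}(1),\qquad K_{>\ell_\gamma}=\kappa\,(I+\Delta),
\]
where $\|\Delta\|_2=o_{d,\mathbb P}(1)$ and $\kappa=\sum_{k>\ell_\gamma}\sum_j\tilde\lambda_{k,j}=\Theta_d(d^{-\ell_\gamma}\cdot\ldots)$. The third fact is the analogous near-orthogonality of the low-degree features with the degree $\ell_\gamma+1$ features in the relevant norm. This is exactly where hyper-contractivity enters. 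Following \cite{mei2022generalization,misiakiewicz2024non}, we bound the diagonal of $K_{>\ell_\gamma}$ through its $L^q$ moments. For the high degrees up to $2\ell_\gamma+2$ we use the eigenvalue scaling, and for the tail beyond that, the trace bound together with $\gamma<2\ell_\gamma+2$. The off-diagonal part is handled by a matrix moment or decoupling argument. The non-integer condition on $\gamma$ in Assumption~\ref{assump_asymptotic_general} guarantees the gap $\tilde N_{\ell_\gamma}\ll n\ll \tilde N_{\ell_\gamma+1}$, which both concentration statements need.

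\textbf{Step 2: variance.} The variance equals $\frac{\sigma^2}{n^2}\operatorname{tr}\bigl((K/n+\lambda)^{-2}\,\mathbb E_x[\tilde{\mathscr K}(X,x)\tilde{\mathscr K}(x,X)]\bigr)$. We split according to degree $k\le\tilde\ell$ versus $k>\tilde\ell$ and use the Woodbury identity on the low-degree block. The effective ridge is $\lambda+\kappa/n$, where $\kappa/n\asymp d^{-\gamma}\cdot\Theta(1)$ and so equals $\max\{\lambda,n^{-1}\}$ up to order. The following contributions then arise:
\begin{itemize}
\item[(a)] the degrees $k\le\tilde\ell$, which are fully fitted, contribute $\tilde N_{\tilde\ell}/n\asymp d^{\tilde\ell-\gamma}$;
\item[(b)] the degrees $\tilde\ell<k\le\ell_\gamma$, which lie in the intermediate block, and the degree $\tilde\ell+1$ contribute at most $\frac1n\sum_k \tilde N(d,k)\tilde\lambda_k^2/(\tilde\lambda_k+\lambda+\kappa/n)^2$; the leading term $k=\tilde\ell+1$ gives $d^{\gamma-\tilde\ell-1-2\max\{\gamma-u,0\}}$;
\item[(c)] the tail degrees contribute at a smaller order.
\end{itemize}
For matching lower bounds we keep only these dominant blocks.

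\textbf{Step 3: bias.} The bias is $\|(I-\hat T_\lambda)f_\star\|_{L^2}^2$. For the low-degree component, the residual of the ridge in degree $k\le\tilde\ell$ is $\frac{\lambda}{\tilde\lambda_k+\lambda}$ to leading order. Maximizing $\lambda^2\tilde\lambda_k^{-2}\,\tilde\lambda_k^{\,s}$ under the source condition gives $d^{-2u+(2-\tilde s)\tilde\ell}$. The saturation at $\tilde s=\min\{s,2\}$ appears through the second lower-bound condition of Assumption~\ref{assump_source condition_lower_bound_general}. The component of degree greater than $\tilde\ell$ is essentially unfit and gives $d^{-(\tilde\ell+1)s}$. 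Cross terms between the fitted and unfitted parts must be shown negligible, and here we reuse the hyper-contractivity bounds from Step 1 applied to $\Psi_{\le\tilde\ell}^\top f_{>\tilde\ell}(X)/n$. Lower bounds follow from Assumption~\ref{assump_source condition_lower_bound_general} as in the lemmas cited after Assumption~\ref{assump_source condition_lower_bound}.

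\textbf{Combining the steps.} Adding the bounds gives the stated rate. The benign-overfitting consequence then follows verbatim from the proof of Theorem~\ref{coroll_benign_overfit}, since that proof depends only on the rate formula with $\tau=1$.

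\textbf{Main obstacle.} We expect the hardest step to be Step 1 and its use in the intermediate-block regime $\ell_\lambda<\ell_\gamma$. Without the addition formula, the diagonal of the high-degree kernel is no longer constant, and we need a uniform control of $\|\Delta\|_2$ together with the cross-block terms with only polynomial moment bounds. Our plan is to truncate at degree $2\ell_\gamma+2$, which is why Assumptions~\ref{assump_eigenval_general}--\ref{assump_hyper_contractivity} reach that level. We then control the finite-degree part with moment (Rosenthal-type) matrix bounds, and the tail through the trace bound and Markov's inequality.
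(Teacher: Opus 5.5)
Your architecture matches the paper's. You specialize to $\reg^{\krr}$, so Assumption~\ref{assume_pert_filter} and the Bernstein argument for $B_{1,4}$ disappear. You replace the spherical concentration lemmas by general-domain ones, and you apply Woodbury at the cutoff $\tilde\ell$. For KRR the intermediate block is then not really an obstacle: its operator norm is $O_{d,\mathbb P}(nd^{-\ell_\lambda-1})=o_{d,\mathbb P}(n\lambda)$, so it is simply absorbed into $M_{>\tilde\ell}$.

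Rather than re-deriving concentration by truncation and moment bounds, the paper verifies the hypotheses of Theorem~6 of \cite{mei2022generalization} and applies that theorem twice. The first application is to $\tilde{\mathscr K}$; the second is to the kernel with eigenvalues $\tilde\lambda_{k,j}^2$. The second application gives $\tilde\Psi_{>\ell_\gamma}\tilde\Sigma_{>\ell_\gamma}^2\tilde\Psi_{>\ell_\gamma}^\top=\kappa_2(\mathrm{I}_n+\Delta_2)$ with $\kappa_2=\Theta_d(d^{-(\ell_\gamma+1)})$. This is missing from your Step~1 (your unspecified ``third fact'' is not what is needed), yet it is exactly what gives the variance lower bound when $\tilde\ell=\ell_\gamma$. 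Also, under Assumption~\ref{assump_eigenval_general} you have $\kappa=\Theta_d(1)$, not $\Theta_d(d^{-\ell_\gamma}\cdots)$.

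The genuine gap is in Step~3. You plan to show the cross terms negligible by applying hyper-contractivity to $\Psi_{\le\tilde\ell}^\top f_{>\tilde\ell}(X)/n$. Assumption~\ref{assump_hyper_contractivity} controls only individual eigenfunctions of degree at most $2\ell_\gamma+1$. It says nothing about $f_{>\tilde\ell}$, which involves all degrees. So you cannot bound $\mathbb E[\|\psi_{\le\tilde\ell}(x)\|_2^2 f_{>\tilde\ell}(x)^2]$ by $\tilde N_{\tilde\ell}\|f_{>\tilde\ell}\|_{L^2}^2$. On the sphere this bound is the addition formula; here it would require hyper-contractivity of $f_\star$ itself, which the paper stresses is not implied.

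Negligibility is also unnecessary. Split $\mathrm{bias}^2=\|B_1+B_2\|_2^2+\|B_3-B_4-B_5\|_2^2$ into coordinates of degree $\le\tilde\ell$ and $>\tilde\ell$. Markov's inequality alone gives $\|B_2\|_2^2=O_{d,\mathbb P}(n^{-1}\|\Psi_{>\tilde\ell}\theta_{>\tilde\ell}\|_2^2)=O_{d,\mathbb P}(d^{-(\tilde\ell+1)s})$. Woodbury gives $B_1=-(\Sigma_{\le\tilde\ell}^{-1}+\Psi_{\le\tilde\ell}^\top M_{>\tilde\ell}^{-1}\Psi_{\le\tilde\ell})^{-1}\Sigma_{\le\tilde\ell}^{-1}\theta_{\le\tilde\ell}$, where the inverted matrix has all eigenvalues $\Theta_{d,\mathbb P}(\lambda+n^{-1})$. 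Hence you get the two-sided bound $\|B_1\|_2^2=\Theta_{d,\mathbb P}\bigl((d^{-2u}+d^{-2\gamma})d^{(2-\tilde s)\tilde\ell}\bigr)$.

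The lower bound then needs a case analysis on the exponents. When $-2u+(2-\tilde s)\tilde\ell$ exceeds both $-(\tilde\ell+1)s$ and $(2-\tilde s)\tilde\ell-2\gamma$, use $\|B_1\|-\|B_2\|$. Otherwise use $\|B_3-B_5\|-\|B_4\|$; the $d^{(2-\tilde s)\tilde\ell-2\gamma}$ terms are then absorbed by the variance, since that exponent is at most $\tilde\ell-\gamma$.

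This also exposes two slips in your Step~3. First, the low-degree residual is governed by $\lambda+n^{-1}$, not $\lambda$. So for $u>\gamma$ the low-degree bias is $d^{(2-\tilde s)\tilde\ell-2\gamma}$, which exceeds $d^{-2u+(2-\tilde s)\tilde\ell}$; it is harmless only because it is dominated by $d^{\tilde\ell-\gamma}$, and you need to check that. Second, ``adding the bounds'' yields only the upper bound, because the pieces inside each norm can cancel; the matching lower bound requires the case analysis above.
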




Theorem \ref{theorem_learn_curve_general} shows that the rate characterization obtained in Theorem~\ref{theorem_learn_curve} is not tied to the exact sphere kernel setting. 
Rather, it extends to KRR on general domains for kernels satisfying Assumptions \ref{assump_eigenval_general} and \ref{assump_hyper_contractivity}.
In this sense, the spherical analysis captures structural features that are sufficient for the same large-dimensional behavior to persist beyond the sphere.
This supports the use of spherical kernel models as a tractable basic setting for studying large-dimensional KRR.

\begin{remark}\label{remark_compare_with_mei_mis}
Two existing results \citep{mei2022generalization, misiakiewicz2024non} also provide tight bounds on excess risk of large-dimensional KRR with general kernels defined on general domains. 
Specifically, \cite{mei2022generalization} showed that 
    \begin{equation*}
    \left|E_{x, \epsilon} \left[ \left(\hat{f}_{\lambda}(x) - f_{\star}(x) \right)^{2} \right]-
    \text{E}_{\text{approx}}
    \right| =o_{d, \mathbb{P}}\left(1\right),
\end{equation*}
where $\text{E}_{\text{approx}}$ is a certain deterministic quantity. However, as we have discussed in Appendix B.2, such results are uninformative when $s>0$ since the excess risk is $o_{d, \mathbb{P}}(1)$.
Later, \cite{misiakiewicz2024non} developed a tighter approximation (see, e.g., Theorem 1 and Theorem 2 of \cite{misiakiewicz2024non})
    \begin{equation*}
    E_{x, \epsilon} \left[ \left(\hat{f}_{\lambda}(x) - f_{\star}(x) \right)^{2} \right] = \text{E}_{\text{approx}} \cdot \left(1 + o_{d, \mathbb{P}}\left(1\right)\right).
\end{equation*}
    However, their assumptions are either difficult to verify (Assumption 1 of \cite{misiakiewicz2024non}), or are too restrictive when $s>0$, as we have discussed in Appendix B.3.
On the contrary, for kernel ridge regression methods, Theorem \ref{theorem_learn_curve_general} not only extends Theorem \ref{theorem_learn_curve} from kernels on the sphere to kernels on general domains. It also extends the results of \citet{mei2022generalization} and \citet{misiakiewicz2024non} by determining the convergence rate of the excess risk when $s>0$.
\end{remark}

\section{Numerical experiments}\label{sec_experiments}
We use several experiments to illustrate Theorem \ref{theorem_learn_curve}.
Consider the following two inner-product kernels:
\begin{itemize}
    \item[(i)] Neural Tangent Kernel (NTK) of a two-layer ReLU neural network:
    $$
        \mathscr{K}^{\mathrm{ntk}}(x, x^\prime) := \Phi(\langle x, x^{\prime} \rangle), ~~x, x^{\prime} \in \mathbb{S}^{d-1},
    $$
    where $\Phi(t)=\left[\sin{(\arccos t)}+2(\pi-\arccos t)t\right]/ (2 \pi)$.
    
    \item[(ii)] RBF kernel with a fixed bandwidth:
    $$
        \mathscr{K}^{\mathrm{rbf}}(x,x^{\prime}) = \exp{\left(-\frac{\|x-x^{\prime}\|_{2}^{2}}{2}\right)}, ~~x, x^{\prime} \in \mathbb{S}^{d-1}.
    $$
\end{itemize}
The RBF kernel satisfies Assumption \ref{assu:coef_of_inner_prod_kernel}. For the NTK, the coefficients of $\Phi(\cdot)$, $\{a_{j}\}_{j=0}^{\infty}$, satisfy $ a_{j} > 0, j \in \{0, 1\} \cup \{2,4,6,\ldots\}$ and $ a_{j} = 0, j \in \{3,5,7,\ldots\}$ (see, e.g., \cite{lu2023optimal}). 
As noted after Assumption \ref{assu:coef_of_inner_prod_kernel}, our results can be extended to inner product kernels with certain zero coefficients $a_j$. Specifically, for any $\gamma>0$, as long as $ a_{j} > 0$ for $j = \ell_{\gamma}, \ell_{\gamma}+1$, the proof and convergence rate remain the same. Therefore, for $\gamma<2$ in our experiments, the theoretical convergence rates for the NTK can be directly obtained from Theorem \ref{theorem_learn_curve}. 

We consider the following data generation process:
$y_{i} = f_{\star}(x_{i}) + \epsilon_{i}, ~~ i = 1, \cdots, n,$ 
where each $x_{i}$ is independently sampled from the uniform distribution on $\mathbb{S}^{d-1}$, and $\epsilon_{i} \overset{\text{i.i.d.}}{\sim} \mathcal{N}(0,1)$.
For different training sample sizes $n$,  we set the corresponding dimension as $d=\lfloor n^{1/\gamma}\rfloor$. For each kernel and dimension $d$, we consider the following regression function $f_{\star}$:
\begin{equation}\label{experiment true function}
    f_{\star}(x) = 
    \left\{\begin{matrix}
\sum_{i=1}^{3} \mathscr{K}(\xi_{i},x), & \quad \text{ if } s=1; \vspace{5pt}\\
\sum_{i=1}^{3} \sum_{k=0}^{2} d^{k(1-s)/2} P_{k, d}(\langle\xi_{i}, x\rangle), & \quad \text{ if } s \neq 1;
\end{matrix}\right.
\end{equation}
where $P_{k, d}(\cdot)$ is the $k$-th normalized Gegenbauer polynomial with parameter $(d-1)/2$ (see, e.g., Appendix L.2).
The points $\xi_{1}, \xi_{2}, \xi_{3}$ are independently sampled from the uniform distribution on $\mathbb{S}^{d-1}$, subject to the constraint 
    \begin{equation}\label{eqn_req_inner_of_xi_in_exp}
        \sum_{i \neq i^{\prime}}^{3} \left| \langle \xi_{i}, \xi_{i^{\prime}} \rangle \right| \leq 2.9,
    \end{equation}
    which excludes nearly collinear configurations. 
    Once sampled, the triplet remains fixed for each experimental run (i.e., for each combination of $n$, $d$, and $s$), but is resampled across different runs. 
One can verify that $f_{\star}$ satisfies Assumption \ref{assump_function_calss} and Assumption \ref{assump_source condition_lower_bound} (see, e.g., Appendix L.2).

\begin{remark}
    The condition (\ref{eqn_req_inner_of_xi_in_exp}) is very mild: the LHS is of order $O_{d, \mathbb{P}}(d^{-1})$. To see this, note that for any $1 \leq i \neq i^{\prime} \leq 3$, the inner products $\langle \xi_{i}, \xi_{i^{\prime}} \rangle$ have mean zero and variance $(d+1)^{-1}$, thus for any $\varepsilon>0$, from Chebyshev's inequality we have $\mathbb{P}(\sum_{i \neq i^{\prime}}^{3} \left| \langle \xi_{i}, \xi_{i^{\prime}} \rangle \right| \geq \varepsilon) \leq 6 \max_{1 \leq i \neq i^{\prime} \leq 3} \mathbb{P}( \left| \langle \xi_{i}, \xi_{i^{\prime}} \rangle \right| \geq \varepsilon) \leq 6/[\varepsilon^2(d+1)]$. 
\end{remark}


For each $n$, we repeat the experiments 50 times and present the average generalization error of (a) kernel gradient flow (KGF); (b) kernel ridge regression (KRR) on $1,000$ testing samples.
We fit a least squares line on the log scale to the excess risk as a function of the dimension, which yields an estimate of the convergence rate (i.e., the slope reported in the figure legends).


\begin{remark}
    We select KGF and KRR for experiments due to their dual significance. 
    On one hand, they are central and widely studied instances of spectral algorithms for kernel regression.
    On the other hand, the class of analytic spectral algorithms is characterized by a qualification parameter $\tau \in [1, \infty]$, with KRR and KGF serving as canonical representatives of the two ends, $\tau=1$ and $\tau =\infty$. 
    Hence, their agreement with the predicted slopes shows that the large-dimensional asymptotic picture in Theorem~\ref{theorem_learn_curve} is already visible at finite sample sizes for the two canonical endpoint filters, and provides empirical support for the broader theory of analytic spectral algorithms.
\end{remark}

\noindent {\bf Type 1 experiments:} We consider four parameter settings of  $(\gamma, s, u)$ as follows:
\begin{itemize}
    \item $(\gamma, s, u) =(1.5, 1.5, 0.5)$, $n$ from $2600$ to $5000$, with intervals $200$, $d = \lfloor n^{2/3} \rfloor$.

    \item $(\gamma, s, u) =(0.8, 1.0, 2.0)$, $n$ from $1000$ to $3000$, with intervals $100$, $d = \lfloor n^{5/4} \rfloor$.

    \item $(\gamma, s, u) =(1.0, 1.0, 2.0)$, $n$ from $1000$ to $3000$, with intervals $100$, $d = n$.

    \item $(\gamma, s, u) =(1.2, 2.0, 1.0)$, $n$ from $1600$ to $4000$, with intervals $150$, $d = \lfloor n^{5/6} \rfloor$.
    
\end{itemize}
We set $\lambda = C_{\lambda} d^{-u}$ with $C_{\lambda} \in \{0.001, 0.01, 0.1, 0.2, 0.4, 0.5, 0.7, 0.8, 1, 10\}$.
For each method and each setting, we report the results obtained with the best performing choice of $C_{\lambda}$ that consistently has the smallest generalization error, and separately report the results for all tested values of $C_{\lambda}$. 
As shown in 
Figure \ref{experiment_1_settings_1} and 
Figure \ref{experiment_1_settings_2}, as well as Figures 7 and 8 in Appendix L.1,
the empirical convergence rates agree with the theoretical rates predicted by Theorem \ref{theorem_learn_curve}.

\begin{figure}
\centering
\subfigure{\includegraphics[width=0.45\columnwidth]{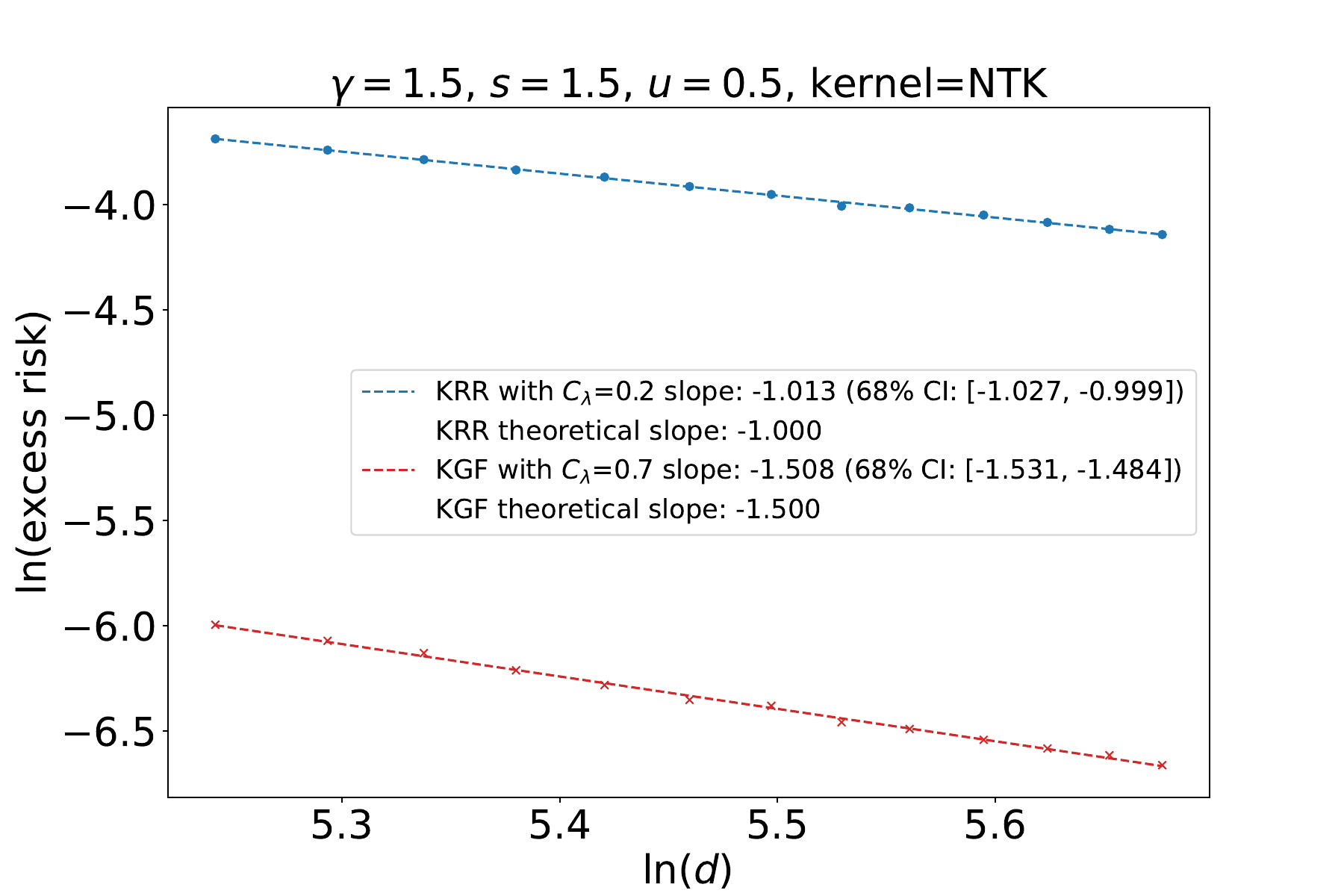}}
\subfigure{\includegraphics[width=0.45\columnwidth]{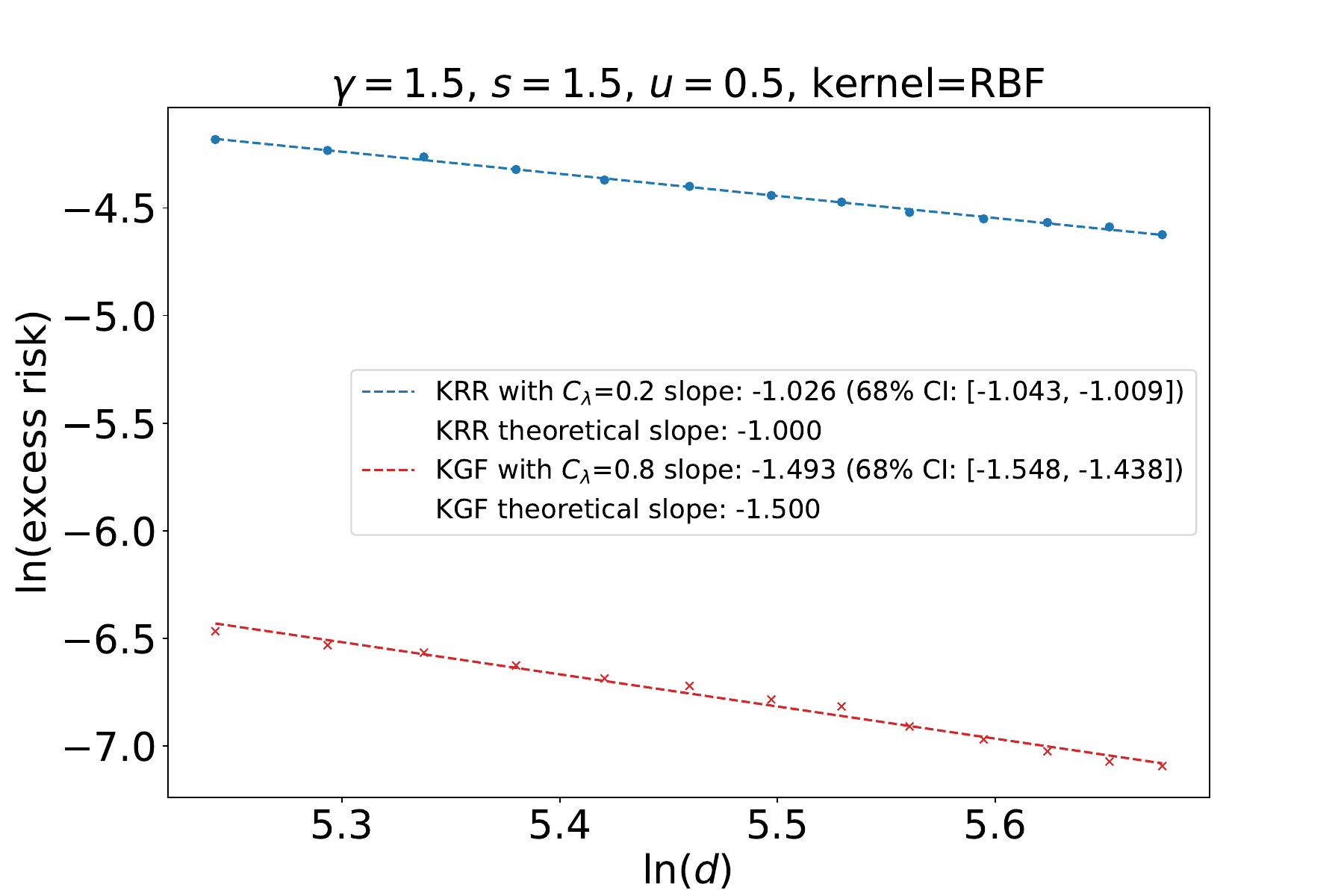}}
\caption{
Type 1 experiments with parameters $(\gamma, s, u) = (1.5, 1.5, 0.5)$ and regularization $\lambda = C_{\lambda} d^{-u}$.
The left panel uses the NTK kernel, and the right panel uses the RBF kernel.
In each panel, we plot $\ln(\text{Excess risk})$ versus $\ln(d)$ for KRR ($\tau=1$) and KGF ($\tau=\infty$), with each method using its optimal $C_{\lambda}$ value (selected separately).
Dashed lines show the least-squares fits. 
The legend reports the fitted slopes (with their 68\% confidence intervals), and the theoretical predictions from Theorem \ref{theorem_learn_curve}.
Results for additional $C_{\lambda}$ values are provided in Figure 9 in Appendix L.1.
}
\label{experiment_1_settings_1}
\end{figure}
\begin{figure} 
\centering
\subfigure{\includegraphics[width=0.45\columnwidth]{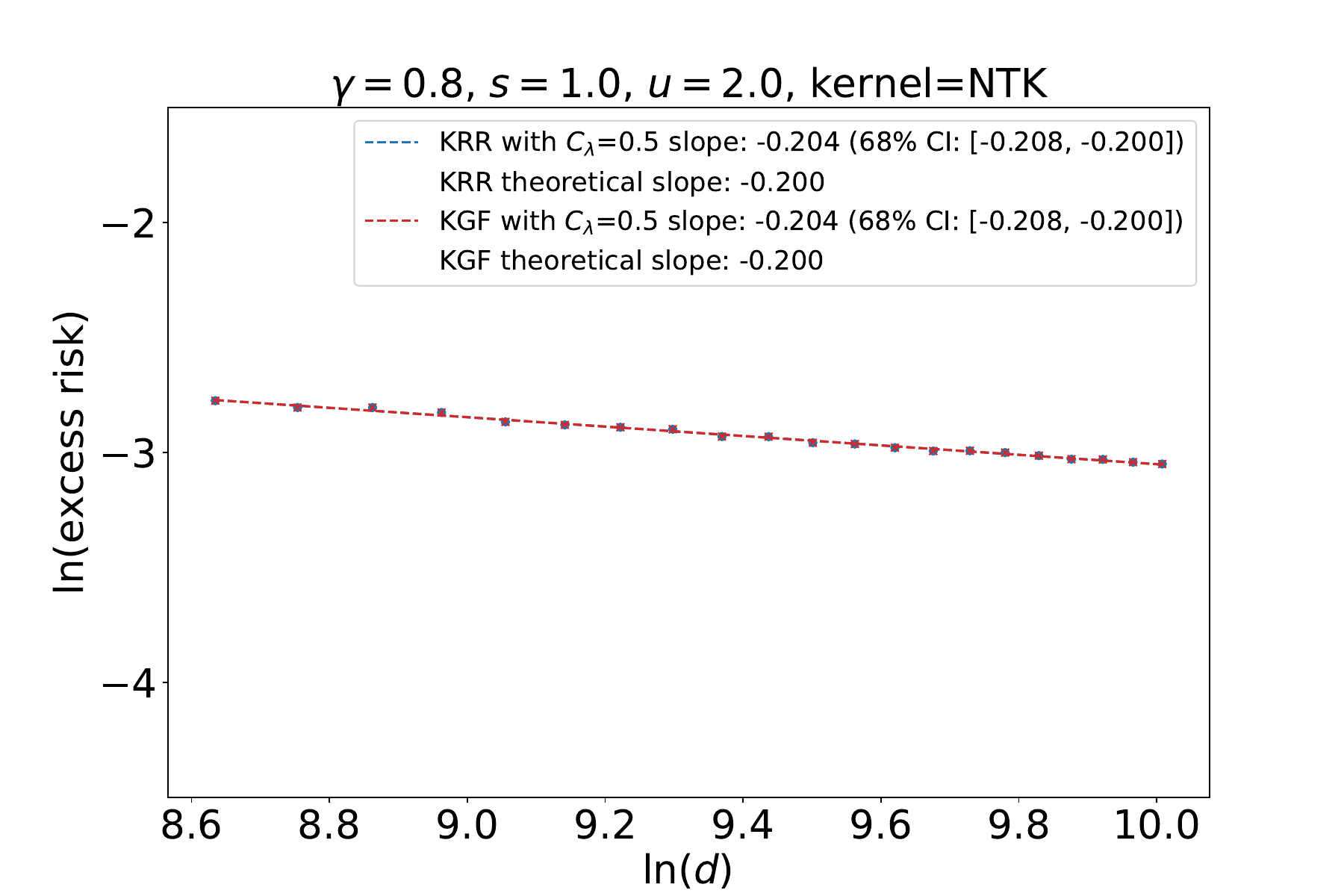}}
\subfigure{\includegraphics[width=0.45\columnwidth]{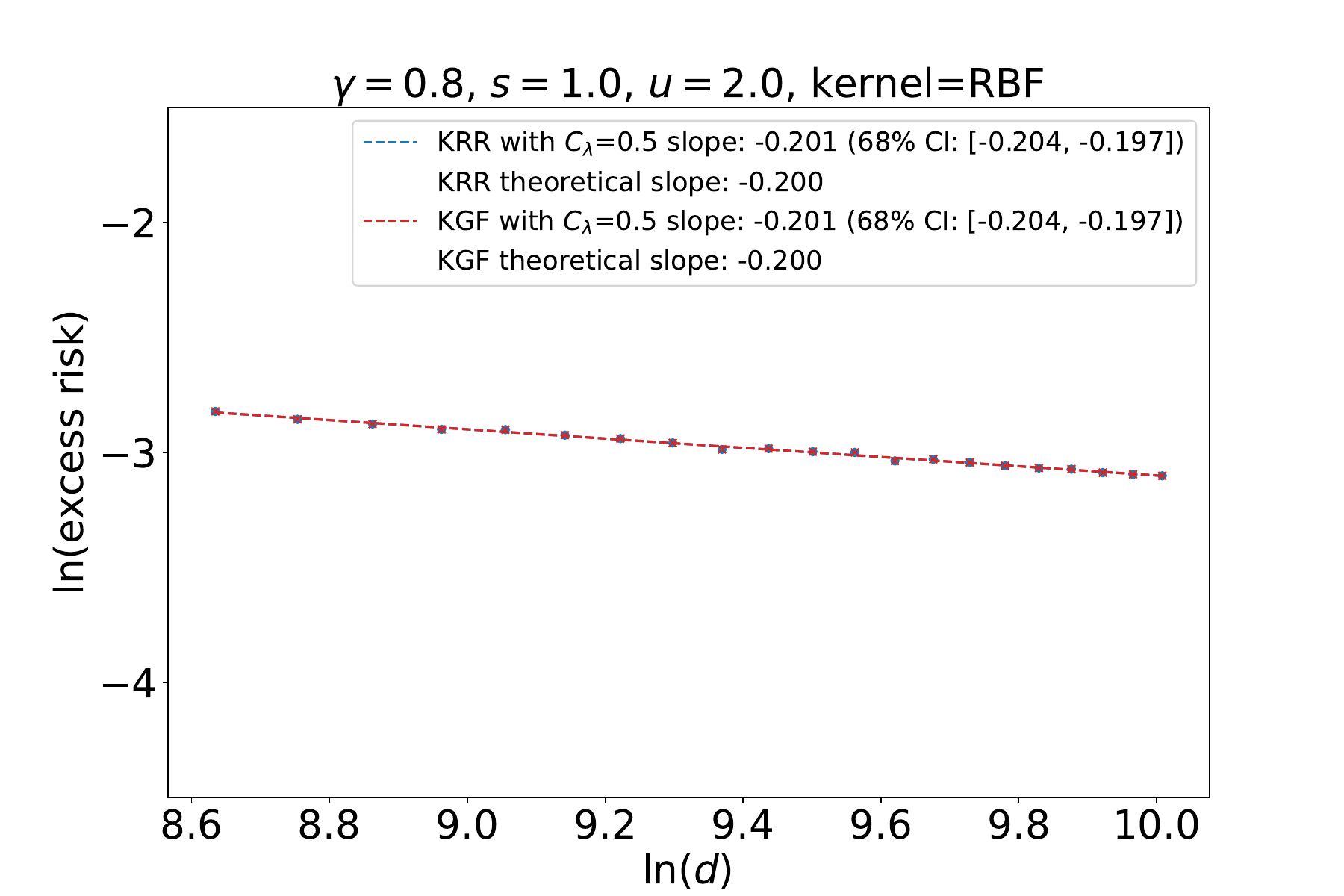}}
\caption{Type 1 experiments with $(\gamma, s, u) = (0.8, 1.0, 2.0)$. The setup and analysis are the same as in Figure \ref{experiment_1_settings_1}. Results for additional $C_{\lambda}$ values are shown in Figure 10 in Appendix L.1.}
\label{experiment_1_settings_2}
\end{figure}

\noindent {\bf Type 2 experiments:} 
In the second type of experiments, we fix $\gamma = 1.5$ and consider $s \in \{1, 2\}$. 
We vary $n$ from $3000$ to $5000$ (with gaps $200$) and set $d = \lfloor n^{2/3} \rfloor$. 
For any given $(\gamma, s, u)$, similar to Type 1 experiments (see, e.g., Figure \ref{experiment_1_settings_1}), we fit a least squares line on the log scale to the excess risk as a function of the dimension, which yields an estimate of the convergence rate.
We then report the estimated convergence rates with $u \in [0.1, 2.5]$, and compare them with the theoretical convergence rates given in Theorem \ref{theorem_learn_curve}. 
As shown in Figure \ref{experiment_2_s_1} and Figure \ref{experiment_2_s_2}, the estimated convergence rates align well with the theoretical convergence rates.

\begin{figure}[htb]
\centering
\subfigure{\includegraphics[width=0.45\columnwidth]{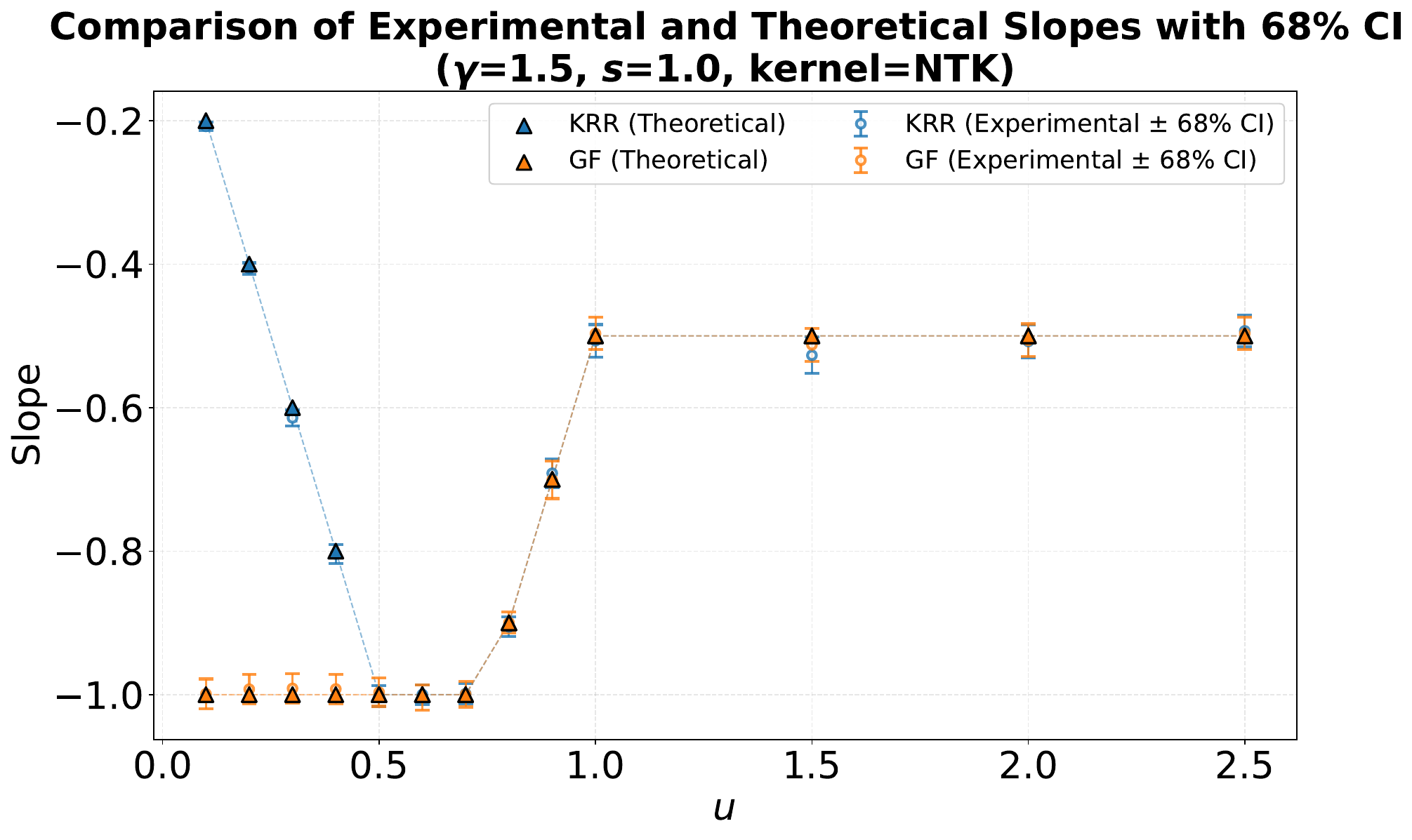}}
\subfigure{\includegraphics[width=0.45\columnwidth]{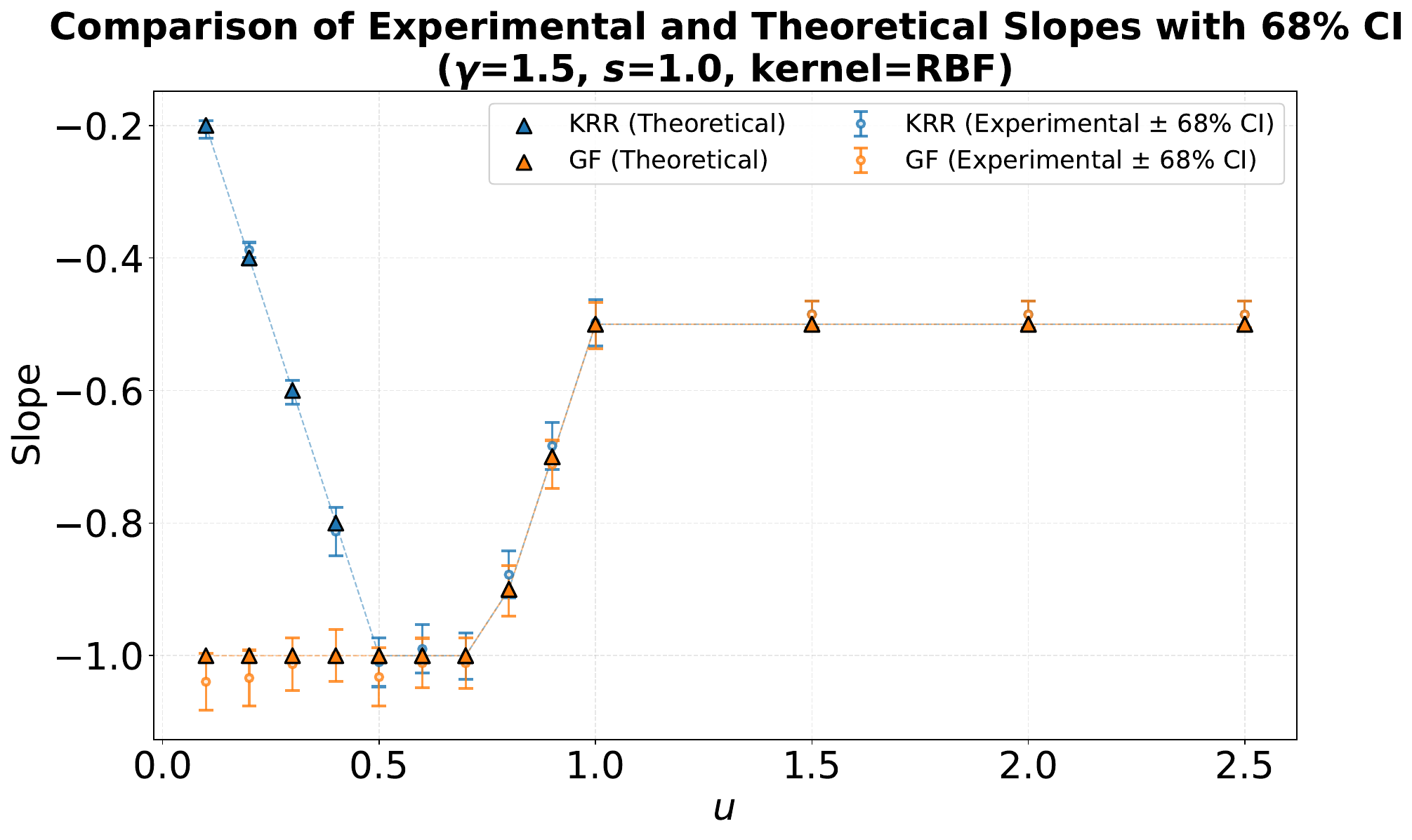}}

\caption{
Comparison of the experimental and theoretical convergence rates for Type 2 experiments with $(\gamma, s) = (1.5, 1)$. 
The left panel uses the NTK kernel, and the right panel uses the RBF kernel.
For each value of $u$, the experimental slope (circles with 68\% confidence intervals shown as error bars) is obtained by fitting a least squares line to $\ln(\text{Excess risk})$ against $\ln(d)$;
the theoretical slope (dashed lines with triangles) is predicted by Theorem \ref{theorem_learn_curve} (using $\tau=1$ for KRR and $\tau=\infty$ for KGF).
}
\label{experiment_2_s_1}
\end{figure}
\begin{figure}
\centering
\subfigure{\includegraphics[width=0.45\columnwidth]{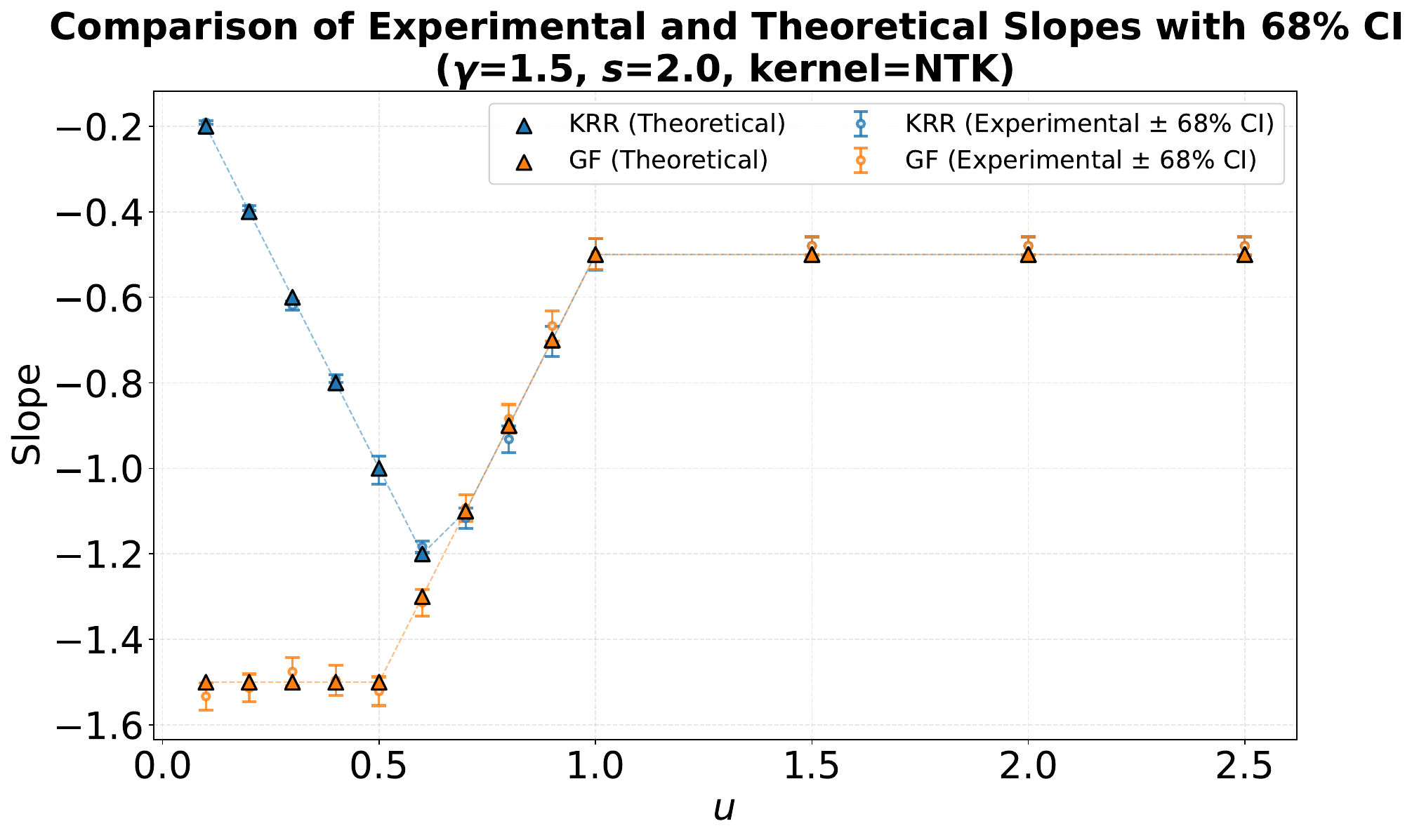}}
\subfigure{\includegraphics[width=0.45\columnwidth]{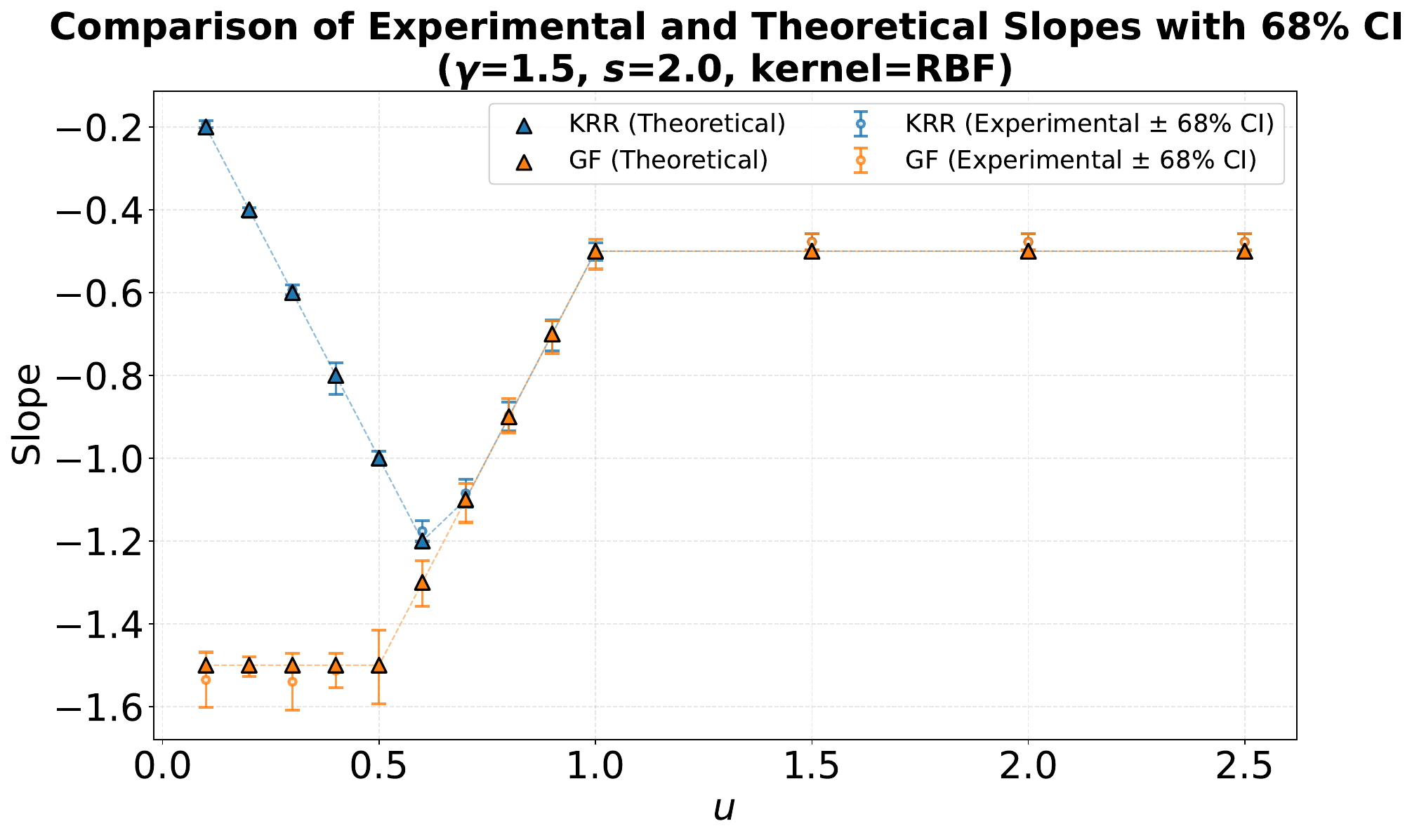}}

\caption{
Type 2 experiments with $(\gamma, s) = (1.5, 2)$. The setup and analysis are the same as in Figure \ref{experiment_2_s_1}. 
}
\label{experiment_2_s_2}
\end{figure}

\clearpage

\section*{Acknowledgments}
D.H. is the corresponding author. 
D.H. is supported by Singapore MOE AcRF Tier-1 Grant A-8004149-00-00.
Q.L. is supported by the National Natural Science Foundation of China (9237012, 11971257) and Tsinghua University Dushi Program (20251080059). 
Y.X. is supported by the SUSTech-NUS Joint Research Program and the National Natural Science Foundation of China (12571312). 
AI-assisted tools were used only for language polishing.

\bibliographystyle{chicago}
\bibliography{reference.bib} 

\clearpage

\appendix

\begin{appendix}

The appendix is organized as follows: 
Appendix \ref{append_notation} introduces all notations used throughout this Appendix; 
Appendix \ref{sec_related_work} provides a detailed comparison between our results and the existing literature;
Appendix \ref{append_sketch_proof_main_results} provides a sketch of proof of Theorem 3.1;
Appendix \ref{append_variance}, \ref{append_bias}, and \ref{append_learning_curve} are the proof of Theorem 3.1;
Appendix \ref{append_kernel_app_sphere} restates and develops several approximation results for kernels defined on the sphere;
Appendix \ref{append_kernel_app_general} generalizes results in Appendix \ref{append_kernel_app_sphere} to kernels on general domains and prove Theorem 4.2;
Appendix \ref{appendix_proof_other_results} proves other results in the main text;
Appendix \ref{appendix_filter_property} discusses properties of the filter function defined in Definition 2.2; 
Appendix \ref{appendix_auxiliary} lists some auxiliary lemmas;
and Appendix \ref{appendix_experiments} provides further experiments and supporting proofs for Section 5.

\section{Notations}\label{append_notation}

In this section, we first specify all the absolute constants on which our asymptotic notations (e.g., $O_{d}()$ and $O_{d, \mathbb{P}}()$) depend. 
That is, any implicit constants appearing in our results depend only on the absolute constants listed below. 
Next, we introduce the notations used throughout this Appendix, most of which are adopted from \cite{ghorbani2021linearized, zhang2024phase}.


\begin{definition}\label{def:abs_constants}

    We list all the absolute constants used in this paper:
    
    \begin{itemize}

    \item \(\sigma^2\): Upper bound on variance of the noise in (1).

    \vspace{3pt}
    
    \item \(c_1, c_2, \gamma > 0\): Positive constants in the asymptotic framework (2). 

    \vspace{3pt}

    \item \(a_{-1}; a_0, a_1, \ldots, a_{\lfloor \gamma \rfloor + 2} > 0\) : The sum and the first $(\lfloor \gamma \rfloor+ 3)$ coefficients of the Taylor expansion of \(\Phi(\cdot)\) as specified in Assumption 2.

    \vspace{3pt}

    \item \(s \geq 0\), $R>0$: Coefficients in the source condition  (6).

    \vspace{3pt}

    \item $c_{0} >0$: A constant given in Assumption 6.

    \vspace{3pt}
    
    \item $C_i > 0$, $i=1, \cdots, 4$: The constants related to the filter function defined in Definition 2.2.

    \vspace{3pt}

    \item $\tau^{\prime}$: A finite constant defined in (\ref{eqn_def_of_tau_prime}).

    \vspace{3pt}

    \item $\Delta u :=(u - \ell_{\lambda})\mathbf{1}\{u<\infty\} \in [0, 1)$: $u$ is the decay rate of the regularization parameter $\lambda$ in Assumption 4, and $\ell_{\lambda} = \lfloor u \rfloor$.

    \vspace{3pt}

    \item \(c_3, c_4 > 0\): Positive constants in Assumption 4. 

    \vspace{3pt}

    \item $\tilde{\ell}=\min\{\ell_{\gamma}, \ell_{\lambda}\}$ defined in Assumption 4, where $\ell_{\gamma} = \lfloor \gamma \rfloor$.
    
    \end{itemize}
\end{definition}



\paragraph*{Notations on eigenfunctions and spherical harmonics}
Denote $\psi(x) = (\phi_j(x))^{\top}_{j \ge 1} \in \mathbb{R}^{\infty}$, where $\phi_j(\cdot)$ are the eigenfunctions given in (4). For any $k =0, 1, \cdots$ and $j \leq N(d, k)$, denote $N_{k} = \sum_{k^{\prime}=0}^{k}N(d, k^{\prime})$, $ \Psi_{k, j} = \psi_{k,j}(X) = \left(\psi_{k,j}(x_{1}), \cdots, \psi_{k,j}(x_{n}) \right)^{\top} \in \mathbb{R}^{n}$, and
\begin{displaymath}
    \Psi_{k} = \left(\Psi_{k,1}, \cdots, \Psi_{k,N(d,k)} \right) \in \mathbb{R}^{n \times N(d,k)}.
\end{displaymath}
Denote $\Psi =  \left(\psi(x_{1}), \cdots \psi(x_{n}) \right)^{\top} \in \mathbb{R}^{n \times \infty}$. For $k =0, 1, \cdots$, denote 
$$
\Psi_{\leq k} = \left(\Psi_{0}, \cdots, \Psi_{k} \right) \in \mathbb{R}^{n \times N_{k}},
$$
then Proposition 2.1 and the Funk-Hecke formula in (5) together imply that for any $k \leq \ell_{\gamma}+1$, we have $\Psi = \left( \Psi_{\leq k}, \Psi_{>k} \right)$, where $\Psi_{>k} = \left(\Psi_{j_1}, \Psi_{j_2}, \cdots \right)$ with $\{j_1, j_2, \cdots\} \subset \{k+1, k+2, \cdots\}$.\\
When $\ell_{\lambda}<\ell_{\gamma}$, we further denote $\Psi_{\ell_{\lambda}+1, \ell_{\gamma}} = \left(\Psi_{\ell_{\lambda}+1}, \cdots, \Psi_{\ell_{\gamma}} \right)$.

\paragraph*{Notations on eigenvalues}
For any $k = 0, 1, \cdots$, define the diagonal matrix
\begin{displaymath}
    \Sigma_{k} = \mu_{k} \mathrm{I}_{N(d,k)} \in \mathbb{R}^{N(d,k) \times N(d,k)}.
\end{displaymath}


Denote $\Sigma = \text{Diag}\{ \Sigma_0, \Sigma_1, \Sigma_2, \cdots\}$. For $k = 0, 1, \cdots$, denote 
$$
\Sigma_{\leq k} = \text{Diag} \left\{\Sigma_{0}, \cdots, \Sigma_{k} \right\} \in \mathbb{R}^{N_{k} \times N_{k}}.
$$
Then Proposition 2.1 and the Funk-Hecke formula in (5) imply that for any $k \leq \ell_{\gamma}+1$, we have $\Sigma = \text{Diag}  \left\{ \Sigma_{\leq k}, \Sigma_{>k} \right\}$, where 
$\Sigma_{>k} = \text{Diag} \left\{\Sigma_{j_1}, \Sigma_{j_2}, \cdots \right\}$ with $\{j_1, j_2, \cdots\} \subset \{k+1, k+2, \cdots\}$.\\
When $\ell_{\lambda}<\ell_{\gamma}$, we further denote $\Sigma_{\ell_{\lambda}+1, \ell_{\gamma}} = \text{Diag} \left\{\Sigma_{\ell_{\lambda}+1}, \cdots, \Sigma_{\ell_{\gamma}} \right\}$.

\paragraph*{Notations on the regression function}
Recall that in (7) we define
\begin{equation*}
    f_{\star}(x) =\theta^{\top}  \psi(x)=\sum_{k=0}^{\ell_{\gamma}+1} \sum_{j=1}^{N(d,k)} \theta_{k,j} \psi_{k,j}(x) + \sum_{j =N_{\ell_{\gamma}+1}+1}^{\infty}f_j \phi_{j}(x),
\end{equation*}
where $\theta = \left( \theta_{0,1}, \theta_{1,1}, \cdots, \theta_{\ell_{\gamma}+1, N(d, \ell_{\gamma}+1)}, f_{N_{\ell_{\gamma}+1}+1}, f_{N_{\ell_{\gamma}+1}+2}, \cdots\right)^{\top}$.\\
For any $k =0, 1, \cdots, \ell_{\gamma}+1$, denote 
\begin{displaymath}
    \theta_{k} = \left( \theta_{k,1}, \cdots, \theta_{k,N(d,k)} \right)^{\top} \in \mathbb{R}^{N(d,k)},
\end{displaymath}
then we can denote $ \theta = \left( \theta^{ \top}_{\leq \tilde{\ell}}, \theta^{\top}_{>\tilde{\ell}} \right)^{\top}$, where
\begin{displaymath}
    \theta_{\leq \tilde{\ell}} = \left( \theta_{0}^{ \top}, \cdots, \theta_{\tilde{\ell}}^{\top} \right)^{\top} \in \mathbb{R}^{N_{\tilde{\ell}}}.
\end{displaymath}

\paragraph*{Notations on the kernel matrix}
Denote the kernel matrix, which is also called the gram matrix, by $K := \mathscr{K}(X, X) \in \mathbb{R}^{n \times n}$, where for $i,j \leq n$, its $(i, j)$-th entry is $(K)_{ij} = \mathscr{K}(x_{i},x_{j})$. From Lemma \ref{lemma lambda max min}, we have $\lambda_{\text{min}}\left( K\right) = \Omega_{d, \mathbb{P}}(1)$, hence the event $E_0 := \{K \text{ is an invertible matrix }\}$ occurs with probability $1-o_{d, \mathbb{P}}(1)$. In the remainder of this paper, our proofs are conditioned on the event $E_0$.\\
For $k \in \{\ell_{\gamma}, \tilde{\ell}\}$, define
$$
K_{\leq k} := \Psi_{\leq k} \Sigma_{\leq k} \Psi_{\leq k}^{\top}, 
\qquad 
K_{>k} := \Psi_{>k} \Sigma_{>k} \Psi_{>k}^{\top}. 
$$
Specifically, since $\psi_{0, 1}(x) \equiv 1$, we have $K_{\leq 0} = \mu_0 \Psi_{0, 1} \Psi_{0, 1}^{\top} = \mu_0 \mathbf{1} \mathbf{1}^{\top} = K_0$, where $K_{0}$ is defined in Assumption 5.

Define $M_{>k} := K_{>k} + n\lambda \mathrm{I}_n$, and
$$
M := K + n\lambda \mathrm{I}_n.
$$
When $\ell_{\lambda}<\ell_{\gamma}$, we further denote $ K_{\ell_{\lambda}+1, \ell_{\gamma}} := \Psi_{\ell_{\lambda}+1, \ell_{\gamma}} \Sigma_{\ell_{\lambda}+1, \ell_{\gamma}} \Psi_{\ell_{\lambda}+1, \ell_{\gamma}}^{\top}$.

\paragraph*{Discussion about the filter function} For notational simplicity, we denote $\varphi_{\lambda, K}:=n^{-1}\reg(K/n) \in \mathbb{R}^{n \times n}$. 
For any orthogonal matrix $B\in \mathbb{R}^{n\times n}$ and any diagonal matrix $S\in \mathbb{R}^{n\times n}$, from (10) we have
\begin{align}
    \label{eq:Filter_Rem_finite_case3}
    \reg(BSB^{\top})=B\reg(S)B^{\top}.
    \end{align}
Therefore, $\varphi_{\lambda, K}$ is a symmetric matrix, and from Definition 2.2 we have
\begin{equation}\label{eqn:spe_properties}
    \begin{aligned}
         &C_1 M^{-1} =   \frac{C_1}{n}\left(\frac{K}{n}+\lambda \mathrm{I}_n \right)^{-1} \leq \varphi_{\lambda, K} \leq  C_2 M^{-1}.
         \\
    \end{aligned}
\end{equation}

\section{Discussion on related works}\label{sec_related_work}

In this section, we provide a detailed comparison between our results and the existing literature to better illustrate the novelty of our results and proofs.

\subsection{Learning curves of fixed-dimensional spectral algorithms}\label{subsec_learning_curve_fix_d}

The learning curves of fixed-dimensional spectral algorithms became popular and have been well-studied within recent few years.
Several early works attempted to describe the learning curve of KRR using heuristic arguments \cite{Bordelon_Spectrum_2020, Cui2021GeneralizationER, mallinar2022benign}. However, their results relied on the unrealistic Gaussian design assumption, that is, they assumed that the eigenfunctions $\{\phi_j(x)\}_{j=0}^{\infty}$ in (4) are normal random variables.

To provide a rigorous proof of the learning curve of KRR, \cite{li2023asymptotic} rewrote the variance (14) and the bias (15) into integral operator forms. Specifically, recall the definition of $T$ in (3) and define its empirical version $T_{X}$ as follows: For any $y \in \mathbb{R}$, let $K_{x} : \R \to \calH$ be given by $K_{x}(y) = y \cdot \mathscr{K}(x,\cdot)$, whose adjoint $K_{x}^* : \calH \to \R$ is given by $K_{x}^*(f) = \ang{\mathscr{K}(x,\cdot),f}_{\calH} = f(x)$.
Moreover, we denote by $T_{x} = K_{x} K_{x}^*$ and $T_{X} = \frac{1}{n}\sum_{i=1}^n T_{x_i}$.
\cite{li2023asymptotic} then proved the following approximation on variance \& bias:

\noindent {\bf Variance term: }
\begin{equation}\label{eqn_approx_variance_fixed_d_learning_curve}
    \begin{aligned}
    \mathrm{var}(\hat{f}_{\lambda})
    =&~ \frac{\sigma^2}{n} \int_{\mathcal{X}} \left\|\reg^{\krr} \left(T_{X}\right) \mathscr{K}(x,\cdot)\right\|_{L^2,n}^2 \mathrm{d} \rho_{\calX}(x)\\
    \stackrel{(\mathbf{A})}{\approx} &~
    \frac{\sigma^2}{n}\int_{\mathcal{X}} \left\|{\reg^{\krr} \left(T\right) \mathscr{K}(x,\cdot)}\right\|_{L^2, n}^{2} \mathrm{d}\rho_{\calX}(x)
    \stackrel{\mathbf{B}}{\approx}
    \frac{\sigma^2}{n}\int_{\mathcal{X}} \left\|{\reg^{\krr} \left(T\right) \mathscr{K}(x,\cdot)}\right\|_{L^2}^{2} \mathrm{d}\rho_{\calX}(x);
    \end{aligned}
\end{equation}
where $(\mathbf{A})$ is obtained via the key observation that 
\begin{equation}\label{eqn_key_observation_in_approx_A_fixed_d_learning_curve}
    \reg^{\krr}(T_{X})-\reg^{\krr}(T) = \reg^{\krr}(T_{X})(T-T_{X})\reg^{\krr}(T),
\end{equation}
and concentration of $T-T_{X}$, and $\mathbf{B}$ is obtained by a standard concentration of the $L^2$-norm (see, e.g., Proposition C.9 in \cite{li2022saturation}).

\noindent  {\bf Bias term: } 
Denote $g(\cdot) := \int_{\mathcal{X}} \mathscr{K}(\cdot,x) f_{\star}(x) ~\mathsf{d} \rho_{\calX}(x)$ and ${g}_{X}(\cdot) :=n^{-1} \sum_{i=1}^n K_{x_i} f_{\star}(x_i)$. Furthermore, denote $f_{\lambda} = \reg^{\krr}(T) g$ and notice that we have $E_\epsilon(\hat{f}_{\lambda}) = \reg^{\krr}(T_{X}) {g}_{X}$.
They established the following concentration:
\begin{equation}\label{eqn_approx_bias_fixed_d_learning_curve}
    \|E_\epsilon(\hat{f}_{\lambda}) - {f}_{\lambda}\|_{L^2} := \|\reg^{\krr}(T_{X}) {g}_{X} - \reg^{\krr}(T) g\|_{L^2} = o_{\mathbb{P}}\left(\| {f}_{\lambda} - f_{\star}\|_{L^2}\right),
\end{equation}
and hence triangle inequality implies $\mathrm{bias}^2(\hat{f}_{\lambda}) = \|E_\epsilon(\hat{f}_{\lambda}) - f_{\star}\|_{L^2} \approx \| {f}_{\lambda} - f_{\star}\|_{L^2}$.

Later, \cite{li2024generalization} extended the work of \cite{li2023asymptotic} to analytic spectral algorithms. They replaced the key observation (\ref{eqn_key_observation_in_approx_A_fixed_d_learning_curve}) with the following analytic functional argument:
$$
\reg(T_X)-\reg(T) = \frac{1}{2\pi i}\oint_{D_{\lambda}} \varphi_{z}^{\krr}(T) (T_{X}-T) \varphi_{z}^{\krr}(T_{X}) \reg(z) \mathsf{d} z,
$$
where the definition of $D_{\lambda}$ is given in Assumption 3 in \cite{li2024generalization}. Similarly to other approximations in \cite{li2023asymptotic}, they derived the learning curve for fixed-dimensional spectral algorithms.


We note that the above approximations for KRR and other analytic spectral algorithms have certain limitations: they require the regularization parameter $\lambda$ to be bounded below. For fixed dimensions and eigenvalues satisfying $\lambda_j \asymp j^{-\beta}$, \citep{li2024generalization, li2023asymptotic} observed that the approximation holds if and only if $\lambda = \Omega(n^{-\beta})$. Consequently, for fixed-dimensional KRR and spectral algorithms, 
tight bounds on the learning curve can be established when $\lambda = \Omega(n^{-\beta})$, while only a lower bound $(\ln(n))^{-4}$ can be established when $\lambda = o(n^{-\beta})$.

Unfortunately, the above limitations prevent us from employing a similar approach to derive the complete learning curves for large-dimensional KRR or spectral algorithms. In the large-dimensional setting, Theorem 2 in \cite{zhang2024optimal} and Appendix 4 in \cite{lu2024saturation} showed that the approximations hold if $u \leq u^{\prime}$ (ignoring logarithmic factors in $d$ for $\lambda$), which only covers the over-regularized regime of the learning curves.

\subsection{Recent advances in large-dimensional spectral algorithms}\label{subsec_related_work_large_dimension_spe}

\paragraph*{Consistency of KRR / KGF}

Several recent studies \citep{ghorbani2021linearized, Donhauser_how_2021, mei2022generalization, xiao2022precise, misiakiewicz_spectrum_2022, hu2022sharp} focus on the consistency of large-dimensional KRR and KGF with $s=0$. Roughly speaking, they showed the following approximations for KRR and KGF:
\begin{equation}\label{eqn_restate_thm_4_in_linear}
    \left|E_{x, \epsilon} \left[ \left(\hat{f}_{\lambda}(x) - f_{\star}(x) \right)^{2} \right]-
    \text{E}_{\text{approx}}
    \right| =o_{d, \mathbb{P}}\left(1\right),
\end{equation}
where $\text{E}_{\text{approx}}$ is a certain deterministic quantity (see, e.g., Theorem 4 in \cite{ghorbani2021linearized} or Theorem 2 in \cite{xiao2022precise}).

However, when \( s > 0 \), the RHS in (\ref{eqn_restate_thm_4_in_linear}) makes these results not precise enough to provide an exact convergence rate of the excess risk, and the approximation of the excess risk $E_{x, \epsilon} [ (\hat{f}_{\lambda}(x) - f_{\star}(x) )^{2} ]$, especially its convergence rate, when $s>0$ could be much harder since the excess risk is $o_{d, \mathbb{P}}(1)$.

\paragraph*{Convergence rate of optimally tuned spectral algorithms and corresponding minimax bound}

It is challenging to develop results on the excess risk of spectral algorithms when $s>0$. We summarize several studies as follows.

\cite{lu2023optimal} determined the minimax rate of KGF for $s=1$, that is, $f_{\star}$ falls exactly in the RKHS. However, the proof in \cite{lu2023optimal} relies heavily on empirical process theory, requiring bounds on the empirical loss and its difference from the expected one (the excess risk). Consequently, this approach does not generalize to cases where $s\neq 1$.

Later, \cite{zhang2024optimal, lu2024saturation, zhang2025learning} determined the convergence rate of optimally tuned spectral algorithms by modifying the techniques introduced in Appendix \ref{subsec_learning_curve_fix_d}:

\begin{proposition}[Restatement of 
Theorem 4.1 and Theorem 4.2 in \cite{lu2024saturation}]\label{prop_nips_left_learning_curve}
%
Let $u^{\prime} = u^{\prime}(\tau)$ be the rate of the optimal regularization parameter $\lambda$ with qualification $\tau \leq \infty$ and $s>0$ be the coefficient of source condition in
\cite{lu2024saturation} (ignoring the logarithm term).
  Suppose Assumptions 1,2,3,4,5,
and 6 hold with $\tau \leq \infty$, $s>0$, and $u \in (0, u^{\prime})$. 
Denote $\tilde{s}=\min\{s, 2\tau\}$.
Further, suppose at least one of the following conditions hold:
  $$
  \text{(i) } \tau =\infty, \quad \text{(ii) }s>1/(2\tau), \quad \text{(iii) } \gamma > ((2\tau+1)s) / (2\tau(1+s));
  $$
  Then the excess risk of the spectral algorithm estimator in (13) satisfies
\begin{align*}
E_{x, \epsilon} \left[ \left(\hat{f}_{\lambda}(x) - f_{\star}(x) \right)^{2} \right]
= \Theta_{d, \mathbb{P}} \left( 
d^{2u - \gamma - \ell_{\lambda} - 1}
+ d^{\ell_{\lambda} - \gamma}
+ d^{-(\ell_{\lambda} + 1)  s}
+ \mathbf{1}\{\tau<\infty\} d^{-2  \tau  u + (2  \tau - \tilde{s})  \ell_{\lambda}}
\right).
\end{align*}
\end{proposition}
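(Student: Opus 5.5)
The proposition is a restatement of Theorems 4.1 and 4.2 of \cite{lu2024saturation}, so the plan is to reproduce the integral-operator argument of Appendix~\ref{subsec_learning_curve_fix_d}, adapted to large dimensions. Work on the event $E_0$ and use the bias--variance decomposition (\ref{eq bias var decomposition}). For each term, compare the empirical quantity built from $\reg(T_X)$ with the population quantity built from $\reg(T)$. The population quantities are exactly the sequence-model expressions (\ref{eqn_excess_risk_sequence}), up to the factor $\sigma^2/n$ versus the $L^2$ norm. Once these are evaluated blockwise with Proposition~\ref{prop:inner_edr}, the claimed four-term rate should follow.

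\textbf{Population computation.} Write $\lambda\asymp d^{-u}$ and $\ell_\lambda=\lfloor u\rfloor$. For $k\le \ell_\lambda$ we have $\mu_k\gg\lambda$, so $\mu_k\reg(\mu_k)\asymp 1$ and $\rem(\mu_k)\lesssim (\lambda/\mu_k)^{\tau}$. For $k\ge \ell_\lambda+1$ we have $\mu_k\ll\lambda$ (or $\asymp\lambda$ at the boundary), so $\mu_k\reg(\mu_k)\asymp \mu_k/\lambda$ and $\rem(\mu_k)\asymp 1$.
\begin{itemize}
\item The variance is $\frac{\sigma^2}{n}\sum_j(\lambda_j\reg(\lambda_j))^2$. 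The block $k\le\ell_\lambda$ contributes $N_{\ell_\lambda}/n\asymp d^{\ell_\lambda-\gamma}$. The block $k=\ell_\lambda+1$ contributes $n^{-1}N(d,\ell_\lambda+1)\mu_{\ell_\lambda+1}^2\lambda^{-2}\asymp d^{2u-\gamma-\ell_\lambda-1}$. The higher blocks are dominated using the trace bound $\sum\lambda_j\le a_{-1}$.
\item The bias has two parts. The high-degree part $\sum_{k>\ell_\lambda}\|\theta_k\|^2$ is of order $d^{-(\ell_\lambda+1)s}$, with the lower bound supplied by Assumption~\ref{assump_source condition_lower_bound}.
\item The low-degree part is $\sum_{k\le \ell_\lambda}(\lambda/\mu_k)^{2\tau}\|\theta_k\|^2\lesssim \lambda^{2\tau}\max_k \mu_k^{s-2\tau}$. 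This gives $d^{-2\tau u+(2\tau-\tilde s)\ell_\lambda}$: the maximum sits at $k=\ell_\lambda$ when $s<2\tau$, and at low $k$ when $s>2\tau$. In the latter case the matching lower bound uses the second part of Assumption~\ref{assump_source condition_lower_bound} together with (\ref{eq:Filter_Rem_finite_case2}). This term is present only when $\tau<\infty$.
\end{itemize}

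\textbf{Empirical-to-population approximation.} This step follows \cite{li2024generalization}: represent $\reg(T_X)-\reg(T)$ by the contour integral over $D_\lambda$. Bound $\|T_\lambda^{-1/2}(T_X-T)T_\lambda^{-1/2}\|$, where $T_\lambda=T+\lambda$, with a Bernstein inequality for operators. In large dimensions the effective dimension is $\mathcal N(\lambda)=\sum_j\lambda_j/(\lambda_j+\lambda)\asymp N_{\ell_\lambda}+\lambda^{-1}\sum_{k>\ell_\lambda}N(d,k)\mu_k$, and $\sup_x\|T_\lambda^{-1/2}K_x\|^2$ is of the same order by the addition formula. The operator deviation is therefore $o_{\mathbb P}(1)$ whenever $\mathcal N(\lambda)\log d\ll n$. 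The variance then concentrates to its population value via (A)--(B) in (\ref{eqn_approx_variance_fixed_d_learning_curve}). For the bias, prove (\ref{eqn_approx_bias_fixed_d_learning_curve}) by splitting $f_\star$ into the degrees $\le \ell_\lambda$ and the remainder, controlling $g_X-g$ in the $T_\lambda^{-1/2}$-weighted norm, and using the analytic extension conditions of Appendix J.1.

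\textbf{Main obstacle.} The hardest part is the bias concentration near the boundary $u\to u'$, where the estimation error is only barely dominated by $\|f_\lambda-f_\star\|_{L^2}$. This is exactly why the conditions (i)--(iii) appear. When $\tau<\infty$ and $s$ is small, the low-degree residual $\lambda^{2\tau}\mu_{\ell_\lambda}^{s-2\tau}$ can be smaller than the cross term $\|T_\lambda^{-1/2}(g_X-g)\|$, which is of order $\sqrt{\mathcal N(\lambda)/n}$ times a norm of $f_\star$. Conditions (i)--(iii) are precisely what make the ratio of these two quantities vanish for $u<u'$. I would first check that this inequality reduces algebraically to (ii) or (iii), and handle $\tau=\infty$ separately using the exponential decay of $\rem$.
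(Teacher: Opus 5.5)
Your plan follows the route the paper relies on. The paper gives no argument of its own for this proposition; it imports Theorems 4.1--4.2 of \cite{lu2024saturation}. Those combine the population computation of $\mathcal N_{2,\varphi}$ and $\mathcal M_{2,\varphi}$ (their Lemma D.14) with the integral-operator and contour-integral approximations of Appendix~\ref{subsec_learning_curve_fix_d}, under side conditions of the type \eqref{eqn:check_condition_bias_1} and \eqref{eqn:check_condition_bias_3}. Two points in your sketch need correction, however.

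\textbf{Integer $u$ with $\tau=\infty$.} Your claim that $\mu_k\gg\lambda$ for all $k\le\ell_\lambda$ fails when $u\in\mathbb{Z}$, since then $\mu_{\ell_\lambda}\asymp\lambda$. (The boundary case you flag at $k=\ell_\lambda+1$ cannot occur, because $u<\ell_\lambda+1$.)
\begin{itemize}
\item For $\tau<\infty$ this is harmless. The relations $\mu_{\ell_\lambda}\reg(\mu_{\ell_\lambda})\asymp 1$ and $\rem(\mu_{\ell_\lambda})\asymp(\lambda/\mu_{\ell_\lambda})^{\tau}\asymp 1$ still hold, and the fourth term equals $d^{-\tilde s\ell_\lambda}$ there.
\item For $\tau=\infty$ it is not harmless. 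Here $\rem(\mu_{\ell_\lambda})$ is a constant, e.g.\ $e^{-\mu_{\ell_\lambda}/\lambda}$ for KGF. By Assumption~\ref{assump_source condition_lower_bound} with $m=\tilde\ell=\ell_\lambda$, the population bias (which your own approximation step says governs the empirical bias here) is then at least of order $d^{-\ell_\lambda s}$.
\end{itemize}
By Proposition~\ref{prop_e_1}, $u<u'<\tilde p+1$, so $\ell_\lambda\le\tilde p$, i.e.\ $\gamma\ge\ell_\lambda(1+s)$. Hence this bias exceeds every term of the claimed rate whenever $\gamma>u(1+s)$. For example, $\gamma=2.5$, $s=1$, $u=1<u'=3/2$ gives the claimed rate $d^{-3/2}$, while the bias is $\gtrsim d^{-1}$. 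So for $\tau=\infty$, integer $u$ must be excluded, or a term $d^{-\ell_\lambda s}$ added. This is a defect of the restated proposition, and no refinement of your argument can cover it.

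\textbf{The origin of (i)--(iii).} Your heuristic would not reproduce these conditions.
\begin{itemize}
\item If the cross term were $\sqrt{\mathcal N(\lambda)/n}$ times an $O(1)$ norm of $f_\star$, comparing its square with $\lambda^{2\tau}$ would give the $s$-independent condition $u(1+2\tau)<\gamma$.
\item The weighted-$L^2$ (variance) part of the Bernstein bound is in fact harmless for all $u<\gamma$.
\item The binding term is the sup-norm part. For $s<1$, $f_\star$ need not be bounded, and the truncation argument yields $n^{-2}\mathcal N(\lambda)\bigl(\|f_\lambda\|_{L^\infty}^2+n^{1-s+\epsilon}\bigr)$, as in \eqref{eqn:check_condition_bias_3}.
\end{itemize}
The excluded case forces $\gamma<1$ and $s<1$. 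Hence $\ell_\lambda=0$, $\mathcal N(\lambda)\asymp d^{u}$, and for $u<u'$ the risk is of order $\lambda^{2\tau}$. Comparing $n^{-1-s}\mathcal N(\lambda)\asymp d^{u-\gamma(1+s)}$ with $d^{-2\tau u}$ gives $u(1+2\tau)<\gamma(1+s)$. On all of $(0,u')$ this amounts, up to equality cases:
\begin{itemize}
\item to (ii) when $\gamma<s$, where $u'=\gamma/(2\tau)$;
\item to (iii) when $s\le\gamma<1$, where $u'=s/(2\tau)$.
\end{itemize}
For $\tau=\infty$, the decay of $\rem$ is not what helps. Rather, $u'(\infty)$ in Case IV of Proposition~\ref{prop_e_1} is already capped (by $\gamma s$, resp.\ $\gamma(1+s)-s$) so that the same comparison holds.
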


On one hand, Proposition \ref{prop_nips_left_learning_curve} provides the learning curves for spectral algorithms in the over-regularized regime for certain values of $(\tau, s, \gamma)$. Moreover, since $u^{\prime} < \gamma$ (refer to Proposition \ref{prop_e_1}), for any $u \in (0, u^{\prime})$, from Assumption 4 we have $\tilde{\ell}=\ell_{\lambda}$. Therefore, Theorem 3.1 fully recovers the results in Proposition \ref{prop_nips_left_learning_curve}.

On the other hand, we note that Proposition \ref{prop_nips_left_learning_curve} does not apply to the under-regularized or interpolation regimes, and it does not cover the case where $\tau<\infty$ and both $s$ and $\gamma$ are small. We believe that these excluded cases are of equal importance:
\begin{itemize}
    \item As shown in Corollary 3.3 and its subsequent discussion, the benign overfitting phenomenon observed in large-dimensional kernel interpolation and neural networks suggests that the learning curves for small $s$ merit further study. 

    \item The above studies \cite{ghorbani2021linearized, Donhauser_how_2021, mei2022generalization, xiao2022precise, misiakiewicz_spectrum_2022, hu2022sharp} that considered the extreme case where $s=0$ motivate us to investigate the learning curves for all $s$, to provide a unified explanation of these results.

    \item Many large-dimensional datasets in machine learning, especially image datasets, often have dimensions that far exceed the sample size, further encouraging our focus on small $\gamma$. 
\end{itemize}
In summary, it is necessary to refine the conclusions of Proposition \ref{prop_nips_left_learning_curve} to derive the learning curves for large-dimensional spectral algorithms.
However, as discussed in Appendix \ref{subsec_learning_curve_fix_d}, the proof of Proposition \ref{prop_nips_left_learning_curve} relies on the validity of approximations such as (\ref{eqn_approx_variance_fixed_d_learning_curve}) and (\ref{eqn_approx_bias_fixed_d_learning_curve}), which prevents us from obtaining a complete learning curve.

Finally, \cite{lu2024pinsker} determined the exact asymptotic minimax bound for large-dimensional kernel regression problems, known as the Pinsker bound:

\begin{proposition}[Restatement of Theorem 3.1 in \cite{lu2024pinsker}]\label{prop_pinsker_constant}
    Let $\mathcal{P}$ consist of all the distributions $\rho_{f_{\star}}$ on $\mathcal{X} \times \mathcal{Y}$ given by (1) such that Assumption 
1,2, and 3 hold for some $\gamma, s>0$. 
Furthermore, assume that there exists an absolute constant $\alpha>0$, such that we have $n = \alpha d^{\gamma} (1+o_{d}(1))$. Define \(p := \lfloor \gamma/(s+1) \rfloor\).
Then, we have
    \begin{equation*}
        \inf_{\hat{f}} \sup_{
        \rho_{f_{\star}} \in \mathcal{P}} \mathbb{E}_{(X, Y) \overset{\mathcal{D}}{\sim} \rho_{f_{\star}}^{\otimes n}}\left[
    \|\hat{f}-f_{\star}\|_{L^2}^2
    \right]
    =
    \calC^{\star} \cdot \left( 
    d^{p-\gamma} + 
    d^{-(p+1)s}
    \right)(1+o_{d}(1)),
    \end{equation*}
    where $\hat{f}$ is any estimator of $f_{\star}$, measurable with respect to the observed data set $(X, Y)$, and the definition of the Pinsker constant $\calC^{\star}$ can be found in Theorem 3.1 in \cite{lu2024pinsker}.
\end{proposition}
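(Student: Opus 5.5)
The plan is to reduce both directions of the Pinsker bound to an equivalent Gaussian sequence model, restricted to the two critical spectral blocks of degrees $p$ and $p+1$. In that sequence model the minimax risk over the ellipsoid is given exactly by Pinsker's theorem for ellipsoids. The constant $\calC^{\star}$ should come out of that computation: a linear minimax filter $c_k=(1-\kappa\mu_k^{-s/2})_+$ applied to block $k$, with $\kappa$ fixed by the radius constraint $R$. Because the eigenvalues are constant within each block and $N(d,k)=\Theta_d(d^k)$, only blocks $p$ and $p+1$ contribute at the leading order $d^{p-\gamma}+d^{-(p+1)s}$. Blocks $k<p$ contribute variance $O_d(d^{k-\gamma})=o_d(d^{p-\gamma})$. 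Blocks $k>p+1$ contribute bias $O_d(d^{-(p+2)s})$, which is lower order because $(p+1)(s+1)>\gamma$ forces $d^{-(p+2)s}\ll d^{p-\gamma}+d^{-(p+1)s}$. I would also use $n=\alpha d^\gamma(1+o_d(1))$ to turn $N(d,k)/n$ into explicit constants.

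\textbf{Upper bound.} I would use the projection estimator
\[
\hat\theta_{k,j}=\frac{c_k}{n}\sum_{i=1}^n y_i\psi_{k,j}(x_i)\quad\text{for } k\le p+1,
\]
and set all higher coefficients to zero, taking $c_k=1$ for $k<p$ and Pinsker weights for $k=p,p+1$. Its risk splits into three parts.
\begin{itemize}
\item The variance $\sigma^2 c_k^2 N(d,k)/n$, plus a signal-fluctuation term $c_k^2\,\mathbb{E}[f_\star^2\psi_{k,j}^2]/n$. The addition formula makes the latter $\|f_\star\|_{L^2}^2/n$ per coordinate, so it only adds $O_d(d^{k-\gamma})\|f_\star\|^2$, and since $\|f_\star\|_{L^2}=o_d(1)$ under the source condition it is negligible.
\item The bias $\sum_k (1-c_k)^2\|\theta_k\|^2$.
\item The truncation term $\sum_{k>p+1}\|\theta_k\|^2$.
\end{itemize}
Maximizing over the ellipsoid $\sum_k\mu_k^{-s}\|\theta_k\|^2\le R$ then gives exactly the Pinsker quadratic program, whose value is $\calC^{\star}(d^{p-\gamma}+d^{-(p+1)s})(1+o_d(1))$.

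\textbf{Lower bound.} I would place an independent Gaussian prior $\theta_{k,j}\sim\mathcal N(0,v_k)$ on blocks $p$ and $p+1$, with $v_k$ equal to the least-favorable Pinsker variances shrunk by a factor $(1-\delta)$. Standard concentration of $\sum_k\mu_k^{-s}\|\theta_k\|^2$ then shows the prior sits in the class with probability $1-o_d(1)$. I would take Gaussian noise, which is allowed since the class only fixes the variance. The Bayes risk equals the trace of the posterior covariance $(V^{-1}+\sigma^{-2}\Psi^\top\Psi)^{-1}$, where $\Psi=(\Psi_p,\Psi_{p+1})$. Matrix concavity of $A\mapsto\operatorname{tr}(V^{-1}+A)^{-1}$ together with Jensen's inequality over $X$ lower-bounds this by $\operatorname{tr}(V^{-1}+\sigma^{-2}n I)^{-1}$, because $\mathbb{E}\,\Psi^\top\Psi=nI$. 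That is precisely the sequence-model Bayes risk, and letting $\delta\to0$ matches the upper bound.

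\textbf{Main obstacle.} The hard step is making both reductions exact to the constant rather than merely to the rate. On the lower-bound side this means handling the truncation of the prior to the ellipsoid and the tail mass off the event, controlling the growth of $\|\theta\|$ by conditioning, and checking that the Jensen step loses nothing. On the upper-bound side it means controlling the cross terms between blocks in $\frac1n\Psi^\top f_\star(X)$, and the contribution of the infinite tail of $f_\star$ to the projected responses, uniformly in $f_\star$. Here I would use $\mathbb{E}[\psi_{k,j}\psi_{k',j'}]=\delta$ together with the bound $\sup_x\sum_j\psi_{k,j}(x)^2=N(d,k)$ from the addition formula, which keeps every such term at relative order $o_d(1)$.
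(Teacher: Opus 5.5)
The paper does not prove this statement; it imports Theorem~3.1 of \cite{lu2024pinsker}, so your argument has to stand on its own. The lower-bound half is sound in outline, with one slip. The map $A\mapsto\operatorname{tr}(V^{-1}+A)^{-1}$ is \emph{convex} on positive semidefinite matrices, by operator convexity of $t\mapsto t^{-1}$. It is convexity that lets Jensen give $\mathbb{E}_X\operatorname{tr}(V^{-1}+\sigma^{-2}\Psi^\top\Psi)^{-1}\ge\operatorname{tr}(V^{-1}+\sigma^{-2}nI)^{-1}$; concavity would give the reverse inequality. Also, when $p=0$, block $0$ is a single coordinate. There you cannot rely on concentration and need a prior variance that is $o_d(1)$ but $\gg\sigma^2/n$.

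The genuine gap is in the upper bound, at the claim that $\|f_\star\|_{L^2}=o_d(1)$ under the source condition. This is false.
\begin{itemize}
\item Assumption 3 only gives $\|\theta_k\|_2^2\le\mu_k^sR$, and $\mu_0=\Theta_d(1)$. So the constant component can carry $\Theta_d(1)$ energy; for instance $f_\star\equiv\sqrt{\mu_0^sR}$ lies in the class.
\item For this $f_\star$ and $p\ge1$, each coordinate of your block-$p$ estimate has variance exactly $c_p^2(\sigma^2+\mu_0^sR)/n$, since $\theta_0\cdot\frac1n\sum_i\psi_{p,j}(x_i)$ fluctuates at scale $\theta_0/\sqrt n$. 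The block-$p$ risk is therefore $c_p^2N(d,p)(\sigma^2+\mu_0^sR)/n$.
\item In the regime $p(s+1)<\gamma<p+(p+1)s$, $d^{p-\gamma}$ is the leading term. You are then forced to take $c_p\to1$, since otherwise the block-$p$ bias $(1-c_p)^2\mu_p^sR$ is not $o_d(d^{p-\gamma})$.
\item Consequently the worst-case risk of your estimator is at least $(1+\mu_0^sR/\sigma^2)$ times the sequence-model value your own lower bound produces. The two bounds cannot meet at $\calC^\star$, whatever weights you choose.
\end{itemize}
The same term contradicts your last paragraph's assertion that every cross-block contribution to $\frac1n\Psi^\top f_\star(X)$ is of relative order $o_d(1)$. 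Indeed $\theta_0\frac1n\Psi_p^\top\mathbf{1}$ has squared norm of order $\theta_0^2N(d,p)/n$, exactly the noise scale.

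The missing step is to strip the low-degree signal before projecting onto block $p$. Two options work.
\begin{itemize}
\item Center the responses by $\bar y$. The leftover $(\bar y-\theta_0)\frac1n\Psi_p^\top\mathbf{1}$ has squared norm $O_{d,\mathbb{P}}(N(d,p)/n^2)$, and its expectation can be controlled by Cauchy--Schwarz.
\item Fit blocks $0,\dots,p$ by least squares, using $\|\Psi_{\le p}^\top\Psi_{\le p}/n-I\|_{\mathrm{op}}=o_{d,\mathbb{P}}(1)$ (valid because $p<\gamma$), and then apply the Pinsker weights.
\end{itemize}
After centering, the fluctuation term involves only $\|f_\star-\theta_0\|_{L^2}^2=\sum_{k\ge1}\|\theta_k\|_2^2\le R\sup_{k\ge1}\mu_k^s=O_d(d^{-s})$, using $\mu_kN(d,k)\le a_{-1}$. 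This is genuinely $o_d(1)$, and your reduction to the Pinsker quadratic program then goes through.
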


\paragraph*{Benign overfitting phenomenon of KRR in the interpolation regime}

A substantial body of recent work has analyzed large-dimensional KRR in the interpolation regime (i.e., when $\lambda=O_d(n^{-1})$), with particular focus on the case $\lambda=0$ (i.e., kernel interpolation). 
For example, \cite{Liang_Just_2019} showed the consistency of kernel interpolation with $s=\gamma=1$; subsequent studies \cite{liang2020multiple, barzilai2023generalization} provides upper bounds on the excess risk of kernel interpolation with $\gamma >0$. However, it is usually difficult to derive the matching lower bound, which is not provided in the above studies (also refer to Section 4 in \cite{zhang2024phase} for a detailed discussion).

A tight characterization on the excess risk  (with matching upper and lower bounds) of KRR was provided in \cite{aerni2023strong}. They considered a specific type of convolutional kernel on the discrete hypercube and a special form of the true regression function $f_{\star}(x) = x_{1}x_{2}\cdots x_{L^{*}}$, where $L^{*}$ is an absolute constant that formulated in Theorem 1 in \cite{aerni2023strong}.

To our knowledge, the results most similar to Theorem 3.1 are provided by \cite{zhang2024phase}. They determined the exact convergence rates of kernel interpolation stated as follows. 

\begin{proposition}[Restatement of Corollary 1 in \cite{zhang2024phase}]\label{prop_intepolation}
Let $\hat{f}_{\mathrm{inter}}(\cdot):= \hat{f}_{0}(\cdot) = \mathscr{K}(\cdot, X) \left(\mathscr{K}(X, X)\right)^{-1} Y$ be the kernel interpolation estimator.
  Suppose Assumptions 1,2,3, and 6 hold with $s \geq 0$. 
Denote $\tilde{s}=\min\{s, 2\tau\}$.
  Then the excess risk of the kernel interpolation estimator satisfies
\begin{align*}
E_{x, \epsilon} \left[ \left(\hat{f}_{0}(x) - f_{\star}(x) \right)^{2} \right]
=
\Theta_{d, \mathbb{P}} \left( 
d^{\gamma - \ell_{\gamma} - 1}
+ d^{\ell_{\gamma} - \gamma}
+ d^{-(\ell_{\gamma} + 1)  s}
\right).
\end{align*}
\end{proposition}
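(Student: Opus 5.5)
The plan is to read off the statement as the $\lambda=0$ endpoint of Theorem~3.1, and to say where the real work lies if a self-contained argument (as in \cite{zhang2024phase}) is wanted. Kernel interpolation is the spectral algorithm with filter $\varphi_0(z)=z^{-1}$, so $\psi_0\equiv 0$. This is the KRR filter of Example~2.3 at $\lambda=0$, i.e.\ $u=\infty$ in Assumption~4. The assumptions of Theorem~3.1 then hold as follows.
\begin{itemize}
\item Assumption~5 is vacuous, since it is only imposed when $0<u<\gamma<1$.
\item Assumption~6 is exactly the nondegeneracy required in the statement.
\item With $u=\infty$ we have $\ell_\lambda=\infty$, hence $\tilde\ell=\min\{\ell_\gamma,\ell_\lambda\}=\ell_\gamma$ and $\max\{\gamma-u,0\}=0$.
\item The qualification term $\mathbf{1}\{\tau<\infty\}d^{-2\tau u+(2\tau-\tilde s)\tilde\ell}$ vanishes, because $\psi_0\equiv 0$ gives zero regularization bias.
\end{itemize}
Substituting into Theorem~3.1 gives $d^{\gamma-\ell_\gamma-1}+d^{\ell_\gamma-\gamma}+d^{-(\ell_\gamma+1)s}$, which is the claim. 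The only care needed is to check that the proof of Theorem~3.1 (Appendices~D--F) treats $u=\infty$ consistently, so that no step divides by $\lambda$ or uses a lower bound on it.

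For a direct argument, I would condition on the event $E_0$ (Lemma~\ref{lemma lambda max min}, $\lambda_{\min}(K)=\Omega_{d,\mathbb P}(1)$) and work with the decomposition $K=K_{\le \ell_\gamma}+K_{>\ell_\gamma}$. I would rely on two facts from \cite{ghorbani2021linearized}.
\begin{itemize}
\item The high-degree part is nearly isotropic: $K_{>\ell_\gamma}=\kappa(\mathrm I_n+\Delta)$ with $\kappa=\Theta_d(1)$ and $\|\Delta\|_2=o_{d,\mathbb P}(1)$.
\item The low-degree features are well conditioned: $\Psi_{\le\ell_\gamma}^\top\Psi_{\le\ell_\gamma}/n=\mathrm I+o_{d,\mathbb P}(1)$, since $N_{\ell_\gamma}=\Theta_d(d^{\ell_\gamma})\ll n$.
\end{itemize}

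The variance is $\sigma^2\,\mathrm{Tr}(K^{-1}\mathbb E_x[\mathscr K(X,x)\mathscr K(x,X)]K^{-1})$. I would split $\mathbb E_x[\mathscr K(X,x)\mathscr K(x,X)]=\Psi\Sigma^2\Psi^\top$ into degrees $\le\ell_\gamma$ and $>\ell_\gamma$, and use Woodbury on $K^{-1}$ with $K_{>\ell_\gamma}^{-1}\approx\kappa^{-1}\mathrm I$.
\begin{itemize}
\item The low-degree block gives $\Theta(N_{\ell_\gamma}/n)=\Theta(d^{\ell_\gamma-\gamma})$.
\item The degree-$(\ell_\gamma+1)$ block gives $\Theta(n\mu_{\ell_\gamma+1}^2N(d,\ell_\gamma+1))=\Theta(d^{\gamma-\ell_\gamma-1})$.
\item Degrees $\ge\ell_\gamma+2$ are of smaller order.
\end{itemize}

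For the bias, the low-degree part of $f_\star$ is essentially reproduced by the Woodbury projection, up to errors controlled by the variance-type terms. The residual is dominated by $\|P_{>\ell_\gamma}f_\star\|_{L^2}^2\asymp d^{-(\ell_\gamma+1)s}$, with the matching lower bound supplied by Assumption~6.

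The main obstacle is the lower bounds, especially when $\gamma$ is close to an integer. There the matrix $\Psi_{\ell_\gamma+1}\Sigma_{\ell_\gamma+1}\Psi_{\ell_\gamma+1}^\top$ is not negligible relative to $\kappa\mathrm I$: its operator norm is of order $n\mu_{\ell_\gamma+1}\asymp d^{\gamma-\ell_\gamma-1}$. I would first try to handle this by keeping degree $\ell_\gamma+1$ inside the isotropic block and showing $\mathrm{Tr}$-level rather than operator-norm control of the perturbation, together with Lemmas~5.4.3--5.4.4 of \cite{zhang2024phase} (valid under Assumption~6 by the remark following it) to obtain matching lower bounds.
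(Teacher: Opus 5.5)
Your main route is correct, but it is not what the paper does. The paper never proves Proposition~\ref{prop_intepolation}: it imports it from \cite{zhang2024phase} as a benchmark, and only remarks in the proof of Theorem~\ref{coroll_benign_overfit} that Theorem~\ref{theorem_learn_curve} with $u\geq\gamma$ reproduces the same rate.

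Reading the statement off as the $u=\infty$ endpoint of Theorem~\ref{theorem_learn_curve} is non-circular. That theorem takes only lemma-level inputs from \cite{zhang2024phase} (Lemmas~\ref{lemma psi psi top}, \ref{lemma psi top psi} and \ref{lemma ge l l2}), never their Corollary~1. On the $u=\infty$ path, Appendix~\ref{append_learning_curve} invokes neither Proposition~\ref{prop_nips_left_learning_curve} nor Proposition~\ref{prop_verify_bias_special}.

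The consistency check you flag does go through. At $u=\infty$ one has $\tilde\ell=\ell_\gamma$, and $B_{1,1}=0$ because $\psi_0\equiv 0$. The relation \eqref{eqn:regular_para_upper}, which is meaningless at $\lambda=0$, enters only through Lemma~\ref{lemma lambda max min}(v) and the bound on $\|B_5\|_2^2$. There it reduces to $d^{\ell_\gamma}\lesssim n$ and $nd^{-(\ell_\gamma+1)}=o_d(1)$. The rate then follows from $b_3\leq v_2$ (Proposition~\ref{prop_e_2}), with Corollary~\ref{coroll_learn_curve_case_1} covering $s=0$.

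In substance the two routes coincide. With $\tilde\ell=\ell_\gamma$, Appendices~\ref{append_variance}--\ref{append_bias} reduce to the decomposition at $\ell_\gamma$, the five-term bias split and the Sherman--Morrison--Woodbury reduction that Appendix~\ref{subsec_related_work_large_dimension_spe} attributes to \cite{zhang2024phase}. Your packaging additionally shows that the same rate holds on the whole range $u\geq\gamma$ for every admissible filter. The direct argument of \cite{zhang2024phase} is shorter and avoids the case analysis of Appendix~\ref{append_learning_curve}.

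Your fallback direct sketch, however, misplaces the difficulty. The block $\Psi_{\ell_\gamma+1}\Sigma_{\ell_\gamma+1}\Psi_{\ell_\gamma+1}^\top$ does not have operator norm of order $n\mu_{\ell_\gamma+1}\asymp d^{\gamma-\ell_\gamma-1}$. That quantity is $o_d(1)$ for fixed $\gamma$, so if it were the size of the block, the block would be negligible next to $\kappa\mathrm{I}_n$ — the opposite of what you assert.

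In fact, by the addition formula its diagonal entries equal $\mu_{\ell_\gamma+1}N(d,\ell_\gamma+1)=\Theta_d(1)$. It is also dominated by $K_{>\ell_\gamma}=\kappa_1(\mathrm{I}_n+\Delta_1)$. So its norm is $\Theta_{d,\mathbb{P}}(1)$, and it is just one of the contributions to the near-isotropic part already controlled by Lemma~\ref{lemma psi psi top}. The scaling $n\mu_k$ is what one gets when $N(d,k)\ll n$, as for the intermediate blocks in Lemma~\ref{lemma lambda max min}(iv), not when $N(d,k)\gg n$. So nothing special happens for non-integer $\gamma$ near $\ell_\gamma+1$.

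The genuinely delicate case is $\gamma\in\mathbb{N}$. There $N_{\ell_\gamma}\asymp n$, and your claim $\Psi_{\leq\ell_\gamma}^\top\Psi_{\leq\ell_\gamma}/n=\mathrm{I}+o_{d,\mathbb{P}}(1)$ fails. Handling it needs Lemma~\ref{lemma psi top psi_int} and Proposition~\ref{prop_lemma_3_in_xiao} from \cite{xiao2022precise}, together with the truncation to $\underline{\ell_\gamma}$ in Case~(II) of the proof of Theorem~\ref{thm_bias_spe}. Your main route inherits this treatment from Theorem~\ref{theorem_learn_curve} and is unaffected.
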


It is worth mentioning that the proofs in \cite{zhang2024phase} are difficult to extend to obtain (i) the learning curves for KRR in the over-regularized and under-regularized regimes, or (ii) the learning curves for more general spectral algorithms, even when restricting attention to the interpolation regime. 
To illustrate this, let's restate their proof steps.

They first partition the matrix version of the equality given in (5) into two parts (see Appendix \ref{append_notation} for further details):
$$
K = \Psi_{\leq \ell_{\gamma}} \Sigma_{\leq \ell_{\gamma}} \Psi_{\leq \ell_{\gamma}}^{\top} + \Psi_{>\ell_{\gamma}} \Sigma_{>\ell_{\gamma}} \Psi_{>\ell_{\gamma}}^{\top},
$$
where $\ell_{\gamma} = \lfloor \gamma \rfloor$, and then showed that
\begin{equation}\label{eqn_approx_phase_1}
    \left\|\Psi_{>\ell_{\gamma}} \Sigma_{>\ell_{\gamma}}^j \Psi_{>\ell_{\gamma}}^{\top} - \kappa_j\mathrm{I}_n \right\|_{\mathrm{op}} = o_{d, \mathbb{P}}(\kappa_j), \quad \kappa_j = \sum_{k=\ell_{\gamma}+1}^\infty \mu_k^j N(d, k),\quad j=1,2;
\end{equation}
and that
\begin{equation}\label{eqn_approx_phase_2}
\left\| n^{-1}{\Psi_{\leq \ell_{\gamma}}^{\top} \Psi_{\leq \ell_{\gamma}} } - \mathrm{I}_{N_{\ell_{\gamma}}} \right\|_{\mathrm{op}} = o_{d, \mathbb{P}}(1).
\end{equation}
In the interpolation regime, we have $n\lambda =o_d(1)$. Therefore, they established the following approximations for $M:= K+n\lambda \mathrm{I}_n$:
\begin{equation}\label{eqn_approx_phase_3}
\lambda_{\text{max}}\left( M^{-1}\right) = O_{d, \mathbb{P}}((n\lambda+1)^{-1}) = O_{d, \mathbb{P}}(1) \quad \text{ and } \quad
\mathrm{tr}\left( M^{-1} / n\right)=\Omega_{d, \mathbb{P}}((n\lambda+1)^{-1}) = \Omega_{d, \mathbb{P}}(1).
\end{equation}
For the variance term, they decompose it into two parts:
    \begin{equation}
\label{eq var tr}
    \begin{aligned}
        \mathrm{var}\left( \hat{f}_{\lambda} \right) 
    =  
    \sigma^{2} \cdot \mathrm{tr} \left( M^{-1} \Psi_{>\ell_{\gamma}} \Sigma_{>\ell_{\gamma}}^2 \Psi_{>\ell_{\gamma}}^{\top} M^{-1}\right)  ~+~ \sigma^{2} \cdot \mathrm{tr} \left( M^{-1} \Psi_{\leq \ell_{\gamma}} \Sigma_{\leq \ell_{\gamma}}^{2} \Psi_{\leq \ell_{\gamma}}^{\top} M^{-1}\right),
    \end{aligned}
\end{equation}
so that they can use (\ref{eqn_approx_phase_1}), (\ref{eqn_approx_phase_2}), and (\ref{eqn_approx_phase_3}) to obtain the tight bound of the variance.

For the bias term, they decompose it into five parts, separating the contributions from indices $\leq \ell_{\gamma}$ and $> \ell_{\gamma}$. By employing the Sherman-Morrison-Woodbury (SMW) formula (see, e.g., Proposition \ref{prop_smw_formula}), they convert $M^{-1}$ into $M_{>\ell_{\gamma}}^{-1} := (\Psi_{>\ell_{\gamma}} \Sigma_{>\ell_{\gamma}} \Psi_{>\ell_{\gamma}}^{\top} + n\lambda \mathrm{I}_n)^{-1}$, which by (\ref{eqn_approx_phase_1}) is approximately the identity matrix and thus easier to handle. For example, for the first term $B_{1, 1}$, they obtain
    \begin{equation}\label{eqn_appro_bias_1_1_phase}
     \begin{aligned}
        \left\|B_{1, 1}\right\|_{2}^{2} 
        :=&~ \left\| \left( \Sigma_{\le \ell_{\gamma}} - \Sigma_{\le \ell_{\gamma}} \Psi^{\top}_{\le \ell_{\gamma}} M^{-1} \Psi_{\le \ell_{\gamma}} \Sigma_{\le \ell_{\gamma}}\right)  \Sigma_{\le \ell_{\gamma}}^{-1}  \theta_{\le \ell_{\gamma}} \right\|_{2}^{2}\\
        \overset{\text{ SMW formula } }{=} &~
        \left\| \left( \Sigma_{\le \ell_{\gamma}}^{-1} + \Psi_{\le \ell_{\gamma}}^{\top} M_{>\ell_{\gamma}}^{-1} \Psi_{\le \ell_{\gamma}}  \right)^{-1}  \Sigma_{\le \ell_{\gamma}}^{-1}  \theta_{\le \ell_{\gamma}} \right\|_{2}^{2}.
    \end{aligned}
    \end{equation}
    Similarly, using (\ref{eqn_approx_phase_1}), (\ref{eqn_approx_phase_2}), and (\ref{eqn_approx_phase_3}), they obtained a tight bound on the bias.

However, the following difficulties exist in extending the above results to the two cases we mentioned above:
(i) In the over-regularized and under-regularized regimes, we have $\ell_{\lambda} \leq \ell_{\gamma}$. Consequently, the indices in the decompositions of the variance (\ref{eq var tr}) and the bias (\ref{eqn_appro_bias_1_1_phase}) need to be replaced by $\ell_\lambda$. Under these circumstances, the preliminary approximations (\ref{eqn_approx_phase_1}) and (\ref{eqn_approx_phase_3}) no longer hold.
(ii) When KRR is replaced by other spectral algorithms, the matrix $n^{-1} \reg^{\krr}(K/n) := M^{-1}$ is replaced by $n^{-1} \reg(K/n)$. However, the SMW formula does not necessarily hold for $n^{-1} \reg(K/n)$.

\subsection{Comparison for the learning curves of kernel regression model and sequence model}\label{sec_sequence_model}

Sequence models have been widely considered as simplifications for the analysis of kernel regression models. For example, the celebrated work by \cite{pinsker1980optimal} and \cite{nussbaum1985spline} and subsequent ones show that the exact asymptotic of the minimax risks, i.e., the Pinsker bounds, for Gaussian sequence models and fixed-dimensional kernel regression models over \( [\mathcal{H}]^{s} \) for \( s > 1 \) are identical; \cite{brown1996asymptotic, carter2006continous, reiss2008asymptotic} further showed the classical Le Cam equivalence between above kernel regression models and the Gaussian sequence models;
\cite{li2024improving, li2025diagonal} focus on the sequence model and kernel regression model separately, showing that the order of eigen-functions impacts regression outcomes, and that over-parameterization enhances the generalization capability.
Recently, \cite{cheng2022dimension, misiakiewicz2024non} showed the concentration of the excess risks of KRR in kernel regression models and ridge regression in sequence models.

Proposition 3.4 in Section 3 shows that, under sufficient regularization, the excess risk of large-dimensional kernel regression has the same order as that of the associated sequence model. This complements recent comparison results such as \cite{cheng2022dimension, misiakiewicz2024non}, which study sharper approximations for KRR under stronger assumptions on the regression function and on the sequence-model noise structure. In contrast, our comparison is derived as a consequence of the full learning-curve theorem and applies uniformly across all $s\ge 0$ for the analytic spectral algorithms covered by Definition 2.2.

More specifically, the approximation result of \cite{misiakiewicz2024non} is sharper for KRR when its additional assumptions hold, but those assumptions become restrictive in the large-dimensional regimes covered here. From this perspective, Theorem~3.1 and Proposition~3.4 trade pointwise sharpness for broader regime coverage and a unified treatment of spectral algorithms beyond KRR.

\subsubsection{Literature review: Concentration of the excess risks of KRR and ridge regression in sequence models}\label{subsec_related_work_sequence_model}

Recent studies \citep{cheng2022dimension, misiakiewicz2024non} have established approximations between 
(i) the excess risk of kernel ridge regression (KRR) in the nonparametric regression model (1) and (ii) the excess risk of ridge regression (RR) on the Gaussian sequence model. 

Specifically, they first consider the nonparametric regression model (1)
    with a regression function $f_{\star} = \sum_j f_j \phi_j \in L^2$.
    Specifying a kernel $K$ with eigenvalues $\lambda_j$'s, they then consider the KRR estimator in (11) with regularization parameter $\lambda$. 
    They also consider the Gaussian sequence model 
    $$
    \tilde{z}_j=f_j+ \tilde{\xi}_j, j=1,2, \cdots, 
    $$
    where $\tilde{\xi}_j \sim_{\text{i.i.d.}} \calN (0, [\sigma^2 + E_{\tilde{\xi}_j}[\sum_{j=1}^{\infty}(\tilde{f}_j^{\tilde{\lambda}} - f_j)^2]] / n)$.
    Here, $\tilde{\lambda}$ is the unique non-negative solution of
\begin{equation}\label{eqn_lambda_star}
    n \left(1-\frac{\lambda}{\tilde{\lambda}}\right) = \sum_{j=1}^{\infty} \frac{\lambda_j}{\lambda_j + \tilde{\lambda}},
\end{equation}
and $\tilde{f}_j^{\tilde{\lambda}} := \lambda_j \varphi_{\tilde{\lambda}}^{\krr}(\lambda_j) \tilde{z}_j$ is the RR estimator with regularization parameter $\tilde{\lambda}$ for $j=1, 2, \cdots$. 

\begin{remark}
    Notice that the above sequence model is defined implicitly, since the variance of the noise depends on the excess risk, which itself depends on the noise.
\end{remark}

\cite{misiakiewicz2024non} then built the following approximations on the excess risks of KRR and RR:

\begin{proposition}[Restate Theorem 2 in \cite{misiakiewicz2024non}]\label{prop_restate_misia}
    Suppose Assumption 1 and 2 hold for some $\gamma>0$. 
    Further suppose there exists an absolute constant $C$, such that we have $\left\|\mathrm{P}_{>\ell_{\gamma}} f_{\star}\right\|_{L^2} \geq \left\|f_{\star}\right\|_{L^2} / C$, and for all integers $q \geq 2$,  we have $\left\|f_{\star}\right\|_{L^q} \leq (C q)^{(\ell_{\gamma}+1) / 2}\left\|f_{\star}\right\|_{L^2}$. Then under certain conditions, we have
    \begin{equation}\label{eqn_prop_restate_misia}
        E_{x, \epsilon} \left[ \left(\hat{f}_{\lambda}(x) - f_{\star}(x) \right)^{2} \right] = E_{\tilde{\xi}}\left[\sum_{j=1}^{\infty}(\tilde{f}_j^{\tilde{\lambda}} - f_j)^2\right] \cdot \left(1 + o_{d, \mathbb{P}}\left(1\right)\right),
    \end{equation}
where $\mathrm{P}_{>\ell_{\gamma}}$ means the projection onto spherical harmonics with degree $>\ell_{\gamma}$, and $E_{\tilde{\xi}}$ means taking expectation on the noises $\tilde{\xi}_j$, $j=1, 2, \cdots$.
\end{proposition}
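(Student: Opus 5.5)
Since this proposition restates Theorem~2 of \cite{misiakiewicz2024non}, the most economical route is to verify that the present setting satisfies its hypotheses and then quote it. If instead one wanted a self-contained argument, I would use deterministic equivalents for KRR, in the following order.

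\emph{Step 1: split the kernel at degree $\ell_{\gamma}$.} Write
\[
K=\Psi_{\le\ell_\gamma}\Sigma_{\le\ell_\gamma}\Psi_{\le\ell_\gamma}^\top+\Psi_{>\ell_\gamma}\Sigma_{>\ell_\gamma}\Psi_{>\ell_\gamma}^\top .
\]
The inputs I would use are (\ref{eqn_approx_phase_1}) and (\ref{eqn_approx_phase_2}). Under hyper-contractivity of the eigenfunctions, the high-degree block is close to $\kappa_1\mathrm{I}_n$ in operator norm, so it acts as a self-induced ridge. The low-degree Gram matrix $n^{-1}\Psi_{\le\ell_\gamma}^\top\Psi_{\le\ell_\gamma}$ concentrates around the identity, because $N_{\ell_\gamma}=\Theta_d(d^{\ell_\gamma})\ll n$.

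\emph{Step 2: define the effective regularization.} Adding the self-induced ridge $\kappa_1$ to $n\lambda$ gives the fixed-point equation (\ref{eqn_lambda_star}) for $\tilde\lambda$. Solvability and uniqueness of $\tilde\lambda$ follow from monotonicity of both sides in $\tilde\lambda$.

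\emph{Step 3: deterministic equivalents for bias and variance.} Decompose the variance and bias exactly as in (\ref{eq var tr}) and (\ref{eqn_appro_bias_1_1_phase}). Apply the Sherman--Morrison--Woodbury formula to replace $M^{-1}$ by a resolvent in the low-degree features only. Then use a leave-one-out, Marchenko--Pastur type argument to obtain multiplicative deterministic equivalents for the traces $\mathrm{tr}(\Sigma(\Sigma+\tilde\lambda)^{-1})$ and $\mathrm{tr}(\Sigma^2(\Sigma+\tilde\lambda)^{-2})$, and for the quadratic form in $\theta$. Assembling these shows that the excess risk equals $(\sigma^2+\text{bias}^2)\,\mathrm{df}_2/(n-\mathrm{df}_2)$ plus the ridge bias at level $\tilde\lambda$, up to a factor $1+o_{d,\mathbb{P}}(1)$. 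This is precisely the risk of the implicitly defined Gaussian sequence model, since its noise variance absorbs the excess risk self-consistently.

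\emph{Main obstacle: the bias term with multiplicative error.} For $s>0$ the risk is $o_d(1)$, so an additive approximation is useless, and one needs a \emph{relative} error on the bias quadratic form. The fluctuation of $\theta^\top(\cdot)\theta$ is controlled via high moments of $f_\star(x_i)$, which is exactly where the hyper-contractivity assumption $\|f_\star\|_{L^q}\le (Cq)^{(\ell_\gamma+1)/2}\|f_\star\|_{L^2}$ enters. The high-degree mass condition $\|\mathrm{P}_{>\ell_\gamma}f_\star\|_{L^2}\ge\|f_\star\|_{L^2}/C$ ensures the unavoidable high-degree bias is of the same order as $\|f_\star\|^2_{L^2}$. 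This lets the errors from the low-degree block be absorbed relatively.

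The unspecified ``certain conditions'' in the statement are the spectral gap and regime conditions of \cite{misiakiewicz2024non}. I would take these over verbatim rather than re-derive them.
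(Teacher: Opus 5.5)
Your primary route, quoting Theorem~2 of \cite{misiakiewicz2024non} with its hypotheses checked or deferred, is exactly what the paper does: the proposition is a restatement in which the ``certain conditions'' are left to that reference, and the paper gives no independent proof. Your deterministic-equivalent sketch is therefore supplementary, with two small caveats: the claim $N_{\ell_\gamma}\ll n$ in it needs $\gamma$ non-integer, and the paper's own remark ties the two target-function assumptions to the convex-concentration (Hanson--Wright) step of \cite{cheng2022dimension}, not to the relative absorption of low-degree errors.
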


With more detailed assumptions on the regression function, Eq. (\ref{eqn_prop_restate_misia}) provides a sharper approximation on the excess risks than Eq. (21) when the algorithm is specified to KRR. To clarify our novelty, we present a direct comparison:

When $s>0$, Eq. (\ref{eqn_prop_restate_misia}) relies on assumptions that become increasingly restrictive as $d$ grows. Specifically, let $f_{\star}(\cdot) = \sum_{j=0}^{\infty} f_j \phi_j(\cdot)$ be any regression function satisfying Assumption 3 and with nonzero \(L^2\) norm. Then Proposition 2.1 implies
    $$
        \left\|\mathrm{P}_{>\ell_{\gamma}} f_{\star}\right\|_{L^2}^2 = \sum_{j=N_{\ell_{\gamma}}+1}^{\infty} f_j^2 \leq \mu_{\ell_{\gamma}+1}^s \sum_{j=N_{\ell_{\gamma}}+1}^{\infty} \lambda_j^{-s} f_j^2 \overset{\text{Assumption 3}}{\leq} O_d \left(d^{-s(\ell_{\gamma}+1)}\right),
    $$
    thus requiring that $\left\|f_{\star}\right\|_{L^q}^2$ is of order at most $d^{-s(\ell_{\gamma}+1)}$ for any $q=2, 3, \cdots$. Therefore, our result applies to a significantly broader class of large-dimensional learning scenarios where the assumptions behind Eq. (\ref{eqn_prop_restate_misia}) may not hold.

For any regression function in $[\calH]^{s}$ with $s \geq 0$ that satisfies all conditions in Proposition \ref{prop_restate_misia}, Eq. (\ref{eqn_prop_restate_misia}) is consistent with Eq. (21). To illustrate this, first notice that we have $\tilde{\lambda} = \Theta_d(\lambda)$ and $\mathrm{var}(\tilde{\xi}_j) = \Theta_d(\mathrm{var}(\xi_j))$: Solving (\ref{eqn_lambda_star}) we have
    \begin{equation}\label{eqn_mis_implies_ours_1}
        \tilde{\lambda} = 
        \left\{\begin{matrix}
\left[1+\Theta_d(d^{u-\gamma})\right]\lambda = \Theta_d(\lambda), &  \lambda \gg n^{-1}\\
\Theta_d(n^{-1}), & \lambda = O_d(n^{-1}); \\
\end{matrix}\right.
    \end{equation}
and
    \begin{equation}\label{eqn_mis_implies_ours_2}
    \sigma^2 + E_{\tilde{\xi}}\left[\sum_{j=1}^{\infty}(\tilde{f}_j^{\tilde{\lambda}} - f_j)^2\right] = \Theta_d(\sigma^2);
    \end{equation}

Therefore, if (\ref{eqn_prop_restate_misia}) holds, then it implies (21) since:
    \begin{equation*}
        \begin{aligned}
           &~ E_{x, \epsilon} \left[ \left(\hat{f}_{\lambda}(x) - f_{\star}(x) \right)^{2} \right] \overset{(\ref{eqn_prop_restate_misia})}{=}
   \Theta_{d, \mathbb{P}} \left( E_{\tilde{\xi}}\left[\sum_{j=1}^{\infty}(\tilde{f}_j^{\tilde{\lambda}} - f_j)^2\right] \right)\\
   \overset{(19) \text{ and } (\ref{eqn_mis_implies_ours_2})}{=} &~
   \Theta_{d, \mathbb{P}} \left(E_{\xi}\left[\sum_{j=1}^{\infty}(\hat{f}_j^{\tilde{\lambda}} - f_j)^2\right]\right)
   \overset{(20) \text{ and } (\ref{eqn_mis_implies_ours_1})}{=} 
   \Theta_{d, \mathbb{P}} \left(
E_{\xi}\left[\sum_{j=1}^{\infty}(\hat{f}_j^{\max\{\lambda, n^{-1}\}} - f_j)^2\right]
\right).
        \end{aligned}
    \end{equation*}

In summary, Theorem 3.1 strictly generalizes Proposition \ref{prop_restate_misia},
covering the cases for any $s \geq 0$ and for certain analytical spectral algorithms including KRR.

\begin{remark}
    The assumptions in Proposition \ref{prop_restate_misia} are pivotal to its proof technique and are intrinsically difficult to relax. Notice that the two assumptions,
$$
\left\|\mathrm{P}_{>\ell_{\gamma}} f_{\star}\right\|_{L^2} \geq\left\|f_{\star}\right\|_{L^2} / C 
\quad \text{ and } \quad 
\left\|f_{\star}\right\|_{L^q} \leq (C q)^{(\ell_{\gamma}+1) / 2}\left\|f_{\star}\right\|_{L^2} \text{ for all integers } q \geq 2,
$$
are sufficient conditions for the convex concentration on eigenfunctions in \cite{cheng2022dimension}, which allows them to apply the Hanson-Wright inequality in Lemma 2.1 of \cite{cheng2022dimension}.
In contrast, our proof employs a different analytical framework that does not rely on those restrictive premises.
\end{remark}

\section{Proof sketch for Theorem 3.1 in a representative regime}\label{append_sketch_proof_main_results}

To highlight the main ideas without carrying the full case analysis, we focus on the case where
\begin{equation}\label{eqn_sketch_0}
    \begin{aligned}
    &~ u \in [u^{\prime}(1), \gamma) \text{ (that is, the under-regularized regime), }\\
    &~ \ell_{\lambda}+1 < \ell_{\gamma} < \gamma, \quad s>0, \quad \text{ and } \tau < \infty
\end{aligned}
\end{equation}

This case already contains the main additional difficulty in proving the full learning curve.
Indeed, once $\ell_{\lambda}<\ell_{\gamma}$, the proof must be organized at the cutoff $\ell_{\lambda}$ rather than only at $\ell_{\gamma}$.  
This creates the intermediate block between degrees $\ell_{\lambda}+1$ and $\ell_{\gamma}$, whose contribution must be tracked explicitly, especially in the bias analysis. 
This is the key new obstacle that does not appear in the same form in interpolation-level arguments or in proofs restricted to the optimally tuned regime.

The remaining cases are handled by the same general strategy, together with appropriate case-dependent modifications of several known results from the literature.
For example, for the case where $u \in [u^{\prime}(\tau), u^{\prime}(1))$, the bias bound requires a variant of the argument in Appendix D.4.2 of \cite{lu2024saturation}; see Proposition \ref{prop_verify_bias_special}. 
The complete proof treats all cases systematically.



The proof has three ingredients. 
First, we combine kernel approximation results with a decomposition of $K$ at the cutoff index  $\ell_\lambda$. 
Second, we use this decomposition to bound the variance and the bias separately. 
Third, we compare the resulting exponents to identify the dominant term in each range of $u$.

\paragraph*{Key kernel approximation results}
We first collect some kernel approximation results, and readers can refer to Appendix \ref{append_kernel_app_sphere} for details.

From \cite{ghorbani2021linearized}, we have
\begin{equation}\label{eqn_sketch_1}
    \left\|\Psi_{>\ell_{\gamma}} \Sigma_{>\ell_{\gamma}}^j \Psi_{>\ell_{\gamma}}^{\top} - \kappa_j\mathrm{I}_n \right\|_{\mathrm{op}} = o_{d, \mathbb{P}}(\kappa_j), \quad \kappa_j = \sum_{k=\ell_{\gamma}+1}^\infty \mu_k^j N(d, k),\quad j=1,2;
\end{equation}
and that
\begin{equation}\label{eqn_sketch_2}
\left\| n^{-1}{\Psi_{\leq \ell_{\gamma}}^{\top} \Psi_{\leq \ell_{\gamma}} } - \mathrm{I}_{N_{\ell_{\gamma}}} \right\|_{\mathrm{op}} = o_{d, \mathbb{P}}(1);
\end{equation}
see Lemma \ref{lemma psi psi top} and Lemma \ref{lemma psi top psi}. 
Recall that we have the decomposition 
$$
K = \underbrace{\Psi_{\leq \ell_{\lambda}} \Sigma_{\leq \ell_{\lambda}} \Psi_{\leq \ell_{\lambda}}^{\top}}_{K_{\leq \ell_{\lambda}}} + 
\underbrace{\Psi_{\ell_{\lambda}+ 1, \ell_{\gamma}} \Sigma_{\ell_{\lambda}+ 1, \ell_{\gamma}} \Psi_{\ell_{\lambda}+ 1, \ell_{\gamma}}}_{K_{\ell_{\lambda}+ 1, \ell_{\gamma}}}
+ \underbrace{\Psi_{>\ell_{\gamma}} \Sigma_{>\ell_{\gamma}} \Psi_{>\ell_{\gamma}}^{\top}}_{K_{>\ell_{\gamma}}}.
$$
Combining the block decomposition of $K$ with Weyl's inequality, we obtain several key properties of the kernel matrices $M := K + n\lambda \mathrm{I}_n$, $K_{>\ell_{\lambda}}$, and $M_{>\ell_{\lambda}}:=K_{>\ell_{\lambda}} + n\lambda \mathrm{I}_n$, which will be used throughout the variance and bias analysis.
Specifically, we have 
\begin{equation}\label{eqn_sketch_3}
\left\{
\begin{aligned}
    &~ \lambda_{\text{min}}\left( M\right) = \Omega_{d, \mathbb{P}}(n\lambda), \quad \mathrm{tr}(M / n)=O_{d, \mathbb{P}}(n\lambda);\\
    &~ \text{For any } N_{\ell_{\lambda}}+1, \cdots, N_{\ell_{\lambda}+1}, \text{ we have } \lambda_i(M) = O_{d, \mathbb{P}}(n\lambda);\\
    &~ \lambda_{\text{max}}(M_{>\ell_{\lambda}}) \asymp \lambda_{\text{min}}(M_{>\ell_{\lambda}}) = \Theta_{d, \mathbb{P}}(n\lambda), \quad \lambda_{\text{max}}(K_{>\ell_{\lambda}}) =
 O_{d, \mathbb{P}}\left({d^{\gamma-\ell_{\lambda}-1}}\right);\\
 &~ \text{Denote } A = n^{-1}\left(\Sigma_{\leq \ell_{\lambda}}^{-1} + \Psi_{\leq \ell_{\lambda}}^{\top} M_{>\ell_{\lambda}}^{-1} \Psi_{\leq \ell_{\lambda}}\right), \text{ then } \lambda_{\text{max}} \left(  A^{-1} \right) \asymp \lambda_{\text{min}} \left(  A^{-1} \right) = \Theta_{d, \mathbb{P}}(n\lambda);
\end{aligned}
\right.
\end{equation}
see Lemma \ref{lemma lambda max min}. 
For the bias, we also need the following results
\begin{equation}\label{eqn_sketch_4}
   \left\{ 
   \begin{aligned}
       & \left\| \Psi_{>\ell_{\lambda}} \theta_{>\ell_{\lambda}} \right\|_{2}^{2} = ~ O_{d, \mathbb{P}}\left( n  \| \theta_{>\ell_{\lambda}} \|_{2}^{2}\right); \\
       & \| \theta_{>\ell_{\lambda}} \|_{2}^{2}  = ~ \Theta_d \left(  d^{-(\ell_{\lambda}+1)s} \right); \\
      &  \left\| \Sigma_{\leq \ell_{\lambda}}^{-\tau}  \theta_{\leq \ell_{\lambda}} \right\|_{2}^{2}  = ~ \Theta_{d} \left(  d^{(2\tau-\min\{s, 2\tau\})\ell_{\lambda} } \right);
    \end{aligned}
    \right.
\end{equation}
see Lemma \ref{lemma ge l l2}. Finally, based on the Sherman-Morrison-Woodbury formula, for $k \in \{\ell_{\gamma}, \ell_{\lambda}\}$, we have 
\begin{equation}\label{eqn_sketch_5}
\begin{aligned}
    M^{-1} \Psi_{\leq k} \Sigma_{\leq k}^{1/2} 
        =  
        M_{>k}^{-1} \Psi_{\leq k} \Sigma_{\leq k}^{1/2} \left( \mathrm{I}_{N_{k}} +  \Sigma_{\leq k}^{1/2}  \Psi_{\leq k}^{\top} M_{>k}^{-1} \Psi_{\leq k} \Sigma_{\leq k}^{1/2}\right)^{-1},\\
        \Sigma_{\leq k} - \Sigma_{\leq k} \Psi^{\top}_{\leq k} M^{-1} \Psi_{\leq k}\Sigma_{\leq k}
        = 
        \left( \Sigma_{\leq k}^{-1} + \Psi_{\leq k}^{\top} M_{>k}^{-1} \Psi_{\leq k}  \right)^{-1};
\end{aligned}
\end{equation}
see Lemma \ref{lemma trans in V2}.

\paragraph*{Bounding the variance}

Since $\mathscr{K}(x,X) = \psi(x)^{\top} \Sigma \Psi^{\top}$, $\varphi_{\lambda, K}:=n^{-1}\reg(K/n)$, and $\epsilon_1, \cdots, \epsilon_n$ are independent with variance $\sigma^2$, 
the variance term (14) can be decomposed according to the three spectral blocks
\[
>\ell_\gamma,\qquad \ell_\lambda+1,\ldots,\ell_\gamma,\qquad \le \ell_\lambda.
\]
More precisely,
\begin{equation*}
    \begin{aligned}
        \mathrm{var}\left( \hat{f}_{\lambda} \right) &= E_{x, \epsilon}\left[\left(\mathscr{K}(x,X) \varphi_{\lambda, K} \epsilon \right)^2\right] = E_{x, \epsilon} \left[\left( \psi(x)^{\top} \Sigma^{1/2} \Sigma^{1/2} \Psi^{\top} \varphi_{\lambda, K} \epsilon \right)^2\right]  \\
    =&~  \sigma^{2} \cdot \mathrm{tr} \left( \varphi_{\lambda, K} \Psi \Sigma^{2} \Psi^{\top} \varphi_{\lambda, K} \right)\\
    =&~
    \sigma^{2} \cdot \mathrm{tr} \underbrace{\left(\varphi_{\lambda, K} \Psi_{> \ell_{\gamma}} \Sigma_{> \ell_{\gamma}}^{2} \Psi_{> \ell_{\gamma}}^{\top} \varphi_{\lambda, K}\right)}_{ V_{1}}\\
    &~+
    \sigma^{2} \cdot \mathrm{tr} \underbrace{\left(  \Psi_{\ell_{\lambda}+1, \ell_{\gamma}} \Sigma_{\ell_{\lambda}+1, \ell_{\gamma}}^{2} \Psi_{\ell_{\lambda}+1, \ell_{\gamma}}^{\top} \varphi_{\lambda, K}^2 \right)}_{V_{2, 1}}
    ~+~
    \sigma^{2} \cdot \mathrm{tr} \underbrace{\left(  \Sigma_{\leq \ell_{\lambda}} \Psi_{\leq \ell_{\lambda}}^{\top}  \varphi_{\lambda, K}^2 \Psi_{\leq \ell_{\lambda}} \Sigma_{\leq \ell_{\lambda}} \right)}_{V_{2, 2}}.
    \end{aligned}
\end{equation*}

We can then bound these three trace terms using (\ref{eqn_sketch_1}), (\ref{eqn_sketch_2}), (\ref{eqn_sketch_3}), and (\ref{eqn_sketch_5}); see Appendix \ref{append_variance}. 

To illustrate the new step, we take $\mathrm{tr}(V_{2,1})$ as an example. 
This term results from the intermediate block between $\ell_\lambda$ and $\ell_\gamma$, which appears only when $\ell_\lambda<\ell_\gamma$. Controlling this block is what allows the variance analysis to extend beyond the interpolation-level decomposition.

\color{black}

$\mathrm{tr}(V_{2,1})$ can be bounded from above as
\begin{equation*}
    \begin{aligned}
        \mathrm{tr}(V_{2,1})
        \overset{(\ref{eqn:spe_properties})}{\le}
        C_2^2 n \lambda_{\max}(M^{-2})
        \cdot
        \mathrm{tr}\!\left(
        \frac{\Psi_{\ell_\lambda+1,\ell_\gamma}\Sigma_{\ell_\lambda+1,\ell_\gamma}^2
        \Psi_{\ell_\lambda+1,\ell_\gamma}^\top}{n}
        \right)
        \overset{(*)}{=}
        O_{d,\mathbb P}\!\left(\frac{1}{d^{\ell_\lambda+1}n\lambda^2}\right), 
    \end{aligned}
\end{equation*}
where the $(*)$ step uses the estimates in (\ref{eqn_sketch_1})-(\ref{eqn_sketch_3}), and (\ref{eqn_sketch_5}). 
These same estimates are used repeatedly in the proof of the lower bound as well. 
Combining the upper and lower bounds, we conclude that 
$
\mathrm{tr}(V_{2,1})
=
\Theta_{d,\mathbb P}\!\left(\frac{1}{d^{\ell_\lambda+1}n\lambda^2}\right)$.

\paragraph*{Bounding the bias}

To keep the sketch focused, we do not reproduce the full decomposition of $\mathrm{bias}^2(\hat f_\lambda)$ into the five terms $B_1,\ldots,B_5$; see Appendix \ref{append_bias}. 
Most terms can be controlled using the same approximation and estimates in (\ref{eqn_sketch_1})--(\ref{eqn_sketch_5}); see Appendix \ref{append_bias}. 
A genuinely new argument is needed for the term
\[
B_{1}:=\Sigma_{\leq \ell_{\lambda}} \Psi^{\top}_{\leq \ell_{\lambda}} \varphi_{\lambda, K} \Psi_{\leq \ell_{\lambda}} \theta_{\leq \ell_{\lambda}} - \theta_{\leq \ell_{\lambda}},
\]
which is the contribution associated with the low-degree block.
Because of the appearance of $\varphi_{\lambda, K}$, this term involves both the intermediate block $K_{\ell_\lambda+1,\ell_\gamma}$ and the low-degree feature matrix $\Psi_{\leq \ell_{\lambda}}$. 
This interaction is absent in interpolation-only analyses organized at $\ell_\gamma$. 
Controlling this term is one of the main new steps in the proof.

Using the approximation (\ref{eqn_sketch_2}) and an algebraic identity in Proposition \ref{prop_matrix_prop_1_of_filter}, we can obtain
\begin{equation*}
    \begin{aligned}
        \|B_1\|_{2}^{2}   \leq  &~ C \left(
        \underbrace{\left\| \rem(K/n)  
                Z\theta_{\leq \ell_{\lambda}} \right\|_{2}^{2}}_{B_{1,3}}~+
                \underbrace{\left\| K_{\ell_{\lambda}+1, \ell_{\gamma}} K^{-1} Z \theta_{\leq \ell_{\lambda}} \right\|_{2}^{2}}_{B_{1,4}}~+
                \underbrace{\left\| K_{>\ell_{\gamma}} K^{-1} Z \theta_{\leq \ell_{\lambda}} \right\|_{2}^{2}}_{B_{1,5}}\right), 
    \end{aligned}
\end{equation*}
where $C=O_{d, \mathbb{P}}\left(1\right)$; see (\ref{eqn_b1_spe_origin}) and (\ref{eq:filter_implication}) for detailed calculations.
The term $B_{1,4}$ is the most delicate since it involves the interaction matrix $K_{\ell_{\lambda}+1, \ell_{\gamma}}$.

The key step is to convert $B_{1,4}$ into a cross-block Gram term. 
Using $K^{-1}K_{\le \ell_\lambda}=\mathrm{I}-K^{-1}K_{>\ell_\lambda}$, separating the harmless $K_{>\ell_\gamma}$ contribution, and writing
\[
K_{\ell_\lambda+1,\ell_\gamma}
=
n\tilde Z\Sigma_{\ell_\lambda+1,\ell_\gamma}\tilde Z^\top,
\qquad
\tilde Z:=n^{-1/2}\Psi_{\ell_\lambda+1,\ell_\gamma},
\]
we obtain
\[
B_{1,4}
\le
O_{d,\mathbb P}\!\left(
d^{(2-\min\{s,2\})\ell_\lambda-2\gamma}
\left[
n\left\|   \underbrace{Z^\top \tilde Z\,\Sigma_{\ell_\lambda+1,\ell_\gamma}^2\,\tilde Z^\top Z}_{B_{1,6}}   \right\|_2
+1
\right]
\right).
\]

This form is useful because it isolates the interaction between the low-degree and intermediate-degree feature matrices in a form that is amenable to concentration. 
In particular, $B_{1, 6}$ is identified as a sum of independent random rank‑one matrices, and a standard matrix Bernstein inequality shows that $B_{1, 6}=o_{d,\mathbb{P}}(n^{-1})$.
This is the crucial estimate behind the control of $B_{1,4}$ and the point where the proof goes beyond interpolation-only arguments.

\paragraph*{Final comparison of the bias and variance rates}

For the regime in \eqref{eqn_sketch_0}, the previous steps show that the variance contributes two candidate orders, while the bias contributes three candidate orders. 
For notational convenience, we denote the corresponding exponents by $v_1(u),v_2(u)$ and $b_1(u),b_2(u),b_3(u)$, respectively; see \eqref{eqn_order_in_bias_and_variance}. 
Thus
\begin{equation*}
    \begin{aligned}
        \mathrm{var}(\hat{f}_{\lambda})
        &= \Theta_{d,\mathbb P}\!\left(d^{v_1(u)}+d^{v_2(u)}\right),\\
        \mathrm{bias}^{2}(\hat{f}_{\lambda})
        &\le O_{d,\mathbb P}\!\left(d^{b_1(u)}+d^{b_2(u)}+d^{b_3(u)}\right).
    \end{aligned}
\end{equation*}
Moreover, the lower bound proved in Theorem~\ref{thm_bias_spe} shows that
\begin{equation*}
    \mathrm{bias}^{2}(\hat{f}_{\lambda})
    \ge \Omega_{d,\mathbb P}\!\left(d^{b_1(u)}\right)
    \qquad\text{whenever } b_1(u)\ge \max\{b_2(u),b_3(u)\}.
\end{equation*}

Therefore, the only remaining step is to compare these exponents. 
Proposition \ref{prop_e_2} and Proposition \ref{prop_e_3} show that, in the present regime,
\begin{equation*}
    \max\{v_1(u),v_2(u),b_1(u)\}\ge \max\{b_2(u),b_3(u)\}.
\end{equation*}
In other words, the secondary bias exponents $b_2(u)$ and $b_3(u)$ never dominate the leading order. 
Hence the excess risk is determined by the two variance exponents together with the principal bias exponent $b_1(u)$, and we obtain
\begin{equation*}
    E_{x,\epsilon}\!\left[\left(\hat f_{\lambda}(x)-f_{\star}(x)\right)^2\right]
    = \Theta_{d,\mathbb P}\!\left(d^{v_1(u)}+d^{v_2(u)}+d^{b_1(u)}\right).
\end{equation*}
This is the final synthesis step that turns the separate bias and variance bounds into the learning-curve formula in the regime \eqref{eqn_sketch_0}; see Appendix \ref{append_learning_curve} for the complete case-by-case comparison.

\section{Variance}\label{append_variance}

In this section, our goal is to prove the following theorem, which bounds the variance term in (13).

\begin{theorem}\label{thm_variance_spe}
    Suppose Assumptions 1, 2, and 4 hold.
    Then the variance term (14) of the spectral algorithm estimator $\hat{f}_{\lambda}$ satisfies
    \begin{displaymath}
        \mathrm{var}(\hat{f}_{\lambda}) = \Theta_{d, \mathbb{P}} \left( 
        d^{\gamma - \tilde{\ell} - 1 - 2  \max\{\gamma - u, 0\}}
+ d^{\tilde{\ell} - \gamma}
        \right).
    \end{displaymath} 
\end{theorem}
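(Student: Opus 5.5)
We will follow the three-block decomposition from the proof sketch in Appendix~\ref{append_sketch_proof_main_results}. Write $\mathrm{var}(\hat f_\lambda)=\sigma^2\,\mathrm{tr}(\varphi_{\lambda,K}\Psi\Sigma^2\Psi^\top\varphi_{\lambda,K})$. Split $\Psi\Sigma^2\Psi^\top$ into its $>\ell_\gamma$ part ($V_1$), its intermediate part $\ell_\lambda+1,\dots,\ell_\gamma$ ($V_{2,1}$, present only when $\ell_\lambda<\ell_\gamma$), and its $\le\tilde\ell$ part ($V_{2,2}$). The first reduction uses the sandwich $C_1M^{-1}\le\varphi_{\lambda,K}\le C_2M^{-1}$ from (\ref{eqn:spe_properties}). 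For any PSD $P$ this gives $C_1^2\,\mathrm{tr}(M^{-1}PM^{-1})\le \mathrm{tr}(\varphi_{\lambda,K}P\varphi_{\lambda,K})\le C_2^2\,\mathrm{tr}(M^{-1}PM^{-1})$. Because $\varphi_{\lambda,K}$ and $M^{-1}$ commute, both being functions of $K$, we have $\varphi_{\lambda,K}^2$ sandwiched between $C_1^2M^{-2}$ and $C_2^2M^{-2}$. The problem therefore reduces entirely to the KRR variance, and the filter plays no further role.

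There are two cases, $u\ge\gamma$ (so $\tilde\ell=\ell_\gamma$, $n\lambda=O_d(1)$) and $u<\gamma$ (so $\tilde\ell=\ell_\lambda$, $n\lambda\to\infty$).

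\emph{Case $u\ge\gamma$.} Here the argument of \cite{zhang2024phase} applies with $M=K+n\lambda\mathrm I_n$ in place of $K$. By (\ref{eqn_sketch_1}), $K_{>\ell_\gamma}\approx\kappa_1\mathrm I_n$ with $\kappa_1=\Theta_d(1)$, so $M^{-1}$ has eigenvalues $\Theta_{d,\mathbb P}(1)$ outside the range of $\Psi_{\le\ell_\gamma}$. Then $\mathrm{tr}(V_1)\asymp \kappa_2\,\mathrm{tr}(M^{-2})$. The quantity $\mathrm{tr}(M^{-2})\asymp n$, since only $N_{\ell_\gamma}=o(n)$ directions are inflated. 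This gives $n\kappa_2=\Theta(n d^{-\ell_\gamma-1})=\Theta(d^{\gamma-\ell_\gamma-1})$. For $\mathrm{tr}(V_{2,2})$, use the SMW identity (\ref{eqn_sketch_5}) with $k=\ell_\gamma$ together with (\ref{eqn_sketch_2}). This yields $M^{-1}\Psi_{\le\ell_\gamma}\Sigma_{\le\ell_\gamma}\approx M_{>\ell_\gamma}^{-1}\Psi_{\le\ell_\gamma}(\Sigma^{-1}_{\le\ell_\gamma}+n\kappa_1^{-1}\mathrm I)^{-1}$. Its squared Frobenius norm is $\Theta(n\cdot N_{\ell_\gamma}\cdot n^{-2})=\Theta(d^{\ell_\gamma-\gamma})$, using $\mu_k^{-1}\ll n$ for $k\le\ell_\gamma$.

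\emph{Case $u<\gamma$.} The cutoff moves to $\ell_\lambda$. Use the eigenvalue facts (\ref{eqn_sketch_3}): $M\ge n\lambda\mathrm I$, and $M_{>\ell_\lambda}\asymp n\lambda\mathrm I$ because $\lambda_{\max}(K_{>\ell_\lambda})=O(d^{\gamma-\ell_\lambda-1})\lesssim n\lambda$. The intermediate block is bounded above by $n^{-2}\lambda^{-2}\,\mathrm{tr}(\Psi_{\ell_\lambda+1,\ell_\gamma}\Sigma^2\Psi^\top)\asymp \lambda^{-2}n^{-1}d^{-\ell_\lambda-1}$. The $>\ell_\gamma$ block contributes $(n\lambda)^{-2}n\kappa_2$, which is smaller. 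Together these give $d^{2u-\gamma-\ell_\lambda-1}=d^{\gamma-\tilde\ell-1-2(\gamma-u)}$. For $V_{2,2}$, repeat the SMW computation at $k=\ell_\lambda$, now with $M_{>\ell_\lambda}^{-1}\asymp(n\lambda)^{-1}$ and the matrix $A$ from (\ref{eqn_sketch_3}); this gives $\Theta(N_{\ell_\lambda}/n)=\Theta(d^{\ell_\lambda-\gamma})$. When $\ell_\lambda=\ell_\gamma$ the intermediate block is empty, and $V_1$ directly gives the first rate.

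\emph{Lower bounds and the main obstacle.} The hardest step is the lower bound on the intermediate term $\mathrm{tr}(V_{2,1})$. We need $M^{-1}$ not to be much smaller than $(n\lambda)^{-1}$ on the range of $\Psi_{\ell_\lambda+1,\ell_\gamma}$. Those directions carry eigenvalues $n\mu_k\gg n\lambda$ in $K$ only if $k\le\ell_\lambda$; for $k\ge\ell_\lambda+1$ we have $n\mu_k N$-scale contributions of $\lesssim n\lambda$. The plan is the following:
\begin{enumerate}
\item Use the second line of (\ref{eqn_sketch_3}), $\lambda_i(M)=O_{d,\mathbb P}(n\lambda)$ for $i>N_{\ell_\lambda}$.
\item Write $\mathrm{tr}(M^{-1}PM^{-1})\ge \mathrm{tr}(Q P Q)/\lambda_{\max}(M|_{\mathrm{range}})^2$, where $Q$ is the projection orthogonal to $\Psi_{\le\ell_\lambda}$.
\item Check that projecting out the $N_{\ell_\lambda}=o(N(d,\ell_\lambda+1))$ low-degree directions removes only a negligible fraction of $\mathrm{tr}(P)$, again via (\ref{eqn_sketch_2}).
\end{enumerate}
Lower bounds for the other two terms follow from the same SMW expressions with $\lambda_{\min}$ replacing $\lambda_{\max}$.
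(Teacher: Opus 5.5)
Your reduction to the KRR variance through the commuting sandwich $C_1^2M^{-2}\le\varphi_{\lambda,K}^2\le C_2^2M^{-2}$ follows the paper, as do your upper bounds and your $V_{2,2}$ computation for $u<\gamma$. The genuine gap is the case $u\ge\gamma$ with $\gamma$ an integer. Assumption~1 allows this case, and there the $\Theta_{d,\mathbb P}(1)=\Theta(d^{\ell_\gamma-\gamma})$ contribution of $V_{2,2}$ is the entire answer.

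Each fact you rely on in that case fails when $\gamma=\ell_\gamma$:
\begin{itemize}
\item $\mu_{\ell_\gamma}^{-1}\asymp d^{\ell_\gamma}\asymp n$, so $\Sigma_{\le\ell_\gamma}^{-1}$ is not negligible next to $\Psi_{\le\ell_\gamma}^\top M_{>\ell_\gamma}^{-1}\Psi_{\le\ell_\gamma}$.
\item \eqref{eqn_sketch_2} holds only for $\gamma>\ell_\gamma$. For the top block, Lemma~\ref{lemma psi top psi_int} gives only $\Theta(1)$ bounds on the nonzero spectrum of $\Psi_{\ell_\gamma}^\top\Psi_{\ell_\gamma}/n$, and this matrix is singular when $N(d,\ell_\gamma)>n$.
\item $N_{\ell_\gamma}\asymp n$ is not $o(n)$.
\end{itemize}
Hence your closed form $M_{>\ell_\gamma}^{-1}\Psi_{\le\ell_\gamma}(\Sigma_{\le\ell_\gamma}^{-1}+n\kappa_1^{-1}\mathrm I)^{-1}$ is not a valid approximation, and neither bound on $V_{2,2}$ follows.

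The paper closes this case with the two-sided bound on $A^{-1}$ from Lemma~\ref{lemma lambda max min}(v), case (c), which rests on Proposition~\ref{prop_lemma_3_in_xiao}. It lower-bounds $\mathrm{tr}(M^{-2})$ by Jensen, $\mathrm{tr}(M^{-2})\ge n[\mathrm{tr}(M/n)]^{-2}$ with $\mathrm{tr}(M/n)=O(n\lambda+1)$, rather than by counting inflated directions. That count also breaks for integer $\gamma$, though $V_1$ is dominated there.

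A direct repair is also possible.
\begin{itemize}
\item Upper bound: $M\ge\mu_k\Psi_k\Psi_k^\top+c\,\mathrm I_n$ with $c=\lambda_{\min}(M_{>\ell_\gamma})$. Using $M^{-2}\le\lambda_{\min}(M)^{-1}M^{-1}$ and operator monotonicity of the inverse gives $\mu_k^2\,\mathrm{tr}(M^{-1}\Psi_k\Psi_k^\top M^{-1})\le\mu_k\lambda_{\min}(M)^{-1}N(d,k)=O(1)$ for each $k\le\ell_\gamma$.
\item Lower bound: your SMW expression $\|M_{>\ell_\gamma}^{-1}\Psi_{\le\ell_\gamma}(\Sigma_{\le\ell_\gamma}^{-1}+\Psi_{\le\ell_\gamma}^\top M_{>\ell_\gamma}^{-1}\Psi_{\le\ell_\gamma})^{-1}\|_F^2$ yields the matching $\Omega(N_{\ell_\gamma}/n)$. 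This uses only $\|\Psi_{\le\ell_\gamma}^\top\Psi_{\le\ell_\gamma}\|=O_{d,\mathbb P}(n)$ and $\|\Sigma_{\le\ell_\gamma}^{-1}\|=O(n)$.
\end{itemize}

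Your lower bound on $\mathrm{tr}(V_{2,1})$ takes a different route from the paper. The paper restricts to the single block $\ell_\lambda+1$ and takes an SVD of $\Psi_{\ell_\lambda+1}$. It then pairs the singular values with the eigenvalues of $\varphi_{\lambda,K}^2$ via von Neumann's trace inequality, using $\lambda_i(M)=O(n\lambda)$ for $i>N_{\ell_\lambda}$.

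Your projection argument works and is arguably simpler, but step 2 needs the right constant. Because $Q\Psi_{\le\ell_\lambda}=0$, you have $QM=QM_{>\ell_\lambda}$. Hence $\|Qx\|\le\lambda_{\max}(M_{>\ell_\lambda})\|M^{-1}x\|$, which gives $M^{-2}\ge Q/\lambda_{\max}(M_{>\ell_\lambda})^2$. With $\lambda_{\max}$ of the compression $QMQ$ in place of $\lambda_{\max}(M_{>\ell_\lambda})$, the inequality is false in general.

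Step 3 is then $\mathrm{tr}(QP)\ge\mathrm{tr}(P)-N_{\ell_\lambda}\|P\|$, with $\mathrm{tr}(P)\asymp nd^{-(\ell_\lambda+1)}$ and $\|P\|=O_{d,\mathbb P}(nd^{-2(\ell_\lambda+1)})$. Your step 1 plays no role in this version; it is the ingredient of the paper's eigenvalue-pairing argument.
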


\begin{proof}
Because $\mathscr{K}(x,X) = \psi(x)^{\top} \Sigma \Psi^{\top}$ and that $\epsilon_1, \cdots, \epsilon_n$ are independent with variance $\sigma^2$, the variance term (14) can be expressed as
\begin{equation*}
    \begin{aligned}
        \mathrm{var}\left( \hat{f}_{\lambda} \right) &= E_{x, \epsilon}\left[\left(\mathscr{K}(x,X) \varphi_{\lambda, K} \epsilon \right)^2\right] = E_{x, \epsilon} \left[\left( \psi(x)^{\top} \Sigma \Psi^{\top} \varphi_{\lambda, K} \epsilon \right)^2\right]  \\
    &=  \sigma^{2} \cdot \mathrm{tr} \left( \varphi_{\lambda, K} \Psi \Sigma^{2} \Psi^{\top} \varphi_{\lambda, K} \right)\\
    (\because \text{Lemma~\ref{lemma psi psi top}})  &=~
    \sigma^{2} \cdot \mathrm{tr} \underbrace{\left(\kappa_{2}  \varphi_{\lambda, K} \left(\mathrm{I}_{n} + \Delta_{2}\right) \varphi_{\lambda, K}\right)}_{ V_{1}}  ~+~ \sigma^{2} \cdot \mathrm{tr} \underbrace{\left( \varphi_{\lambda, K} \Psi_{\leq \ell_{\gamma}} \Sigma_{\leq \ell_{\gamma}}^{2} \Psi_{\leq \ell_{\gamma}}^{\top} \varphi_{\lambda, K}\right)}_{ V_{2}},
    \end{aligned}
\end{equation*}
where $ \kappa_{2} = \Theta_{d}\left( d^{-(\ell_{\gamma}+1)} \right)$ and $\|\Delta_{2}\|_{\mathrm{op}} = o_{d, \mathbb{P}}(1)$. We then bound $\mathrm{tr}(V_1)$ and $\mathrm{tr}(V_2)$ separately.

\noindent {\bf Bounding $\mathrm{tr}(V_1)$. }
On the one hand, we have
\begin{equation*}
    \begin{aligned}
        \mathrm{tr}(V_1) \leq &~ n\lambda_{\text{max}}\left( V_{1} \right) = n\kappa_{2} \cdot \lambda_{\text{max}}\left( \varphi_{\lambda, K} \left( \mathrm{I}_{n} + \Delta_{2} \right)  \varphi_{\lambda, K}\right)  \\
        (\because \text{Prop.~\ref{prop_weyl_ine}})\quad \leq  &~n\kappa_{2} \cdot \left[ \lambda_{\text{max}}\left( \varphi_{\lambda, K}^2\right) + \lambda_{\text{max}}\left( \varphi_{\lambda, K} \Delta_{2} \varphi_{\lambda, K}\right)  \right]  \\
        \leq  &~ n\kappa_{2} \cdot \lambda_{\text{max}}\left( \varphi_{\lambda, K}^2\right) \left( 1 + o_{d, \mathbb{P}}(1) \right)\\
        \overset{(\ref{eqn:spe_properties})}{\leq}  &~ C_2^2 n\kappa_{2} \cdot \lambda_{\text{min}}\left( M\right)^{-2} \left( 1 + o_{d, \mathbb{P}}(1) \right)\\
        (\because \text{Lemma~\ref{lemma lambda max min}}) \quad =&~
        O_{d, \mathbb{P}}\left(\frac{n}{d^{\ell_{\gamma}+1}(n\lambda+1)^2}\right);
    \end{aligned}
\end{equation*}
On the other hand, we have
\begin{equation*}
    \begin{aligned}
        \mathrm{tr}(V_1) \geq &~  \kappa_{2} \cdot \mathrm{tr}\left( \varphi_{\lambda, K}^2\right) \left( 1 - o_{d, \mathbb{P}}(1) \right)\\
        \overset{(\ref{eqn:spe_properties})}{\geq}&~
        C_1^2\kappa_{2} \cdot \mathrm{tr}\left( M^{-2}\right) \left( 1 - o_{d, \mathbb{P}}(1) \right)\\
       \overset{(\ast)}{ \geq} &~  C_1^2 \kappa_{2} \cdot n \left[ \mathrm{tr}\left( \frac{M}{n} \right) \right]^{-2} \left( 1 -o_{d, \mathbb{P}}(1) \right)\\
                (\because \text{Lemma~\ref{lemma lambda max min}(i)})\quad =&~
        \Omega_{d, \mathbb{P}}\left(\frac{n}{d^{\ell_{\gamma}+1}(n\lambda+1)^2}\right),
    \end{aligned}
\end{equation*}
where the third inequality $(\ast)$ can be shown using the Jensen's inequality as in Remark~\ref{rem:variance-Jensen-inequ}. 
Therefore, $\mathrm{tr}(V_1) = \Theta_{d, \mathbb{P}}\left(\frac{n}{d^{\ell_{\gamma}+1}(n\lambda+1)^2}\right)$.

\begin{remark}\label{rem:variance-Jensen-inequ}
    Suppose $A = \text{diag}(a_1, \cdots, a_n) \in \mathbb{R}^{n \times n}$ is a positive diagonal matrix. 
    Then, the Jensen's inequality $f\left( \frac{a_1 + \cdots + a_n}{n} \right) \leq \frac{f(x_1) + \cdots + f(x_n)}{n}$ for convex function \( f(x) = x^{-2}, x>0 \) implies
    $$
    \left( \text{tr}\left( \frac{A}{n} \right) \right)^{-2} = \left( \frac{a_1 + \cdots + a_n}{n} \right)^{-2} \leq \frac{a_1^{-2} + \cdots + a_n^{-2}}{n} = \frac{\text{tr}(A^{-2})}{n}.
    $$
Since $M$ is positive definite, we can show $(\ast)$ with eigendecomposition.
\end{remark}

\noindent {\bf Bounding $\mathrm{tr}(V_2)$. }
We bound $\mathrm{tr}(V_2)$ in two cases: (i) $\ell_{\gamma} \leq \ell_{\lambda}$, and (ii) $\ell_{\gamma} > \ell_{\lambda}$.

\noindent (i) On the one hand, when $\ell_{\gamma} \leq \ell_{\lambda}$, we have
\begin{equation}\label{eqn_v_2_large_regular_para}
    \begin{aligned}
        \mathrm{tr} \left( V_{2} \right)
        =&~
        \mathrm{tr} \left( \Sigma_{\leq \ell_{\gamma}} \Psi_{\leq \ell_{\gamma}}^{\top}  \varphi_{\lambda, K}^2 \Psi_{\leq \ell_{\gamma}} \Sigma_{\leq \ell_{\gamma}} \right)
        \overset{(\ref{eqn:spe_properties})}{=} \Theta_d\left(1\right) \cdot
        \mathrm{tr} \left( \Sigma_{\leq \ell_{\gamma}} \Psi_{\leq \ell_{\gamma}}^{\top}  M^{-2} \Psi_{\leq \ell_{\gamma}} \Sigma_{\leq \ell_{\gamma}} \right)\\
        \overset{\text{Lemma } \ref{lemma trans in V2}}{=}&~
        \Theta_d\left(\frac{1}{n}\right) \cdot \mathrm{tr} \left[ \left( \frac{\Sigma_{\leq \ell_{\gamma}}^{-1}}{n} + \frac{\Psi_{\leq \ell_{\gamma}}^{\top} M_{>\ell_{\gamma}}^{-1} \Psi_{\leq \ell_{\gamma}}}{n}   \right)^{-1} \frac{\Psi_{\leq \ell_{\gamma}}^{\top} M_{>\ell_{\gamma}}^{-2} \Psi_{\leq \ell_{\gamma}}}{n} \left( \frac{\Sigma_{\leq \ell_{\gamma}}^{-1}}{n} + \frac{\Psi_{\leq \ell_{\gamma}}^{\top} M_{>\ell_{\gamma}}^{-1} \Psi_{\leq \ell_{\gamma}}}{n}   \right)^{-1}  \right]\\
        \overset{\text{Lemma } \ref{lemma lambda max min}}{=}&~
        \Theta_{d, \mathbb{P}}\left(\frac{(n\lambda+1)^2}{n}\right) \cdot \mathrm{tr} \left[  \frac{\Psi_{\leq \ell_{\gamma}}^{\top} M_{>\ell_{\gamma}}^{-2} \Psi_{\leq \ell_{\gamma}}}{n}   \right]\\
        \overset{\text{Lemma } \ref{lemma lambda max min}}{=}&~
        \Theta_{d, \mathbb{P}}\left(\frac{1}{n}\right) \cdot \mathrm{tr} \left[  \frac{\Psi_{\leq \ell_{\gamma}}^{\top} \Psi_{\leq \ell_{\gamma}}}{n}   \right]
        =
        \Theta_{d, \mathbb{P}}\left(\frac{d^{\ell_{\gamma}}}{n}\right),
    \end{aligned}
\end{equation}
where the last equality follows from Lemma \ref{lemma psi top psi} and \ref{lemma psi top psi_int}, together with the fact that $N_{\ell_{\gamma}}:= \sum_{k=0}^{\ell_{\gamma}} N(d, k) = \Theta(d^{\ell_{\gamma}})$ (see Proposition 2.1).



\noindent (ii) On the other hand, when ${\ell_{\gamma}} > \ell_{\lambda}$, similar to (\ref{eqn_v_2_large_regular_para}), we have
\begin{equation*}
    \begin{aligned}
        \mathrm{tr} \left( \Sigma_{\leq \ell_{\lambda}} \Psi_{\leq \ell_{\lambda}}^{\top}  \varphi_{\lambda, K}^2 \Psi_{\leq \ell_{\lambda}} \Sigma_{\leq \ell_{\lambda}} \right)
        =
        \Theta_{d, \mathbb{P}}\left(\frac{d^{\ell_{\lambda}}}{n}\right),
    \end{aligned}
\end{equation*}
hence
\begin{equation*}
    \begin{aligned}
    \mathrm{tr} \left( V_{2} \right)
        =&~
        \mathrm{tr} \left( \varphi_{\lambda, K} \Psi_{\ell_{\lambda}+1, \ell_{\gamma}} \Sigma_{\ell_{\lambda}+1, \ell_{\gamma}}^{2} \Psi_{\ell_{\lambda}+1, \ell_{\gamma}}^{\top} \varphi_{\lambda, K} \right)
        +
        \mathrm{tr} \left( \Sigma_{\leq \ell_{\lambda}} \Psi_{\leq \ell_{\lambda}}^{\top}  \varphi_{\lambda, K}^2 \Psi_{\leq \ell_{\lambda}} \Sigma_{\leq \ell_{\lambda}} \right)\\
        =&~
        \mathrm{tr} \underbrace{\left(  \Psi_{\ell_{\lambda}+1, \ell_{\gamma}} \Sigma_{\ell_{\lambda}+1, \ell_{\gamma}}^{2} \Psi_{\ell_{\lambda}+1, \ell_{\gamma}}^{\top} \varphi_{\lambda, K}^2 \right)}_{V_{2, 1}}
        +
        \Theta_{d, \mathbb{P}}\left(\frac{d^{\ell_{\lambda}}}{n}\right).
    \end{aligned}
\end{equation*}
For the upper bound of $\mathrm{tr}(V_{2, 1})$, we have
\begin{equation*}
    \begin{aligned}
        \mathrm{tr} \left( V_{2, 1} \right) 
        \leq &~
        C_2^2 n\lambda_{\text{max}}(M^{-2})
        \cdot \mathrm{tr} \left( \frac{\Psi_{\ell_{\lambda}+1, \ell_{\gamma}} \Sigma_{\ell_{\lambda}+1, \ell_{\gamma}}^{2} \Psi_{\ell_{\lambda}+1, \ell_{\gamma}}^{\top}}{n} \right)\\
        \overset{\text{ Lemma } \ref{lemma lambda max min}}{=} &~
        O_{d, \mathbb{P}}\left(\frac{n}{(n\lambda+1)^2}\right) \cdot \mathrm{tr} \left( \frac{\Psi_{\ell_{\lambda}+1, \ell_{\gamma}} \Sigma_{\ell_{\lambda}+1, \ell_{\gamma}}^{2} \Psi_{\ell_{\lambda}+1, \ell_{\gamma}}^{\top}}{n} \right)\\
        \overset{\text{Lemma } \ref{lemma psi top psi} \text{ and } \ref{lemma psi top psi_int}}{=}&~
        O_{d, \mathbb{P}}\left(\frac{n}{(n\lambda+1)^2}\right) \cdot \mathrm{tr} \left( \Sigma_{\ell_{\lambda}+1, \ell_{\gamma}}^{2} \right)\\
        =&~
        O_{d, \mathbb{P}}\left(\frac{n}{d^{\ell_{\lambda}+1}(n\lambda+1)^2}\right).
    \end{aligned}
\end{equation*}
Next, we give the lower bound of $\mathrm{tr}(V_{2, 1})$.
From Lemma \ref{lemma psi top psi} and \ref{lemma psi top psi_int}, we can denote the SVD decomposition of $ \Psi_{\ell_{\lambda}+1} $ as $ \Psi_{\ell_{\lambda}+1} = n^{1/2} ~ O H V^{\top} $, where $ O \in \mathbb{R}^{n \times n}$ and $ V \in \mathbb{R}^{N(d, \ell_{\lambda}+1) \times N(d, \ell_{\lambda}+1)}$ are orthogonal matrices; and $HH^{\top}=\text{Diag}\{h_1^2 \geq \cdots \geq h_n^2 \geq 0\}$ is a diagonal matrix with $h_1^2=O_{d, \mathbb{P}}(1)$ and 

\begin{equation}\label{eqn_lower_bound_h_in_variance}
    \begin{aligned}
        h_{N(d, \ell_{\lambda}+1)}^2=\Omega_{d, \mathbb{P}}(1), \quad \text{ if } n > N(d, \ell_{\lambda}+1);\\
        h_{n}^2=\Omega_{d, \mathbb{P}}(1), \quad \text{ if } n \leq N(d, \ell_{\lambda}+1).
    \end{aligned}
\end{equation}

Denote the SVD decomposition of $\varphi_{\lambda, K}^2$ as $\varphi_{\lambda, K}^2=PDP^{\top}$, where $D=\text{Diag}\{d_1 \geq \cdots \geq d_n > 0\}$ is a diagonal matrix. 

Since $\ell_{\gamma} = \lfloor \gamma \rfloor > \ell_{\lambda}$, $N_{\ell_{\lambda}} = \Theta(d^{\ell_{\lambda}})$, and $N(d, \ell_{\lambda}+1)= \Theta(d^{\ell_{\lambda}+1})$, without loss of generality we assume that
$$
N_{\ell_{\lambda}} + 1 < \min\{n, N(d, \ell_{\lambda}+1)\} \leq N_{\ell_{\lambda}+1}.
$$

Recall that we denote $HH^{\top}=\text{Diag}\{h_1^2 \geq \cdots \geq h_n^2 \geq 0\}$ and $D=\text{Diag}\{d_1 \geq \cdots \geq d_n > 0\}$. Therefore, we have
\begin{equation*}
    \begin{aligned}
    \mathrm{tr} \left( V_{2, 1} \right) 
        \geq &~
        \mathrm{tr} \left(  \Psi_{\ell_{\lambda}+1} \Sigma_{\ell_{\lambda}+1}^{2} \Psi_{\ell_{\lambda}+1}^{\top} \varphi_{\lambda, K}^2 \right)
        =
        \mu_{\ell_{\lambda}+1}^{2} \mathrm{tr} \left(  \Psi_{\ell_{\lambda}+1}  \Psi_{\ell_{\lambda}+1}^{\top} \varphi_{\lambda, K}^2 \right)\\
        =&~
        n \mu_{\ell_{\lambda}+1}^{2} \mathrm{tr} \left(  OHH^{\top}O^{\top} PDP^{\top} \right) = n \mu_{\ell_{\lambda}+1}^{2}  \mathrm{tr} \left(  HH^{\top}O^{\top} PD P^{\top} O\right)\\
        {\geq} &~
        n \mu_{\ell_{\lambda}+1}^{2}\sum_{i=1}^{n} h_i^2 d_{n-i+1}
        \overset{(\ref{eqn_lower_bound_h_in_variance})}{=}
        \Omega_{d, \mathbb{P}}\left(\frac{n}{d^{2\ell_{\lambda}+2}}\right) 
        {
        \sum_{i=N_{\ell_{\lambda}} + 1}^{\min\{n, N(d, \ell_{\lambda}+1)\}}
        }
        d_{n-i+1}\\
        =&~
        \Omega_{d, \mathbb{P}}\left(\frac{n}{d^{2\ell_{\lambda}+2}}\right) {
        \sum_{i=N_{\ell_{\lambda}} + 1}^{\min\{n, N(d, \ell_{\lambda}+1)\}}
        } \lambda_{n-i+1}(\varphi_{\lambda, K}^2)\\
        \overset{(\ref{eqn:spe_properties})}{\geq}&~
        \Omega_{d, \mathbb{P}}\left(\frac{n}{d^{2\ell_{\lambda}+2}}\right) {
        \sum_{i=N_{\ell_{\lambda}} + 1}^{\min\{n, N(d, \ell_{\lambda}+1)\}}
        } \left[\lambda_i(M)\right]^{-2}\\
        \overset{\text{Lemma } \ref{lemma lambda max min}}{=}&~
        \Omega_{d, \mathbb{P}}\left(\frac{n}{d^{2\ell_{\lambda}+2}}\right) {
        \sum_{i=N_{\ell_{\lambda}} + 1}^{\min\{n, N(d, \ell_{\lambda}+1)\}}
        } \left[n\lambda + 1\right]^{-2}\\
        =&~
        \Omega_{d, \mathbb{P}}\left(\frac{n}{d^{\ell_{\lambda}+1}(n\lambda+1)^2}\right).
    \end{aligned}
\end{equation*}
where the third line comes from von Neumann's trace inequality (see also Proposition \ref{prop_ri}).






Combining all the above, we have
\begin{equation*}
    \begin{aligned}
      \mathrm{var}\left( \hat{f}_{\lambda} \right) =&~ \sigma^2 \left(\mathrm{tr}(V_1) + \mathrm{tr} \left( V_{2} \right) \right)
        =
        \Theta_{d, \mathbb{P}}\left(\frac{n}{d^{\tilde{\ell}+1}(n\lambda+1)^2} + \frac{d^{\tilde{\ell}}}{n}\right)\\
        = &~
        \Theta_{d, \mathbb{P}}\left(
        d^{\gamma - \tilde{\ell} - 1 - 2  \max\{\gamma - u, 0\}}
+ d^{\tilde{\ell} - \gamma}
\right),
    \end{aligned}
\end{equation*}
and we get the desired results.
\end{proof}

\section{Bias}\label{append_bias}

We first define the following notation:
\begin{equation}\label{eqn_def_of_tau_prime}
    \begin{aligned}
  \tau^{\prime} :=
  \begin{cases}
\tau, &  \tau < \infty;\\
s, &  \tau = \infty \text{ and } s \geq 1;\\
1, &  \tau = \infty \text{ and } s \in \{0\} \cup [\frac{1}{2}, 1);\\
(2s)^{-1}, &  \tau = \infty \text{ and } 0 < s < \frac{1}{2}.
  \end{cases}
  \end{aligned}
\end{equation}
This notation will be used in the proof of the bias term in the following way: we will apply (9) in the proof for $t=\tau^{\prime}$. 
We introduce the finite constant $\tau^{\prime}$ to ensure that 
    the constants implicit in the asymptotic notations $O_d, o_d, \Theta_d, \Omega_d$ will remain bounded uniformly in $\tau$ (even when $\tau = \infty$).

In this section, our goal is to prove the following theorem, which bounds the bias term in (13).

\begin{theorem}\label{thm_bias_spe}
Suppose Assumptions 1--6 hold.
Then, the bias term (15) of the spectral algorithm estimator $\hat{f}_{\lambda}$ satisfies:
    \begin{equation*}
\begin{aligned}
    \mathrm{bias}^{2}(\hat{f}_{\lambda})  =  &~
    \|B_1+B_2\|_{2}^{2} + \|B_3-B_4-B_5\|_{2}^{2},
\end{aligned} 
\end{equation*}
where
\begin{itemize}

\item[(i)] We have
    \begin{equation*}
    \begin{aligned}
                \|B_1\|_{2}^{2}  =&~ 
                O_{d, \mathbb{P}}\left(d^{-2u + (2-\min\{s, 2\})\tilde{\ell} } + d^{(2-\min\{s, 2\})\tilde{\ell}-2\gamma }\right);
    \end{aligned}
\end{equation*}

    \item[(ii)]
    If $0 < u < \gamma <1$, then for any $\tau \leq \infty$, we have
        \begin{equation*}
        \begin{aligned}
                \|B_1\|_{2}^{2}  \leq &~ 
                O_{d, \mathbb{P}}\left(  d^{-2\tau^{\prime} u} \right) + \Delta_1,\\
                \|B_1\|_{2}^{2}  \geq &~ 
                \Omega_{d, \mathbb{P}}\left(  d^{-2\tau u} \right)- \Delta_1, \quad \text{ if } \tau<\infty,
        \end{aligned}
        \end{equation*}
        where $0 \leq \Delta_1 = O_{d, \mathbb{P}}\left( d^{-2\gamma }\right)$;

\item[(iii)] We have
\begin{equation*}
    \begin{aligned}
                \|B_2\|_{2}^{2} =&~ 
            O_{d, \mathbb{P}}\left(  d^{-(\tilde{\ell}+1)s}\right);
    \end{aligned}
\end{equation*}

\item[(iv)] We have
\begin{equation*}
    \begin{aligned}
                \|B_3-B_5\|_{2}^{2} 
        =
        \Theta_{d, \mathbb{P}} \left( d^{-(\tilde{\ell}+1)s}\right);
    \end{aligned}
\end{equation*}

\item[(v)] We have
\begin{equation*}
    \begin{aligned}
                &~\|B_4\|_{2}^{2} = O_{d, \mathbb{P}} \left(  d^{(2-\min\{s, 2\})\tilde{\ell} - (\gamma  + \ell_{\gamma} + 1)\mathbf{1}\{\ell_{\gamma} \leq \ell_{\lambda}\} - 2(\ell_{\lambda}+1)\mathbf{1}\{\ell_{\gamma} > \ell_{\lambda}\} } \right).
    \end{aligned}
\end{equation*}
\end{itemize}
\end{theorem}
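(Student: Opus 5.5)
The plan is to write the noise-free prediction error in coefficient space and split it along the cutoff $\tilde\ell$. Since $E_\epsilon\hat f_\lambda(x)=\psi(x)^\top\Sigma\Psi^\top\varphi_{\lambda,K}\Psi\theta$ and the $\phi_j$ are orthonormal in $L^2$, we have $\mathrm{bias}^2(\hat f_\lambda)=\|\Sigma\Psi^\top\varphi_{\lambda,K}\Psi\theta-\theta\|_2^2$. Write $\Psi=(\Psi_{\le\tilde\ell},\Psi_{>\tilde\ell})$ and $\theta=(\theta_{\le\tilde\ell},\theta_{>\tilde\ell})$. Separating the coordinates $\le\tilde\ell$ from those $>\tilde\ell$ gives the orthogonal identity in the statement, with the following five terms.
\begin{itemize}
\item $B_1=\Sigma_{\le\tilde\ell}\Psi_{\le\tilde\ell}^\top\varphi_{\lambda,K}\Psi_{\le\tilde\ell}\theta_{\le\tilde\ell}-\theta_{\le\tilde\ell}$ is the low-degree self term.
\item $B_2=\Sigma_{\le\tilde\ell}\Psi_{\le\tilde\ell}^\top\varphi_{\lambda,K}\Psi_{>\tilde\ell}\theta_{>\tilde\ell}$ is the leakage of the high part into the low coordinates.
\item $B_3=\theta_{>\tilde\ell}$ is the unfitted high-degree signal.
\item $B_4=\Sigma_{>\tilde\ell}\Psi_{>\tilde\ell}^\top\varphi_{\lambda,K}\Psi_{\le\tilde\ell}\theta_{\le\tilde\ell}$ is the leakage of the low part into the high coordinates.
\item $B_5=\Sigma_{>\tilde\ell}\Psi_{>\tilde\ell}^\top\varphi_{\lambda,K}\Psi_{>\tilde\ell}\theta_{>\tilde\ell}$ is the partial high-degree fit.
\end{itemize}
Throughout, $\varphi_{\lambda,K}$ is sandwiched between $C_1M^{-1}$ and $C_2M^{-1}$ by \eqref{eqn:spe_properties}. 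This lets every term not involving the exact cancellation in $B_1$ be reduced to $M^{-1}$, where the Sherman--Morrison--Woodbury identities (Lemma~\ref{lemma trans in V2}) and the spectral estimates of Lemma~\ref{lemma lambda max min} are available.

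For (iii) and (v), I would use Cauchy--Schwarz together with $\|\Psi_{>\tilde\ell}\theta_{>\tilde\ell}\|_2^2=O_{d,\mathbb P}(n\|\theta_{>\tilde\ell}\|_2^2)$ and $\|\theta_{>\tilde\ell}\|_2^2=\Theta_d(d^{-(\tilde\ell+1)s})$ (Lemma~\ref{lemma ge l l2}).
\begin{itemize}
\item For $B_2$, by SMW it suffices to bound $\|\Sigma_{\le\tilde\ell}\Psi_{\le\tilde\ell}^\top M^{-1}\|_{\mathrm{op}}^2=O_{d,\mathbb P}(n^{-1})$. This follows from $\lambda_{\min}(A)$ in \eqref{eqn_sketch_3} (or its $\ell_\gamma$ analogue) and $\|n^{-1}\Psi_{\le\tilde\ell}^\top\Psi_{\le\tilde\ell}-\mathrm I\|_{\mathrm{op}}=o_{d,\mathbb P}(1)$, and yields (iii).
\item For $B_4$, I rewrite $\Psi_{\le\tilde\ell}\theta_{\le\tilde\ell}=\Psi_{\le\tilde\ell}\Sigma_{\le\tilde\ell}\cdot\Sigma_{\le\tilde\ell}^{-1}\theta_{\le\tilde\ell}$. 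The SMW form of $M^{-1}\Psi_{\le\tilde\ell}\Sigma_{\le\tilde\ell}$ then gives a factor $(n\lambda+1)/n$. Combined with $\|\Sigma_{\le\tilde\ell}^{-1}\theta_{\le\tilde\ell}\|_2^2=\Theta_d(d^{(2-\min\{s,2\})\tilde\ell})$ and $\lambda_{\max}(\Sigma_{>\tilde\ell}^{1/2}\Psi_{>\tilde\ell}^\top\Psi_{>\tilde\ell}\Sigma_{>\tilde\ell}^{1/2})$, this gives $O(n\,d^{-\ell_\gamma-1})$ via \eqref{eqn_sketch_1} when $\ell_\gamma\le\ell_\lambda$. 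When $\ell_\gamma>\ell_\lambda$ it gives $O(n\,d^{-\ell_\lambda-1})$, with the extra $d^{-\ell_\lambda-1}$ coming from $\Sigma_{>\tilde\ell}$. The two cases produce the two exponents in (v).
\end{itemize}

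For (iv), the upper bound on $\|B_3-B_5\|_2$ is immediate from $\|B_5\|_2\lesssim\|\Sigma_{>\tilde\ell}^{1/2}\Psi_{>\tilde\ell}^\top\varphi_{\lambda,K}\Psi_{>\tilde\ell}\Sigma_{>\tilde\ell}^{1/2}\|_{\mathrm{op}}\|\theta_{>\tilde\ell}\|_2$. For the lower bound I would show that this operator norm is at most a constant strictly below $1$, so that $\|B_3-B_5\|_2\ge c\|\theta_{>\tilde\ell}\|_2$. This holds because $\mu_{\tilde\ell+1}n\,\lambda_{\max}(\varphi_{\lambda,K})\lesssim d^{\gamma-\tilde\ell-1}/(n\lambda+1)\to0$ on the degree-$(\tilde\ell+1)$ block. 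Beyond $\ell_\gamma$, the identity-like structure \eqref{eqn_sketch_1} gives the factor $\kappa_1/(\kappa_1+n\lambda)<1$ directly. The lower bound $\|\theta_{>\tilde\ell}\|_2^2\gtrsim d^{-(\tilde\ell+1)s}$ comes from Assumption~\ref{assump_source condition_lower_bound}.

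The main obstacle is (i)--(ii), because $B_1$ is an exact cancellation and $\varphi_{\lambda,K}$ admits no SMW formula. I would use the algebraic identity of Proposition~\ref{prop_matrix_prop_1_of_filter}, with $Z=n^{-1/2}\Psi_{\le\tilde\ell}\Sigma_{\le\tilde\ell}^{1/2}$ after rescaling, to write $B_1$ up to $O_{d,\mathbb P}(1)$ factors as a combination of three terms:
\begin{itemize}
\item $B_{1,3}=\rem(K/n)Z\theta$;
\item $B_{1,4}=K_{\ell_\lambda+1,\ell_\gamma}K^{-1}Z\theta$;
\item $B_{1,5}=K_{>\ell_\gamma}K^{-1}Z\theta$.
\end{itemize}
For $B_{1,3}$, I use the qualification bound \eqref{eq:Filter_case1} with $t=\tau'$ together with the analytic-functional (contour) argument, so that $\rem(K/n)$ acting on the low block behaves like $\lambda^{\tau'}\Sigma_{\le\tilde\ell}^{-\tau'}$. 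This gives $d^{-2u+(2-\min\{s,2\})\tilde\ell}$. For $B_{1,5}$, \eqref{eqn_sketch_1} gives $d^{(2-\min\{s,2\})\tilde\ell-2\gamma}$. The delicate piece is $B_{1,4}$. I would reduce it to the cross Gram matrix $Z^\top\tilde Z\Sigma^2_{\ell_\lambda+1,\ell_\gamma}\tilde Z^\top Z$, write this as a sum of $n$ independent rank-one matrices, and apply matrix Bernstein to show it is $o_{d,\mathbb P}(n^{-1})$. The main risk is whether the Bernstein variance proxy is small enough; hyper-contractivity of spherical harmonics should control it.

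For (ii), where $\gamma<1$ so that $\tilde\ell=0$ and $Z\propto\mathbf 1$, I would replace $\rem(K/n)$ by $\rem(K_0/n)$ using Assumption~\ref{assume_pert_filter}. This gives $\rem(K_0/n)\mathbf1=\rem(\mu_0)\mathbf1$ exactly, and \eqref{eq:Filter_Rem_finite_case2} supplies the lower bound $\lambda^\tau$. The residual $K_{>0}$ contribution is absorbed into $\Delta_1=O(d^{-2\gamma})$.
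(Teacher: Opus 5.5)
Your decomposition, your treatment of $B_1$ via Proposition~\ref{prop_matrix_prop_1_of_filter} with the cross-Gram/matrix-Bernstein step for $B_{1,4}$, your SMW bounds for $B_2$ and $B_4$, and your Assumption~\ref{assume_pert_filter} argument for (ii) (using $\psi_\lambda(K_0/n)\mathbf{1}=\psi_\lambda(\mu_0)\mathbf{1}$) all follow the paper. The genuine gap is in (iv).

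Write $T=\Sigma_{>\tilde\ell}\Psi_{>\tilde\ell}^\top\varphi_{\lambda,K}\Psi_{>\tilde\ell}$ and $S=\Sigma_{>\tilde\ell}^{1/2}\Psi_{>\tilde\ell}^\top\varphi_{\lambda,K}\Psi_{>\tilde\ell}\Sigma_{>\tilde\ell}^{1/2}$, so that $B_5=T\theta_{>\tilde\ell}=\Sigma_{>\tilde\ell}^{1/2}S\,\Sigma_{>\tilde\ell}^{-1/2}\theta_{>\tilde\ell}$.
\begin{itemize}
\item The bound $\|B_5\|_2\lesssim\|S\|_{\mathrm{op}}\|\theta_{>\tilde\ell}\|_2$ does not follow. $T$ is only similar to $S$, and $T$ is not even a bounded operator on $\ell^2$, since it contains the point evaluations $\Psi_{>\tilde\ell}$.
\item The claim $\|S\|_{\mathrm{op}}\le c<1$ is false near interpolation. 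For KRR at $\lambda=0$, the nonzero spectrum of $S$ equals that of $\mathrm{I}_n-K^{-1/2}K_{\le\tilde\ell}K^{-1/2}$, which has norm $1$ whenever $N_{\tilde\ell}<n$. Your factor $\kappa_1/(\kappa_1+n\lambda)$ also tends to $1$ for every $u>\gamma$.
\item No operator-level lower bound can hold for all $\theta_{>\tilde\ell}$. For KRR at $\lambda=0$, take $w$ with $\Psi_{\le\tilde\ell}^\top w=0$ and set $\theta_{>\tilde\ell}:=\Sigma_{>\tilde\ell}\Psi_{>\tilde\ell}^\top w$. Then $K_{>\tilde\ell}w=Kw$, hence $T\theta_{>\tilde\ell}=\theta_{>\tilde\ell}$ and $B_3=B_5$ exactly.
\end{itemize}
The lower bound must therefore use that $\theta$ is fixed independently of $X$. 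This enters only through $\|\Psi_{>\tilde\ell}\theta_{>\tilde\ell}\|_2^2=O_{d,\mathbb{P}}(n\|\theta_{>\tilde\ell}\|_2^2)$ (Lemma~\ref{lemma ge l l2}(i)).

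The paper's fix uses two facts: $\Psi_{>\tilde\ell}\Sigma_{>\tilde\ell}^2\Psi_{>\tilde\ell}^\top\preceq\mu_{\tilde\ell+1}K$, and $\varphi_{\lambda,K}K\varphi_{\lambda,K}\preceq\varphi_{\lambda,K}$. Together they give $\|B_5\|_2^2\le\mu_{\tilde\ell+1}\|\varphi_{\lambda,K}\|_{\mathrm{op}}\|\Psi_{>\tilde\ell}\theta_{>\tilde\ell}\|_2^2=O_{d,\mathbb{P}}\big(\mu_{\tilde\ell+1}n/(n\lambda+1)\big)\|\theta_{>\tilde\ell}\|_2^2=o_{d,\mathbb{P}}(d^{-(\tilde\ell+1)s})$. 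Both directions of (iv) then follow from the triangle inequality and Lemma~\ref{lemma ge l l2}(ii). Your scalar $\mu_{\tilde\ell+1}n\lambda_{\max}(\varphi_{\lambda,K})\to0$ is exactly the right quantity; it just has to be paired with this vector bound rather than with an operator norm.

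Two smaller points.
\begin{itemize}
\item \emph{The $B_{1,3}$ step.} You invoke $t=\tau'$ and a contour argument so that $\psi_\lambda(K/n)$ acts like $\lambda^{\tau'}\Sigma_{\le\tilde\ell}^{-\tau'}$ on the low block. The rate you then write is the $t=1$ rate, which is all (i) needs. The contour comparison also fails once $\|K_{>\tilde\ell}/n\|_{\mathrm{op}}$ is not small relative to $\lambda$, e.g.\ $\tilde\ell=\ell_\gamma$ with $\lambda\ll n^{-1}$. The paper only uses $z\psi_\lambda(z)\le\mathfrak{C}_3\lambda$ on the spectrum of $Z\Sigma_{\le\tilde\ell}Z^\top$, where $Z=n^{-1/2}\Psi_{\le\tilde\ell}$. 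A direct version runs as follows. Set $w=Z(Z^\top Z)^{-1}\Sigma_{\le\tilde\ell}^{-1}\theta_{\le\tilde\ell}$; then $Z\theta_{\le\tilde\ell}=(K_{\le\tilde\ell}/n)w$ and $\|w\|_2=O_{d,\mathbb{P}}(\|\Sigma_{\le\tilde\ell}^{-1}\theta_{\le\tilde\ell}\|_2)$. Moreover $\|\psi_\lambda(K/n)K_{\le\tilde\ell}/n\|_{\mathrm{op}}\le\|\psi_\lambda(K/n)K/n\|_{\mathrm{op}}+\|\psi_\lambda(K/n)\|_{\mathrm{op}}\|K_{>\tilde\ell}/n\|_{\mathrm{op}}=O_{d,\mathbb{P}}(\lambda)$. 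For the second term, use $\psi_\lambda(z)\le\mathfrak{C}_3\lambda/z$ with $\lambda_{\min}(K)\gtrsim\kappa_1$ when $\tilde\ell=\ell_\gamma$, and $\|K_{>\ell_\lambda}/n\|_{\mathrm{op}}=o_{d,\mathbb{P}}(\lambda)$ when $\tilde\ell=\ell_\lambda<\ell_\gamma$.
\item \emph{The Bernstein step.} The bounded matrix Bernstein inequality (Proposition~\ref{prop_non_sym_matrix_bernstein}) needs an almost-sure bound on each summand $A_i=n^{-1}h_i\tilde h_i^\top$. Here $h_i$ stacks the degree-$\le\ell_\lambda$ harmonics at $x_i$, and $\tilde h_i$ stacks the $\mu_k$-weighted harmonics of degrees $\ell_\lambda+1,\dots,\ell_\gamma$. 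On the sphere this bound comes from the addition formula, which gives deterministically $\|h_i\|_2^2=N_{\ell_\lambda}$ and $\|\tilde h_i\|_2^2=\sum_{k=\ell_\lambda+1}^{\ell_\gamma}\mu_k^2N(d,k)$. Hyper-contractivity only yields moment bounds, so it cannot supply this.
\end{itemize}
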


    \begin{remark}
        We remark that the upper bound in Theorem~\ref{thm_bias_spe} is not tight in certain cases, for which we will provide a tight bound in Proposition \ref{prop_verify_bias_special}.
    \end{remark}
    
\begin{proof}
We consider the following two cases: (I) we have $\gamma > \ell_{\gamma}$, or we have $\ell_{\lambda} < \gamma = \ell_{\gamma}$, or we have $\ell_{\lambda} \geq \gamma = \ell_{\gamma}$ and $n \geq 2N(d, \ell_{\gamma})$;
(II) we have $\ell_{\lambda} \geq \gamma = \ell_{\gamma}$ and $n < 2N(d, \ell_{\gamma})$.

\paragraph{Case (I). } Suppose that we have $\gamma > \ell_{\gamma}$, or we have $\ell_{\lambda} < \gamma = \ell_{\gamma}$, or we have $\ell_{\lambda} \geq \gamma = \ell_{\gamma}$ and $n \geq 2N(d, \ell_{\gamma})$. 
We first prove that \begin{equation}\label{eqn_bound_eigen_psi_top_psi}
    \lambda_{\text{max}}(\Psi_{\leq \tilde{\ell}}^{\top}\Psi_{\leq \tilde{\ell}}) = \Theta_{d, \mathbb{P}}(n) \quad \text{ and } \quad \lambda_{\text{min}}(\Psi_{\leq \tilde{\ell}}^{\top}\Psi_{\leq \tilde{\ell}}) = \Theta_{d, \mathbb{P}}(n).
\end{equation}
Eq.~\eqref{eqn_bound_eigen_psi_top_psi} can be seen by considering the following: 
\begin{itemize}
    \item Suppose $\ell_{\lambda} \geq \ell_{\gamma}$. Since $\tilde{\ell}=\ell_{\gamma}$, from Lemma \ref{lemma psi top psi} and Lemma \ref{lemma psi top psi_int}, we have Eq. \eqref{eqn_bound_eigen_psi_top_psi} holds. 

    \item Suppose  $\ell_{\lambda}<\ell_{\gamma}<\gamma$. Recall that $\Psi_{\leq \ell_{\gamma}} = \left[\Psi_{\leq \ell_{\lambda}}, \Psi_{\ell_{\lambda}+1, \ell_{\gamma}}\right]$. 
    Since $\tilde{\ell}=\ell_{\lambda}$, from Lemma \ref{lemma psi top psi}, we have
\begin{align*}
&~ \left\| \frac{1}{n}
\Psi_{\leq \ell_\lambda}^{\top}\,\Psi_{\leq \ell_\lambda} - 
\mathrm{I}_{N_{\ell_\lambda}} \right\|_{\mathrm{op}}\\
    \leq &~ \left\| \frac{1}{n}
\begin{pmatrix}
  \Psi_{\leq \ell_\lambda}^{\top}\,\Psi_{\leq \ell_\lambda}
  & 
  \Psi_{\leq \ell_\lambda}^{\top}\,\Psi_{\ell_\lambda+1,\ell_{\gamma}}
  \\[1em]
  \Psi_{\ell_\lambda+1,\ell_{\gamma}}^{\top}\,\Psi_{\leq \ell_\lambda}
  & 
  \Psi_{\ell_\lambda+1,\ell_{\gamma}}^{\top}\,\Psi_{\ell_\lambda+1,\ell_{\gamma}}
\end{pmatrix} - \begin{pmatrix}
\mathrm{I}_{N_{\ell_\lambda}} & 0 \\[0.5em]
0 & \mathrm{I}_{\,N_{\ell_{\gamma}}-N_{\ell_\lambda}}
\end{pmatrix} \right\|_{\mathrm{op}} = o_{d, \mathbb{P}}(1),
\end{align*}
    which implies Eq. (\ref{eqn_bound_eigen_psi_top_psi}).

    \item Suppose $\ell_{\lambda}<\ell_{\gamma} = \gamma$. Similar to the last case, we can use Lemma \ref{lemma psi top psi_int} to show Eq. (\ref{eqn_bound_eigen_psi_top_psi}).
\end{itemize}





The bias term Eq.~(15) can be decomposed as
\begin{equation}\label{eq derivation of bias_spe}
    \begin{aligned}
        \mathrm{bias}^2(\hat{f}_{\lambda}) &= E_{x} \left[\left( \mathscr{K}(x,X) \varphi_{\lambda, K} f_{\star}(X) -f_{\star}(x)\right)^2\right]\\
    =&~ E_{x} \left[\left( \psi(x)^{\top} \Sigma \Psi^{\top} \varphi_{\lambda, K} \Psi \theta - \psi(x)^{\top} \theta\right)^2\right]\\
    =&~ \| \Sigma \Psi^{\top} \varphi_{\lambda, K} \Psi \theta - \theta  \|_{2}^{2}\\
    =&~
    \| \Sigma_{\leq \tilde{\ell}} \Psi^{\top}_{\leq \tilde{\ell}} \varphi_{\lambda, K} \Psi \theta - \theta_{\leq \tilde{\ell}}  \|_{2}^{2} + \| \Sigma_{>\tilde{\ell}} \Psi^{\top}_{>\tilde{\ell}} \varphi_{\lambda, K} \Psi \theta - \theta_{>\tilde{\ell}}  \|_{2}^{2}\\
    =&~
    \| \underbrace{\Sigma_{\leq \tilde{\ell}} \Psi^{\top}_{\leq \tilde{\ell}} \varphi_{\lambda, K} \Psi_{\leq \tilde{\ell}} \theta_{\leq \tilde{\ell}} - \theta_{\leq \tilde{\ell}}  }_{B_{1}}+  \underbrace{\Sigma_{\leq \tilde{\ell}} \Psi^{\top}_{\leq \tilde{\ell}} \varphi_{\lambda, K} \Psi_{>\tilde{\ell}} \theta_{>\tilde{\ell}}}_{B_{2}} \|_{2}^{2}\\
    &~+
     \|\underbrace{\theta_{>\tilde{\ell}}}_{B_{3}} - \underbrace{ \Sigma_{>\tilde{\ell}} \Psi^{\top}_{>\tilde{\ell}} \varphi_{\lambda, K} \Psi_{\leq \tilde{\ell}} \theta_{\leq \tilde{\ell}} }_{B_{4}} - \underbrace{  \Sigma_{>\tilde{\ell}} \Psi^{\top}_{>\tilde{\ell}} \varphi_{\lambda, K} \Psi_{>\tilde{\ell}} \theta_{>\tilde{\ell}} }_{ B_{5}}\|_{2}^{2}.
    \end{aligned}
\end{equation}
Before we bound these terms separately, we need the following calculations.



Denote $Z = n^{-1/2}\Psi_{\leq \tilde{\ell}}$. 
%
%
Suppose the SVD decomposition of $Z$ is $Z=U (H, 0)^{\top} V^{\top}$, where $U \in \mathbb{R}^{n \times n}$, $V \in \mathbb{R}^{N_{\tilde{\ell}} \times N_{\tilde{\ell}}}$ are orthogonal matrices, and $H = \text{Diag}\{h_1, \cdots, h_{N_{\tilde{\ell}}}\}$ 
is a diagonal matrix. 
Since Eq. \eqref{eqn_bound_eigen_psi_top_psi} holds, we have $h_1, \cdots, h_{N_{\tilde{\ell}}} = \Theta_{d, \mathbb{P}}(1)$.

Hence, 
\begin{equation}\label{eqn_finite_tau_bound}
\begin{aligned}
    Z \Sigma_{\leq \tilde{\ell}} Z^{\top} =&~
    U \text{Diag}\{ HV^{\top}\Sigma_{\leq \tilde{\ell}} VH, 0_{n-N_{\tilde{\ell}}} \} U^{\top}\\
    =&~
    U_{N_{\tilde{\ell}}} S_0 U_{N_{\tilde{\ell}}}^{\top}
\end{aligned}
\end{equation}
where we have defined the following 
$$
S_0 =~ HV^{\top}\Sigma_{\leq \tilde{\ell}} VH.
$$

\noindent {\bf Bounding $\|B_1\|_{2}^{2}$. }
We consider the following two cases: (B1.i) $\ell_{\gamma} \leq \ell_\lambda$; 
(B1.ii) $\ell_{\gamma} > \ell_\lambda$.

\noindent {\it Proof of case (B1.i). }

When $\ell_{\gamma} \leq \ell_\lambda$, 
from (\ref{eqn_bound_eigen_psi_top_psi}) we have

\begin{equation}\label{eqn_b1_spe_origin}
    \begin{aligned}
                &~\|B_1\|_{2}^{2} = 
                \left\| \left( \mathrm{I}_{N_{\tilde{\ell}}} - \Sigma_{\leq \tilde{\ell}} \Psi^{\top}_{\leq \tilde{\ell}} \varphi_{\lambda, K} \Psi_{\leq \tilde{\ell}}\right)    \theta_{\leq \tilde{\ell}} \right\|_{2}^{2}\\
                =&~ 
                \Theta_{d, \mathbb{P}}\left(\frac{1}{n}\right)
                \left\| \Psi_{\leq \tilde{\ell}}\left( \mathrm{I}_{N_{\tilde{\ell}}} - \Sigma_{\leq \tilde{\ell}} \Psi^{\top}_{\leq \tilde{\ell}} \varphi_{\lambda, K} \Psi_{\leq \tilde{\ell}}\right)    \theta_{\leq \tilde{\ell}} \right\|_{2}^{2}\\
                =&~ 
                \Theta_{d, \mathbb{P}}\left(\frac{1}{n}\right)
                \left\| \left( \mathrm{I}_{n} - \Psi_{\leq \tilde{\ell}}\Sigma_{\leq \tilde{\ell}} \Psi^{\top}_{\leq \tilde{\ell}} \varphi_{\lambda, K} \right) 
                \Psi_{\leq \tilde{\ell}}\theta_{\leq \tilde{\ell}} \right\|_{2}^{2}\\
                {=}&~
        \Theta_{d, \mathbb{P}}\left(1\right)
        \left\| \left( \mathrm{I}_n - Z\Sigma_{\leq \tilde{\ell}} Z^{\top} \reg(Z\Sigma_{\leq \tilde{\ell}} Z^{\top} +K_{>\tilde{\ell}}/n) \right) 
                Z\theta_{\leq \tilde{\ell}} \right\|_{2}^{2}\\
                {\leq}&~
        O_{d, \mathbb{P}}\left(1\right)\left\| \underbrace{\left( \mathrm{I}_n - (Z\Sigma_{\leq \tilde{\ell}} Z^{\top} +K_{>\tilde{\ell}}/n) \reg(Z\Sigma_{\leq \tilde{\ell}} Z^{\top} +K_{>\tilde{\ell}}/n) \right) 
                Z\theta_{\leq \tilde{\ell}} }_{B_{1, 1}}\right\|_{2}^{2}\\
                &~+
                O_{d, \mathbb{P}}\left(1\right)\left\| \underbrace{K_{>\tilde{\ell}} 
                \varphi_{\lambda, K}
                Z
                \theta_{\leq \tilde{\ell}} }_{B_{1, 2}}\right\|_{2}^{2}.
    \end{aligned}
\end{equation}

For $B_{1,1}$:\\
From Lemma \ref{lemma psi psi top}, there exist constants $\mathfrak{C}_5, \mathfrak{C}_6 >0$ such that we have $\mathfrak{C}_5 \mathrm{I}_n \leq K_{>\tilde{\ell}} \leq \mathfrak{C}_6 \mathrm{I}_n$. 
Furthermore, $\tilde{\ell}=\ell_{\gamma}$ since $\ell_{\gamma} \leq \ell_{\lambda}$, and $\lambda_{\min}(\Sigma_{\leq \ell_{\gamma}})=\mu_{\ell_{\gamma}}=\Omega(d^{-\ell_{\gamma}})$ by Proposition~2.1. 
Hence, we have $\lambda_{\max}(K_{>\tilde{\ell}}/n)=O_{d, \mathbb{P}}\left(\lambda_{\min}(\Sigma_{\leq \tilde{\ell}})\right)$. 



Therefore, from (\ref{eqn_bound_eigen_psi_top_psi}) we have

\begin{equation}\label{eqn_B11}
    \begin{aligned}
       &~ \|B_{1, 1}\|_2^2 \\
       (\because \rem(z) \text{ is non-increasing  w.r.t } z)\quad \leq&~
       \left\| \left( \mathrm{I}_n - (Z\Sigma_{\leq \tilde{\ell}} Z^{\top}) \reg(Z\Sigma_{\leq \tilde{\ell}} Z^{\top} ) \right) 
                Z\theta_{\leq \tilde{\ell}} \right\|_{2}^{2}\\
                (\because \text{ Eqns. (\ref{eq:Filter_Rem_finite_case3}) and (\ref{eqn_finite_tau_bound})} )\qquad =
        &~\left\| \left[ \mathrm{I}_{N_{\tilde{\ell}}} - S_0 \reg\left(S_0 \right) \right]  H V^{\top}\theta_{\leq \tilde{\ell}} \right\|_{2}^{2}\\
   (\because \Sigma_{\leq \tilde{\ell}} VH H^{-1} V^{\top}  \Sigma_{\leq \tilde{\ell}}^{-1}=\mathrm{I}_{N_{\tilde{\ell}}}) 
   {=}
     &~\left\| \rem\left( S_0 \right) HV^{\top} \left(\Sigma_{\leq \tilde{\ell}} VH H^{-1} V^{\top}  \Sigma_{\leq \tilde{\ell}}^{-1}\right) \theta_{\leq \tilde{\ell}} \right\|_{2}^{2}\\
        \leq &~
        \left\| \rem\left(S_0\right) S_0 \right\|_{2}^{2} \cdot \left\| H^{-1} V^{\top}  \Sigma_{\leq \tilde{\ell}}^{-1} \theta_{\leq \tilde{\ell}} \right\|_{2}^{2}\\
(\because h_1, \cdots, h_{B_{\tilde{\ell}}} = \Theta_{d, \mathbb{P}}(1)) \qquad        \leq &~
        O_{d, \mathbb{P}}\left(1\right)\left\| \rem\left(S_0\right) S_0 \right\|_{2}^{2} \cdot \left\|\Sigma_{\leq \tilde{\ell}}^{-1} \theta_{\leq \tilde{\ell}} \right\|_{2}^{2}\\
        (\because \text{Eq. (8)}) \qquad = 
        &~O_{d, \mathbb{P}}\left(\lambda^{2}\right) \left\| \Sigma_{\leq \tilde{\ell}}^{-1}\theta_{\leq \tilde{\ell}} \right\|_{2}^{2}\\
                (\because \text{Lemma~\ref{lemma ge l l2} (iii)}) \qquad = &~
                O_{d, \mathbb{P}}\left(  \lambda^{2} d^{(2-\min\{s, 2\})\tilde{\ell} }\right);
    \end{aligned}
\end{equation}

For $B_{1,2}$:\\
Eq.~\eqref{eqn_bound_eigen_psi_top_psi} implies that $\lambda_{\min}(Z^{\top}Z)=\Theta_{d,\mathbb{P}}(1))$ and $\lambda_{\max}(Z)=\Theta_{d,\mathbb{P}}(1)$. Therefore, 

Therefore,
\begin{equation}\label{eqn_bound_b_12}
\begin{aligned}
\left\| B_{1, 2} \right\|_{2}^{2}
&= O_{d, \mathbb{P}}\left(1\right)
\left\|
\left(\varphi_{\lambda, K} 
Z\Sigma_{\leq \tilde{\ell}}
Z^{\top}\right)
Z\left(Z^\top Z\right)^{-1}
\Sigma_{\leq \tilde{\ell}}^{-1}\theta_{\leq \tilde{\ell}}
\right\|_{2}^{2} \\
&= O_{d, \mathbb{P}}\left(\frac{1}{n^2}\right)
\left\|
Z\left(Z^\top Z\right)^{-1}
\Sigma_{\leq \tilde{\ell}}^{-1}\theta_{\leq \tilde{\ell}}
\right\|_{2}^{2} \\
&= O_{d, \mathbb{P}}\left(\frac{1}{n^2}\right)
\left\|
\Sigma_{\leq \tilde{\ell}}^{-1}\theta_{\leq \tilde{\ell}}
\right\|_{2}^{2} \\
&= O_{d, \mathbb{P}}\left( d^{(2-\min\{s, 2\})\tilde{\ell}-2\gamma }\right),
\end{aligned}
\end{equation}
where 
\begin{itemize}
    \item the second equality follows from Eq.~\eqref{eqn:spe_properties};
    \item the third equality is because 
$
\lambda_{\max}(Z)=\Theta_{d,\mathbb{P}}(1),
~\lambda_{\min}(Z^{\top}Z)=\Theta_{d,\mathbb{P}}(1)$; 
\item the last equality follows from Lemma~\ref{lemma ge l l2}(iii).
\end{itemize}

\noindent {\it Proof of case (B1.ii). } 
Suppose $\ell_\lambda < \ell_{\gamma}$.

Let $A=Z \Sigma_{\leq \tilde{\ell}} Z^{\top}$ and $B=K_{>\tilde{\ell}} / n$. Then $A+B=Z \Sigma_{\leq \tilde{\ell}} Z^{\top}+K_{>\tilde{\ell}} / n=K/n$. 
By
Proposition \ref{prop_matrix_prop_1_of_filter}, we have 
\begin{equation}\label{eq:filter_implication}
\begin{aligned}
   &~ \mathrm{I}_n - Z\Sigma_{\leq \tilde{\ell}} Z^{\top} \reg(Z\Sigma_{\leq \tilde{\ell}} Z^{\top} +K_{>\tilde{\ell}}/n)\\
 =&~ Z\Sigma_{\leq \tilde{\ell}} Z^{\top} (K/n)^{-1}\left( \mathrm{I}_n - (K/n) \reg(K/n) \right)  + (K_{>\tilde{\ell}}/n) (K/n)^{-1}. 
\end{aligned}
 \end{equation}
It follows that
\begin{equation*}
\begin{aligned}
      \|B_1\|_{2}^{2}  =&~
      \Theta_{d, \mathbb{P}}\left(1\right)
        \left\| \left( \mathrm{I}_n - Z\Sigma_{\leq \tilde{\ell}} Z^{\top} \reg(Z\Sigma_{\leq \tilde{\ell}} Z^{\top} +K_{>\tilde{\ell}}/n) \right) 
                Z\theta_{\leq \tilde{\ell}} \right\|_{2}^{2}\\
        \leq &~
      O_{d, \mathbb{P}}\left(1\right)
        \left\| Z\Sigma_{\leq \tilde{\ell}} Z^{\top} (K/n)^{-1}\left( \mathrm{I}_n - (K/n) \reg(K/n) \right) 
                Z\theta_{\leq \tilde{\ell}} \right\|_{2}^{2}\\ 
                &~+  
                O_{d, \mathbb{P}}\left(  1\right)
                \left\| (K_{>\tilde{\ell}}/n) (K/n)^{-1} Z\Sigma_{\leq \tilde{\ell}} Z^{\top} 
                Z (Z^{\top} 
                Z)^{-1} \Sigma_{\leq \tilde{\ell}}^{-1} \theta_{\leq \tilde{\ell}} \right\|_{2}^{2}\\
                 \leq  &~  O_{d, \mathbb{P}}\left(1\right)
        \underbrace{\left\| \left( \mathrm{I}_n - (K/n) \reg(K/n) \right) 
                Z\theta_{\leq \tilde{\ell}} \right\|_{2}^{2}}_{B_{1,3}}\\ 
                &~+
                O_{d, \mathbb{P}}\left(  1\right)
                \underbrace{\left\| K_{\tilde{\ell}+1, \ell_{\gamma}} K^{-1} Z\Sigma_{\leq \tilde{\ell}} Z^{\top}
                Z (Z^{\top} 
                Z)^{-1} \Sigma_{\leq \tilde{\ell}}^{-1} \theta_{\leq \tilde{\ell}} \right\|_{2}^{2}}_{B_{1,4}}\\
                &~+
                O_{d, \mathbb{P}}\left(  1\right)
                \underbrace{\left\| K_{>\ell_{\gamma}} K^{-1} Z\Sigma_{\leq \tilde{\ell}} Z^{\top}
                Z (Z^{\top} 
                Z)^{-1} \Sigma_{\leq \tilde{\ell}}^{-1} \theta_{\leq \tilde{\ell}} \right\|_{2}^{2}}_{B_{1,5}},
\end{aligned}
\end{equation*}
where the first equation follows from the 4th line in Equation~\eqref{eqn_b1_spe_origin}, and the first inequality is because of Equation~\eqref{eq:filter_implication} and $\lambda_{\min}(Z^{\top}Z)=\Theta_{d,\mathbb{P}}(1))$. 
By the same reasoning as in Eq. (\ref{eqn_B11}), we can show that $B_{1,3} = O_{d, \mathbb{P}}\left(  \lambda^{2} d^{(2-\min\{s, 2\})\tilde{\ell} }\right)$. Moreover, we have
\begin{equation}\label{eqn_bias_B1_case2}
    \begin{aligned}
    B_{1,5} =&~ O_{d, \mathbb{P}}\left(  1\right)\left\| K^{-1} Z\Sigma_{\leq \tilde{\ell}} Z^{\top}
                Z (Z^{\top} 
                Z)^{-1} \Sigma_{\leq \tilde{\ell}}^{-1} \theta_{\leq \tilde{\ell}} \right\|_{2}^{2}\\
                 =&~ O_{d, \mathbb{P}}\left(  \frac{1}{n^2}\right)\left\| 
                Z (Z^{\top} 
                Z)^{-1} \Sigma_{\leq \tilde{\ell}}^{-1} \theta_{\leq \tilde{\ell}} \right\|_{2}^{2}   \qquad(\because \text{Eq. (\ref{eqn:spe_properties})})\\
                =&~O_{d, \mathbb{P}}\left(\frac{1}{n^2}\right)\left\| 
                \Sigma_{\leq \tilde{\ell}}^{-1}\theta_{\leq \tilde{\ell}} \right\|_{2}^{2}    \qquad  (\because \lambda_{\max}(Z)=\Theta_{d,\mathbb{P}}(1),~~ \lambda_{\min}(Z^{\top}Z)=\Theta_{d,\mathbb{P}}(1)))\\
                {=} &
                O_{d, \mathbb{P}}\left( d^{(2-\min\{s, 2\})\tilde{\ell}-2\gamma }\right),
\end{aligned}
\end{equation}
where the last equation follows from Lemma \ref{lemma ge l l2} (iii). 

Finally, denote $\tilde{Z}=n^{-1/2}\Psi_{\ell_{\lambda}+1, \ell_{\gamma}}$. 
Recall that $K_{\leq \tilde{\ell}}=Z\Sigma_{\leq \tilde{\ell}} Z^{\top}$. 
Using $K^{-1}K_{\leq \tilde{\ell}}=\mathrm{I}_n-K^{-1}K_{>\tilde{\ell}}$, we have
\begin{equation*}
    \begin{aligned}
    &~ B_{1,4} = \frac{1}{n^2}\left\| K_{\tilde{\ell}+1, \ell_{\gamma}} K^{-1} K_{\leq \tilde{\ell}}
                Z (Z^{\top} 
                Z)^{-1} \Sigma_{\leq \tilde{\ell}}^{-1} \theta_{\leq \tilde{\ell}} \right\|_{2}^{2}\\
    \leq &~ 
    \frac{1}{n^2}\left\| K_{\tilde{\ell}+1, \ell_{\gamma}} 
                Z (Z^{\top} 
                Z)^{-1} \Sigma_{\leq \tilde{\ell}}^{-1} \theta_{\leq \tilde{\ell}} \right\|_{2}^{2}
               \\
             &~  +
                \frac{1}{n^2}\left\| K_{\tilde{\ell}+1, \ell_{\gamma}} K^{-1} K_{>\tilde{\ell}} 
                Z (Z^{\top} 
                Z)^{-1} \Sigma_{\leq \tilde{\ell}}^{-1} \theta_{\leq \tilde{\ell}} \right\|_{2}^{2}\\
              (\because  \|K_{\tilde{\ell}+1, \ell_{\gamma}} K^{-1}\|_2 \leq 1)\quad {\leq} &~
                \frac{1}{n^2}\left\| K_{\tilde{\ell}+1, \ell_{\gamma}} 
                Z (Z^{\top} 
                Z)^{-1}\Sigma_{\leq \tilde{\ell}}^{-1} \theta_{\leq \tilde{\ell}} \right\|_{2}^{2}
               \\
               &~  +
                \frac{1}{n^2}\left\| K_{\tilde{\ell}+1, \ell_{\gamma}} K^{-1} K_{\tilde{\ell}+1, \ell_{\gamma}} 
                Z (Z^{\top} 
                Z)^{-1}\Sigma_{\leq \tilde{\ell}}^{-1} \theta_{\leq \tilde{\ell}} \right\|_{2}^{2}\\
                &~+
                \frac{1}{n^2}\left\| K_{>\ell_{\gamma}} 
                Z (Z^{\top} 
                Z)^{-1}\Sigma_{\leq \tilde{\ell}}^{-1} \theta_{\leq \tilde{\ell}} \right\|_{2}^{2}\\
                (\because  \|K_{\tilde{\ell}+1, \ell_{\gamma}} K^{-1}\|_2 \leq 1)\quad {\leq} &~
                \frac{2}{n^2}\left\| K_{\tilde{\ell}+1, \ell_{\gamma}} 
                Z (Z^{\top} 
                Z)^{-1}\Sigma_{\leq \tilde{\ell}}^{-1} \theta_{\leq \tilde{\ell}} \right\|_{2}^{2}
                +
                \frac{1}{n^2}\left\| K_{>\ell_{\gamma}} 
                Z (Z^{\top} 
                Z)^{-1}\Sigma_{\leq \tilde{\ell}}^{-1} \theta_{\leq \tilde{\ell}} \right\|_{2}^{2}\\
                (\because \text{Lemmas } \ref{lemma psi psi top} \text{ and } \ref{lemma ge l l2})\quad {\leq}&~
                \frac{2}{n^2}\left\| K_{\tilde{\ell}+1, \ell_{\gamma}} 
                Z (Z^{\top} 
                Z)^{-1}\Sigma_{\leq \tilde{\ell}}^{-1} \theta_{\leq \tilde{\ell}} \right\|_{2}^{2}
                + O_{d, \mathbb{P}}\left( d^{(2-\min\{s, 2\})\tilde{\ell}-2\gamma }\right)\\
                (\because\text{ Lemmas } \ref{lemma psi top psi} \text{ and } \ref{lemma psi top psi_int}) \quad   {\leq}&~
                O_{d, \mathbb{P}}\left( \frac{1}{n}\right)
                \left\| Z^{\top} \tilde{Z} \Sigma_{\ell_{\lambda}+1, \ell_{\gamma}}^2 \tilde{Z}^{\top} Z  \right\|_{2}
                \left\| (Z^{\top} 
                Z)^{-1}\Sigma_{\leq \tilde{\ell}}^{-1} \theta_{\leq \tilde{\ell}} \right\|_{2}^{2}
                \\   &~
                + O_{d, \mathbb{P}}\left( d^{(2-\min\{s, 2\})\tilde{\ell}-2\gamma }\right)\\
                (\because\text{Lemma } \ref{lemma ge l l2}) \quad  {\leq}&~
                O_{d, \mathbb{P}}\left( d^{(2-\min\{s, 2\})\tilde{\ell}-2\gamma } \left[ n\underbrace{\left\| Z^{\top} \tilde{Z} \Sigma_{\ell_{\lambda}+1, \ell_{\gamma}}^2 \tilde{Z}^{\top} Z  \right\|_{2}}_{B_{1,6}}+1 \right]\right).
\end{aligned}
\end{equation*}
It suffices to show that $B_{1, 6}=o_{d,\mathbb{P}}(n^{-1})$. 

We will use the matrix Bernstein inequality to bound $B_{1, 6}$ as follows. 



For $i=1,\cdots, n$, denote
\begin{align*}
    h_i =&~ \left[1, \psi_{1,1}(x_i), \cdots, \psi_{\ell_\lambda, N(d, \ell_\lambda)}(x_i) \right]^{\top} \in \mathbb{R}^{N_{\ell_\lambda} \times 1}\\
    \tilde{h}_i =&~ \left[\mu_{\ell_\lambda+1}\psi_{\ell_\lambda+1,1}(x_i), \cdots, \mu_{\ell_{\gamma}}\psi_{\ell_{\gamma}, N(d, \ell_{\gamma})}(x_i) \right]^{\top} \in \mathbb{R}^{(N_{\ell_{\gamma}}-N_{\ell_\lambda}) \times 1}\\
    A_i =&~ n^{-1} h_i \tilde{h}_i^{\top} \in \mathbb{R}^{N_{\ell_\lambda} \times (N_{\ell_{\gamma}}-N_{\ell_\lambda})}.
\end{align*}

    By the addition formula (see, e.g., Proposition 1.18 in \cite{gallier2009notes}, page 56 in \cite{ghorbani2021linearized}, or (39) in \cite{lu2024pinsker}), we have
    \begin{equation}\label{eqn_add_on_tilde_h}
    \begin{aligned}
        \tilde{h}_i^{\top} \tilde{h}_i =  \sum_{k=\ell_{\lambda}+1}^{\ell_{\gamma}} \mu_k^2 \sum_{j=1}^{N(d, k)} \psi_{k, j}^2(x_i)
        = \sum_{k=\ell_{\lambda}+1}^{\ell_{\gamma}} \mu_k^2 N(d, k),
    \end{aligned}
    \end{equation}
    and
    \begin{equation}\label{eqn_add_on_h}
    \begin{aligned}
        \|h_i h_i^{\top}\|_{2} = h_i^{\top} h_i = \sum_{k=1}^{\ell_{\lambda}} \sum_{j=1}^{N(d, k)} \psi_{k, j}^2(x_i)
        = \sum_{k=1}^{\ell_{\lambda}} N(d, k) = N_{\ell_{\lambda}}.
    \end{aligned}
    \end{equation}

Therefore, we have $B_{1, 6} = \|\sum\nolimits_{i=1}^{n}A_i\|_{2}^{2}$, and we can compute $\mathbb{E}A_i =~ 0$, 
\begin{align*}
    \|A_i\|_{2} =&~ \sqrt{\|A_i A_i^{\top}\|_{2}} = \frac{1}{n}\sqrt{\tilde{h}_i^{\top}\tilde{h}_i \|h_i h_i^{\top}\|_{2}}\\
   (\because \eqref{eqn_add_on_tilde_h} \text{ and } \eqref{eqn_add_on_h})\quad =  &~ \frac{1}{n}\sqrt{\sum_{k=\ell_{\lambda}+1}^{\ell_{\gamma}} \mu_k^2 N(d, k) N_{\ell_{\lambda}}}\\
(\because \text{Prop. 2.1})   =  &~ O_d(d^{-1 / 2 - \gamma}),\\
\end{align*}
and similarly
\begin{align*}
\mathbb{E}[A_i A_i^{\top}] \overset{(\ref{eqn_add_on_tilde_h})}{=}&~ 
    n^{-2} \sum_{k=\ell_{\lambda}+1}^{\ell_{\gamma}} \mu_k^2 N(d, k)\mathbb{E} [h_i h_i^{\top}]=
    O_d(d^{-(\ell_\lambda +1)-2\gamma}) \mathrm{I}_{N_{\ell_\lambda}},\\
    \mathbb{E} [A_i^{\top} A_i] \overset{(\ref{eqn_add_on_h})}{=}&~ 
    n^{-2} N_{\ell_{\lambda}} \mathbb{E} [\tilde{h}_i \tilde{h}_i^{\top}] = \frac{N_{\ell_\lambda}}{n^2}\Sigma_{\ell_{\lambda}+1, \ell_{\gamma}}^2=O_d(d^{- (\ell_{\lambda}+2) - 2\gamma}) \mathrm{I}_{N_{\ell_{\gamma}}-N_{\ell_\lambda}}.
\end{align*}
\color{black}
Therefore, from Proposition \ref{prop_non_sym_matrix_bernstein}, we have
\begin{align*}
\mathbb{P}\left(\left\|\sum_{i=1}^n A_i\right\|_2 \geq n^{-1/2} \right) & \leq 2N_{\ell_{\gamma}} \exp\left(-\frac{n^{-1}}{O_d(d^{-\ell_\lambda -1-\gamma}) + O_d(d^{-1/2-3\gamma / 2})}\right) \\
&=\exp \left( O_d(\ell_{\gamma} \log d) - \Theta_d (d^{\min(\ell_\lambda+1, 1/2+\gamma/2} \right) \\
&= o_d(1),
\end{align*}
and hence $B_{1, 6} = \|\sum\nolimits_{i=1}^{n}A_i\|_{2}^{2} = o_{d, \mathbb{P}}(n^{-1})$.

\vspace{10pt}

\noindent {\it Proof of the case $0 \leq u < \gamma <1$. }

If $0 \leq u < \gamma <1$, we have 
$\ell_{\gamma}=\ell_{\lambda}=0$, from (\ref{eqn_finite_tau_bound}), we have
\begin{equation}\label{eqn_lower_bound_H_S0}
    \begin{aligned}
    H =&~ h_1 = \Theta_{d, \mathbb{P}}(1)\\
    \Sigma_{\leq \tilde{\ell}} =&~ \Sigma_{\leq 0} = \mu_0 = \Omega_{d, \mathbb{P}}( 1 )\\
    S_0 =&~ HV^{\top}\Sigma_{\leq \tilde{\ell}} VH = HV^{\top}\Sigma_{\leq 0} VH = \mu_0 h_1^2 = \Theta_{d, \mathbb{P}}( 1 ) = \Omega_{d, \mathbb{P}}(\lambda).
\end{aligned}
\end{equation}

Following the same derivation in Eq. (\ref{eqn_b1_spe_origin}) but with the triangle inequality applied in an opposite direction, when $\tau < \infty$ we have  

\begin{equation*}
    \begin{aligned}
        \|B_{1}\|_{2}^{2} \geq &~ \Omega_{d, \mathbb{P}}\left(1\right)\left\| \rem(K/n)
                Z\theta_{\leq 0} \right\|_{2}^{2} - \left\| B_{1, 2} \right\|_{2}^{2}\\
                = &~ \Omega_{d, \mathbb{P}}\left(1\right)\left\| \rem(K/n)
                Z\theta_{\leq 0} \right\|_{2}^{2} - \Delta_1\\
                \overset{\text{Assumption } 5}{\geq} &~
                \Omega_{d, \mathbb{P}}\left(1\right)\left\| \rem(K_{0}/n)
                Z\theta_{\leq 0} \right\|_{2}^{2} - \Delta_1\\
                { \overset{ (\ref{eqn_finite_tau_bound})}{=}}&~
                \Omega_{d, \mathbb{P}}\left(1\right)\left\| \rem(U \text{Diag}\{S_{0}, 0_{n-1} \} U^{\top})
                U (H, 0)^{\top} V^{\top}\theta_{\leq 0} \right\|_{2}^{2} - \Delta_1\\
                { \overset{(\ref{eq:Filter_Rem_finite_case3})}{=}}
        &~\Omega_{d, \mathbb{P}}\left(1\right)\left[ \rem\left(S_0 \right)  H V^{\top}\theta_{\leq 0} \right]^{2}- \Delta_1\\
        { 
        \overset{(\ref{eqn_lower_bound_H_S0})}{=}
        }&~
        \Omega_{d, \mathbb{P}}\left(1\right)\left[ \rem\left(\mu_0 h_1^2 \right)  h_1 \theta_{\leq 0} \right]^{2}- \Delta_1\\
        {
        \overset{(9) \text{ and } (\ref{eqn_lower_bound_H_S0})}{\geq}
        }
        &~
         \Omega_{d, \mathbb{P}}\left(\lambda^{2\tau}\right) \theta_{\leq 0}^{2}- \Delta_1\\
         = &~ \Omega_{d, \mathbb{P}}\left(\lambda^{2\tau}\right) \left[ \Sigma_{\leq 0}^{-\tau}\theta_{\leq 0} \right]^{2}- \Delta_1\\
                \overset{\text{Lemma } \ref{lemma ge l l2}}{=} &~
                \Omega_{d, \mathbb{P}}\left(  \lambda^{2\tau} \right)- \Delta_1.
    \end{aligned}
\end{equation*}
where the third last and the second last lines use the fact that $\Sigma_{\leq 0} = \mu_0 = \Theta_d(1)$ obtained by Proposition 2.1, and (\ref{eqn_bound_b_12}) implies $0 \leq \Delta_1 = \left\| B_{1, 2} \right\|_{2}^{2} = O_{d, \mathbb{P}}\left( d^{-2\gamma }\right)$.



Similarly to (\ref{eqn_b1_spe_origin}) and above, for any $\tau \leq \infty$, we have
\begin{equation*}
    \begin{aligned}
        \|B_{1}\|_{2}^{2} \leq &~  O_{d, \mathbb{P}}\left(1\right)\left\| \rem(K/n)
                Z\theta_{\leq 0} \right\|_{2}^{2} + \Delta_1\\
        (\because \rem(z) \text{ is non-increasing  w.r.t } z)\quad \leq&~
        O_{d, \mathbb{P}}\left(1\right)\left\| \rem(K_{0}/n)
                Z\theta_{\leq 0} \right\|_{2}^{2} + \Delta_1\\
                \overset{(8)}{\leq} &~
                O_{d, \mathbb{P}}\left(  \lambda^{2\tau^{\prime}} \right) + \Delta_1.
    \end{aligned}
\end{equation*}

In summary, we have completed the proof for the following statements about $B_{1}$:
\begin{itemize}
    \item We have
    \begin{equation}\label{eqn_bound_B_1_spe}
    \begin{aligned}
                \|B_1\|_{2}^{2}  =&~ 
                O_{d, \mathbb{P}}\left(d^{-2u + (2-\min\{s, 2\})\tilde{\ell} } + d^{(2-\min\{s, 2\})\tilde{\ell}-2\gamma }\right);
    \end{aligned}
\end{equation}

    \item If $0 < u < \gamma <1$, then for any $\tau \leq \infty$, we have
    \begin{equation*}
    \begin{aligned}
                \|B_1\|_{2}^{2}  \leq &~ 
                O_{d, \mathbb{P}}\left(  d^{-2\tau^{\prime} u} \right) + \Delta_1;\\
                \|B_1\|_{2}^{2}  \geq &~ 
                \Omega_{d, \mathbb{P}}\left(  d^{-2\tau u} \right)- \Delta_1, \quad \text{ if } \tau<\infty,
    \end{aligned}
\end{equation*}
where $0 \leq \Delta_1 = O_{d, \mathbb{P}}\left( d^{-2\gamma }\right)$.

\end{itemize}


\noindent {\bf Bounding $\|B_2\|_{2}^{2}$. }
We have
\begin{equation}\label{eqn_bound_B_2_spe}
    \begin{aligned}
                \|B_2\|_{2}^{2} 
=&~  
\theta_{>\tilde{\ell}}^\top \Psi_{>\tilde{\ell}}^\top \varphi_{\lambda, K} ^\top
\Psi_{\leq \tilde{\ell}} 
\Sigma_{\leq \tilde{\ell}} 
{\Sigma_{\leq \tilde{\ell}} \Psi^{\top}_{\leq \tilde{\ell}} \varphi_{\lambda, K} \Psi_{>\tilde{\ell}} \theta_{>\tilde{\ell}}}
                 \\
=&~ \theta_{>\tilde{\ell}}^\top \Psi_{>\tilde{\ell}}^\top \varphi_{\lambda, K} ^\top
\Psi_{\leq \tilde{\ell}} 
\Sigma_{\leq \tilde{\ell}} 
(\Psi_{\leq \tilde{\ell}}^\top \Psi_{\leq \tilde{\ell}} ) (\Psi_{\leq \tilde{\ell}}^\top \Psi_{\leq \tilde{\ell}} )^{-2}(\Psi_{\leq \tilde{\ell}}^\top \Psi_{\leq \tilde{\ell}} ) 
{\Sigma_{\leq \tilde{\ell}} \Psi^{\top}_{\leq \tilde{\ell}} \varphi_{\lambda, K} \Psi_{>\tilde{\ell}} \theta_{>\tilde{\ell}}}                 
                 \\
                \leq &~ \left\|\left(\Psi_{\leq \tilde{\ell}}^\top \Psi_{\leq \tilde{\ell}} \right)^{-2}\right\|_2 
                \left\|(\Psi_{\leq \tilde{\ell}}^\top \Psi_{\leq \tilde{\ell}} ) 
{\Sigma_{\leq \tilde{\ell}} \Psi^{\top}_{\leq \tilde{\ell}} \varphi_{\lambda, K} \Psi_{>\tilde{\ell}} \theta_{>\tilde{\ell}}}   \right\|^2 
               \\
                \leq &~ \left\|\left(\Psi_{\leq \tilde{\ell}}^\top \Psi_{\leq \tilde{\ell}} \right)^{-2}\right\|_2 \|\Psi_{\leq \tilde{\ell}}\Psi_{\leq \tilde{\ell}}^\top \|_2
                \left\|K_{\leq \tilde{\ell}} \varphi_{\lambda, K} \Psi_{>\tilde{\ell}} \theta_{>\tilde{\ell}} \right\|^2    \\ \overset{(\ref{eqn_bound_eigen_psi_top_psi})}{=}
                &~ 
             O_{d, \mathbb{P}}\left(\frac{1}{n}\right)\left( \Psi_{>\tilde{\ell}} \theta_{>\tilde{\ell}}  \right)^{\top} \left( \varphi_{\lambda, K} K_{\leq \tilde{\ell}}^2   \varphi_{\lambda, K}  \right) \left( \Psi_{>\tilde{\ell}} \theta_{>\tilde{\ell}}  \right)\\
             \overset{(\ref{eqn:spe_properties})}{=}&~
             O_{d, \mathbb{P}} \left( \frac{1}{n} \right) \cdot \|\Psi_{>\tilde{\ell}} \theta_{>\tilde{\ell}} \|_{2}^{2}
            \overset{\text{Lemma } \ref{lemma ge l l2}}{=}
            O_{d, \mathbb{P}}\left(  d^{-(\tilde{\ell}+1)s}\right).
    \end{aligned}
\end{equation}

\noindent {\bf Bounding $\|B_3\|_{2}^{2}$. }
We have
\begin{equation}\label{eqn_bound_B_3_spe}
    \begin{aligned}
                \|B_3\|_{2}^{2} 
        \overset{\text{Lemma } \ref{lemma ge l l2}}{=}&~
        \Theta_{d, \mathbb{P}} \left( d^{-(\tilde{\ell}+1)s}\right).
    \end{aligned}
\end{equation}

\noindent {\bf Bounding $\|B_4\|_{2}^{2}$. }
We have
\begin{equation*}
    \begin{aligned}
   \|B_4\|_{2}^{2} = &~   
  \theta_{\leq \tilde{\ell}}^{\top} \Psi_{\leq \tilde{\ell}}^{\top} \varphi_{\lambda, K}^{\top}\left(\Psi_{>\tilde{\ell}} \Sigma_{>\tilde{\ell}}^2 \Psi_{>\tilde{\ell}}^{\top}\right) \varphi_{\lambda, K} \Psi_{\leq \tilde{\ell}} \theta_{\leq \tilde{\ell}}
  \\
                \overset{(\ref{eqn_bound_eigen_psi_top_psi})}{\leq} &~
        O_{d, \mathbb{P}} \left( n\|\Sigma_{> \tilde{\ell}}\|_{\mathrm{op}} \right)
        \|\Sigma_{>\tilde{\ell}}^{1/2} \Psi^{\top}_{>\tilde{\ell}} \varphi_{\lambda, K} Z \left(\Sigma_{\leq \tilde{\ell}} Z^{\top} Z (Z^{\top} 
                Z)^{-1} \Sigma_{\leq \tilde{\ell}}^{-1}\right) \theta_{\leq \tilde{\ell}}\|_{2}^{2}
        \\
        \leq &~
        O_{d, \mathbb{P}} \left( n\|\Sigma_{> \tilde{\ell}}\|_{\mathrm{op}} \right)  
        \left\| Z\Sigma_{\leq \tilde{\ell}} Z^{\top} \varphi_{\lambda, K} K_{> \tilde{\ell}}  \varphi_{\lambda, K} Z \Sigma_{\leq \tilde{\ell}} Z^{\top}\right\|_{2} 
        \left\| Z (Z^{\top} 
                Z)^{-1} \Sigma_{\leq \tilde{\ell}}^{-1} \theta_{\leq \tilde{\ell}}  \right\|_{2}^{2}\\
       \overset{\text{Lemma } \ref{lemma ge l l2}}{\leq} &~
        O_{d, \mathbb{P}} \left( d^{\gamma-\tilde{\ell}-1} \right)  
        \left\| Z\Sigma_{\leq \tilde{\ell}} Z^{\top} \varphi_{\lambda, K} K_{> \tilde{\ell}}  \varphi_{\lambda, K} Z \Sigma_{\leq \tilde{\ell}} Z^{\top}\right\|_{2} 
        \left\| Z (Z^{\top} 
                Z)^{-1} \Sigma_{\leq \tilde{\ell}}^{-1} \theta_{\leq \tilde{\ell}}  \right\|_{2}^{2}\\
        \overset{(\ref{eqn_bound_eigen_psi_top_psi})}{=}&~
        O_{d, \mathbb{P}} \left( d^{\gamma-\tilde{\ell}-1} \right)  
        \left\| Z\Sigma_{\leq \tilde{\ell}} Z^{\top} \varphi_{\lambda, K} K_{> \tilde{\ell}}  \varphi_{\lambda, K} Z \Sigma_{\leq \tilde{\ell}} Z^{\top}\right\|_{2} 
        \left\| \Sigma_{\leq \tilde{\ell}}^{-1} \theta_{\leq \tilde{\ell}}  \right\|_{2}^{2}\\
        \overset{\text{Lemma } \ref{lemma ge l l2}}{=}&~ 
         O_{d, \mathbb{P}} \left(  d^{(2-\min\{s, 2\})\tilde{\ell} +\gamma - \tilde{\ell}-1 } \right)\left\| Z\Sigma_{\leq \tilde{\ell}} Z^{\top} \varphi_{\lambda, K} K_{> \tilde{\ell}}  \varphi_{\lambda, K} Z \Sigma_{\leq \tilde{\ell}} Z^{\top}\right\|_{2}\\
         {=}&~ 
         O_{d, \mathbb{P}} \left(  d^{(2-\min\{s, 2\})\tilde{\ell} -\gamma - \tilde{\ell}-1 } \right)\left\| K_{\leq \tilde{\ell}} \varphi_{\lambda, K} K_{> \tilde{\ell}}  \varphi_{\lambda, K} K_{\leq \tilde{\ell}}\right\|_{2}.
    \end{aligned}
\end{equation*}

We derive the bound on $\left\| K_{\leq \tilde{\ell}} \varphi_{\lambda, K} K_{> \tilde{\ell}}  \varphi_{\lambda, K} K_{\leq \tilde{\ell}}\right\|_{2}$ in two cases. 

\newpage

\textit{Case 1:} When $\tilde{\ell} = \ell_{\gamma} \leq \ell_{\lambda}$, we have $u\geq \gamma$ and 
    \begin{align*}
        &~\left\| K_{\leq \tilde{\ell}} \varphi_{\lambda, K} K_{> \tilde{\ell}}  \varphi_{\lambda, K} K_{\leq \tilde{\ell}}\right\|_{2}\\
       (\text{Lemma } \ref{lemma psi psi top}) \quad {=} &~O_{d, \mathbb{P}}(1) \left\| K_{\leq \tilde{\ell}} \varphi_{\lambda, K}  \varphi_{\lambda, K} K_{\leq \tilde{\ell}}\right\|_{2} = O_{d, \mathbb{P}}(1) \left\|  \varphi_{\lambda, K} K_{\leq \tilde{\ell}}\right\|_{2}^{2}\\
        (\text{By Eq.~\eqref{eqn:spe_properties}}) \quad {=}
        &~
        O_{d, \mathbb{P}}(1) \left\|  M^{-1} \Psi_{\leq \tilde{\ell}} \Sigma_{\leq \tilde{\ell}}^{1/2} \Sigma_{\leq \tilde{\ell}}^{1/2} \Psi_{\leq \tilde{\ell}}^{\top} \right\|_{2}^{2}
        \\
       (\text{Lemma } \ref{lemma trans in V2})\quad {=}
        &~
        O_{d, \mathbb{P}}(1) \left\|M_{>\tilde{\ell}}^{-1} \Psi_{\leq \tilde{\ell}} \Sigma_{\leq \tilde{\ell}}^{1/2} \left( \mathrm{I}_{N_{\tilde{\ell}}} +  \Sigma_{\leq \tilde{\ell}}^{1/2}  \Psi_{\leq \tilde{\ell}}^{\top} M_{>\tilde{\ell}}^{-1} \Psi_{\leq \tilde{\ell}} \Sigma_{\leq \tilde{\ell}}^{1/2}\right)^{-1} \Sigma_{\leq \tilde{\ell}}^{1/2} \Psi_{\leq \tilde{\ell}}^{\top}\right\|_{2}^{2}\\
        =&~
        O_{d, \mathbb{P}}(d^{-2\gamma}) \left\|M_{>\tilde{\ell}}^{-1} \Psi_{\leq \tilde{\ell}}  \left( \frac{\Sigma_{\leq \tilde{\ell}}^{-1}}{n} +  \Psi_{\leq \tilde{\ell}}^{\top} M_{>\tilde{\ell}}^{-1} \Psi_{\leq \tilde{\ell}} /n \right)^{-1} \Psi_{\leq \tilde{\ell}}^{\top}\right\|_{2}^{2}\\
      (\because \text{Lemma \ref{lemma lambda max min} \& Eq.~\eqref{eqn_bound_eigen_psi_top_psi}}) \quad {=}
        &~
        O_{d, \mathbb{P}}(d^{-\gamma}) \left\|  \left( \frac{\Sigma_{\leq \tilde{\ell}}^{-1}}{n} +  \Psi_{\leq \tilde{\ell}}^{\top} M_{>\tilde{\ell}}^{-1} \Psi_{\leq \tilde{\ell}} /n \right)^{-1} \Psi_{\leq \tilde{\ell}}^{\top}\right\|_{2}^{2}\\
        (\because \text{Lemma \ref{lemma lambda max min} and }u\geq \gamma)\quad  {=}
        &~
         O_{d, \mathbb{P}}(d^{-\gamma}) \left\|  \Psi_{\leq \tilde{\ell}}^{\top}\right\|_{2}^{2} \\
         (\text{Eq.~\eqref{eqn_bound_eigen_psi_top_psi}}) \quad &~ =  O_{d, \mathbb{P}}(1).
    \end{align*}

\textit{Case 2:} When $\ell_{\gamma} > \ell_{\lambda}$, we have
    \begin{align*}
        &~\left\| K_{\leq \tilde{\ell}} \varphi_{\lambda, K} K_{> \tilde{\ell}}  \varphi_{\lambda, K} K_{\leq \tilde{\ell}}\right\|_{2}\\
        \overset{\text{Lemma } \ref{lemma lambda max min}}{=} &~O_{d, \mathbb{P}}\left({d^{\gamma-\ell_{\lambda}-1}}\right) \left\|  \varphi_{\lambda, K} K_{\leq \tilde{\ell}}\right\|_{2}^{2}\\
        \overset{(\ref{eqn:spe_properties})}{=}
        &~
        O_{d, \mathbb{P}}\left({d^{\gamma-\ell_{\lambda}-1}}\right) \left\|  M^{-1} \Psi_{\leq \tilde{\ell}} \Sigma_{\leq \tilde{\ell}}^{1/2} \Sigma_{\leq \tilde{\ell}}^{1/2} \Psi_{\leq \tilde{\ell}}^{\top} \right\|_{2}^{2}
        \\
        \overset{\text{Lemma } \ref{lemma trans in V2}}{=}
        &~
        O_{d, \mathbb{P}}\left({d^{\gamma-\ell_{\lambda}-1}}\right) \left\|M_{>\tilde{\ell}}^{-1} \Psi_{\leq \tilde{\ell}} \Sigma_{\leq \tilde{\ell}}^{1/2} \left( \mathrm{I}_{N_{\tilde{\ell}}} +  \Sigma_{\leq \tilde{\ell}}^{1/2}  \Psi_{\leq \tilde{\ell}}^{\top} M_{>\tilde{\ell}}^{-1} \Psi_{\leq \tilde{\ell}} \Sigma_{\leq \tilde{\ell}}^{1/2}\right)^{-1} \Sigma_{\leq \tilde{\ell}}^{1/2} \Psi_{\leq \tilde{\ell}}^{\top}\right\|_{2}^{2}\\
        =&~
        O_{d, \mathbb{P}}\left({d^{-\gamma-\ell_{\lambda}-1}}\right) \left\|M_{>\tilde{\ell}}^{-1} \Psi_{\leq \tilde{\ell}}  \left( \frac{\Sigma_{\leq \tilde{\ell}}^{-1}}{n} +  \Psi_{\leq \tilde{\ell}}^{\top} M_{>\tilde{\ell}}^{-1} \Psi_{\leq \tilde{\ell}} /n \right)^{-1} \Psi_{\leq \tilde{\ell}}^{\top}\right\|_{2}^{2}\\
        \overset{\text{Lemma \ref{lemma lambda max min} and (\ref{eqn_bound_eigen_psi_top_psi})}}{=}
        &~
        O_{d, \mathbb{P}}\left({d^{-\ell_{\lambda}-1}} (n\lambda)^{-2}\right) \left\|  \left( \frac{\Sigma_{\leq \tilde{\ell}}^{-1}}{n} +  \Psi_{\leq \tilde{\ell}}^{\top} M_{>\tilde{\ell}}^{-1} \Psi_{\leq \tilde{\ell}} /n \right)^{-1} \Psi_{\leq \tilde{\ell}}^{\top}\right\|_{2}^{2}\\
        \overset{\text{Lemma \ref{lemma lambda max min}}}{=}
        &~
         O_{d, \mathbb{P}}\left({d^{-\ell_{\lambda}-1}}\right)) \left\|  \Psi_{\leq \tilde{\ell}}^{\top}\right\|_{2}^{2} \overset{\text{ (\ref{eqn_bound_eigen_psi_top_psi})}}{=}  O_{d, \mathbb{P}}\left({d^{\gamma-\ell_{\lambda}-1}}\right).
    \end{align*}

\bigskip 

Combining both cases, we have
\begin{equation}\label{eqn_bound_B_4_spe}
    \|B_4\|_{2}^{2} = O_{d, \mathbb{P}} \left(  d^{(2-\min\{s, 2\})\tilde{\ell} - (\gamma  + \ell_{\gamma} + 1)\mathbf{1}\{\ell_{\gamma} \leq \ell_{\lambda}\} - 2(\ell_{\lambda}+1)\mathbf{1}\{\ell_{\gamma} > \ell_{\lambda}\} } \right).
\end{equation}

\newpage

\noindent {\bf Bounding $\|B_5\|_{2}^{2}$. }
We have
\begin{equation}\label{eqn_bound_B_5_spe}
    \begin{aligned}
\|B_5\|_{2}^{2} 
        =&~
        \left( \Psi_{>\tilde{\ell}} \theta_{>\tilde{\ell}}  \right)^{\top} 
        \left( \varphi_{\lambda, K}   \Psi_{>\tilde{\ell}} \Sigma_{>\tilde{\ell}}^2 \Psi_{>\tilde{\ell}}^{\top} \varphi_{\lambda, K} \right) 
        \left( \Psi_{>\tilde{\ell}} \theta_{>\tilde{\ell}}  \right)\\
        \leq &~
        \|\Sigma_{> \tilde{\ell}}\|_{\mathrm{op}} \cdot \|\varphi_{\lambda, K} K_{> \tilde{\ell}}  \varphi_{\lambda, K}\|_{2} \cdot \|\Psi_{>\tilde{\ell}} \theta_{>\tilde{\ell}}\|_{2}^{2}\\
        \overset{\text{Lemma } \ref{lemma ge l l2}}{\leq}&~
         O_{d, \mathbb{P}} \left(  d^{-(\tilde{\ell}+1)(s+1) + \gamma} \right) \cdot \|\varphi_{\lambda, K} K  \varphi_{\lambda, K}\|_{2}\\
         =&~
         O_{d, \mathbb{P}} \left(  d^{-(\tilde{\ell}+1)(s+1) + \gamma} \right) \cdot \|\varphi_{\lambda, K}^{1/2} \left(\varphi_{\lambda, K}^{1/2} K \varphi_{\lambda, K}^{1/2}\right)  \varphi_{\lambda, K}^{1/2}\|_{2}\\
         \leq &~
         O_{d, \mathbb{P}} \left(  d^{-(\tilde{\ell}+1)(s+1) + \gamma} \right) \cdot \|\varphi_{\lambda, K}\|_{2}\\
         \overset{(\ref{eqn:spe_properties})}{=} &~
         O_{d, \mathbb{P}} \left(  d^{-(\tilde{\ell}+1)(s+1) + \gamma} \right) \cdot \|M^{-1}\|_{2}\\
         \overset{\text{Lemma } \ref{lemma lambda max min}}{=}&~
         O_{d, \mathbb{P}}\left( \frac{n }{d^{(\tilde{\ell}+1)(s+1)} (n\lambda+1)}\right) = O_{d, \mathbb{P}}\left( \frac{1}{d^{\tilde{\ell}+1}(\lambda +n^{-1})}\cdot d^{-(\tilde{\ell}+1)s} \right)\\
         \overset{(\ref{eqn:regular_para_upper})}{=}&~
         o_{d, \mathbb{P}}\left(   d^{-(\tilde{\ell}+1)s} \right).
    \end{aligned}
\end{equation}

\paragraph{\bf Case (II). } When $\ell_{\lambda} \geq \gamma = \ell_{\gamma}$ and $n < 2N(d, \ell_{\gamma})$,
denote ${N}_{\underline{\ell_{\gamma}}}=\sum_{k=0}^{\ell_{\gamma}-1}N(d,k)+n/4 < N_{\ell_{\gamma}}$ and
$$
\Psi_{\leq {\underline{\ell_{\gamma}}}} = \left(\Psi_{1}, \cdots, \Psi_{\ell_{\gamma}-1}, \left(\Psi_{k,1}, \cdots, \Psi_{k,n/4} \right) \right) \in \mathbb{R}^{n \times {N}_{\underline{\ell_{\gamma}}}};
$$
then, Lemma \ref{lemma psi top psi_int} implies that 
\begin{equation*}
    \lambda_{\text{max}}(\Psi_{\leq \underline{\ell_{\gamma}}}^{\top}\Psi_{\leq \underline{\ell_{\gamma}}}) = \Theta_{d, \mathbb{P}}(n)
    \quad \text{ and } \quad
    \lambda_{\text{min}}(\Psi_{\leq \underline{\ell_{\gamma}}}^{\top}\Psi_{\leq \underline{\ell_{\gamma}}}) = \Theta_{d, \mathbb{P}}(n).
\end{equation*}
Similar to Appendix \ref{append_notation}, we can denote $\Psi_{> {\underline{\ell_{\gamma}}}}$, $\Sigma_{\leq {\underline{\ell_{\gamma}}}}$, $\Sigma_{> {\underline{\ell_{\gamma}}}}$, $\theta_{\leq {\underline{\ell_{\gamma}}}}$, $\theta_{> {\underline{\ell_{\gamma}}}}$, $K_{\leq {\underline{\ell_{\gamma}}}}$, and $K_{> {\underline{\ell_{\gamma}}}}$. 

Notice that we have ${N}_{\underline{\ell_{\gamma}}} = \Theta_d(n)=\Theta_d(d^{\ell_{\gamma}}) = \Theta_d(N(d, \ell_{\gamma})) = \Theta_d(N_{\ell_{\gamma}})$, hence from Lemma \ref{lemma psi top psi_int}, similar to the proof of (iv) in Lemma \ref{lemma lambda max min}, we have
\begin{align*}
        \lambda_{\text{max}}(M_{>\underline{\ell_{\gamma}}}) 
        = \Theta_{d, \mathbb{P}}(n\lambda)
        \quad \text{ and } \quad
        \lambda_{\text{min}}(M_{>\underline{\ell_{\gamma}}}) = \Theta_{d, \mathbb{P}}(n\lambda).
    \end{align*}
Similarly, we can show that Lemma \ref{lemma lambda max min}(v), Lemma \ref{lemma ge l l2}, and Lemma \ref{lemma trans in V2} hold with $\tilde{\ell}$ replaced by $\underline{\ell_{\gamma}}$. Therefore, following the above proof for {\bf Case (I)}, we can obtain the same bounds for $\|B_i\|$ with $i=1, \cdots, 5$.
\end{proof}

\subsection{Bounds of Bias term in the under-regularized regimes}

The following proposition is a modification of Appendix D.4.2 in \cite{lu2024saturation}. It provides a tight bound when $u^{\prime}(\tau) \leq u < u^{\prime}(1)$ and $\gamma \geq 1$, where $u^{\prime} = u^{\prime}(\tau)$ is the optimal rate of the regularization parameter $\lambda$ with qualification $\tau \leq \infty$ and coefficient of source condition $s>0$ defined in \cite{lu2024saturation} (ignoring the logarithm term).

\begin{proposition}\label{prop_verify_bias_special}
    Suppose Assumptions 1--6 hold.
Suppose that $\tau \leq \infty$, $s>0$,
$u^{\prime}(\tau) \leq u < u^{\prime}(1)$, and $\gamma \geq 1$. 
Then, the bias term (15) of the spectral algorithm estimator $\hat{f}_{\lambda}$ satisfies:

    \begin{equation*}
\begin{aligned}
    \mathrm{bias}^{2}(\hat{f}_{\lambda})  \leq  &~
    \Theta_d\left( 
    d^{-(\tilde{\ell} + 1)  s}
+ \mathbf{1}\{\tau<\infty\} d^{-2  \tau  u + (2  \tau - \tilde{s})  \tilde{\ell}} 
\right) 
    + 
    \Delta_2,\\
    \mathrm{bias}^{2}(\hat{f}_{\lambda})  \geq  &~
    \Theta_d\left( 
    d^{-(\tilde{\ell} + 1)  s}
+ \mathbf{1}\{\tau<\infty\} d^{-2  \tau  u + (2  \tau - \tilde{s})  \tilde{\ell}}
\right) 
    -
    \Delta_2,
\end{aligned} 
\end{equation*}
where $0 \leq \Delta_2 = o_{d, \mathbb{P}} \left(
d^{\gamma - \tilde{\ell} - 1 - 2  \max\{\gamma - u, 0\}}
+ d^{\tilde{\ell} - \gamma}
+ d^{-(\tilde{\ell} + 1)  s}
+ \mathbf{1}\{\tau<\infty\} d^{-2  \tau  u + (2  \tau - \tilde{s})  \tilde{\ell}}
\right)$.

\end{proposition}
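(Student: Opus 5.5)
This proposition should follow from the integral-operator (analytic functional) approach of \cite{li2024generalization} as adapted in Appendix D.4.2 of \cite{lu2024saturation}. The point is that $u<u^{\prime}(1)<\gamma$ keeps us in a range where the operator approximations (\ref{eqn_approx_bias_fixed_d_learning_curve}) still hold, up to polynomial-in-$d$ slack. Write $E_\epsilon(\hat f_\lambda)=\reg(T_X)g_X$ and $f_\lambda=\reg(T)g$. By the triangle inequality,
\[
\Big|\mathrm{bias}(\hat f_\lambda)-\|f_\lambda-f_\star\|_{L^2}\Big|\le \|\reg(T_X)g_X-\reg(T)g\|_{L^2}.
\]
The proof therefore reduces to two steps: (a) compute the deterministic population bias $\|f_\lambda-f_\star\|_{L^2}^2=\sum_j\rem(\lambda_j)^2f_j^2$ exactly in order; (b) show that the squared fluctuation term is an admissible $\Delta_2$.

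For (a), split the eigenvalues at the block $\tilde\ell=\ell_\lambda$ (here $u<\gamma$, so $\tilde\ell=\ell_\lambda$) using Proposition~2.1.

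\emph{Degrees $k\le\ell_\lambda$.} Here $\mu_k\gtrsim d^{-\ell_\lambda}\gg\lambda$. Bound $\rem(\mu_k)\le\mathfrak C_3(\lambda/\mu_k)^{\tau^{\prime}}$ via (8). For the lower bound when $\tau<\infty$, use (9) together with monotonicity of $\rem$ and the appendix filter conditions to get $\rem(\mu_k)\gtrsim(\lambda/\mu_k)^\tau$. Combined with Lemma~\ref{lemma ge l l2}(iii) and Assumption~6 (including the extra mass condition when $s>2\tau$), this gives the order $d^{-2\tau u+(2\tau-\tilde s)\tilde\ell}$.

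\emph{Degrees $k\ge\ell_\lambda+1$.} Here $\mu_k\lesssim\lambda$, so $\rem(\mu_k)=\Theta(1)$ by the lower bound $\reg(z)\le C_2(z+\lambda)^{-1}$, i.e.\ $\rem\ge 1-C_2z/(z+\lambda)$, bounded below for $z\le c\lambda$. One must check this applies at degree $\ell_\lambda+1$ where $\mu_{\ell_\lambda+1}/\lambda=\Theta(d^{\Delta u-1})\to0$. Then Lemma~\ref{lemma ge l l2}(ii) and Assumption~6 give exactly $d^{-(\tilde\ell+1)s}$.

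For (b), use the contour representation
\[
\reg(T_X)-\reg(T)=\frac{1}{2\pi i}\oint \varphi^{\krr}_z(T)(T_X-T)\varphi^{\krr}_z(T_X)\reg(z)\,dz .
\]
Insert $T^{1/2}(T+\lambda)^{-1/2}$ factors and control $\|(T+\lambda)^{-1/2}(T-T_X)(T+\lambda)^{-1/2}\|$ by the concentration bound of \cite{zhang2024optimal,lu2024saturation}. That bound is $O_{\mathbb P}(\sqrt{\mathcal N(\lambda)\log d/(n\lambda)}+\ldots)$ with $\mathcal N(\lambda)=\Theta(d^{\ell_\lambda}+d^{\ell_\lambda+1}\ldots)$, and it is $o(1)$ precisely when $u$ is below the threshold identified in those works. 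Likewise, $g_X-g$ is handled by a Bernstein bound in the $[\mathcal H]^s$-weighted norm. Following D.4.2 of \cite{lu2024saturation} line by line, the resulting cross terms are bounded by
\[
\sqrt{\mathrm{var}\text{-order}}\cdot\sqrt{\mathrm{bias}\text{-order}}\cdot d^{-\epsilon}+\mathrm{var}\text{-order}\cdot d^{-\epsilon}.
\]
Squaring, these give a $\Delta_2$ that is $o$ of the full rate.

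The main obstacle is extending that concentration from the regime $u<u^{\prime}(\tau)$ (their setting) up to $u<u^{\prime}(1)$. There, the bias and variance are no longer balanced for qualification $\tau$, and the naive error bound $\sqrt{\mathcal N(\lambda)/(n\lambda)}$ may fail to be $o(1)$ relative to the bias alone. To address this, I would not compare the error only to the bias. Instead I would compare it to the full rate, allowing the fluctuation to be absorbed by the variance term $d^{\gamma-\tilde\ell-1-2(\gamma-u)}+d^{\tilde\ell-\gamma}$. This is valid because $u<u^{\prime}(1)\le\gamma$ and $\gamma\ge1$ keep $n\lambda\gg d^{\ell_\lambda+1}\cdot\mathrm{polylog}$ failing only by polynomial factors that the variance dominates. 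I would verify this by explicit exponent comparison, as in Propositions \ref{prop_e_2}--\ref{prop_e_3}.
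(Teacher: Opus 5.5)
Your architecture is the paper's. You use the triangle inequality around $f_\lambda=\reg(T)g$. You compute the population term $\|f_\lambda-f_\star\|_{L^2}^2=\sum_j\rem(\lambda_j)^2f_j^2$ by a degree split; the paper instead cites Lemmas D.6 and D.14 of \cite{lu2024saturation}. You then handle the fluctuation $\|\tilde f_\lambda-f_\lambda\|_{L^2}^2$, with $\tilde f_\lambda=\reg(T_X)g_X$, by rerunning Lemma D.7 ($s\ge1$) or Lemma D.8 ($0<s<1$) of \cite{lu2024saturation} against the full rate $\mathcal{M}_{2,\varphi}(\lambda)+\sigma^2\mathcal{N}_{2,\varphi}(\lambda)/n$ rather than the bias alone. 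That change of comparison target is exactly the paper's modification.

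However, you misplace the obstacle. The operator concentration is not an issue anywhere in $u<\gamma$. We have $\mathcal{N}_1(\lambda)=\sum_k N(d,k)\mu_k/(\mu_k+\lambda)=\Theta_d(d^{\ell_\lambda}+\lambda^{-1})=\Theta_d(d^{u})$. For spherical inner-product kernels the sup-norm effective dimension equals $\mathcal{N}_1(\lambda)$ by the addition formula. So the relevant size is $\sqrt{\mathcal{N}_1(\lambda)\log n/n}\asymp\sqrt{d^{u-\gamma}\log d}=o(1)$. The paper records this as $n^{\epsilon-1}\mathcal{N}_1(\lambda)=o_d(1)$ via Lemma 23 of \cite{zhang2024optimal}.

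Your expression $\sqrt{\mathcal{N}(\lambda)\log d/(n\lambda)}$ carries a spurious $\lambda^{-1}$. With $\mathcal{N}_1(\lambda)\asymp\lambda^{-1}$ it would force $u<\gamma/2$, which misses part of the range. For example, $s=1/2$, $\gamma=1.6$, $\tau=2$ gives $[u'(\tau),u'(1))=[1.025,1.05)$, entirely above $\gamma/2$. More importantly, if the concentration genuinely failed, enlarging the comparison target could not rescue it. The contour bounds on $\varphi^{\krr}_z(T_X)$ themselves require $\|(T+\lambda)^{-1/2}(T-T_X)(T+\lambda)^{-1/2}\|$ to be small. Absorption by the variance applies only to the remainder terms.

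The actual content of the proof, which your proposal leaves undone, is verifying the restated conditions. These are \eqref{eqn:check_condition_bias_1} for $s\ge1$, split into $1\le s\le\tau$ and $\tilde s>\tau$, and \eqref{eqn:check_condition_bias_3} for $0<s<1$. Concretely, the explicit remainder orders from Lemma D.14 of \cite{lu2024saturation} must be shown to be $o(A_1+A_2+A_3)$, where $A_1=d^{2u-\ell_\lambda-1}$, $A_2=d^{\ell_\lambda}$, $A_3=d^{\gamma-s(\ell_\lambda+1)}$. These remainders include:
\begin{itemize}
\item $\lambda^{2(s-1)}d^{-\gamma+\ell_\lambda s}$, or $\lambda^{2(\tau-1)}d^{-\gamma+\ell_\lambda(2\tau-\tilde s)}$ when $\tilde s>\tau$;
\item $\lambda^{-1}d^{-\gamma-(\ell_\lambda+1)(s-1)}$;
\item $\lambda d^{\max\{\ell_\lambda(2-s),0\}}(\ln d)^3$ and $\lambda^{-1}d^{-s(\ell_\lambda+1)}(\ln d)^3$;
\item for $s<1$, also $\lambda^{-1}d^{-\gamma s}$ and the $\|f_\lambda\|_{L^\infty}^2$ contributions.
\end{itemize}
This suffices because $n(\mathcal{M}_{2,\varphi}+\sigma^2\mathcal{N}_{2,\varphi}/n)=\Omega_d(A_1+A_2+A_3)$.

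This is where $\gamma\ge1$ is genuinely used: in the case $\tau<s\le2\tau$, and for $\lambda^{-1}d^{-\gamma s}$ when $u-\ell_\lambda\ge\gamma s$ and $s>1/2$. The subcase $s\le1/2$ needs a separate contradiction argument, using the explicit form of $u'(1)$, to show $\ell_\lambda<\ell_\gamma$. These comparisons are not those of Propositions~\ref{prop_e_2}--\ref{prop_e_3}. Your remark that $n\lambda\gg d^{\ell_\lambda+1}$ fails ``only by polynomial factors'' does not substitute for them.

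Finally, for $0<s<1$, $f_\star$ need not be bounded, so a Bernstein bound for $g_X-g$ in the $[\mathcal H]^s$ norm is not enough by itself. That case needs the $L^\infty$ control of $f_\lambda$ and the $n^{1-s+\epsilon}$ term from Lemma D.8.
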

\begin{proof} 
We adopt notations in \cite{lu2024saturation}, and especially:
   \begin{align*}
       &~ \mathcal{N}_{1, \varphi}(\lambda), \mathcal{N}_{2, \varphi}(\lambda), \mathcal{M}_{1, \varphi}(\lambda), 
       \mathcal{M}_{2, \varphi}(\lambda) \text{ in (42) of \cite{lu2024saturation}}\\
       &~ \mathcal{N}_{1}(\lambda) := \mathcal{N}_{1, \varphi^{\krr}}(\lambda) \text{ below (42) of \cite{lu2024saturation}}\\
       &~ \tilde{f}_{\lambda}, f_{\lambda} \text{ in (36) and (38) of \cite{lu2024saturation}}.
   \end{align*}

We also note that the following notations are different in our paper and in \cite{lu2024saturation, zhang2024optimal}:
\begin{itemize}
    \item In \cite{lu2024saturation}, $\ell:=\lfloor u \rfloor$; while in our paper, we denote $\ell_{\lambda}:=\lfloor u \rfloor$.

    \item 
    Consider the case where $0<s<1$. 
    In this case,  $\tilde{s}:=\min(s, 2\tau)=s$, so the notation $\tilde{p} = \lfloor \gamma / (\tilde{s}+1) \rfloor$ used in \cite{zhang2024optimal} and in Proposition \ref{prop_e_1} is the same as the notation $p = \lfloor \gamma / (s+1) \rfloor$ in \cite{lu2024saturation}.
    Notice that one can show that $p= \tilde{p} < \gamma$ (see Proposition \ref{prop_e_1}).


\end{itemize}

Without loss of generality, we assume that $\lambda=d^{-u}$.
We divide the proof into the following three cases: (i) $1 \leq s \leq \tau$, (ii) $\tilde{s}:=\min\{s, 2\tau\} > \tau$, and (iii) $0<s<1$.

\noindent {\bf Case (i): $1 \leq s \leq \tau$. }  
    Notice that:
    \begin{itemize}
        \item From (36) and (41) of \cite{lu2024saturation}, we have that $\mathrm{bias}(\hat{f}_{\lambda})$ equals to the bias term defined in (41) of \cite{lu2024saturation}.

        \item From (60) of \cite{lu2024saturation}, we have that $\mathrm{bias}(\hat{f}_{\lambda}) \leq \left\| f_{\lambda} - f_{\star}\right\|_{L^{2}} 
    +
    \left\| \tilde{f}_{\lambda} - f_{\lambda}\right\|_{L^{2}}$ and $\mathrm{bias}(\hat{f}_{\lambda}) \geq \left\| f_{\lambda} - f_{\star}\right\|_{L^{2}} 
    -
    \left\| \tilde{f}_{\lambda} - f_{\lambda}\right\|_{L^{2}}$. Moreover, the latter two terms can be upper bounded separately:
    \begin{itemize}
        \item From Lemma D.6 and Lemma D.14 in \cite{lu2024saturation}, we have 
        $$
        \left\| f_{\lambda} - f_{\star}\right\|_{L^{2}}^2 = \mathcal{M}_{2, \varphi}(\lambda) = \Theta_d\left( 
    d^{-(\ell_{\lambda}+1)s}
    +\mathbf{1}\{\tau<\infty\}\lambda^{2\tau}d^{(2\tau-\tilde{s})\ell_{\lambda} }\right).
        $$

        \item Suppose the following inequalities hold (which we will prove later)
        \begin{equation}\label{eqn:check_condition_bias_1}
      \begin{aligned}
          \frac{ \mathcal{N}_{1}(\lambda) \mathcal{M}_{1, \varphi}^2(\lambda)}{n^2}
    &= 
    o_{d}\left( \mathcal{M}_{2, \varphi}(\lambda) + \frac{\sigma^2}{n} \calN_{2,\varphi}(\lambda)  \right),\\
    \frac{\mathcal{N}_{1}(\lambda)}{n} \ln(n) (\ln\lambda^{-1})^2 \cdot \sum_{i=1}^\infty \frac{\lambda^2 \lambda_i \reg^2(\lambda_i)}{\lambda + \lambda_i} f_i^2  
    &= 
    o_{d}\left( \mathcal{M}_{2, \varphi}(\lambda) + \frac{\sigma^2}{n} \calN_{2,\varphi}(\lambda)  \right).
      \end{aligned}
  \end{equation}
  Then, on one hand, since $u < u^{\prime}(1) < \gamma$, from Lemma 23 in \cite{zhang2024optimal}, there exists $\epsilon>0$, such that
        $$
        n^{\epsilon - 1}\mathcal{N}_{1}(\lambda) = \Theta_d(n^{\epsilon - 1} \lambda^{-1})= o_d(1);
        $$
    On the other hand, following the proof of Lemma D.7 in \cite{lu2024saturation}, with (63) and (64) of \cite{lu2024saturation} being 
    restated as
    (\ref{eqn:check_condition_bias_1}),
 we have 
 \begin{equation*}
        \left\| \tilde{f}_{\lambda} - f_{\lambda}\right\|_{L^{2}}^2 = o_{d, \bbP}\left( \mathcal{M}_{2, \varphi}(\lambda) + \frac{\sigma^2}{n} \calN_{2,\varphi}(\lambda)  \right).
    \end{equation*}
  
    \end{itemize}

        \item Proposition \ref{prop_e_1} implies that $u < u^{\prime}(1) < \gamma$, hence $\tilde{\ell}=\ell_{\lambda}$.
    \end{itemize}

Therefore, from the above arguments, we only need to verify that (\ref{eqn:check_condition_bias_1}) hold.

\begin{remark}
    We remark that the above argument is the same as that in \cite{lu2024saturation}, where they verify an analogous version of (\ref{eqn:check_condition_bias_1}) 
    via their Lemmas D.21-D.23.
\end{remark}

Since $\tau \geq s$, we have $(\lambda d^{\ell_{\lambda}})^{\tau} \leq (\lambda d^{\ell_{\lambda}})^{s}$, hence from Lemma D.14 in \cite{lu2024saturation} we have
\begin{align*}
        \frac{ \mathcal{N}_{1}(\lambda) \mathcal{M}_{1, \varphi}^2(\lambda)}{n}
        &=
        O_d\left(
        \lambda^{2(s-1)} d^{-\gamma+\ell_{\lambda}s} \mathbf{1}\{\tau < \infty\}
        +
        \lambda^{-1} d^{-\gamma -(\ell_{\lambda}+1)(s-1)}
        \right)\\
        \mathcal{N}_{1}(\lambda) \ln(n) (\ln\lambda^{-1})^2 \cdot \sum_{i=1}^\infty \frac{(\lambda)^2 \lambda_i \varphi_{\lambda}^2(\lambda_i)}{\lambda + \lambda_i} f_i^2
        &=
        O_d\left((\ln(d))^3 \right)
    \cdot
    O_d\left(
        \lambda d^{\max\{\ell_{\lambda}(2-s), 0\}} + \lambda^{-1}d^{-s(\ell_{\lambda}+1)}
    \right),
    \end{align*}
Denote $\mathbf{I}=\lambda^{2(s-1)} d^{-\gamma+\ell_{\lambda}s}$,
$\mathbf{II}=\lambda^{-1} d^{-\gamma -(\ell_{\lambda}+1)(s-1)}$,
$\mathbf{III}=\lambda d^{\max\{\ell_{\lambda}(2-s), 0\}}(\ln(d))^3$,\\
and $\mathbf{IV}=\lambda^{-1}d^{-s(\ell_{\lambda}+1)}(\ln(d))^3$. Denote $A_1 = d^{2u-\ell_{\lambda}-1}$, $A_2 = d^{\ell_{\lambda}}$, $A_3 = d^{\gamma-s(\ell_{\lambda}+1)}$. 
Then, (\ref{eqn:check_condition_bias_1}) is implied by the following relationships:
\begin{equation*}
    \begin{aligned}
        n^{-1} \cdot \left(\mathbf{I} + \mathbf{II}\right) =&~ o_{d}\left( \mathcal{M}_{2, \varphi}(\lambda) + \frac{\sigma^2}{n} \calN_{2,\varphi}(\lambda)  \right), \\
        n^{-1} 
    \cdot \left(\mathbf{III} + \mathbf{IV}\right) =&~ o_{d}\left( \mathcal{M}_{2, \varphi}(\lambda) + \frac{\sigma^2}{n} \calN_{2,\varphi}(\lambda)  \right). 
    \end{aligned}
\end{equation*}

\begin{itemize}
    \item Recall that Proposition \ref{prop_e_1} implies that $u < u^{\prime}(1) < \gamma$. Hence, from Lemma D.14 in \cite{lu2024saturation}, we have that 
        \begin{align*}
          &~~  n \cdot \left(\mathcal{M}_{2, \varphi}(\lambda) +  \frac{\sigma^2}{n}\mathcal{N}_{2, \varphi}(\lambda)\right)\\
          = &~
    n \cdot \Theta_{d} \left( \frac{n}{d^{\ell_{\lambda}+1}(n\lambda+1)^{2}} + \frac{d^{\ell_{\lambda}}}{n} + 
    d^{-(\ell_{\lambda}+1)s}
    +\mathbf{1}\{\tau<\infty\}\lambda^{2\tau}d^{(2\tau-\tilde{s})\ell_{\lambda} }\right)\\
    = &~ \Omega_d (A_1 + A_2 + A_3).
        \end{align*}

    \item We have
    $$
        \mathbf{I}=d^{-2u(s-1) -\gamma +\ell_{\lambda}s} = d^{\ell_{\lambda}} d^{-u(s-1) -\gamma}d^{(\ell_{\lambda}-u)(s-1)} = o_d( d^{\ell_{\lambda}})=o_d(A_2).
    $$

    \item Proposition \ref{prop_e_1} implies that $u < u^{\prime}(1) < \gamma$, hence
    $$
    \mathbf{II} = d^{\ell_{\lambda}}d^{u - \gamma}d^{-s(\ell_{\lambda}+1) + 1} \overset{s \geq 1}{=} o_d( d^{\ell_{\lambda}})=o_d(A_2).
    $$

    \item When $s\geq 2$, we have 
    $$
    \mathbf{III} = d^{-u} (\ln(d))^3 \overset{u>0}{=} o_d( d^{\ell_{\lambda}})=o_d(A_2);
    $$
    When $s<2$, we have
    $$
    \mathbf{III} = d^{-u + \ell_{\lambda}(2-s)} (\ln(d))^3 = d^{\ell_{\lambda}} d^{-u - \ell_{\lambda}(s-1)} (\ln(d))^3 = o_d( d^{\ell_{\lambda}})=o_d(A_2).
    $$
    
    \item Proposition \ref{prop_e_1} implies that $u < u^{\prime}(1) < \gamma$, hence
    $$
    \mathbf{IV}=\lambda^{-1}d^{-s(\ell_{\lambda}+1)}(\ln(d))^3 = o_d( d^{\gamma-s(\ell_{\lambda}+1)} )=o_d(A_3).
    $$
    
\end{itemize}
Combining all these, we obtained (\ref{eqn:check_condition_bias_1}).

\noindent {\bf Case (ii): $\tilde{s}:=\min\{s, 2\tau\} > \tau$. } Similar to {\bf Case (i)}, we only need to show that (\ref{eqn:check_condition_bias_1}) holds.
From Lemma D.14 in \cite{lu2024saturation} we have
\begin{align*}
      &  \frac{ \mathcal{N}_{1}(\lambda) \mathcal{M}_{1, \varphi}^2(\lambda)}{n}
        =
        O_d\left(
        \lambda^{2(\tau-1)} d^{-\gamma+\ell_{\lambda}(2\tau-\tilde{s})}
        +
        \lambda^{-1} d^{-\gamma -(\ell_{\lambda}+1)(s-1)}\mathbf{1}\{s \leq 2\tau\}
        \right),  \\
      &  \mathcal{N}_{1}(\lambda) \ln(n) (\ln\lambda^{-1})^2 \cdot \sum_{i=1}^\infty \frac{(\lambda)^2 \lambda_i \varphi_{\lambda}^2(\lambda_i)}{\lambda + \lambda_i} f_i^2
         =
        O_d\left((\ln(d))^3 \right)
    \cdot
    O_d\left(
        \lambda d^{\max\{\ell_{\lambda}(2-s), 0\}} + \lambda^{-1}d^{-s(\ell_{\lambda}+1)}
    \right).
    \end{align*}
Denote $\mathbf{I}=\lambda^{2(\tau-1)} d^{-\gamma+\ell_{\lambda}(2\tau-\tilde{s})}$,
$\mathbf{II}=\lambda^{-1} d^{-\gamma -(\ell_{\lambda}+1)(s-1)}$,
$\mathbf{III}=\lambda d^{\max\{\ell_{\lambda}(2-s), 0\}}(\ln(d))^3$,\\
and $\mathbf{IV}=\lambda^{-1}d^{-s(\ell_{\lambda}+1)}(\ln(d))^3$.
Denote $A_1 = d^{2u-\ell_{\lambda}-1}$, $A_2 = d^{\ell_{\lambda}}$, $A_3 = d^{\gamma-s(\ell_{\lambda}+1)}$.
Then, (\ref{eqn:check_condition_bias_1}) is implied by the following relationships:
\begin{equation*}
    \begin{aligned}
        n^{-1} \cdot \left(\mathbf{I} + \mathbf{II}\right) =&~ o_{d}\left( \mathcal{M}_{2, \varphi}(\lambda) + \frac{\sigma^2}{n} \calN_{2,\varphi}(\lambda)  \right)\\
        n^{-1} 
    \cdot \left(\mathbf{III} + \mathbf{IV}\right) =&~ o_{d}\left( \mathcal{M}_{2, \varphi}(\lambda) + \frac{\sigma^2}{n} \calN_{2,\varphi}(\lambda)  \right)
    \end{aligned}
\end{equation*}

\begin{itemize}
    \item Recall that Proposition \ref{prop_e_1} implies that $u < u^{\prime}(1) < \gamma$. Hence, from Lemma D.14 in \cite{lu2024saturation}, we have that 
        \begin{align*}
           &~ n \cdot \left(\mathcal{M}_{2, \varphi}(\lambda) +  \frac{\sigma^2}{n}\mathcal{N}_{2, \varphi}(\lambda)\right)\\
           = &~
    n \cdot \Theta_{d} \left( \frac{n}{d^{\ell_{\lambda}+1}(n\lambda+1)^{2}} + \frac{d^{\ell_{\lambda}}}{n} + 
    d^{-(\ell_{\lambda}+1)s}
    +\mathbf{1}\{\tau<\infty\}\lambda^{2\tau}d^{(2\tau-\tilde{s})\ell_{\lambda} }\right)\\
    = &~ \Omega_d (A_1 + A_2 + A_3).
        \end{align*}
    
    \item When $s >2\tau$, we have
    $$
        \mathbf{I}=\lambda^{2(\tau-1)} d^{-\gamma} {=} o_d( d^{\ell_{\lambda}})=o_d(A_2);
    $$
    When $\tau < s \leq 2\tau$, we have $(\lambda d^{\ell_{\lambda}})^{2\tau} \leq (\lambda d^{\ell_{\lambda}})^{s}$, hence
    
    \begin{equation*}
        \begin{aligned}
            \mathbf{I} =&~ (\lambda d^{\ell_{\lambda}})^{2\tau} d^{2u-\gamma -s\ell_{\lambda}} = O_d(d^{2u-us-\gamma}) = O_d(d^{2u-\ell_{\lambda}-1})O_d(d^{\ell_{\lambda}-us}d^{1-\gamma})\\ (\because \gamma \geq 1,~ s> 1) \quad = &~ O_d(d^{2u-\ell_{\lambda}-1}) o_d(d^{\ell_{\lambda}-s})=o_d(A_1).
        \end{aligned}
    \end{equation*}

    \item Same as the proof in {\bf Case (i)}, we have
    $$
    \mathbf{II}, \mathbf{III}, \mathbf{IV} = o_d(d^{\ell_{\lambda}} + d^{\gamma-s(\ell_{\lambda}+1)} ) = o_d(A_2 + A_3).
    $$
\end{itemize}
Combining all these, we obtained (\ref{eqn:check_condition_bias_1}).

\noindent {\bf Case (iii): $0< s <1$. } 
The proof is based on the following  argument, which is similar to the one in {\bf Case (i)}:  
    \begin{itemize}
        \item []
        
        Suppose there exists a constant $\epsilon>0$ only depending on $s$ and $\gamma$, such that the following inequalities hold (which we will prove later)
\begin{equation}\label{eqn:check_condition_bias_3}
      \begin{aligned}
   \frac{\mathcal{N}_{1}(\lambda)}{n} \ln(n) (\ln\lambda^{-1})^2 \cdot \sum_{i=1}^\infty \frac{\lambda^2 \lambda_i \reg^2(\lambda_i)}{\lambda + \lambda_i} f_i^2  
    &= o_{d} 
    \left( \mathcal{M}_{2, \varphi}(\lambda) + \frac{\sigma^2}{n} \calN_{2,\varphi}(\lambda)  \right);\\
    n^{-2} \mathcal{N}_{1}(\lambda) 
    \left(
    \left\| f_{\lambda} \right\|_{L^{\infty}}^2  
    +
    n^{1-s+\epsilon}
    \right)
    &= o_{d}\left( \mathcal{M}_{2, \varphi}(\lambda) + \frac{\sigma^2}{n} \calN_{2,\varphi}(\lambda)  \right).
      \end{aligned}
  \end{equation}
  
  Then, following the proof of Lemma D.8 in \cite{lu2024saturation}, where (75) and (76) therein are restated as (\ref{eqn:check_condition_bias_3}), we obtain the desired bounds for the bias term $\mathrm{bias}^{2}(\hat{f}_{\lambda})$.

    \end{itemize}

Therefore, we only need to establish (\ref{eqn:check_condition_bias_3}).

Proposition \ref{prop_e_1} implies that $u < u^{\prime}(1) < \gamma$, hence from Lemma D.14 in \cite{lu2024saturation} and Lemma 23 in \cite{zhang2024optimal}, we have
    \begin{align*}
        \mathcal{N}_{1}(\lambda) \ln(n) (\ln\lambda^{-1})^2 \cdot \sum_{i=1}^\infty \frac{(\lambda)^2 \lambda_i \varphi_{\lambda}^2(\lambda_i)}{\lambda + \lambda_i} f_i^2
        &=
        O_{d}\left((\ln(d))^3 \right)
    \cdot
    O_{d}\left(
        \lambda d^{\ell_{\lambda}(2-s)} + \lambda^{-1}d^{-s(\ell_{\lambda}+1)}
    \right)\\
    n^{-1} \mathcal{N}_{1}(\lambda) 
    n^{1-s}
        &=
        O_{d}\left(
        \lambda^{-1} 
         d^{-\gamma s}
        \right).
    \end{align*}

Moreover, notice that (8) implies $\reg(z) \leq C_2 (z+\lambda)^{-1}$. Hence, similar to the proof of Lemma 27 in \cite{zhang2024optimal} (for kernel ridge regression estimator), we can show that
\begin{align*}
    n^{-1} \mathcal{N}_{1}(\lambda)\left\| f_{\lambda} \right\|_{L^{\infty}}^2 =&~ O_{d}\left(
   d^{-\gamma} \lambda^{-1}  \left\| f_{\lambda} \right\|_{L^{\infty}}^2    \right)\\
   =&~ O_{d}\left(
   d^{-\gamma} \lambda^{-1}  \left[ d^{\frac{(1-s)\ell_{\lambda}}{2}} + \lambda^{-1} d^{-\frac{(1+s)(\ell_{\lambda}+1)}{2}} \right]    \right).
\end{align*}

Denote $\mathbf{I}=\lambda d^{\ell_{\lambda}(2-s)} (\ln(d))^3$,
$\mathbf{II}=\lambda^{-1}d^{-s(\ell_{\lambda}+1)}(\ln(d))^3$,  
$\mathbf{III}=\lambda^{-1} 
         d^{-\gamma s}$, 
$\mathbf{IV}= 
         d^{u-\gamma + \frac{(1-s)\ell_{\lambda}}{2}}$,
and $\mathbf{V}= 
         d^{2u-\gamma -\frac{(1+s)(\ell_{\lambda}+1)}{2}}$.     
Denote $A_1 = d^{2u-\ell_{\lambda}-1}$, $A_2 = d^{\ell_{\lambda}}$, $A_3 = d^{\gamma-s(\ell_{\lambda}+1)}$.

Note that (\ref{eqn:check_condition_bias_3}) is implied by the following inequalities:
\begin{equation*}
    \begin{aligned}
        n^{-1} \cdot \left(\mathbf{I} + \mathbf{II}\right) =&~ o_{d}\left( \mathcal{M}_{2, \varphi}(\lambda) + \frac{\sigma^2}{n} \calN_{2,\varphi}(\lambda)  \right)\\
        n^{-1}
    \cdot \left(\mathbf{III} + \mathbf{IV} + \mathbf{V}\right) =&~ o_{d}\left( \mathcal{M}_{2, \varphi}(\lambda) + \frac{\sigma^2}{n} \calN_{2,\varphi}(\lambda)  \right)
    \end{aligned}
\end{equation*}

\begin{itemize}
    \item Recall that Proposition \ref{prop_e_1} implies that $u < u^{\prime}(1) < \gamma$. Hence, from Lemma D.14 in \cite{lu2024saturation}, we have that 
        \begin{align*}
           &~ n \cdot \left(\mathcal{M}_{2, \varphi}(\lambda) +  \frac{\sigma^2}{n}\mathcal{N}_{2, \varphi}(\lambda)\right)\\
           = &~
    n \cdot \Theta_{d} \left( \frac{n}{d^{\ell_{\lambda}+1}(n\lambda+1)^{2}} + \frac{d^{\ell_{\lambda}}}{n} + 
    d^{-(\ell_{\lambda}+1)s}
    +\mathbf{1}\{\tau<\infty\}\lambda^{2\tau}d^{(2\tau-\tilde{s})\ell_{\lambda} }\right)\\
    = &~ \Omega_d (A_1 + A_2 + A_3).
        \end{align*}
    
    \item We have
    $$
    \mathbf{I} = d^{-u + \ell_{\lambda}(2-s)} (\ln(d))^3 = d^{\ell_{\lambda}} d^{-u + \ell_{\lambda}(1-s)} (\ln(d))^3 \overset{s>0}{=} o_d( d^{\ell_{\lambda}})=o_d(A_2).
    $$
    \item Proposition \ref{prop_e_1} implies that $u < u^{\prime}(1) < \gamma$, hence
    $$
    \mathbf{II}=\lambda^{-1}d^{-s(\ell_{\lambda}+1)}(\ln(d))^3 = o_d( d^{\gamma-s(\ell_{\lambda}+1)} )=o_d(A_3).
    $$

    \item For $\mathbf{III}$, we analyze three possible cases:
    \begin{itemize}
        \item Suppose $u-\ell_{\lambda} < \gamma s$. Then
        $$
            \mathbf{III} = O_d(d^{u-\gamma s}) = O_d(d^{\ell_{\lambda}}d^{u-\ell_{\lambda}-\gamma s}) = o_d(d^{\ell_{\lambda}})=o_d(A_2).
        $$

        \item Suppose $u-\ell_{\lambda} \geq \gamma s$ and $s>1/2$. Then
        $$\begin{aligned}
        \mathbf{III} &= O_d(d^{2u-\ell_{\lambda}-1}d^{u-\gamma s -2u+\ell_{\lambda}+1})\\
        & = O_d(d^{2u-\ell_{\lambda}-1}d^{-2\gamma s +1} ) \\
        & \overset{\gamma \geq 1, s>1/2}{=}  o_d(d^{2u-\ell_{\lambda}-1})=o_d(A_1).
            \end{aligned}
        $$

        \item Suppose $u-\ell_{\lambda} \geq \gamma s$ and $s \leq 1/2$. We first prove by contradiction that $\ell_{\lambda}<\ell_{\gamma}$: Suppose that $\ell_{\lambda} = \ell_{\gamma}$, then from {\it Case III} in the proof of Proposition \ref{prop_e_1} with $\tau=1$  and $\tilde{p}=p = \lfloor \gamma / (s+1) \rfloor$, we have
        $$
        p:=\lfloor \gamma/(s+1)\rfloor \leq \ell_{\gamma} = \ell_{\lambda} \overset{u<u^{\prime}(1)} \leq p,
        $$
        implying that $\ell_{\lambda}=p$. Hence, from {\it Case III} in the proof of Proposition \ref{prop_e_1}  with $\tilde{p}=p$, we further have
        \begin{align*}
           &~ \gamma s \leq u-\ell_{\lambda} = u- p\\ 
           = &~
           \left\{\begin{matrix}
\frac{\gamma - p - sp}{2} \leq \frac{s}{2} < \gamma s, & p \geq 1 \text{ and } p(s+1) \leq \gamma < ps+p+s;\\
\frac{s}{2}< \gamma s, & p \geq 0 \text{ and } ps+p+s \leq \gamma < ps+p+s+1,
\end{matrix}\right.
        \end{align*}
leading to a contradiction. (Notice that $p(s+1) \leq \gamma < ps+p+s$ with $p=0$ is not allowed since $\gamma \geq 1 >s$). Therefore, we have $\ell_{\lambda}<\ell_{\gamma}$, and hence
        $$
            \mathbf{III} = O_d(d^{\gamma-s(\ell_{\lambda}+1)}d^{u-\gamma s -\gamma + s(\ell_{\lambda}+1)}) = O_d(d^{\gamma-s(\ell_{\lambda}+1)}d^{u-\gamma}) = o_d(d^{\gamma-s(\ell_{\lambda}+1)}) = o_d(A_3).
        $$
        
    \end{itemize}

    \item We have 
    $$
        \mathbf{IV} = d^{\ell_{\lambda}} d^{u-\gamma + \frac{(1-s)\ell_{\lambda}}{2} - \ell_{\lambda}} \overset{u<\gamma, s<1}{=} o_d(d^{\ell_{\lambda}}) = o_d(A_2).
    $$

    \item Finally, we have
    \begin{align*}
        \mathbf{V} =&~  d^{2u-\ell_{\lambda}-1}  d^{2u-\gamma -\frac{(1+s)(\ell_{\lambda}+1)}{2} - 2u + \ell_{\lambda} + 1} = d^{2u-\ell_{\lambda}-1} d^{-\gamma + \frac{(1-s)(\ell_{\lambda}+1)}{2}}\\
        \overset{0<s<1}{=} &~ o_d(d^{2u-\ell_{\lambda}-1}d^{-\frac{2\gamma - \ell_{\lambda} - 1)}{2}}) = o_d(d^{2u-\ell_{\lambda}-1}) = o_d(A_1).
    \end{align*}

\end{itemize}
Combining all the above steps, we obtained (\ref{eqn:check_condition_bias_3}).    
\end{proof}

\section{Learning curve}\label{append_learning_curve}

In this section, we will use Theorem \ref{thm_variance_spe}, Theorem \ref{thm_bias_spe}, Proposition \ref{prop_nips_left_learning_curve}, and Proposition \ref{prop_verify_bias_special} to determine the convergence rate of the excess risk in (13).

\subsection{Notations and restatement of results}

We first introduce some notations.

Recall that $u$ is an absolute constant given in Assumption 4, and we have $\lambda = \Theta_d\left(d^{-u}\right)$. Denote
\begin{equation}\label{eqn_order_in_bias_and_variance}
    \begin{aligned}
    &~v_1 = v_1(u) = \gamma - \tilde{\ell} - 1 - 2  \max\{\gamma - u, 0\},\\
    &~v_2 = v_2(u) =  \tilde{\ell} - \gamma,\\
    &~b_1 = b_1(u) = -(\tilde{\ell} + 1)  s,\\
    &~b_2 = b_2(u, t) = -2  t  u + (2  t - \min\{s, 2t\})  \tilde{\ell}, \quad 1 \leq t<\infty,\\
    &~v_1^{\prime} = v_1^{\prime}(u) = \gamma - \ell_{\lambda} - 1 - 2(\gamma - u),\\
    &~v_2^{\prime} = v_2^{\prime}(u) =  \ell_{\lambda} - \gamma,\\
    &~b_1^{\prime} = b_1^{\prime}(u) = -(\ell_{\lambda} + 1)  s,\\
    &~b_2^{\prime} = b_2^{\prime}(u, t) = -2  t  u + (2  t - \min\{s, 2t\})  \ell_{\lambda}, \quad 1 \leq t<\infty,\\
    &~b_3 = b_3(u) =  (2 - \min\{s, 2\})  \tilde{\ell} -2  \gamma,\\
    &~b_4 = b_4(u) = (2 - \min\{s, 2\})  \tilde{\ell} -\gamma - \tilde{\ell} - 1.
    \end{aligned}
\end{equation}
Furthermore, when $s>0$,
let $u^{\prime} = u^{\prime}(\tau)$ be the optimal rate of the regularization parameter $\lambda$ with qualification $\tau \leq \infty$ defined in \cite{lu2024saturation} (we will also describe $u^{\prime}$ in the proof of Proposition \ref{prop_e_1}). 

\begin{itemize}
    \item Let's first restate results in Theorem \ref{thm_variance_spe} and Theorem \ref{thm_bias_spe} with notations defined in (\ref{eqn_order_in_bias_and_variance}):
\begin{equation}\label{eqn_learning_curve_pre_results_1}
    \begin{aligned}
        \mathrm{var}(\hat{f}_{\lambda}) = &~ \Theta_{d, \mathbb{P}}\left( d^{v_1} + d^{v_2} \right)\\
        \mathrm{bias}^{2}(\hat{f}_{\lambda})  =  &~
    \|B_1+B_2\|_{2}^{2} + \|B_3-B_4-B_5\|_{2}^{2}\\
    \|B_1\|_{2}^{2}  =&~ 
                O_{d, \mathbb{P}}\left(d^{b_2(u, 1)}+d^{b_3}\right)\\
                \|B_2\|_{2}^{2} =&~ 
            O_{d, \mathbb{P}}\left(  d^{b_1}\right)\\
            \|B_3-B_5\|_{2}^{2} 
        =&~
        \Theta_{d, \mathbb{P}} \left( d^{b_1}\right)\\
        (\because \ell_{\gamma} + 1 > \gamma \text{ and } \ell_{\lambda}+1 > u) \quad \|B_4\|_{2}^{2} =&~ o_{d, \mathbb{P}} \left(d^{b_2(u, 1)}+d^{b_3}\right);
    \end{aligned}
\end{equation}
Moreover, if $0 < u < \gamma <1$, we have $\tilde{\ell}:=\min\{\ell_{\gamma}, \ell_{\lambda}\}=0$. 
Therefore, for any $\tau \leq \infty$, we have

\begin{equation}\label{eqn_learning_curve_pre_results_2}
    \begin{aligned}
    \|B_1\|_{2}^{2}  \geq&~ 
                O_{d, \mathbb{P}}\left( d^{b_2(u, \tau^{\prime})}  \right) + \Delta_1\\
                \|B_1\|_{2}^{2}  =&~ 
                \Omega_{d, \mathbb{P}}\left( d^{b_2(u, \tau)} \right)- \Delta_1, \quad \text{ if } \tau<\infty\\
    \|B_4\|_{2}^{2} =&~ O_{d, \mathbb{P}} \left(d^{-\gamma-1}\right) = O_{d, \mathbb{P}} \left(d^{b_4}\right),           
    \end{aligned}
\end{equation}
where $\tau^{\prime}$ is defined as in (\ref{eqn_def_of_tau_prime}), and $0 \leq \Delta_1 = O_{d, \mathbb{P}}\left( d^{b_3}\right)$.

\item 
We next restate the results in Proposition \ref{prop_nips_left_learning_curve} with notations defined in (\ref{eqn_order_in_bias_and_variance}). Suppose all assumptions and conditions in Proposition \ref{prop_nips_left_learning_curve} hold; this implies that $\tau \leq \infty$, $s>0$, $u \in (0, u^{\prime}(\tau))$, and at least one of the following conditions hold: $\text{(i) } \tau =\infty, \quad \text{(ii) }s>1/(2\tau), \quad \text{(iii) } \gamma > ((2\tau+1)s) / (2\tau(1+s))$).
Then for any $u < u^{\prime}(\tau)$, we have
\begin{equation}\label{eqn_learning_curve_pre_results_3}
    \begin{aligned}
&~E_{x, \epsilon} \left[ \left(\hat{f}_{\lambda}(x) - f_{\star}(x) \right)^{2} \right]\\
=&~
\Theta_{d, \mathbb{P}} \left( d^{v_1^{\prime}} + d^{v_2^{\prime}} + d^{b_1^{\prime}} + \mathbf{1}\{\tau<\infty\}d^{b_2^{\prime}(u, \tau)}\right)\\
(\because \text{ Proposition \ref{prop_e_1}}) \quad =&~
\Theta_{d, \mathbb{P}} \left( d^{v_1} + d^{v_2} + d^{b_1} + \mathbf{1}\{\tau<\infty\}d^{b_2(u, \tau)}\right);
    \end{aligned}
\end{equation}

\item Finally, let's restate results in Proposition \ref{prop_verify_bias_special} with notations defined in (\ref{eqn_order_in_bias_and_variance}). Suppose all assumptions and conditions in Proposition \ref{prop_verify_bias_special} hold; this implies that $\tau \leq \infty$, $s>0$, $u^{\prime}(\tau) \leq u < u^{\prime}(1)$, and $\gamma \geq 1$).
Then for any $u^{\prime}(\tau) \leq u < u^{\prime}(1)$, we have

\begin{equation}\label{eqn_learning_curve_pre_results_4}
    \begin{aligned}
        \mathrm{bias}^{2}(\hat{f}_{\lambda}) \leq &~ \Theta_{d} \left( d^{b_1} + \mathbf{1}\{\tau<\infty\}d^{b_2(u, \tau)}\right) + \Delta_2\\
        \mathrm{bias}^{2}(\hat{f}_{\lambda}) \geq &~ \Theta_{d} \left( d^{b_1} + \mathbf{1}\{\tau<\infty\}d^{b_2(u, \tau)}\right) - \Delta_2;
    \end{aligned}
\end{equation}
where $0 \leq \Delta_2 = o_{d, \mathbb{P}} \left( d^{v_1} + d^{v_2} + d^{b_1} + \mathbf{1}\{\tau<\infty\}d^{b_2(u, \tau)}\right)$

\end{itemize}

In the next two subsections, we will consider the case with $s=0$ in Appendix \ref{appendix_f2} and with $s>0$ in Appendix \ref{appendix_f3}. 




\subsection{$s=0$}\label{appendix_f2}

When $s=0$, Theorem \ref{thm_variance_spe} and Theorem \ref{thm_bias_spe} together imply the following results:

\begin{corollary}\label{coroll_learn_curve_case_1}
    Suppose Assumptions 1,2,3,4,5, and 6 hold with $\tau \leq \infty$ and $s=0$. Then, the excess risk of the spectral algorithm estimator in (13) satisfies
\begin{equation*}
    E_{x, \epsilon} \left[ \left(\hat{f}_{\lambda}(x) - f_{\star}(x) \right)^{2} \right] = \Theta_{d, \mathbb{P}} \left( 1 \right).
\end{equation*}
\end{corollary}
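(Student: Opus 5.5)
We plug $s=0$ into the decompositions already established: Theorem~\ref{thm_variance_spe} for the variance and Theorem~\ref{thm_bias_spe} for the bias. For the upper bound we show that every term is $O_{d,\mathbb P}(1)$. For the lower bound it is enough to exhibit a bias piece of order $\Theta_{d,\mathbb P}(1)$. We work with the restated exponents in (\ref{eqn_order_in_bias_and_variance}), setting $s=0$ and hence $\min\{s,2\}=0$.

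\textbf{Upper bound.} The variance is $\Theta_{d,\mathbb P}(d^{v_1}+d^{v_2})$.

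\begin{itemize}
\item Since $\tilde\ell\le \ell_\gamma\le\gamma$, we have $v_2=\tilde\ell-\gamma\le 0$.
\item If $u\ge\gamma$, then $v_1=\gamma-\tilde\ell-1=\gamma-\ell_\gamma-1<0$.
\item If $u<\gamma$, then $\tilde\ell=\ell_\lambda$ and $v_1=2u-\gamma-\ell_\lambda-1<u-\ell_\lambda-1<0$.
\end{itemize}

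So the variance is $O_{d,\mathbb P}(1)$. For the bias:

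\begin{itemize}
\item $\|B_2\|_2^2$ and $\|B_3-B_5\|_2^2$ are $O(d^{b_1})=O(1)$, because $b_1=0$.
\item $\|B_1\|_2^2=O(d^{b_2(u,1)}+d^{b_3})$ with $b_2(u,1)=2(\tilde\ell-u)$ and $b_3=2(\tilde\ell-\gamma)$. Both exponents are $\le 0$, since $\tilde\ell\le\ell_\lambda\le u$ and $\tilde\ell\le\gamma$. When $u=\infty$, $\lambda=0$ and the $b_2$ term simply disappears.
\item $\|B_4\|_2^2$ is controlled by the bound in Theorem~\ref{thm_bias_spe}(v), whose exponent is $2\tilde\ell-(\gamma+\ell_\gamma+1)<0$ when $\ell_\gamma\le\ell_\lambda$, and $2\ell_\lambda-2(\ell_\lambda+1)<0$ otherwise.
\end{itemize}

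Applying $\|a+b\|^2\le 2\|a\|^2+2\|b\|^2$ then gives $\mathrm{bias}^2=O_{d,\mathbb P}(1)$.

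\textbf{Lower bound.} Write $\mathrm{bias}^2\ge\|B_3-B_4-B_5\|_2^2$. By Theorem~\ref{thm_bias_spe}(iv), $\|B_3-B_5\|_2^2=\Theta_{d,\mathbb P}(d^{b_1})=\Theta_{d,\mathbb P}(1)$; this rests on Assumption~\ref{assump_source condition_lower_bound} with $s=0$, which forces $\sum_j\theta_{\tilde\ell+1,j}^2\ge c_0$. We just showed $\|B_4\|_2^2=o_{d,\mathbb P}(1)$. The reverse triangle inequality therefore gives $\|B_3-B_4-B_5\|_2\ge \|B_3-B_5\|_2-\|B_4\|_2=\Omega_{d,\mathbb P}(1)$. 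Combined with the upper bound, the excess risk is $\Theta_{d,\mathbb P}(1)$.

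\textbf{Main obstacle.} The step needing the most care is the exponent bookkeeping at the boundary cases. These are $u=\infty$ (where $\lambda=0$ and the $\lambda$-dependent bounds must be read with $n\lambda+1\to1$), integer $\gamma$ handled via Case (II) of Theorem~\ref{thm_bias_spe}, and $\tilde\ell=\ell_\lambda=u$ exactly (where $b_2(u,1)=0$, which is harmless since we only need $O(1)$). A second point to check is that $\|B_3-B_5\|^2=\Theta(1)$ really holds at $s=0$: $B_5$ is shown to be $o(d^{-(\tilde\ell+1)s})=o(1)$ via Eq.~(\ref{eqn:regular_para_upper}), which uses $\lambda+n^{-1}\gg d^{-\tilde\ell-1}$. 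That inequality holds because $\tilde\ell+1>\min\{u,\gamma\}$.
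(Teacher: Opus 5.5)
Your proposal is correct and is essentially the paper's own proof. With $s=0$ one has $b_1=0$ while $v_1,v_2,b_2(u,1),b_3$ and the $B_4$ exponent are all non-positive, which gives the $O_{d,\mathbb{P}}(1)$ upper bound; the lower bound then comes from $\|B_3-B_4-B_5\|_2$, using $\|B_3-B_5\|_2^2=\Theta_{d,\mathbb{P}}(1)$ and $\|B_4\|_2^2=o_{d,\mathbb{P}}(1)$. Your reverse triangle inequality on norms, $\|B_3-B_4-B_5\|_2\ge\|B_3-B_5\|_2-\|B_4\|_2$, is the right form of that step; the paper writes it with squared norms, which is not valid in general, though its conclusion is unaffected.
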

\begin{proof}
    Notice that $b_1=0$ and these four quantities are non-positive:  $v_1, v_2,  b_2(u, 1), b_3$. Hence (\ref{eqn_learning_curve_pre_results_1}) implies that 
    $$
    E_{x, \epsilon} \left[ \left(\hat{f}_{\lambda}(x) - f_{\star}(x) \right)^{2} \right] = O_{d, \mathbb{P}} \left( d^{v_1} + d^{v_2} \right) + O_{d, \mathbb{P}} \left( d^{b_1} + d^{b_2(u, 1)} + d^{b_3} \right) = O_{d, \mathbb{P}} \left( 1 \right),
    $$
    and that
    $$
    E_{x, \epsilon} \left[ \left(\hat{f}_{\lambda}(x) - f_{\star}(x) \right)^{2} \right] \geq 
    \|B_3-B_4-B_5\|_{2}^{2} \geq \|B_3-B_5\|_{2}^{2} - \|B_4\|_{2}^{2}
    = \Omega_{d, \mathbb{P}} \left( 1 \right),
    $$
    and we obtain the desired results.
\end{proof}

\subsection{$s>0$}\label{appendix_f3}

We will prove the following results.

\begin{corollary}\label{coroll_learn_curve_case_2}
Suppose Assumptions 1,2,3,4,5, and 6 hold with $s>0$. 
Denote $\tilde{s}=\min\{s, 2\tau\}$.
Assume in addition that either $\tau=\infty$, or $s>1/(2\tau)$, or $\gamma > ((2\tau+1)s) / (2\tau(1+s))$.
Then the excess risk of the spectral algorithm estimator in (13) satisfies
\begin{equation*}
\begin{aligned}
    &~E_{x, \epsilon} \left[ \left(\hat{f}_{\lambda}(x) - f_{\star}(x) \right)^{2} \right]\\
=&~ \Theta_{d, \mathbb{P}} \left( \frac{n}{d^{\tilde{\ell}+1}(n\lambda+1)^{2}} + \frac{d^{\tilde{\ell}}}{n} + 
    d^{-(\tilde{\ell}+1)s}
    +\mathbf{1}\{\tau<\infty\}\lambda^{2\tau}d^{(2\tau-\tilde{s})\tilde{\ell} }\right).
\end{aligned}
\end{equation*}
\end{corollary}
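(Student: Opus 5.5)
The plan is to split the range of $u$ at $u^{\prime}(\tau)$ and $u^{\prime}(1)$ and to combine, on each piece, the bounds already collected in \eqref{eqn_learning_curve_pre_results_1}--\eqref{eqn_learning_curve_pre_results_4}. Throughout I write the target rate as $d^{v_1}+d^{v_2}+d^{b_1}+\mathbf{1}\{\tau<\infty\}d^{b_2(u,\tau)}$. This expression agrees with the displayed rate, because $n/(d^{\tilde\ell+1}(n\lambda+1)^2)=\Theta_d(d^{v_1})$ and $d^{\tilde\ell}/n=\Theta_d(d^{v_2})$. The variance is always $\Theta_{d,\mathbb P}(d^{v_1}+d^{v_2})$ by Theorem~\ref{thm_variance_spe}, so all the work is in the bias.

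\textbf{Step 1 (over-regularized range $u<u^{\prime}(\tau)$).} Here the extra hypothesis ($\tau=\infty$, or $s>1/(2\tau)$, or $\gamma>(2\tau+1)s/(2\tau(1+s))$) is exactly the hypothesis of Proposition~\ref{prop_nips_left_learning_curve}. Together with Proposition~\ref{prop_e_1} (which gives $u^{\prime}<\gamma$, hence $\tilde\ell=\ell_\lambda$), \eqref{eqn_learning_curve_pre_results_3} yields the claim directly.

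\textbf{Step 2 (intermediate range $u^{\prime}(\tau)\le u<u^{\prime}(1)$, only relevant when $\tau>1$).} If $\gamma\ge1$, apply Proposition~\ref{prop_verify_bias_special} in the form \eqref{eqn_learning_curve_pre_results_4}; the error $\Delta_2$ is $o$ of the full rate, so adding the variance gives the claim. If $\gamma<1$, then $\tilde\ell=0$, and I would use \eqref{eqn_learning_curve_pre_results_2}. The lower bound $\Omega(d^{-2\tau u})$ on $\|B_1\|^2$ from Assumption~\ref{assume_pert_filter} handles $\tau<\infty$. The upper bound with $\tau^{\prime}$ must be checked to satisfy $d^{-2\tau^{\prime}u}=O(d^{v_1}+d^{v_2}+d^{b_1})$ when $\tau=\infty$, using the explicit form of $u^{\prime}$ and the choice of $\tau^{\prime}$ in \eqref{eqn_def_of_tau_prime}. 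The cross terms $\Delta_1=O(d^{b_3})=O(d^{-2\gamma})$ and $\|B_4\|^2=O(d^{b_4})$ are absorbed since $-2\gamma<-\gamma=v_2$. The cross term $\|B_2\|^2=O(d^{b_1})$ is harmless. The lower bound on the bias then comes from $\|B_3-B_4-B_5\|_2\ge\|B_3-B_5\|_2-\|B_4\|_2$.

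\textbf{Step 3 (under-regularized and interpolation range $u\ge u^{\prime}(1)$).} Use \eqref{eqn_learning_curve_pre_results_1}. The upper bound is
\[
O\bigl(d^{v_1}+d^{v_2}+d^{b_1}+d^{b_2(u,1)}+d^{b_3}\bigr),
\]
and the lower bound is $\Omega(d^{v_1}+d^{v_2}+d^{b_1})$, coming from the variance and from $\|B_3-B_5\|^2-\|B_4\|^2$ with $\|B_4\|^2=o(d^{b_2(u,1)}+d^{b_3})$. It therefore suffices to prove two exponent inequalities:
\[
\max\{v_1,v_2,b_1\}\ge\max\{b_2(u,1),b_3\},
\qquad
\mathbf 1\{\tau<\infty\}\,b_2(u,\tau)\le\max\{v_1,v_2,b_1\}\ \text{ for } u\ge u^{\prime}(1).
\]
These are the statements of Propositions~\ref{prop_e_2}--\ref{prop_e_3}. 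For $b_3$ the first inequality is easy: $b_3=(2-\min\{s,2\})\tilde\ell-2\gamma$ is at most $\tilde\ell-\gamma=v_2$ whenever $\tilde\ell\le\gamma$. For $b_2(u,1)$ with $u\ge u^{\prime}(1)$, I would argue that $u^{\prime}(1)$ is precisely where the KRR bias term $b_2(\cdot,1)$ falls below the variance and $b_1$. Since $b_2(u,1)$ is decreasing in $u$ while $b_1,v_2$ are piecewise constant, the inequality propagates to larger $u$. The only care needed is at the jumps of $\tilde\ell$ (integer $u$ below $\ell_\gamma$), where $v_1$ also jumps. The second inequality follows since $b_2(u,\tau)\le b_2(u,1)$ whenever $\lambda d^{\tilde\ell}\lesssim1$.

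The main obstacle is this exponent bookkeeping in Step~3 and in Step~2 for $\gamma<1$. It requires a case analysis over the positions of $u$ relative to the integers $\ell_\lambda$, $\ell_\gamma$, $p=\lfloor\gamma/(s+1)\rfloor$, and the explicit formula for $u^{\prime}(\tau)$. I would first write $u^{\prime}(\tau)$ explicitly in each case, as in the proof of Proposition~\ref{prop_e_1}, and then verify the inequalities piece by piece.
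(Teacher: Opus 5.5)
Your proposal is correct and follows the paper's proof essentially step for step. It uses the same split of the $u$-axis at $u^{\prime}(\tau)$ and $u^{\prime}(1)$, with Proposition~\ref{prop_nips_left_learning_curve} below $u^{\prime}(\tau)$, then Proposition~\ref{prop_verify_bias_special} (for $\gamma\ge1$) or the $\tilde\ell=0$ bounds with $\tau^{\prime}$ (for $\gamma<1$) on $[u^{\prime}(\tau),u^{\prime}(1))$, and Propositions~\ref{prop_e_2}--\ref{prop_e_3} beyond. The differences are cosmetic: you merge the paper's cases (ii) and (iii) via the pointwise bound $b_2(u,\tau)\le b_2(u,1)$ (valid since $u\ge\tilde\ell$ and $\tau\ge1$) rather than applying Proposition~\ref{prop_e_3} with $t=\tau$, and the lower bound on $\|B_1\|_2^2$ you invoke in Step~2 is unnecessary there, because $u\ge u^{\prime}(\tau)$ already makes $d^{b_2(u,\tau)}$ subdominant by Proposition~\ref{prop_e_3}.
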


\begin{corollary}\label{coroll_learn_curve_case_2_complex}
    Suppose Assumptions 1,2,3,4,5, and 6 hold with $s>0$. 
Denote $\tilde{s}=\min\{s, 2\tau\}$.
Assume in addition that both $\tau<\infty$, and $s \leq 1/(2\tau)$, and $\gamma \leq ((2\tau+1)s) / (2\tau(1+s))$.
Then, the excess risk of the spectral algorithm estimator in (13) satisfies
\begin{equation*}
\begin{aligned}
    &~E_{x, \epsilon} \left[ \left(\hat{f}_{\lambda}(x) - f_{\star}(x) \right)^{2} \right]\\
=&~ \Theta_{d, \mathbb{P}} \left( \frac{n}{d^{\tilde{\ell}+1}(n\lambda+1)^{2}} + \frac{d^{\tilde{\ell}}}{n} + 
    d^{-(\tilde{\ell}+1)s}
    +\lambda^{2\tau}d^{(2\tau-\tilde{s})\tilde{\ell} }\right).
\end{aligned}
\end{equation*}
\end{corollary}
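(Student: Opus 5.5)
The plan is to combine the variance bound of Theorem~\ref{thm_variance_spe} with the bias bounds of Theorem~\ref{thm_bias_spe}. The only real work is the exponent bookkeeping in \eqref{eqn_order_in_bias_and_variance}. The hypotheses here, $\tau<\infty$, $s\le 1/(2\tau)$ and $\gamma\le (2\tau+1)s/(2\tau(1+s))$, force $\gamma\le (2\tau+1)s/(2\tau(1+s))<1$. Indeed, $s\le 1/(2\tau)$ gives $(2\tau+1)s\le 1+s<2\tau(1+s)$ whenever $\tau\ge1$. So $\ell_\gamma=0$, hence $\tilde\ell=0$, and $\tilde s=s$ because $s<2\tau$. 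The claimed rate therefore collapses to
\[
d^{v_1}+d^{-\gamma}+d^{-s}+d^{-2\tau u},\qquad v_1=\gamma-1-2\max\{\gamma-u,0\}.
\]
The variance gives exactly $\Theta_{d,\mathbb P}(d^{v_1}+d^{-\gamma})$ by Theorem~\ref{thm_variance_spe}. It remains to show
\[
\mathrm{bias}^2=\Theta_{d,\mathbb P}\big(d^{-s}+d^{-2\tau u}\big)+o_{d,\mathbb P}(\text{total}).
\]

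For the upper bound I use $\mathrm{bias}^2\le 2\|B_1\|^2+2\|B_2\|^2+3\|B_3-B_5\|^2+3\|B_4\|^2$. By Theorem~\ref{thm_bias_spe}(iii)--(iv), $\|B_2\|^2$ and $\|B_3-B_5\|^2$ are $O(d^{-s})$, and by (v), $\|B_4\|^2=O(d^{-\gamma-1})$, which is dominated by the variance term $d^{-\gamma}$. For $B_1$ I split on $u$. If $u\ge\gamma$, part (i) gives $O(d^{-2u}+d^{-2\gamma})$; both are $O(d^{-\gamma})$ since $\gamma<1$ gives $2\gamma>\gamma$, and the $d^{-2\tau u}$ term is irrelevant there. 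If $0<u<\gamma<1$, part (ii) gives $O(d^{-2\tau' u})+\Delta_1=O(d^{-2\tau u})+O(d^{-2\gamma})$, because $\tau'=\tau$ for finite $\tau$.

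For the lower bound, write $\mathrm{bias}^2\ge \|B_3-B_4-B_5\|^2\ge \tfrac12\|B_3-B_5\|^2-\|B_4\|^2$. This is $\Omega(d^{-s})-O(d^{-\gamma-1})$, and the error is absorbed into the total once the variance is added. For the $d^{-2\tau u}$ term with $u<\gamma$, I use $\|B_1+B_2\|^2\ge \tfrac12\|B_1\|^2-\|B_2\|^2$ together with Theorem~\ref{thm_bias_spe}(ii), which gives $\|B_1\|^2\ge\Omega(d^{-2\tau u})-\Delta_1$. The $d^{-2\tau u}$ term only matters when it dominates $d^{-s}$, $d^{-\gamma}$ and $d^{v_1}$. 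In that case $\Delta_1=O(d^{-2\gamma})=o(d^{-\gamma})$ is negligible.

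The main obstacle is the subtraction of $\|B_2\|^2=O(d^{-s})$. Once $d^{-2\tau u}\gg d^{-s}$, this is $o(\|B_1\|^2)$, so it is harmless. When the two terms are comparable, the total is still $\Theta(d^{-s})$ from $B_3-B_5$ alone, and the bound survives. To close the argument I take the maximum over these cases: in each regime of $u$ the dominant exponent among $\{v_1,-\gamma,-s,-2\tau u\}$ is matched by a lower bound, and every error term ($\Delta_1$, $\|B_4\|^2$, the cross term) is of strictly smaller order than the total.
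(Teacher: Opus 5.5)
Your proposal is correct and follows the paper's proof. It combines Theorem~\ref{thm_variance_spe} with the parts of Theorem~\ref{thm_bias_spe}, splits on $u<\gamma$ versus $u\ge\gamma$, and gets the lower bound by checking which of the exponents $v_1,-\gamma,-s,-2\tau u$ dominates; your shortcuts are valid, namely using $\tilde\ell=0$ to see directly that $d^{-2\tau u}\le d^{-\gamma}$ when $u\ge\gamma$ (instead of invoking Propositions~\ref{prop_e_1}--\ref{prop_e_3}) and absorbing $\|B_4\|_2^2=O_{d,\mathbb P}(d^{-\gamma-1})$ into the variance (instead of splitting on $b_1$ versus $b_4$).
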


\begin{proof}[Proof of Corollary \ref{coroll_learn_curve_case_2}]
Let's separate the proof into four parts:
\begin{itemize}
    \item [(i)] $u < u^{\prime}(\tau)$; 
    \item [(ii)] $u^{\prime}(\tau) \geq u^{\prime}(1)$ and $u \geq u^{\prime}(\tau)$;
    \item [(iii)] $u^{\prime}(\tau) < u^{\prime}(1)$ and $u \geq u^{\prime}(1)$; 
    \item [(iv)] $u^{\prime}(\tau) < u^{\prime}(1)$ and $u \in [u^{\prime}(\tau), u^{\prime}(1))$.
\end{itemize}

\noindent {\bf Case (i). } Suppose $u < u^{\prime}(\tau)$. (\ref{eqn_learning_curve_pre_results_3}) implies
\begin{align*}
&~E_{x, \epsilon} \left[ \left(\hat{f}_{\lambda}(x) - f_{\star}(x) \right)^{2} \right]
=
\Theta_{d, \mathbb{P}} \left( d^{v_1} + d^{v_2} + d^{b_1} + \mathbf{1}\{\tau<\infty\}d^{b_2(u, \tau)}\right).
\end{align*}

\noindent {\bf Case (ii). } Suppose $u^{\prime}(\tau) \geq u^{\prime}(1)$. For any $u \geq u^{\prime}(\tau) \geq u^{\prime}(1)$, from Proposition \ref{prop_e_2} and Proposition \ref{prop_e_3}, we have
\begin{equation*}
    \max\{v_1(u), v_2(u), b_1(u)\} \geq \max\{\mathbf{1}\{\tau<\infty\} b_2(u, \tau), b_2(u, 1), b_3(u)\},
\end{equation*}
Therefore from (\ref{eqn_learning_curve_pre_results_1}) we have
\begin{itemize}
    \item Upper bound:
    \begin{align*}
E_{x, \epsilon} \left[ \left(\hat{f}_{\lambda}(x) - f_{\star}(x) \right)^{2} \right]
=
O_{d, \mathbb{P}} \left( d^{v_1} + d^{v_2} + d^{b_1}\right)
=
O_{d, \mathbb{P}} \left( d^{v_1} + d^{v_2} + d^{b_1} + \mathbf{1}\{\tau<\infty\}d^{b_2(u, \tau)}\right);
\end{align*}

\item Lower bound:
\begin{align*}
&~ E_{x, \epsilon} \left[ \left(\hat{f}_{\lambda}(x) - f_{\star}(x) \right)^{2} \right]
\geq  \Omega_{d, \mathbb{P}} \left( d^{v_1} + d^{v_2} \right) + \|B_3-B_4-B_5\|_{2}^{2}\\
= &~
\Omega_{d, \mathbb{P}} \left( d^{v_1} + d^{v_2} + d^{b_1}\right)
=
\Omega_{d, \mathbb{P}} \left( d^{v_1} + d^{v_2} + d^{b_1} + \mathbf{1}\{\tau<\infty\}d^{b_2(u, \tau)}\right).
\end{align*}

\end{itemize}

\noindent {\bf Case (iii). } Suppose $u^{\prime}(\tau) < u^{\prime}(1)$. For any $u \geq u^{\prime}(1) \geq u^{\prime}(\tau)$, from Proposition \ref{prop_e_2} and Proposition \ref{prop_e_3}, we have
\begin{equation*}
    \max\{v_1(u), v_2(u), b_1(u)\} \geq \max\{\mathbf{1}\{\tau<\infty\}b_2(u, \tau), b_2(u, 1), b_3(u)\},
\end{equation*}
Therefore, as in {\bf Case (ii)}, we can show that
\begin{align*}
E_{x, \epsilon} \left[ \left(\hat{f}_{\lambda}(x) - f_{\star}(x) \right)^{2} \right]
=
\Theta_{d, \mathbb{P}} \left( d^{v_1} + d^{v_2} + d^{b_1}\right)
=
\Theta_{d, \mathbb{P}} \left( d^{v_1} + d^{v_2} + d^{b_1} + \mathbf{1}\{\tau<\infty\}d^{b_2(u, \tau)}\right).
\end{align*}

\noindent {\bf Case (iv). } Suppose $u^{\prime}(\tau) < u^{\prime}(1)$ and $u \in [u^{\prime}(\tau), u^{\prime}(1))$. From Proposition \ref{prop_e_2} and Proposition \ref{prop_e_3}, we have
\begin{equation}
\label{eqn_tau_finite_b234_bound}
    \max\{v_1(u), v_2(u), b_1(u)\} \geq \max\{\mathbf{1}\{\tau<\infty\}b_2(u, \tau), b_3(u), b_4(u)\}.
\end{equation}
Therefore, 
\begin{itemize}
    \item[(a)] When $\gamma \geq 1$, from the variance bound in (\ref{eqn_learning_curve_pre_results_1}) and the bias bound in (\ref{eqn_learning_curve_pre_results_4}), we have
    \begin{align*}
E_{x, \epsilon} \left[ \left(\hat{f}_{\lambda}(x) - f_{\star}(x) \right)^{2} \right]
=
\Theta_{d, \mathbb{P}} \left( d^{v_1} + d^{v_2} + d^{b_1} + \mathbf{1}\{\tau<\infty\}d^{b_2(u, \tau)}\right);
\end{align*}

    \item[(b)] When $\gamma<1$ and $\tau<\infty$, Proposition \ref{prop_e_1} with $t=1$ further implies $0<u<u^{\prime}(1) <\gamma<1$. Hence from (\ref{eqn_learning_curve_pre_results_1}) and (\ref{eqn_learning_curve_pre_results_2}) we have
    \begin{itemize}
    \item Upper bound:
    \begin{align*}
E_{x, \epsilon} \left[ \left(\hat{f}_{\lambda}(x) - f_{\star}(x) \right)^{2} \right]
\leq &~ O_{d, \mathbb{P}} \left( d^{v_1} + d^{v_2}\right) + \|B_1\|_{2}^{2} + \|B_2\|_{2}^{2} + \|B_3 - B_5\|_{2}^{2} + \|B_4\|_{2}^{2}\\
= &~
O_{d, \mathbb{P}} \left( d^{v_1} + d^{v_2}\right) + O_{d, \mathbb{P}}\left( d^{b_2(u, \tau)}  + d^{b_3} + d^{b_1} + d^{b_4} \right)\\
(\because \text{ (\ref{eqn_tau_finite_b234_bound})}) \quad = &~ O_{d, \mathbb{P}} \left( d^{v_1} + d^{v_2} + d^{b_1} + d^{b_2(u, \tau)}\right);
\end{align*}

\item Lower bound:
    \begin{itemize}
        \item If $b_1>b_4$, then
\begin{align*}
&~E_{x, \epsilon} \left[ \left(\hat{f}_{\lambda}(x) - f_{\star}(x) \right)^{2} \right]
\geq  \Omega_{d, \mathbb{P}} \left( d^{v_1} + d^{v_2} \right) + \|B_3-B_4-B_5\|_{2}^{2}\\
=&~
\Omega_{d, \mathbb{P}} \left( d^{v_1} + d^{v_2} + d^{b_1}\right)=\Omega_{d, \mathbb{P}} \left( d^{v_1} + d^{v_2} + d^{b_1} + d^{b_2(u, \tau)}\right);
\end{align*}

\item If $b_1 \leq b_4$, since Proposition \ref{prop_e_2} implies that $b_4<v_2$, hence (\ref{eqn_tau_finite_b234_bound}) implies
\begin{equation*}
    \max\{v_1(u), v_2(u)\} \geq \max\{b_1(u), b_2(u, \tau)\}.
\end{equation*}
Therefore, 
\begin{align*}
E_{x, \epsilon} \left[ \left(\hat{f}_{\lambda}(x) - f_{\star}(x) \right)^{2} \right]
\geq
\Omega_{d, \mathbb{P}} \left( d^{v_1} + d^{v_2}\right)
=
\Omega_{d, \mathbb{P}} \left( d^{v_1} + d^{v_2} + d^{b_1} + d^{b_2(u, \tau)}\right),
\end{align*}

    \end{itemize}

\end{itemize}

    \item[(c)] Finally, suppose $\gamma<1$ and $\tau = \infty$. Recall that from (\ref{eqn_def_of_tau_prime}), we have
    $$
        \tau^{\prime}:= 
        \left\{\begin{matrix}
s, &  s \geq 1;\\
1, &  s \in [\frac{1}{2}, 1);\\
(2s)^{-1}, &  0 < s < \frac{1}{2}.
\end{matrix}\right.
        $$

 Therefore, from Proposition \ref{prop_e_1} we have
    \begin{align*}
    u^{\prime}(\infty) =
    \left\{\begin{matrix}
    u^{\prime}(s), & \quad s \geq 1;\\
\gamma/2 = u^{\prime}(1) & \quad s \in [\frac{1}{2}, 1) \text{ and } \gamma < s;\\
\min\{\gamma (1+s)-s, \gamma / 2 \} \geq \frac{s}{2} = u^{\prime}(1) & \quad s \in [\frac{1}{2}, 1) \text{ and } s \leq \gamma < 1;\\
\gamma s = \frac{\gamma}{2(2s)^{-1}} = u^{\prime}((2s)^{-1}) & \quad 0 < s < \frac{1}{2} \text{ and } \gamma < s;\\
\min\{\gamma (1+s)-s, \gamma / 2 \} \geq s^2 = \frac{s}{2(2s)^{-1}} = u^{\prime}((2s)^{-1}) & \quad 0 < s < \frac{1}{2} \text{ and } s \leq \gamma < 1;
\end{matrix}\right.
\end{align*}
implying that $u^{\prime}(\infty) \geq u^{\prime}(\tau^{\prime})$. 
Furthermore, the assumption of {\bf Case (iv)} implies $u \geq u^{\prime}(\infty)\geq u^{\prime}(\tau^{\prime})$. 
Therefore, from Proposition \ref{prop_e_2} and Proposition \ref{prop_e_3}, we have
\begin{equation*}
    \max\{v_1(u), v_2(u), b_1(u)\} \geq \max\{b_2(u, \tau^{\prime}), b_3(u), b_4(u)\}.
\end{equation*}
From the assumption of {\bf Case (iv)}, we have
$u<u^{\prime}(1)$. From Proposition \ref{prop_e_1} we have $0<u<u^{\prime}(1) <\gamma<1$.

    Hence from (\ref{eqn_learning_curve_pre_results_1}) and (\ref{eqn_learning_curve_pre_results_2}) we have
    \begin{itemize}
    \item Upper bound:
    \begin{align*}
E_{x, \epsilon} \left[ \left(\hat{f}_{\lambda}(x) - f_{\star}(x) \right)^{2} \right]
\leq &~ O_{d, \mathbb{P}} \left( d^{v_1} + d^{v_2}\right) + \|B_1\|_{2}^{2} + \|B_2\|_{2}^{2} + \|B_3 - B_5\|_{2}^{2} + \|B_4\|_{2}^{2}\\
= &~
O_{d, \mathbb{P}} \left( d^{v_1} + d^{v_2}\right) + O_{d, \mathbb{P}}\left( d^{b_2(u, \tau^{\prime})}  + d^{b_3} + d^{b_1} + d^{b_4} \right)\\
= &~ O_{d, \mathbb{P}} \left( d^{v_1} + d^{v_2} + d^{b_1}\right);
\end{align*}

\item Lower bound:
    \begin{itemize}
        \item If $b_1>b_4$, then
\begin{align*}
&~E_{x, \epsilon} \left[ \left(\hat{f}_{\lambda}(x) - f_{\star}(x) \right)^{2} \right]
\geq  \Omega_{d, \mathbb{P}} \left( d^{v_1} + d^{v_2} \right) + \|B_3-B_4-B_5\|_{2}^{2}\\
=&~
\Omega_{d, \mathbb{P}} \left( d^{v_1} + d^{v_2} + d^{b_1}\right);
\end{align*}

\item If $b_1 \leq b_4$, since Proposition \ref{prop_e_2} implies that $b_1 \leq b_4<v_2$, hence we have
\begin{align*}
E_{x, \epsilon} \left[ \left(\hat{f}_{\lambda}(x) - f_{\star}(x) \right)^{2} \right]
\geq
\Omega_{d, \mathbb{P}} \left( d^{v_1} + d^{v_2}\right)
=
\Omega_{d, \mathbb{P}} \left( d^{v_1} + d^{v_2} + d^{b_1}\right).
\end{align*}

    \end{itemize}

\end{itemize}

\end{itemize}
Combining all above cases, we obtain the desired results.
\end{proof}

\vspace{10pt}

\begin{proof}[Proof of Corollary \ref{coroll_learn_curve_case_2_complex}]
Let's separate the proof into two parts:
\begin{itemize}
    \item [(i)] $u < \gamma$; 
    \item [(ii)] $u \geq \gamma$.
\end{itemize}

\noindent {\bf Case (i). } Suppose $u < \gamma$. Since $s \leq 1/(2\tau)$ and
$$
\gamma \leq \frac{2\tau + 1}{2\tau} \frac{s}{s+1} \leq \frac{2\tau + 1}{2\tau} \frac{\frac{1}{2\tau}}{\frac{1}{2\tau}+1} < 1,
$$
hence we have $0 < u < \gamma <1$. Therefore, 
from (\ref{eqn_learning_curve_pre_results_1}) and (\ref{eqn_learning_curve_pre_results_2})
we have 
\begin{itemize}
    \item Upper bound:
    \begin{align*}
E_{x, \epsilon} \left[ \left(\hat{f}_{\lambda}(x) - f_{\star}(x) \right)^{2} \right]
\leq &~ O_{d, \mathbb{P}} \left( d^{v_1} + d^{v_2}\right) + \|B_1\|_{2}^{2} + \|B_2\|_{2}^{2} + \|B_3 - B_5\|_{2}^{2} + \|B_4\|_{2}^{2}\\
= &~
O_{d, \mathbb{P}} \left( d^{v_1} + d^{v_2}\right) + O_{d, \mathbb{P}}\left( d^{b_2(u, \tau)}  + d^{b_3} + d^{b_1} + d^{b_4} \right)\\
(\because \text{ Proposition \ref{prop_e_2}}) \quad = &~ O_{d, \mathbb{P}} \left( d^{v_1} + d^{v_2} + d^{b_1} + d^{b_2(u, \tau)}\right);
\end{align*}

\item Lower bound:
    \begin{itemize}
        \item If $b_1>b_4$ and $b_1 \geq b_2(u, \tau)$, then
\begin{align*}
&~ E_{x, \epsilon} \left[ \left(\hat{f}_{\lambda}(x) - f_{\star}(x) \right)^{2} \right]
\geq  \Omega_{d, \mathbb{P}} \left( d^{v_1} + d^{v_2} \right) + \|B_3-B_4-B_5\|_{2}^{2}\\
\overset{b_1 > b_4}{=} &~ \Omega_{d, \mathbb{P}} \left( d^{v_1} + d^{v_2} + d^{b_1}\right)
=
\Omega_{d, \mathbb{P}} \left( d^{v_1} + d^{v_2} + d^{b_1} + d^{b_2(u, \tau)}\right);
\end{align*}

\item If $b_1 \leq b_4$ and $b_1 \geq b_2(u, \tau)$, since Proposition \ref{prop_e_2} implies that $b_4<v_2$, we have $\max\{b_1, b_2(u, \tau)\}<v_2$, hence
\begin{align*}
E_{x, \epsilon} \left[ \left(\hat{f}_{\lambda}(x) - f_{\star}(x) \right)^{2} \right]
\geq
\Omega_{d, \mathbb{P}} \left( d^{v_1} + d^{v_2}\right)
=
\Omega_{d, \mathbb{P}} \left( d^{v_1} + d^{v_2} + d^{b_1} + d^{b_2(u, \tau)}\right),
\end{align*}

\item If $b_2(u, \tau)>b_3$ and $b_1 < b_2(u, \tau)$, then
\begin{align*}
&~ E_{x, \epsilon} \left[ \left(\hat{f}_{\lambda}(x) - f_{\star}(x) \right)^{2} \right]
\geq   \Omega_{d, \mathbb{P}} \left( d^{v_1} + d^{v_2} \right) + \|B_1+B_2\|_{2}^{2}\\
\overset{b_2(u, \tau)>\max\{b_1, b_3\}}{=} &~
\Omega_{d, \mathbb{P}} \left( d^{v_1} + d^{v_2} + d^{b_2}(u, \tau)\right)
=
\Omega_{d, \mathbb{P}} \left( d^{v_1} + d^{v_2} + d^{b_1} + d^{b_2(u, \tau)}\right),
\end{align*}

\item If $b_2(u, \tau) \leq b_3$ and $b_1 < b_2(u, \tau)$, since Proposition \ref{prop_e_2} implies that $b_3 \leq v_2$, we have $\max\{b_1, b_2(u, \tau)\} \leq v_2$, hence
\begin{align*}
E_{x, \epsilon} \left[ \left(\hat{f}_{\lambda}(x) - f_{\star}(x) \right)^{2} \right]
\geq  \Omega_{d, \mathbb{P}} \left( d^{v_1} + d^{v_2} \right) 
=
\Omega_{d, \mathbb{P}} \left( d^{v_1} + d^{v_2} + d^{b_1} + d^{b_2(u, \tau)}\right).
\end{align*}
    \end{itemize}

\end{itemize}

\noindent {\bf Case (ii). } Suppose $u \geq \gamma$. From Proposition \ref{prop_e_1}  with $t=1$ and $t=\tau$, we have $\gamma > u^{\prime}(1)$ and $\gamma > u^{\prime}(\tau)$, hence from Proposition \ref{prop_e_2} and Proposition \ref{prop_e_3}, we have
\begin{equation*}
    \max\{v_1(u), v_2(u), b_1(u)\} \geq \max\{b_2(u, 1), b_2(u, \tau), b_3(u)\},
\end{equation*}
Therefore, from (\ref{eqn_learning_curve_pre_results_1}) we have
\begin{itemize}
    \item Upper bound:
    \begin{align*}
E_{x, \epsilon} \left[ \left(\hat{f}_{\lambda}(x) - f_{\star}(x) \right)^{2} \right]
=
O_{d, \mathbb{P}} \left( d^{v_1} + d^{v_2} + d^{b_1}\right)
=
O_{d, \mathbb{P}} \left( d^{v_1} + d^{v_2} + d^{b_1} + d^{b_2(u, \tau)}\right);
\end{align*}

\item Lower bound:
\begin{align*}
&~ E_{x, \epsilon} \left[ \left(\hat{f}_{\lambda}(x) - f_{\star}(x) \right)^{2} \right]
\geq  \Omega_{d, \mathbb{P}} \left( d^{v_1} + d^{v_2} \right) + \|B_3-B_4-B_5\|_{2}^{2}\\
=&~
\Omega_{d, \mathbb{P}} \left( d^{v_1} + d^{v_2} + d^{b_1}\right)
=
\Omega_{d, \mathbb{P}} \left( d^{v_1} + d^{v_2} + d^{b_1} + d^{b_2(u, \tau)}\right).
\end{align*}

\end{itemize}
Putting the two bounds together yields the desired result. 
\end{proof}

\subsection{Quantities calculation of learning curve results}


\begin{proposition}\label{prop_e_1}
    Let $u^{\prime} = u^{\prime}(t)$ be the optimal rate of the regularization parameter $\lambda$ with qualification $t \in [1, \infty]$ and coefficient of source condition $s>0$ defined in \cite{lu2024saturation} (ignoring the logarithm term). Then, we have $u^{\prime} < \gamma$ and $\tilde{p} \leq u^{\prime} < \tilde{p}+1$, where $\tilde{s} := \min\{s, 2t\}$ and $\tilde{p}:=\lfloor \gamma/(\tilde{s}+1)\rfloor$.\\
    As a direct corollary, for any $u \leq u^{\prime}$, we have $\ell_{\lambda} = \lfloor u \rfloor \leq \ell_{\gamma}$, hence
$$
v_1^{\prime} = v_1, ~ v_2^{\prime} = v_2, ~
b_1^{\prime} = b_1, ~ b_2^{\prime}(u, t) = b_2(u, t).
$$
\end{proposition}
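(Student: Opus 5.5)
The plan is to read off $u^{\prime}(t)$ from its definition in \cite{lu2024saturation}. There it is the (smallest) minimizer of the over-regularized rate exponent
\[
E(u):=\max\{v_1^{\prime}(u),\,v_2^{\prime}(u),\,b_1^{\prime}(u),\,b_2^{\prime}(u,t)\},
\]
with the quantities as in \eqref{eqn_order_in_bias_and_variance}. I will then locate this minimizer by elementary comparisons of linear functions of $u$, using only the defining inequalities of $\tilde p$:
\[
\tilde p(\tilde s+1)\le \gamma<(\tilde p+1)(\tilde s+1),
\]
together with $\tilde s\le s$ and $\tilde s\le 2t$.

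First I analyse the shape of $E$. On each interval $[\ell,\ell+1)$ with $\ell_\lambda=\ell$ fixed:
\begin{itemize}
\item the variance exponents $v_1^{\prime}=2u-\gamma-\ell-1$ (increasing) and $v_2^{\prime}=\ell-\gamma$ (constant) give a non-decreasing variance part;
\item the bias exponents $b_2^{\prime}=-2tu+(2t-\tilde s)\ell$ (decreasing) and $b_1^{\prime}=-(\ell+1)s$ (constant) give a non-increasing bias part.
\end{itemize}
Across the integer jumps the variance part jumps up and the bias part jumps down. So, as in the fixed-block picture, the smallest minimizer sits at the first $u$ where the variance part overtakes the bias part.

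Next I show that this crossing happens in $[\tilde p,\tilde p+1)$.

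At $u=\tilde p$ the bias part is at least the variance part:
\begin{itemize}
\item $b_2^{\prime}(\tilde p,t)=-\tilde s\tilde p\ge \tilde p-\gamma=v_2^{\prime}$, which is exactly $\gamma\ge\tilde p(\tilde s+1)$;
\item $v_1^{\prime}(\tilde p)=\tilde p-\gamma-1$ is smaller still.
\end{itemize}
The same computation on earlier blocks $\ell<\tilde p$ shows the bias part still dominates there, so no minimizer lies below $\tilde p$; this gives $u^{\prime}\ge\tilde p$.

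As $u\uparrow\tilde p+1$ the variance part reaches $v_1^{\prime}=\tilde p+1-\gamma$, and it exceeds both bias exponents:
\begin{itemize}
\item $-(\tilde p+1)s\le-(\tilde p+1)\tilde s$, and $\tilde p+1-\gamma>-(\tilde p+1)\tilde s$ is exactly $\gamma<(\tilde p+1)(\tilde s+1)$;
\item $-2t-\tilde s\tilde p\le-(\tilde p+1)\tilde s$, using $\tilde s\le 2t$.
\end{itemize}
Hence $u^{\prime}<\tilde p+1$.

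For $u^{\prime}<\gamma$ I split into two cases.
\begin{itemize}
\item If $\tilde p+1\le\gamma$, the previous step already gives $u^{\prime}<\tilde p+1\le\gamma$.
\item Otherwise $\tilde p=\ell_\gamma$ and $\gamma\in(\tilde p,\tilde p+1)$. Here I compare the exponents at $u=\gamma$. The term $b_2^{\prime}(\gamma,t)<v_2^{\prime}$ reduces to $(\gamma-\tilde p)(1-2t)<\tilde s\tilde p$, which holds since $t\ge1$ and $\gamma>\tilde p$. So before $u=\gamma$ the decreasing bias term has already fallen below the variance floor $v_2^{\prime}$ or to the constant $b_1^{\prime}$. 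In either situation $E$ is flat from that point on, and the smallest minimizer is strictly less than $\gamma$.
\end{itemize}
The corollary is then immediate. For $u\le u^{\prime}<\gamma$ we have $\ell_\lambda\le\ell_\gamma$, so $\tilde\ell=\ell_\lambda$, and all primed exponents coincide with the unprimed ones.

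The main obstacle is this last case together with the tie-breaking convention. When $b_1^{\prime}$ dominates, $E$ is constant on an interval, and the claim $u^{\prime}<\gamma$ depends on \cite{lu2024saturation} choosing the left endpoint of the minimizing set. I would handle it by writing out their explicit case formula for $u^{\prime}$, namely $\gamma/2$-type, $(\gamma+\tilde p+1)/2-\dots$-type and $\tilde p+\tilde s/2$-type branches, and checking each branch against the two inequalities on $\tilde p$. The resulting bound is $u^{\prime}\le\max\{\tilde p+\tilde s/2,\ (\gamma+\tilde p-\tilde s\tilde p)/2\}$, and each branch is strictly below both $\gamma$ and $\tilde p+1$ under the branch conditions.
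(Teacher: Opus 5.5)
Your proposal has a genuine gap. The core argument identifies $u^{\prime}(t)$ with the smallest minimizer of $E(u)$, but that is not the object in the statement, and the identification cannot be repaired by arguing about all minimizers.

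\textbf{The object is different, and the shape argument does not apply to it.} In \cite{lu2024saturation}, $u^{\prime}(t)$ is given by explicit branch formulas (Corollaries D.15--17, listed in the paper's proof). These pick different points of a minimizing set that is usually a nondegenerate interval:
\begin{itemize}
\item for $1\le s\le t<\infty$ and $p(s+1)\le\gamma<ps+p+s$, the formula gives $u^{\prime}=p+\tfrac12$, the \emph{right} end of the flat minimizing stretch $[p+\frac{\gamma-p-ps}{2t},\,p+\tfrac12]$;
\item for $t=\infty$, $s<1$, it gives interior points such as $p+s/2$ or $\gamma\min\{s,\tfrac12\}$.
\end{itemize}
It is also false that every minimizer satisfies the conclusion:
\begin{itemize}
\item For $s=t=1$, $\gamma=0.3$, on $[0,1)$ you have $E(u)=\max\{2u-1.3,-0.3,-2u\}$. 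It equals its minimum $-0.3$ on $[0.15,0.5]$, which extends past $\gamma$.
\item For $s=t=1$, $\gamma=2$ (so $\tilde p=1$), $E\equiv-1$ on $[\tfrac12,\tfrac32]$, which starts below $\tilde p$.
\end{itemize}
The second example also refutes your step ``the bias part still dominates on earlier blocks, so no minimizer lies below $\tilde p$,'' even for the smallest minimizer. The bias part is only non-increasing, not strictly decreasing. When $\gamma=\tilde p(\tilde s+1)$ with $\tilde s=s<2t$, its plateau $b_1^{\prime}=-\tilde p s$ on block $\tilde p-1$ equals the global minimum $v_2^{\prime}(\tilde p)=\tilde p-\gamma$. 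So the ``first crossing'' principle fails exactly where the bias part has a plateau. Separately, for $t=\infty$ your $E$ should not contain $b_2^{\prime}$ at all (it is defined only for $t<\infty$ and enters with the factor $\mathbf{1}\{\tau<\infty\}$). The comparison $b_2^{\prime}(\tilde p,t)=-\tilde s\tilde p\ge v_2^{\prime}$ you use for the lower bound is therefore unavailable there.

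\textbf{The proof must go through your fallback, which you do not carry out.} The fallback is precisely what the paper does: check each explicit branch against its own defining conditions. The summary bound $u^{\prime}\le\max\{\tilde p+\tilde s/2,\,(\gamma+\tilde p-\tilde s\tilde p)/2\}$ does hold branch by branch, but it is too loose:
\begin{itemize}
\item It implies neither $u^{\prime}<\tilde p+1$ nor $u^{\prime}<\gamma$. For example, with $s=t=3$ and $\gamma=0.5$ it gives $1.5$, while the actual value is $u^{\prime}=0.25$.
\item It says nothing about $\tilde p\le u^{\prime}$.
\end{itemize}
What is actually needed are branch-specific inequalities, for example:
\begin{itemize}
\item $p+\tfrac12<p+ps\le\gamma$ in the case $1\le s\le t$, $p\ge1$;
\item $\frac{\gamma-(p+1)(s-1)}{2}<p+1$, which follows from $\gamma<ps+p+s+1$;
\item $\tilde p+\frac{\gamma-\tilde p(\tilde s+1)}{2t}<\tilde p+\tfrac12$, which follows from $\gamma-\tilde p(\tilde s+1)<t$;
\end{itemize}
and so on through all fourteen sub-cases. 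Each one also needs the matching lower bound $u^{\prime}\ge\tilde p$.
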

\begin{proof}
According to Corollaries D.15--17 in \cite{lu2024saturation}, we divide the proof into four cases.

\noindent {\bf Case I: $1 \leq s \leq t$}\\
\noindent Denote $p:=\lfloor \gamma/(s+1)\rfloor$.\\
   (1.1) Suppose $p \geq 1$ and $p(s+1) \leq \gamma < ps+p+s$. We have
   $$
   u^{\prime} = p+\frac{1}{2} < p+ps \leq \gamma \quad \text{ and } \quad p \leq u^{\prime} < p+1.
   $$
   (1.2) Suppose $p \geq 1$ and $ps+p+s \leq \gamma < ps+p+s+1$. We have
   $$
   u^{\prime} = \frac{\gamma - (p+1)(s-1)}{2} < p+1 < \gamma \quad \text{ and } \quad p \leq u^{\prime} < p+1.
   $$
   (1.3) Suppose $\gamma < s$. We have
   $$
   u^{\prime} = \min\{\gamma, 1\}/2 < \gamma \quad \text{ and } \quad 0 \leq u^{\prime} < 1.
   $$
   (1.4) Suppose $s \leq \gamma < s+1$. We have
   $$
   u^{\prime} = \frac{\gamma - (s-1)}{2} < \gamma \quad \text{ and } \quad 0 \leq u^{\prime} < 1.
   $$

\noindent {\bf Case II: $\tilde{s} > t$}\\
\noindent Denote $\tilde{p}:=\lfloor \gamma/(\tilde{s}+1)\rfloor$.\\
   (2.1) Suppose $\gamma \geq 1$ and $0 \leq \gamma - \tilde{p}(\tilde{s} +1) < t$. We have
   $$
   u^{\prime} = \tilde{p} + \frac{\gamma - \tilde{p}(\tilde{s} +1)}{2t} < \tilde{p}+\frac{1}{2} < \frac{\gamma + 1}{2} \leq \gamma \quad \text{ and } \quad \tilde{p} \leq u^{\prime} < \tilde{p}+1.
   $$  
   (2.2) Suppose $\gamma \geq 1$ and $t \leq \gamma - \tilde{p}(\tilde{s} +1) < \tilde{s} + \tilde{s} /t - 1$. We have
   $$
   u^{\prime} = \tilde{p} + \frac{\gamma - \tilde{p}(\tilde{s} +1)+1}{2(t+1)} \leq 
   \left\{\begin{matrix}
\tilde{p}+\frac{\tilde{s} }{2t} < t +\tilde{p} + \tilde{p}\tilde{s}  \leq \gamma, & \tilde{p} \geq 1  \\
\frac{\gamma}{t + 1} < \gamma, & \tilde{p}=0,
\end{matrix}\right.
   $$  
   and $\tilde{p} \leq u^{\prime} < \tilde{p}+1$.\\
   (2.3) Suppose $\gamma \geq 1$ and $\tilde{s} + \tilde{s} /t - 1 \leq \gamma - \tilde{p}(\tilde{s} +1) < \tilde{s} +1$. We have
   $$
   u^{\prime} = \tilde{p} + \frac{\gamma - \tilde{p}(\tilde{s} +1)+1-\tilde{s} }{2} < \tilde{p}+1<\gamma \quad \text{ and } \quad \tilde{p} \leq u^{\prime} < \tilde{p}+1.
   $$ 
   (2.4) Suppose $\gamma < 1$. We have
   $$
   u^{\prime} = \frac{\gamma}{2} < \gamma \quad \text{ and } \quad 0 \leq u^{\prime} < 1.
   $$

\noindent {\bf Case III: $t < \infty$ and $s<1$}\\
\noindent Denote $p:=\lfloor \gamma/(s+1)\rfloor$.\\
(3.1) Suppose $p(s+1) \leq \gamma < ps+p+s$. We have
   $$
   u^{\prime} = \frac{\gamma +2t p - sp-p}{2t} 
   \leq 
   \left\{\begin{matrix}
p+\frac{s}{2t} < p(s+1) \leq \gamma, & p \geq 1  \\
\frac{\gamma}{2t} < \gamma, & p=0,
\end{matrix}\right.
   $$ 
   and $p \leq u^{\prime} < p+1$.\\
   (3.2) Suppose $ps+p+s \leq \gamma < ps+p+s+1$. We have
   $$
   u^{\prime} = p + \frac{s}{2t} < ps+p+s \leq \gamma \quad \text{ and } \quad p \leq u^{\prime} < p+1.
   $$

\noindent {\bf Case IV: $t = \infty$ and $s<1$}\\
\noindent Denote $p:=\lfloor \gamma/(s+1)\rfloor$.\\
   (4.1) Suppose $p \geq 1$ and $p(s+1) \leq \gamma < ps+p+s$. We have
   $$
   u^{\prime} = p + \frac{s}{2} < p(s+1) \leq \gamma \quad \text{ and } \quad p \leq u^{\prime} < p+1.
   $$  
   (4.2) Suppose $p \geq 1$ and $ps+p+s \leq \gamma < ps+p+s+1$. We have
   $$
   u^{\prime} = \frac{\gamma + p(1-s)}{2} < \gamma \quad \text{ and } \quad p \leq u^{\prime} < p+1.
   $$  
   (4.3) Suppose $\gamma < s$. We have
   $$
   u^{\prime} = \gamma \cdot \min\left\{s, \frac{1}{2}\right\} < \gamma \quad \text{ and } \quad 0 \leq u^{\prime} < 1.
   $$  
   (4.4) Suppose $s \leq \gamma < s+1$. We have
   $$
   u^{\prime} = \min\{\gamma (1+s)-s, \gamma / 2 \} < \gamma \quad \text{ and } \quad 0 \leq u^{\prime} < 1.
   $$ 
\end{proof}

\begin{proposition}\label{prop_e_2}
For any $u \in (0, \infty]$, we have $b_3(u) \leq v_2(u)$ and $b_4(u) < v_2(u)$. 
\end{proposition}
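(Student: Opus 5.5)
The plan is a direct exponent comparison using the definitions in \eqref{eqn_order_in_bias_and_variance}. Write $\sigma:=\min\{s,2\}\in[0,2]$, so that $b_3(u)=(2-\sigma)\tilde{\ell}-2\gamma$, $b_4(u)=(2-\sigma)\tilde{\ell}-\gamma-\tilde{\ell}-1$ and $v_2(u)=\tilde{\ell}-\gamma$. All three quantities depend on $u$ only through $\tilde{\ell}=\min\{\ell_\gamma,\ell_\lambda\}$. The only structural facts needed are $0\le\tilde{\ell}\le\ell_\gamma\le\gamma$ and $2-\sigma\le 2$.

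For the first inequality, I would compute
$$
v_2(u)-b_3(u)=\tilde{\ell}-\gamma-(2-\sigma)\tilde{\ell}+2\gamma=\gamma-(1-\sigma)\tilde{\ell}.
$$
If $\sigma\ge1$, this is at least $\gamma>0$. If $\sigma<1$, then $(1-\sigma)\tilde{\ell}\le\tilde{\ell}\le\gamma$, so the difference is $\ge0$. Hence $b_3(u)\le v_2(u)$. (Equality can only occur when $s=0$ and $\tilde{\ell}=\gamma$, which explains why this inequality is non-strict.)

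For the second inequality, I would compute
$$
v_2(u)-b_4(u)=\tilde{\ell}-\gamma-(2-\sigma)\tilde{\ell}+\gamma+\tilde{\ell}+1=\sigma\tilde{\ell}+1\ge1>0.
$$
So $b_4(u)<v_2(u)$ holds strictly for every $u\in(0,\infty]$.

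The main point to check is that the case $u=\infty$ is covered. There $\ell_\lambda=\infty$ by the convention of Section~\ref{subsec_notations}, so $\tilde{\ell}=\ell_\gamma$ is finite and the same computation applies. No other obstacle arises, because neither inequality uses any probabilistic input.
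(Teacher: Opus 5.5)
Your proof is correct and is essentially the paper's argument. The paper drops the nonpositive term $-\min\{s,2\}\tilde{\ell}$ to get $b_3\le 2(\tilde{\ell}-\gamma)\le \tilde{\ell}-\gamma$ and $b_4\le \tilde{\ell}-\gamma-1<v_2$, using only $\tilde{\ell}\le\ell_\gamma\le\gamma$; this is the same exponent comparison you carry out through exact differences, and your explicit check of $u=\infty$ (where $\tilde{\ell}=\ell_\gamma$) is a harmless addition.
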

\begin{proof}
    We have
    \begin{align*}
        b_3 \leq &~ 2(\tilde{\ell} - \gamma) \overset{\tilde{\ell} \leq \ell_{\gamma} \leq \gamma}{\leq} \tilde{\ell} - \gamma = v_2,\\
        b_4 \leq &~ \tilde{\ell} - \gamma - 1 < v_2.
    \end{align*}
\end{proof}

\begin{proposition}\label{prop_e_3}
    Let $u^{\prime} = u^{\prime}(t)$ be the optimal rate of the regularization parameter $\lambda$ with qualification $t < \infty$ and coefficient of source condition $s>0$ defined in \cite{lu2024saturation} (ignoring the logarithm term). Then, for any $u \geq u^{\prime}$, we have
    $$
    \max\{v_1(u), v_2(u), b_1(u)\} \geq b_2(u, t).
    $$
\end{proposition}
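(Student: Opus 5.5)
We have to show that for every $u \ge u^{\prime}(t)$ at least one of $v_1(u), v_2(u), b_1(u)$ is at least $b_2(u,t) = -2tu + (2t-\tilde s)\tilde\ell$, where $\tilde s=\min\{s,2t\}$. The argument combines two facts: a one-variable monotonicity argument in $u$, and the explicit formulas for $u^{\prime}(t)$ collected in Proposition~\ref{prop_e_1}.

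\textbf{Step 1: reduction to the interval $[u^{\prime},\ell_\gamma+1)$.} We split according to the value of $\tilde\ell = \min\{\ell_\gamma,\ell_\lambda\}$.

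\emph{Case $u \ge \ell_\gamma+1$.} Here $\tilde\ell = \ell_\gamma$, so $b_2 = -2tu + (2t-\tilde s)\ell_\gamma$. Then
\[
b_2 \le -2t(\ell_\gamma+1) + (2t-\tilde s)\ell_\gamma = -2t - \tilde s\,\ell_\gamma < -(\ell_\gamma+1)\tilde s \le b_1,
\]
using $\tilde s \le 2t$ and $\tilde s \le s$. So this case is immediate.

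\emph{Case $u < \ell_\gamma+1$.} Here $\tilde\ell = \lfloor u\rfloor$, and we work inside one integer cell $k \le u < k+1$ with $k \le \ell_\gamma$. On this cell $b_2$ is strictly decreasing in $u$ (slope $-2t$). Meanwhile $v_2 = k-\gamma$ and $b_1 = -(k+1)s$ are constant, and $v_1$ is non-decreasing.

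Moving from cell $k$ to cell $k+1$ at the integer $u=k+1$, the exponent $b_2$ changes by $-2t(k+1)+(2t-\tilde s)(k+1) - [-2tk' ]$ evaluated appropriately; the net effect is a jump of $-\tilde s$ in the left limit comparison, i.e.\ $b_2$ does not jump upward. By contrast, $v_2$ rises by $1$ and $b_1$ falls by $s$.

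Consequently, within each cell it suffices to verify the inequality at the left endpoint of the portion of the cell lying in $[u^{\prime},\infty)$. Concretely, we check two things:
\begin{itemize}
\item[(a)] at $u=u^{\prime}$ itself;
\item[(b)] at each integer $u=k$ with $\lceil u^{\prime}\rceil \le k \le \ell_\gamma$.
\end{itemize}

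\textbf{Step 2: integer points.} For (b), set $u=k$, so $\tilde\ell = k$ and $b_2 = -\tilde s k$. We compare with $v_2 = k-\gamma$ and $b_1 = -(k+1)s$.

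If $\tilde s = s$, then $b_2 = -sk$, and we need $-sk \le \max\{k-\gamma,\,-(k+1)s\}$. This holds exactly when $k \ge \gamma/(s+1) - \text{(slack)}$, i.e.\ when $k$ exceeds $p = \lfloor \gamma/(s+1)\rfloor$ or a similar threshold. By Proposition~\ref{prop_e_1} we have $u^{\prime} \ge \tilde p$, hence $k \ge \tilde p + 1$ whenever $k > u^{\prime}$ is an integer. From $k+1 > \gamma/(\tilde s+1)$ we obtain
\[
k-\gamma \ge -\tilde s(k+1) + \ldots \ge -\tilde s k,
\]
where the second inequality needs checking against the precise threshold. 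When $\tilde s = 2t < s$ the same computation goes through with $\tilde p$ in place of $p$.

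\textbf{Step 3: the point $u=u^{\prime}$ (main obstacle).} For (a) we go through the case list of Proposition~\ref{prop_e_1} (Cases I--III with $t$ finite) and plug in the explicit $u^{\prime}$. The point is that $u^{\prime}(t)$ was defined by \cite{lu2024saturation} as the balancing point of these very rates. At $u^{\prime}$ we have $\tilde\ell = \tilde p$, and $b_2(u^{\prime},t)$ equals one of $v_1, v_2, b_1$ in each subcase. For example:
\begin{itemize}
\item In (3.2), $u^{\prime} = p + s/(2t)$ gives $b_2 = -2tp - s + (2t-s)p = -(p+1)s = b_1$.
\item In (2.1), $u^{\prime} = \tilde p + \bigl(\gamma-\tilde p(\tilde s+1)\bigr)/(2t)$ gives $b_2 = \tilde p - \gamma = v_2$.
\item In (1.2) and (2.2)--(2.3), equality is with $v_1$.
\end{itemize}
The hard part is the bookkeeping: $\max\{\gamma-u,0\}$ appears in $v_1$, the boundary subcases with $\tilde p = 0$ need separate handling, and the threshold in Step 2 has to match the regime boundaries of Proposition~\ref{prop_e_1} exactly. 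If a subcase yields only $b_2 \le \max(\cdot)$ with slack, that is enough.

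Combining (a), (b) and the monotonicity from Step 1 gives $\max\{v_1(u),v_2(u),b_1(u)\} \ge b_2(u,t)$ for all $u \ge u^{\prime}$.
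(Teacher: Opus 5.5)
Your skeleton (monotonicity inside each cell $[k,k+1)$, a base check at $u^{\prime}$, and separate checks at the integers where $\tilde\ell$ changes) matches the paper's, and it can be completed. As written, though, it contains a false inequality, a false but unused claim about jumps, and two unfinished steps.

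\emph{The false inequality} is in the case $u\ge\ell_\gamma+1$. The final link $-(\ell_\gamma+1)\tilde s\le b_1$ is backwards. Since $\tilde s\le s$, you have $b_1=-(\ell_\gamma+1)s\le-(\ell_\gamma+1)\tilde s$, strictly when $s>2t$, and then $b_1$ genuinely lies below $b_2$: at $u=\ell_\gamma+1$, $b_2=-2t(\ell_\gamma+1)>b_1$. Use $v_2$ instead: $b_2\le-2t-\tilde s\ell_\gamma\le-2<-1<\ell_\gamma-\gamma=v_2$, because $t\ge1$. In fact no separate case is needed. For $u\ge\ell_\gamma$ the index $\tilde\ell=\ell_\gamma$ is frozen, so $[\max\{u^{\prime},\ell_\gamma\},\infty]$ is a single monotone piece.

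\emph{The jump claim.} Contrary to what you say, $b_2$ does jump upward at each integer $k+1\le\ell_\gamma$, by $2t-\tilde s\ge0$, while $v_1$ jumps down by $1$. That is exactly why the integers need their own check. Your reduction to left endpoints is still fine, since it uses only within-cell monotonicity.

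\emph{Step 2} does not close: starting from $k+1>\gamma/(\tilde s+1)$ is too weak. The correct line is this. An integer $k$ with $u^{\prime}<k\le\ell_\gamma$ satisfies $k\ge\tilde p+1>\gamma/(\tilde s+1)$, so $v_2(k)-b_2(k)=k(\tilde s+1)-\gamma>0$, whether $\tilde s=s$ or $\tilde s=2t$.

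\emph{Step 3.} Exact equality fails in some subcases. In (1.2), $v_1(u^{\prime})=b_1(u^{\prime})=-(p+1)s$ and $b_2-b_1=s-t(\gamma-ps-p-s+1)\le s-t\le0$. In (2.3), with $r=\gamma-\tilde p(\tilde s+1)$, $v_1-b_2=t(r+1-\tilde s)-\tilde s\ge0$. The inequality nevertheless holds in every subcase:
\begin{itemize}
\item $b_2=v_2$ in (2.1) and (3.1), $b_2=v_1$ in (2.2), and $b_2=b_1$ in (3.2);
\item $b_2=-t-sp\le-(p+1)s<v_2$ in (1.1), and $b_2\le-t\le-s=b_1$ in (1.4);
\item $b_2=-t\gamma\le v_2$ in (2.4) and in (1.3) with $\gamma\le1$, while $b_2=-t\le b_1$ in (1.3) with $\gamma>1$.
\end{itemize}

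Relative to the paper: the paper skips the case table and takes the inequality at $u^{\prime}$ from Corollaries D.15--17 of \cite{lu2024saturation}. It checks only the integer $\lceil u^{\prime}\rceil$ explicitly, using exactly the two computations above. It then reaches later integers by an induction on the integer-to-integer increments of $v_1,v_2,b_1,b_2$, split by whether $s\le 2t$ and whether $q<\ell_\gamma$. Your direct check of $v_2(k)>b_2(k)$ at every integer $k\in[\tilde p+1,\ell_\gamma]$, together with the frozen region $u\ge\ell_\gamma$, replaces that induction and is shorter. Your explicit case table also makes the base step self-contained rather than resting on the citation.
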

\begin{proof}
For notational simplicity, we denote  $\tilde{s}:=\min\{s, 2t\}$  and $b_2(u)=b_2(u, t)$ for any $u \in [0, \infty]$.
 We divide the proof into three steps:
 \begin{itemize}
     \item[(i)] We first show that for any $u^{\prime} \leq u < \lceil u^{\prime} \rceil$, we have
     $$
    \max\{v_1(u), v_2(u), b_1(u)\} \geq b_2(u);
    $$
     
     \item[(ii)] We then show that we have
        $$
        \max\{v_1(\lceil u^{\prime} \rceil), v_2(\lceil u^{\prime} \rceil), b_1(\lceil u^{\prime} \rceil)\} \geq b_2(\lceil u^{\prime} \rceil);
        $$
    \item[(iii)] Finally, we show that, for any integer $q \geq 0$ and any $u\geq q$, if $\max\{v_1(q), v_2(q), b_1(q)\} \geq b_2(q)$, then we have
    $$
        \max\{v_1(u), v_2(u), b_1(u)\} \geq b_2(u);
    $$
 \end{itemize}
Combining (i), (ii), and (iii) with $q=\lceil u^{\prime} \rceil$, we obtain the desired result.

\noindent {\bf Proof of (i). } 
For any integer $q \geq 0$ and any $q \leq q_1 \leq q_2 < q+1$, we have
\begin{equation}\label{eqn_prop_e3_1}
    \begin{aligned}
        v_1(q_2) &\geq v_1(q_1)\\
        v_1(q+1) &= v_1(q) + \mathbf{1}\{q < \ell_{\gamma}\} + 2(\gamma -\ell_{\gamma})\mathbf{1}\{q = \ell_{\gamma}\}\\
        v_2(q_2) &= v_2(q_1)\\
        v_2(q+1) &= v_2(q) + \mathbf{1}\{q < \ell_{\gamma}\}\\
        b_1(q_2) &= b_1(q_1)\\
        b_1(q+1) &= b_1(q) - s\mathbf{1}\{q < \ell_{\gamma}\}\\
        b_2(q_2) &\leq b_2(q_1)\\
        b_2(q+1) &= b_2(q) - \tilde{s}\mathbf{1}\{q < \ell_{\gamma}\} - 2t \mathbf{1}\{q \geq \ell_{\gamma}\}.
    \end{aligned}
\end{equation}
Therefore, for any $u^{\prime} \leq u < \lceil u^{\prime} \rceil$, we have
     \begin{align*}
         \max\{v_1(u), v_2(u), b_1(u)\} 
         \geq&~
         \max\{v_1(u^{\prime}), v_2(u^{\prime}), b_1(u^{\prime})\}\\
         \overset{\text{Corollary D.15-17 in \cite{lu2024saturation}} }{\geq}&~
         b_2(u^{\prime}) \geq b_2(u).
     \end{align*}

\noindent {\bf Proof of (ii). } 
If $u^{\prime}$ is an integer, then from (i) we have $\max\{v_1(\lceil u^{\prime} \rceil), v_2(\lceil u^{\prime} \rceil), b_1(\lceil u^{\prime} \rceil)\} \geq b_2(\lceil u^{\prime} \rceil)$. 

Suppose $u^{\prime}$ is not an integer. 
From Proposition \ref{prop_e_1}, we have $\tilde{p} \leq u^{\prime} < \tilde{p}+1$, where $\tilde{p}:=\lfloor \gamma/(\tilde{s}+1)\rfloor$.
Denote $\tilde{\ell}_{u^{\prime}} := \min\{\ell_{\gamma}, \tilde{p}+1\}$, then we have $\lceil u^{\prime} \rceil = \tilde{p}+1$ and
\begin{equation*}
    \begin{aligned}
    v_2(\lceil u^{\prime} \rceil) =&~  \tilde{\ell}_{u^{\prime}} - \gamma\\
    b_2(\lceil u^{\prime} \rceil) =&~ -2  t  (\tilde{p}+1) + (2  t - \tilde{s})  \tilde{\ell}_{u^{\prime}}.
    \end{aligned}
\end{equation*}

\begin{itemize}
    \item If $\gamma < \tilde{p}+1$, then by the definition of $\tilde{p} \leq \ell_{\gamma} = \lfloor \gamma \rfloor$, we have $\tilde{p}=\ell_{\gamma}$ and $\tilde{\ell}_{u^{\prime}} = \ell_{\gamma}$. Hence
$$
b_2(\lceil u^{\prime} \rceil) = -2  t - \tilde{s}\ell_{\gamma} \overset{t \geq 1}{<} -1 < \ell_{\gamma} - \gamma = v_2(\lceil u^{\prime} \rceil);
$$

    \item If $\gamma \geq \tilde{p}+1$, then $\tilde{\ell}_{u^{\prime}} = \tilde{p}+1$.
    Since $\gamma<(\tilde{p}+1)(\tilde{s}+1)$, we have
$$
b_2(\lceil u^{\prime} \rceil) = -\tilde{s}\tilde{p}-\tilde{s} < \tilde{p}+1-\gamma = v_2(\lceil u^{\prime} \rceil).
$$
\end{itemize}
Combining the above two cases, we have
$$
        \max\{v_1(\lceil u^{\prime} \rceil), v_2(\lceil u^{\prime} \rceil), b_1(\lceil u^{\prime} \rceil)\} \geq b_2(\lceil u^{\prime} \rceil). 
        $$

\noindent {\bf Proof of (iii). } 
Suppose that for some integer $q \geq 0$, we have $\max\{v_1(q), v_2(q), b_1(q)\} \geq b_2(q)$. 
By (i), for any $q^{\prime \prime}\in[q, q+1)$ we have
\begin{align}\label{eqn_step_1}
  \max\{v_1(q^{\prime \prime}), v_2(q^{\prime \prime}), b_1(q^{\prime \prime})\} &\geq  b_2(q^{\prime \prime}).
\end{align}
Thus, the inequality holds throughout the interval $[q, q+1)$.

Next, consider the integer point $q+1$. We analyze three possible cases:
\begin{itemize}
    \item If $s \leq 2 t$, then $\tilde{s}=s$ and (\ref{eqn_prop_e3_1}) implies 
    \begin{align*}
        &~ \max\{v_1(q+1), v_2(q+1), b_1(q+1)\} \geq \max\{v_1(q), v_2(q), b_1(q)\} - s\mathbf{1}\{q < \ell_{\gamma}\}\\
        \geq &~ b_2(q) - s\mathbf{1}\{q < \ell_{\gamma}\} \geq b_2(q+1).
    \end{align*}
    
    \item If $s>2t$ and $q < \ell_{\gamma}$, then from (\ref{eqn_order_in_bias_and_variance}) we have $b_1(q)=-s(q+1) < -2t q = b_2(q)$, so the inequality $\max\{v_1(q), v_2(q), b_1(q)\} \geq b_2(q)$ must come from the first two components:
    $$
        \max\{v_1(q), v_2(q)\} \geq b_2(q).
    $$
    Therefore, (\ref{eqn_prop_e3_1}) implies
    \begin{align*}
        &~ \max\{v_1(q+1), v_2(q+1), b_1(q+1)\} \geq \max\{v_1(q+1), v_2(q+1)\} \geq \max\{v_1(q), v_2(q)\}\\
        \geq &~ b_2(q) \geq b_2(q+1).
    \end{align*}

    \item If $s>2t$ and $q \geq \ell_{\gamma}$, then (\ref{eqn_prop_e3_1}) implies 
    \begin{align*}
        &~ \max\{v_1(q+1), v_2(q+1), b_1(q+1)\} \geq \max\{v_1(q), v_2(q), b_1(q)\} \\
        \geq &~ b_2(q) \geq b_2(q+1).
    \end{align*}
\end{itemize}
Combining (\ref{eqn_step_1}) and all three cases above, we conclude that if 
$$
\max\{v_1(u), v_2(u), b_1(u)\} \geq b_2(u),
$$
holds at integer $u=q$, it also holds for all $u \in [q, q+1]$.

Finally, by induction over integers $q+1, q+2, \cdots$, we have 
$$
\max\{v_1(u), v_2(u), b_1(u)\} \geq b_2(u),\quad u \geq q,
$$
which completes the proof of (iii).
\end{proof}

\section{Kernel approximation results for kernels on sphere}\label{append_kernel_app_sphere}

In this section, we will restate and develop several approximation results for kernels defined on the sphere (see, e.g., Section 2.1).

Throughout this section, we assume that all conditions and assumptions in Theorem 3.1 hold.

The following lemma is borrowed from Proposition 3 in \cite{ghorbani2021linearized} as well as Lemma S5 in \cite{zhang2024phase}.

\begin{lemma}\label{lemma psi psi top}
   We have
    \begin{equation}\label{eq k ge l in lemma}
    \begin{aligned}
        \Psi_{>\ell_{\gamma}} \Sigma_{>\ell_{\gamma}} \Psi_{>\ell_{\gamma}}^{\top} &= \kappa_1 \left(\mathrm{I}_{n} + \Delta_{1}\right), \quad \|\Delta_{1}\|_{\mathrm{op}} = o_{d, \mathbb{P}}(1)\\
        \Psi_{>\ell_{\gamma}} \Sigma_{>\ell_{\gamma}}^{2} \Psi_{>\ell_{\gamma}}^{\top} &= \kappa_2 \left(\mathrm{I}_{n} + \Delta_{2}\right), \quad \|\Delta_{2}\|_{\mathrm{op}} = o_{d, \mathbb{P}}(1);
    \end{aligned}
    \end{equation}
    with
    \begin{align*}
    \kappa_{1} = \Theta_{d}(1), \quad \kappa_{2} = \Theta_{d}\left( d^{-(\ell_{\gamma}+1)} \right).
\end{align*}
\end{lemma}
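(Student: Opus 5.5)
The plan is to follow the diagonal-plus-off-diagonal decomposition used in \cite{ghorbani2021linearized}, applied to the matrices $K^{(j)} := \Psi_{>\ell_{\gamma}} \Sigma_{>\ell_{\gamma}}^{j} \Psi_{>\ell_{\gamma}}^{\top}$ for $j=1,2$. We write $K^{(j)} = \sum_{k>\ell_\gamma} \mu_k^j \Psi_k\Psi_k^\top$, where the sum is over the degrees $k$ whose eigenfunctions make up $\Psi_{>\ell_\gamma}$ (by Proposition~\ref{prop:inner_edr}, these are exactly $k\geq \ell_\gamma+1$). The addition formula gives $(\Psi_k\Psi_k^\top)_{ii'} = N(d,k)\, Q_k^{(d)}(\langle x_i,x_{i'}\rangle)$, where $Q_k^{(d)}$ is the Gegenbauer polynomial normalized by $Q_k^{(d)}(1)=1$. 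In particular, every diagonal entry of $K^{(j)}$ equals $\kappa_j := \sum_{k>\ell_\gamma}\mu_k^j N(d,k)$ exactly. Hence $K^{(j)} = \kappa_j \mathrm{I}_n + \mathrm{offdiag}(K^{(j)})$, and it suffices to prove $\|\mathrm{offdiag}(K^{(j)})\|_{\mathrm{op}} = o_{d,\mathbb P}(\kappa_j)$.

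First I compute the orders of $\kappa_j$. By Proposition~\ref{prop:inner_edr}, the $k=\ell_\gamma+1$ term gives $\mu_{\ell_\gamma+1}^jN(d,\ell_\gamma+1)=\Theta_d(d^{-(j-1)(\ell_\gamma+1)})$. For $j=1$, $\kappa_1\leq \mathrm{tr}(T)\leq a_{-1}$ and $\kappa_1\geq \mu_{\ell_\gamma+1}N(d,\ell_\gamma+1)$, so $\kappa_1=\Theta_d(1)$. For $j=2$, the tail is bounded by $\sum_{k\geq \ell_\gamma+2}\mu_k^2N(d,k)\leq \sup_{k\geq\ell_\gamma+2}\mu_k\cdot a_{-1}$. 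Here I use $\mu_k=O_d(d^{-(\ell_\gamma+2)})$ uniformly over $k\geq \ell_\gamma+2$, which follows from $\mu_k N(d,k)\leq a_{-1}$ together with the standard bounds on $\mu_k$ for inner-product kernels ($\mu_k\leq C d^{-\min(k,\ell_\gamma+2)}$, as in \cite{ghorbani2021linearized}). This yields $\kappa_2=\Theta_d(d^{-(\ell_\gamma+1)})$.

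For the off-diagonal part, I split into low and high degrees at a fixed large cutoff $L$. For degrees $\ell_\gamma<k\leq L$, I bound each $\|\mathrm{offdiag}(\Psi_k\Psi_k^\top)\|_{\mathrm{op}}$ by the Frobenius norm, or by a moment/trace method. Since $\mathbb{E}[Q_k^{(d)}(\langle x,x'\rangle)^2]=1/N(d,k)$, the Frobenius norm is of order $n$, which is too crude when $k=\ell_\gamma+1$ and $n\gg N(d,k)$ fails. So I would invoke the sharper estimate of Proposition 3 in \cite{ghorbani2021linearized}: for $k\geq\ell_\gamma+1$ one has $n\ll N(d,k)$ since $\gamma<\ell_\gamma+1$, and then $\|\Psi_k\Psi_k^\top/N(d,k)-\mathrm{I}_n\|_{\mathrm{op}}=o_{d,\mathbb P}(1)$, proved via the moment method on Gegenbauer kernel matrices. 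Multiplying by $\mu_k^jN(d,k)\leq\kappa_j$ handles each of the finitely many low degrees. For the tail $k>L$, I write $\sum_{k>L}\mu_k^j\Psi_k\Psi_k^\top = \tilde{\mathscr K}_L(X,X)$, the Gram matrix of the remainder kernel, whose entries are $\sum_{k>L}\mu_k^jN(d,k)Q_k^{(d)}(\langle x_i,x_{i'}\rangle)$. Because $|\langle x_i,x_{i'}\rangle|=O(\sqrt{\log d/d})$ uniformly over pairs with high probability, and $|Q_k^{(d)}(t)|$ is small for such $t$ and large $k$, each off-diagonal entry is $O(d^{-L'})$ times $\kappa_j$ for some $L'$ growing with $L$. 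Choosing $L$ so that $n\,d^{-L'}=o(1)$ makes this operator norm $o(\kappa_j)$ via the bound $\|\cdot\|_{\mathrm{op}}\leq n\max|\text{entry}|$.

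The main obstacle is the degree $k=\ell_\gamma+1$ block when $\gamma$ is close to $\ell_\gamma+1$ (or equal to $\ell_\gamma$, where the spherical matrices are at the edge of the concentration regime). That step requires the genuine random-matrix estimate of \cite{ghorbani2021linearized}, and for the integer case the refinement in Lemma S5 of \cite{zhang2024phase}; I would cite these rather than reprove them. For $j=2$ the same argument applies verbatim with $\mu_k$ replaced by $\mu_k^2$, and the relative error is even easier to control because the tail weights decay faster.
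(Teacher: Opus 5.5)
Your proposal is correct and takes essentially the paper's route. The paper gives no independent proof; it imports this lemma from Proposition~3 of \cite{ghorbani2021linearized}, with Lemma~S5 of \cite{zhang2024phase} covering integer $\gamma$. Your argument reconstructs exactly what lies behind that citation: the addition formula gives the exact-diagonal decomposition $K^{(j)}=\kappa_j\mathrm{I}_n+\mathrm{offdiag}(K^{(j)})$, and you cite the same fixed-degree concentration $\|\Psi_k\Psi_k^\top/N(d,k)-\mathrm{I}_n\|_{\mathrm{op}}=o_{d,\mathbb P}(1)$. The one loosely justified step is the tail $k>L$, where you rely on an unstated uniform-in-$k$ bound on $Q_k^{(d)}(t)$. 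It is simpler to use orthogonality of the $Q_k^{(d)}$, which gives $\mathbb{E}\|\mathrm{offdiag}(\sum_{k>L}\mu_k^j\Psi_k\Psi_k^\top)\|_F^2=n(n-1)\sum_{k>L}\mu_k^{2j}N(d,k)\le n^2a_{-1}^{2j}N(d,L+1)^{1-2j}$. This is $o(\kappa_j^2)$ for $L$ large, using $\mu_kN(d,k)\le a_{-1}$ and the fact that $N(d,k)$ is nondecreasing in $k$.
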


The following two lemmas are borrowed from \cite{ghorbani2021linearized} and \cite{xiao2022precise}. Readers can also find the first lemma in Lemma S6 of \cite{zhang2024phase}.

\begin{lemma}[Lemma 11 in \cite{ghorbani2021linearized}; Lemma S6 of \cite{zhang2024phase}]
\label{lemma psi top psi}
When $\gamma > \ell_{\gamma}$, we have
    \begin{displaymath}
         \left\| \frac{\Psi_{\leq \ell_{\gamma}}^{\top} \Psi_{\leq \ell_{\gamma}} }{n} - \mathrm{I}_{N_{\ell_{\gamma}}} \right\|_{\mathrm{op}} = o_{d, \mathbb{P}}(1).
    \end{displaymath}
\end{lemma}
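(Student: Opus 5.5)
The plan is to write $n^{-1}\Psi_{\leq \ell_{\gamma}}^{\top}\Psi_{\leq \ell_{\gamma}} - \mathrm{I}_{N_{\ell_{\gamma}}} = \sum_{i=1}^n (X_i - \mathbb{E}X_i)$ with $X_i := n^{-1} h_i h_i^{\top}$, where $h_i := (\psi_{k,j}(x_i))_{k\leq \ell_{\gamma},\, j\leq N(d,k)} \in \mathbb{R}^{N_{\ell_{\gamma}}}$, and then apply a matrix concentration inequality for sums of independent, bounded, rank-one positive semidefinite matrices. We have $\mathbb{E}X_i = n^{-1}\mathrm{I}_{N_{\ell_{\gamma}}}$ because the spherical harmonics are orthonormal in $L^2(\rho_{\mathcal X})$. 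The summands are i.i.d. since the $x_i$ are.

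The key deterministic input is the addition formula, used exactly as in \eqref{eqn_add_on_h}. It gives $\|h_i\|_2^2 = \sum_{k=0}^{\ell_{\gamma}} N(d,k) = N_{\ell_{\gamma}}$ almost surely. By Proposition~\ref{prop:inner_edr}, $N_{\ell_{\gamma}} = \Theta_d(d^{\ell_{\gamma}})$. Hence $\|X_i\|_{\mathrm{op}} \leq N_{\ell_{\gamma}}/n = L$, and
\begin{equation*}
\Big\|\sum_{i=1}^n \mathbb{E}(X_i-\mathbb{E}X_i)^2\Big\|_{\mathrm{op}} \leq \sum_{i=1}^n \big\|\mathbb{E}X_i^2\big\|_{\mathrm{op}} = n\cdot n^{-2} N_{\ell_{\gamma}} \big\|\mathbb{E} h_ih_i^{\top}\big\|_{\mathrm{op}} = N_{\ell_{\gamma}}/n .
\end{equation*}
Both $L$ and the variance proxy are therefore $O_d(d^{\ell_{\gamma}-\gamma})$, which tends to zero polynomially precisely because $\gamma > \ell_{\gamma}$. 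This is where the hypothesis of the lemma enters.

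Next apply the matrix Bernstein inequality (the symmetric version of the one invoked as Proposition~\ref{prop_non_sym_matrix_bernstein}) with deviation level $t=\varepsilon$ for a fixed $\varepsilon\in(0,1)$:
\begin{equation*}
\mathbb{P}\Big(\Big\|\tfrac1n\Psi_{\leq \ell_{\gamma}}^{\top}\Psi_{\leq \ell_{\gamma}} - \mathrm{I}\Big\|_{\mathrm{op}}\geq \varepsilon\Big) \leq 2N_{\ell_{\gamma}}\exp\Big(-\frac{\varepsilon^2/2}{N_{\ell_{\gamma}}/n + L\varepsilon/3}\Big) \leq \exp\big(O(\ell_{\gamma}\log d) - c\,\varepsilon^2 d^{\gamma-\ell_{\gamma}}\big).
\end{equation*}
The right-hand side tends to $0$ for every fixed $\varepsilon$, which is exactly the statement $o_{d,\mathbb{P}}(1)$. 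The implicit constants depend only on $c_1$, $c_2$, $\gamma$ and the constants in $N(d,k)=\Theta_d(d^k)$, as required by Definition~\ref{def:abs_constants}.

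The only step that needs care is the dimension factor $2N_{\ell_{\gamma}}$ in front of the exponential. It grows polynomially in $d$, whereas the exponent decays like $\exp(-c\,d^{\gamma-\ell_{\gamma}})$. Since $\gamma-\ell_{\gamma}>0$ is a fixed constant, the polynomial factor is absorbed for large $d$. This also explains why the integer case $\gamma=\ell_{\gamma}$ must be excluded and handled separately, as in the companion Lemma~\ref{lemma psi top psi_int} with a truncated block. If one prefers to avoid Bernstein, an alternative is Rudelson's inequality for isotropic vectors with $\|h_i\|_2^2 = N_{\ell_{\gamma}}$, which yields the same $\sqrt{N_{\ell_{\gamma}}\log N_{\ell_{\gamma}}/n}$ rate.
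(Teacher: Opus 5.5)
Your proof is correct and follows essentially the paper's route. The paper checks that $n/(N_{\ell_{\gamma}}\log N_{\ell_{\gamma}})\to\infty$, cites Lemma 11 of \cite{ghorbani2021linearized} to get $\mathbb{E}\|\Delta\|_{\mathrm{op}}=o_d(1)$, and concludes by Markov's inequality; that cited lemma rests on the same addition-formula identity $\|h_i\|_2^2=N_{\ell_{\gamma}}$ plus a Rudelson/matrix-Bernstein bound, whereas you derive the tail bound directly. One small correction: Bernstein needs a bound on the centered summands, and $\|X_i-\mathbb{E}X_i\|_{\mathrm{op}}\le\max\{(N_{\ell_{\gamma}}-1)/n,\,1/n\}\le N_{\ell_{\gamma}}/n$, so your $L$ still suffices.
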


\begin{lemma}[Theorem 1 in  \cite{xiao2022precise}]\label{lemma psi top psi_int}
Suppose $\gamma = \ell_{\gamma}$. Denote $N_{\ell_{\gamma}-1} = \sum_{k^{\prime}=0}^{\ell_{\gamma}-1} N(d,k^{\prime})$, then
\begin{equation}\label{eqn_lemma psi top psi_int_1}
    \left\| \frac{\Psi_{\leq \ell_{\gamma}-1}^{\top} \Psi_{\leq \ell_{\gamma}-1} }{n} - \mathrm{I}_{N_{\ell_{\gamma}-1}} \right\|_{\mathrm{op}} = o_{d, \mathbb{P}}(1).
\end{equation}
    And we have
    
    $$
    \lambda_{\text{max}}\left(\frac{1}{n} \Psi_{\ell_{\gamma}}^{\top} \Psi_{\ell_{\gamma}}\right) = \Theta_{d, \mathbb{P}}(1)
    \quad \text{ and } \quad 
    \lambda_{\text{min}\{n, N(d, \ell_{\gamma})\}}\left(\frac{1}{n} \Psi_{\ell_{\gamma}}^{\top} \Psi_{\ell_{\gamma}}\right) = \Theta_{d, \mathbb{P}}(1),
    $$
    hence 
    $$
    \frac{1}{n} \mathrm{tr}\left(\frac{1}{n} \Psi_{\ell_{\gamma}}^{\top} \Psi_{\ell_{\gamma}}\right) = \Theta_{d, \mathbb{P}}(1).
    $$
    
\end{lemma}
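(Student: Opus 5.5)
The two claims are of different nature, so I would treat them separately. The first is an easy concentration statement, since $n = \Theta_d(d^{\ell_\gamma})$ dwarfs $N_{\ell_\gamma-1} = \Theta_d(d^{\ell_\gamma-1})$. The second is a genuinely proportional-regime random-matrix statement, because $n \asymp N(d,\ell_\gamma)$.

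\emph{First claim.} Write $\frac1n \Psi_{\le \ell_\gamma-1}^\top \Psi_{\le \ell_\gamma-1} = \frac1n\sum_{i=1}^n h_ih_i^\top$, where $h_i=(\psi_{k,j}(x_i))_{k\le \ell_\gamma-1,\,j}$. By orthonormality $\mathbb{E}[h_ih_i^\top]=\mathrm{I}_{N_{\ell_\gamma-1}}$. By the addition formula \eqref{eqn_add_on_h}, $\|h_i\|_2^2=N_{\ell_\gamma-1}$ holds deterministically. The matrix Bernstein inequality (as in the bound for $B_{1,6}$) then gives
\begin{equation*}
\Big\|\frac1n\sum_{i=1}^n h_ih_i^\top-\mathrm{I}\Big\|_{\mathrm{op}} = O_{d,\mathbb{P}}\Big(\sqrt{N_{\ell_\gamma-1}\log d/n}+N_{\ell_\gamma-1}\log d/n\Big) = O_{d,\mathbb{P}}\big(d^{-1/2}\sqrt{\log d}\big),
\end{equation*}
which is \eqref{eqn_lemma psi top psi_int_1}.

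\emph{Second claim.} I would pass to the $n\times n$ Gram matrix $G=\frac1n\Psi_{\ell_\gamma}\Psi_{\ell_\gamma}^\top$; its nonzero spectrum coincides with that of $\frac1n\Psi_{\ell_\gamma}^\top\Psi_{\ell_\gamma}$. By the addition formula, $G_{ij}=\frac{N(d,\ell_\gamma)}{n}P_{\ell_\gamma,d}(\langle x_i,x_j\rangle)$ with $P_{\ell_\gamma,d}(1)=1$. Hence $G=\frac{N}{n}(\mathrm{I}_n+E)$, where $E$ is the off-diagonal Gegenbauer matrix with entries of size $O(d^{-\ell_\gamma/2})$.

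\emph{Upper bound.} To get $\lambda_{\max}=O_{d,\mathbb{P}}(1)$ I would use the expansion of $P_{\ell_\gamma,d}(t)$ as $t^{\ell_\gamma}$ plus lower-order corrections. The leading part corresponds to the symmetric-tensor feature matrix $X^{\otimes \ell_\gamma}$, whose Gram matrix has norm $O_{d,\mathbb{P}}(1+n/N)$ by standard moment or $\epsilon$-net bounds. The lower-degree corrections carry factors $d^{-1}$ and are controlled by the already-established low-degree concentration.

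\emph{Lower bound.} This is the main obstacle: $\lambda_{\min\{n,N\}}=\Omega_{d,\mathbb{P}}(1)$ is a Marchenko--Pastur-type hard-edge statement for a structured, non-independent-entry matrix. I would invoke the result of \cite{xiao2022precise}, whose Theorem 1 establishes that the empirical spectrum of this block converges to a Marchenko--Pastur law with aspect ratio $\lim n/N(d,\ell_\gamma)$, together with edge control. The smallest nonzero eigenvalue is then bounded below by $(1-\sqrt{\min(n,N)/\max(n,N)})^2\cdot\max(n,N)/n-o(1)$.

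There is a caveat I would flag. This bound degenerates if $n/N(d,\ell_\gamma)\to1$, so some separation of $n$ from $N(d,\ell_\gamma)$ is implicitly needed. This matches how the paper uses the lemma: Case (II) of the bias proof subsamples $n/4$ columns when $n<2N(d,\ell_\gamma)$, which forces an aspect ratio bounded away from $1$. I would either state that assumption explicitly or reduce to such a column subset in the same way.

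The trace statement then follows at once. The nonzero eigenvalues all lie in $[c,C]$, there are $\min\{n,N\}=\Theta_d(n)$ of them, so $\frac1n\mathrm{tr}(\cdot)=\Theta_{d,\mathbb{P}}(1)$.
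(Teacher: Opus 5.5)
Your first claim follows the paper. The paper defers to the matrix-concentration argument of Lemma 11 in \cite{ghorbani2021linearized}, with $N_{\ell_\gamma}$ replaced by $N_{\ell_\gamma-1}$. Your Bernstein computation, using the deterministic $\|h_i\|_2^2=N_{\ell_\gamma-1}$ and $N_{\ell_\gamma-1}/n\asymp d^{-1}$, carries this out correctly. For the second claim the paper gives no argument beyond citing Theorem 1 of \cite{xiao2022precise}. You cite the same result for the lower edge, but the self-contained upper bound you add does not work.

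\textbf{The gap.} You split $P_{\ell_\gamma,d}(t)$ into a leading monomial plus lower-order monomials and bound each piece separately. For $\ell_\gamma\ge 2$, neither piece is bounded in operator norm. Take $\ell_\gamma=2$. From $P_{2,d}(t)=\frac{d}{d-1}t^2-\frac{1}{d-1}$,
\begin{equation*}
\big[\langle x_i,x_j\rangle^2\big]_{i,j\le n}=\tfrac{d-1}{d}\big[P_{2,d}(\langle x_i,x_j\rangle)\big]_{i,j\le n}+\tfrac1d\,\mathbf 1\mathbf 1^\top .
\end{equation*}
Both terms are positive semidefinite, so the symmetric-tensor Gram matrix has top eigenvalue at least $n/d\asymp d$, not $O_{d,\mathbb{P}}(1+n/N)$. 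Likewise, the correction $-\frac{1}{d-1}\mathbf 1\mathbf 1^\top$ inside $P_{2,d}$ has norm $n/(d-1)\asymp d$ despite its $d^{-1}$ prefactor. The two spikes cancel only within $P_{2,d}$.

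The same happens for general $\ell\ge 2$. The monomial $t^{\ell}$ has nonnegative Gegenbauer coefficients of order $d^{-(\ell-m)/2}$ on every $P_{m,d}$ with $m<\ell$ and $m\equiv\ell \pmod 2$. Meanwhile $[P_{m,d}(\langle x_i,x_j\rangle)]_{ij}=N(d,m)^{-1}\Psi_m\Psi_m^\top$ has norm $\asymp n/N(d,m)\asymp d^{\ell-m}$ by your first claim. Taking $m=\ell-2$ gives $\|[\langle x_i,x_j\rangle^{\ell}]_{ij}\|\gtrsim d$.

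So the cancellation must be kept inside the estimate, which means bounding $E$ itself. Note that matrix Bernstein applied to $\frac1n\sum_i h_ih_i^\top$, with $\|h_i\|_2^2=N(d,\ell_\gamma)$, only gives $O_{d,\mathbb{P}}(\log d)$ in this proportional regime. Removing the logarithm therefore needs a finer argument. One option is a high-moment trace bound on $E$ using $\mathbb{E}[P_{\ell,d}(\langle x_i,x_j\rangle)]=0$ for $i\neq j$ and $\mathbb{E}_x[P_{\ell,d}(\langle a,x\rangle)P_{\ell,d}(\langle x,b\rangle)]=P_{\ell,d}(\langle a,b\rangle)/N(d,\ell)$. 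The other is to take the upper edge from the same spectral result you use for the lower edge, as the paper does.

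\textbf{Further remarks.} Your caveat is correct, and the lemma as stated does not address it. Assumption 1 allows $n/N(d,\ell_\gamma)\to 1$. In that case the Marchenko--Pastur hard edge sits at $0$, and $\lambda_{\min\{n,N(d,\ell_\gamma)\}}=\Theta_{d,\mathbb{P}}(1)$ fails. The lemma therefore needs the limiting ratio bounded away from $1$. The paper's bias proof avoids this regime only through its split into $n\ge 2N(d,\ell_\gamma)$ versus $n<2N(d,\ell_\gamma)$ with column subsampling.

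If the cited theorem is a statement about the limiting empirical spectral distribution, it controls a positive fraction of the eigenvalues but not the extreme ones. Check that it really supplies the edge control you invoke.

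Finally, the trace statement needs no eigenvalue information. By the addition formula, $\mathrm{tr}(\Psi_{\ell_\gamma}^\top\Psi_{\ell_\gamma})=\sum_{i}\sum_j\psi_{\ell_\gamma,j}(x_i)^2=nN(d,\ell_\gamma)$. Hence $\frac1n\mathrm{tr}(\frac1n\Psi_{\ell_\gamma}^\top\Psi_{\ell_\gamma})=N(d,\ell_\gamma)/n=\Theta_d(1)$ deterministically, even when the ratio tends to $1$.
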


\begin{remark}
    For completeness, we briefly explain how to obtain Lemma \ref{lemma psi top psi} and Lemma \ref{lemma psi top psi_int} from Lemma 11 in \cite{ghorbani2021linearized} and Theorem 1 in \cite{xiao2022precise}. 
    A comparison of notations is given in Table~\ref{tab_comparison_of_notations}. 
    Recall  
    \begin{itemize}
        \item When $\gamma > \ell_{\gamma}$, Proposition 2.1 implies that $n/(N_{\ell_{\gamma}} \log(N_{\ell_{\gamma}}))   = \Theta_d(d^{\gamma - \ell_{\gamma}}\log^{-1}(d)) \to \infty$ as $d \to \infty$. 
        Denote $\Delta = \frac{\Psi_{\leq \ell_{\gamma}}^{\top} \Psi_{\leq \ell_{\gamma}} }{n} - \mathrm{I}_{N_{\ell_{\gamma}}}$, then Lemma 11 in \cite{ghorbani2021linearized} implies $\mathbb{E}[\|\Delta\|_{\mathrm{op}}] = o_{d}(1)$, and Markov's inequality implies $\|\Delta\|_{\mathrm{op}} = o_{d, \mathbb{P}}(1)$.

        \vspace{3pt}

        \item When $\gamma = \ell_{\gamma}$, following the proof of Lemma 11 in \cite{ghorbani2021linearized} with $N_{\ell_{\gamma}}$ being replaced by $N_{\ell_{\gamma}-1}$, we can similarly show (\ref{eqn_lemma psi top psi_int_1}). The remaining part of Lemma \ref{lemma psi top psi_int} is a direct result of Theorem 1 in \cite{xiao2022precise}.
        
    \end{itemize}


    
\end{remark}

\begin{table}[t]
\centering
\caption{Comparison of notations between our paper and \cite{ghorbani2021linearized, xiao2022precise}}
\label{tab_comparison_of_notations}
\begin{tabular}{l l l l}
\hline
\textbf{Concept / Term} & \textbf{Our Paper} & \cite{ghorbani2021linearized} & \cite{xiao2022precise} \\
\hline
Spherical harmonics & $\psi_{k, j}$ & $Y_{k, j}$ & $Y_{k, j}$ \\
\vspace{3pt}
Sample size & $n$ & $n$ & $m$ \\
\vspace{3pt}
Multiplicity of eigenvalue $\mu_k$ & $N(d, k)$ & $B(d, k)$ & $N(d, k)$ \\
\vspace{3pt}
$\lfloor \gamma \rfloor$ & $\ell_{\gamma}$ & $\ell$ & $r$ \\
\vspace{3pt}
$\sum_{k=0}^{\ell_{\gamma}} N(d, k)$ & $N_{\ell_{\gamma}}$ & B &  \\
\hline
\end{tabular}
\end{table}


The following proposition is borrowed from Lemma 3 in \cite{xiao2022precise}.

\begin{proposition}\label{prop_lemma_3_in_xiao}
    Suppose $\gamma = \ell_{\gamma}$. 
    Denote $\overline{D}:= (n\lambda + 1) \cdot n^{-1}\Sigma_{\leq \ell_{\gamma}}^{-1} + n^{-1}\Psi_{\leq \ell_{\gamma}}^{\top} \Psi_{\leq \ell_{\gamma}}$,
    then we have 
    $$
    \lambda_{\text{max}} \left(  \overline{D} \right)  = \Theta_{d, \mathbb{P}}(1)
    \quad \text{ and } \quad
    \lambda_{\text{min}} \left(  \overline{D} \right) = \Theta_{d, \mathbb{P}}(1).
    $$
\end{proposition}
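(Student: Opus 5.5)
Since $\gamma=\ell_{\gamma}$, we split $\Psi_{\leq\ell_{\gamma}}=(\Psi_{\leq\ell_{\gamma}-1},\Psi_{\ell_{\gamma}})$ and $\Sigma_{\leq\ell_{\gamma}}^{-1}=\mathrm{Diag}\{\Sigma_{\leq\ell_{\gamma}-1}^{-1},\mu_{\ell_{\gamma}}^{-1}\mathrm{I}_{N(d,\ell_{\gamma})}\}$, and bound the extreme eigenvalues of $\overline{D}$ blockwise. The plan has three steps: an upper bound from the block sizes, a lower bound on the top-degree block, and a Schur-complement argument that combines the two.

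\emph{Upper bound.} Proposition~\ref{prop:inner_edr} gives $n^{-1}\mu_k^{-1}=\Theta_d(d^{k-\gamma})$. This is $o_d(1)$ for $k\le\ell_{\gamma}-1$ and $\Theta_d(1)$ for $k=\ell_{\gamma}$. In the interpolation-type regime relevant here (Lemma~\ref{lemma lambda max min} and its uses), $(n\lambda+1)=O_d(1)$; this should be checked against how the proposition is used. Granting it, $\lambda_{\max}((n\lambda+1)n^{-1}\Sigma_{\leq\ell_\gamma}^{-1})=O_d(1)$. The Gram part satisfies
\[
\lambda_{\max}\bigl(n^{-1}\Psi_{\leq\ell_\gamma}^\top\Psi_{\leq\ell_\gamma}\bigr)\le \lambda_{\max}\bigl(n^{-1}\Psi_{\leq\ell_\gamma-1}^\top\Psi_{\leq\ell_\gamma-1}\bigr)+\lambda_{\max}\bigl(n^{-1}\Psi_{\ell_\gamma}^\top\Psi_{\ell_\gamma}\bigr)\cdot O(1),
\]
which is $O_{d,\mathbb P}(1)$ by Lemma~\ref{lemma psi top psi_int} (a $2\times2$ block matrix is bounded by twice the sum of its diagonal blocks). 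So $\lambda_{\max}(\overline D)=O_{d,\mathbb P}(1)$. A matching $\Omega$ bound for $\lambda_{\max}$ is immediate: the $(1,1)$ entry of $\overline D$ is at least $n^{-1}\|\Psi_{0,1}\|^2=1$.

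\emph{Lower bound.} This is the main step. The diagonal part $(n\lambda+1)n^{-1}\mu_{\ell_\gamma}^{-1}\mathrm{I}$ on the degree-$\ell_{\gamma}$ block is bounded below by a constant $c>0$, because $N(d,\ell_\gamma)/n$ and $n\mu_{\ell_\gamma}$ are both $\Theta_d(1)$. On the low block, Lemma~\ref{lemma psi top psi_int} gives $n^{-1}\Psi_{\leq\ell_\gamma-1}^\top\Psi_{\leq\ell_\gamma-1}=\mathrm{I}+o_{d,\mathbb P}(1)$.

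To combine these, write $\overline D=\begin{pmatrix}A&B\\B^\top&C\end{pmatrix}$ with $A=\mathrm{I}+o(1)$, $B=n^{-1}\Psi_{\leq\ell_\gamma-1}^\top\Psi_{\ell_\gamma}$, and $C\succeq c\,\mathrm{I}+n^{-1}\Psi_{\ell_\gamma}^\top\Psi_{\ell_\gamma}$. The Schur complement is
\[
C-B^\top A^{-1}B\succeq c\,\mathrm{I}+n^{-1}\Psi_{\ell_\gamma}^\top\bigl(\mathrm{I}-P(1+o(1))\bigr)\Psi_{\ell_\gamma},
\]
where $P=\Psi_{\leq\ell_\gamma-1}(\Psi_{\leq\ell_\gamma-1}^\top\Psi_{\leq\ell_\gamma-1})^{-1}\Psi_{\leq\ell_\gamma-1}^\top$ is a projection. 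Hence $\mathrm{I}-P(1+o(1))\succeq -o(1)\,\mathrm{I}$. Since $\|n^{-1}\Psi_{\ell_\gamma}^\top\Psi_{\ell_\gamma}\|=O_{d,\mathbb P}(1)$, the Schur complement is $\succeq (c-o_{d,\mathbb P}(1))\mathrm{I}$. Together with $\lambda_{\min}(A)\ge 1-o(1)$, the standard Schur-complement eigenvalue inequality gives
\[
\lambda_{\min}(\overline D)\ \ge\ \frac{\min\{\lambda_{\min}(A),\lambda_{\min}(C-B^\top A^{-1}B)\}}{1+\|A^{-1}B\|^2}.
\]
The denominator is $O_{d,\mathbb P}(1)$ because $\|B\|\le \|n^{-1/2}\Psi_{\leq\ell_\gamma-1}\|\,\|n^{-1/2}\Psi_{\ell_\gamma}\|=O_{d,\mathbb P}(1)$. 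This yields $\lambda_{\min}(\overline D)=\Omega_{d,\mathbb P}(1)$ and completes the argument.

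The obstacle is that $n^{-1}\Psi_{\ell_\gamma}^\top\Psi_{\ell_\gamma}$ is genuinely rank-deficient or ill-conditioned when $N(d,\ell_\gamma)\gtrsim n$, so the Gram part alone cannot give the lower bound. The ridge term $n^{-1}\mu_{\ell_\gamma}^{-1}$ has to supply it, while the cross term $B$ is controlled through the projection argument above rather than through any concentration result.
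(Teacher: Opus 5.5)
Your argument is correct, but it does not follow the paper's route, because the paper gives no proof of its own: it imports the statement from Lemma~3 of \cite{xiao2022precise}. You instead derive it directly from Lemma~\ref{lemma psi top psi_int}. That lemma supplies near-isometry of $n^{-1}\Psi_{\leq \ell_\gamma-1}^\top\Psi_{\leq\ell_\gamma-1}$ and $\Theta_{d,\mathbb P}(1)$ spectral bounds on $n^{-1}\Psi_{\ell_\gamma}^\top\Psi_{\ell_\gamma}$. You combine it with the observation that the ridge $n^{-1}\mu_{\ell_\gamma}^{-1}$ is of order exactly one at the critical degree, which is what rescues the rank-deficient top block.

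The citation is shorter but leaves the hypotheses implicit, and your route makes them visible. The $\lambda_{\max}$ upper bound genuinely needs $n\lambda=O_d(1)$; otherwise the degree-$\ell_\gamma$ diagonal entries of $\overline D$ diverge. This condition does hold at the only place the proposition is invoked, case (c) of Lemma~\ref{lemma lambda max min}(v), where $u\ge\gamma$. The $\lambda_{\min}$ lower bound, which is the only part used there, holds for every $\lambda\ge 0$.

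Your Schur-complement step can be tightened. Since $A\succeq n^{-1}\Psi_{\leq\ell_\gamma-1}^\top\Psi_{\leq\ell_\gamma-1}$, operator monotonicity of the inverse gives $n^{-1}\Psi_{\leq\ell_\gamma-1}A^{-1}\Psi_{\leq\ell_\gamma-1}^\top\preceq P$ exactly. Hence $C-B^\top A^{-1}B\succeq c\,\mathrm{I}$ with no $o(1)$ correction.

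One slip needs fixing: the Schur-complement eigenvalue bound you quote, with denominator $1+\|A^{-1}B\|^2$, is false as stated. For the matrix $\begin{pmatrix}1&1\\1&2\end{pmatrix}$, where $A=1$, $A^{-1}B=1$ and the Schur complement is $1$, it predicts $\lambda_{\min}\ge 1/2$, but $\lambda_{\min}=(3-\sqrt5)/2<1/2$. The correct version comes from writing $\overline D=L^\top\mathrm{Diag}\{A,\,C-B^\top A^{-1}B\}L$ with $L=\begin{pmatrix}\mathrm{I}&A^{-1}B\\0&\mathrm{I}\end{pmatrix}$. This gives denominator $\|L^{-1}\|^2\le(1+\|A^{-1}B\|)^2$, which is still $O_{d,\mathbb P}(1)$, so your conclusion is unaffected.
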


Recall that $\tilde{\ell} =\min\{\ell_{\gamma}, \ell_{\lambda}\}$. Hence, from Assumption 4 we have
\begin{equation}\label{eqn:regular_para_upper}
   \lambda d^{\ell_{\lambda}} = O_{d}(1),\text{ and }  \lambda d^{\ell_{\lambda}+1}\rightarrow \infty. 
\end{equation}
In the proof of the next lemma, we will frequently use (\ref{eqn:regular_para_upper}).

\begin{lemma}\label{lemma lambda max min}
We have the following bounds:
\begin{itemize}
    \item[(i)] $\lambda_{\text{min}}\left( M\right) = \Omega_{d, \mathbb{P}}(n\lambda+1)$, 
$\mathrm{tr}(M / n)=O_{d, \mathbb{P}}(n\lambda+1)$.

    \item[(ii)] $M_{>\ell_{\gamma}} = K_{>\ell_{\gamma}} + n\lambda \mathrm{I}_n = \Theta_{d, \mathbb{P}}(n\lambda + 1) \left(\mathrm{I}_{n} + \Delta_{1}\right)$.

    \item[(iii)] 
    When $\ell_{\lambda}+1 \leq \ell_{\gamma}$, for any $i=N_{\ell_{\lambda}}+1, \cdots, \lfloor\min\{n, N_{\ell_{\lambda}+1}\}\rfloor$,
    we have
        $$
        \lambda_i(M) = O_{d, \mathbb{P}}(n\lambda + 1).
        $$

    \item[(iv)] When $\ell_{\lambda}<\ell_{\gamma}$, we have 
    \begin{align*}
        \lambda_{\text{max}}(M_{>\ell_{\lambda}}) &= \Theta_{d, \mathbb{P}}(n\lambda),
        \quad
        \lambda_{\text{min}}(M_{>\ell_{\lambda}}) = \Theta_{d, \mathbb{P}}(n\lambda),
    \end{align*}
    and $\lambda_{\text{max}}(K_{>\ell_{\lambda}}) =
 O_{d, \mathbb{P}}\left({d^{\gamma-\ell_{\lambda}-1}}\right) =  
    o_{d, \mathbb{P}}(n\lambda)$.

    \item[(v)] Denote $A = n^{-1}\Sigma_{\leq \tilde{\ell}}^{-1} + n^{-1}\Psi_{\leq \tilde{\ell}}^{\top} M_{>\tilde{\ell}}^{-1} \Psi_{\leq \tilde{\ell}}$, then we have
    \begin{displaymath}
        \lambda_{\text{max}} \left(  A^{-1} \right)
        = \Theta_{d, \mathbb{P}}(n\lambda + 1)
        \quad \text{ and } \quad
        \lambda_{\text{min}} \left(  A^{-1} \right) = \Theta_{d, \mathbb{P}}(n\lambda + 1).
    \end{displaymath}

\end{itemize}
\end{lemma}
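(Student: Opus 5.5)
The plan is to derive all five items from Lemma \ref{lemma psi psi top}, Lemmas \ref{lemma psi top psi}--\ref{lemma psi top psi_int}, Proposition \ref{prop_lemma_3_in_xiao} and (\ref{eqn:regular_para_upper}), using only Weyl's inequality and the block decomposition $K=K_{\le \ell_\lambda}+K_{\ell_\lambda+1,\ell_\gamma}+K_{>\ell_\gamma}$.

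\textbf{Item (ii).} Since $K_{>\ell_\gamma}=\kappa_1(\mathrm{I}_n+\Delta_1)$ with $\kappa_1=\Theta_d(1)$, we get $M_{>\ell_\gamma}=(\kappa_1+n\lambda)\mathrm{I}_n+\kappa_1\Delta_1$. Because $\kappa_1\le \kappa_1+n\lambda$ and $\kappa_1+n\lambda=\Theta_d(n\lambda+1)$, this is of the claimed form.

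\textbf{Item (i).} For the lower bound, $M\succeq M_{>\ell_\gamma}$, so (ii) gives $\lambda_{\min}(M)=\Omega_{d,\mathbb P}(n\lambda+1)$. For the trace, $\operatorname{tr}(K)/n=\frac1n\sum_i\Phi(1)\le a_{-1}$, so $\operatorname{tr}(M/n)\le a_{-1}+n\lambda=O(n\lambda+1)$.

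\textbf{Item (iv).} Assume $\ell_\lambda<\ell_\gamma$, so $n\lambda=\Theta(d^{\gamma-u})\to\infty$. Write $K_{>\ell_\lambda}=K_{\ell_\lambda+1,\ell_\gamma}+K_{>\ell_\gamma}$.
\begin{itemize}
\item The second term is $O_{d,\mathbb P}(1)$.
\item For the first term, $\|K_{\ell_\lambda+1,\ell_\gamma}\|_{\mathrm{op}}\le \mu_{\ell_\lambda+1}\|\Psi_{\ell_\lambda+1,\ell_\gamma}\|_{\mathrm{op}}^2$. By Lemma \ref{lemma psi top psi} (or Lemma \ref{lemma psi top psi_int} when $\gamma$ is an integer), $\|\Psi_{\le\ell_\gamma}^\top\Psi_{\le\ell_\gamma}/n\|=O_{d,\mathbb P}(1)$; in the integer case the $\ell_\gamma$ block has $\lambda_{\max}=\Theta(1)$ as well. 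Hence $\|\Psi_{\ell_\lambda+1,\ell_\gamma}\|^2=O(n)$ and $\lambda_{\max}(K_{>\ell_\lambda})=O(d^{\gamma-\ell_\lambda-1}+1)$.
\item Since $u<\ell_\lambda+1$, we have $d^{\gamma-\ell_\lambda-1}=o(d^{\gamma-u})=o(n\lambda)$.
\end{itemize}
Therefore $n\lambda\le \lambda_{\min}(M_{>\ell_\lambda})\le\lambda_{\max}(M_{>\ell_\lambda})\le n\lambda(1+o_{d,\mathbb P}(1))$.

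\textbf{Item (iii).} $K_{\le\ell_\lambda}$ has rank at most $N_{\ell_\lambda}$. By Weyl's inequality, $\lambda_i(M)\le \lambda_{i-N_{\ell_\lambda}}(M_{>\ell_\lambda})+\lambda_{N_{\ell_\lambda}+1}(K_{\le \ell_\lambda})=\lambda_{\max}(M_{>\ell_\lambda})$ for $i>N_{\ell_\lambda}$. Item (iv) then gives $O(n\lambda)$. Item (iv) applies because $\ell_\lambda+1\le\ell_\gamma$.

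\textbf{Item (v).} I split into two cases.

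If $\ell_\lambda<\ell_\gamma$, then $\tilde\ell=\ell_\lambda$ and $M_{>\tilde\ell}^{-1}=\Theta(n\lambda)^{-1}$ in the PSD order by (iv). Together with $\Psi_{\le\tilde\ell}^\top\Psi_{\le\tilde\ell}/n=\Theta(1)$ (by (\ref{eqn_bound_eigen_psi_top_psi})), this gives $n^{-1}\Psi^\top M_{>\tilde\ell}^{-1}\Psi=\Theta((n\lambda)^{-1})$. Meanwhile $n^{-1}\Sigma^{-1}_{\le\tilde\ell}\preceq \mu_{\ell_\lambda}^{-1}/n=O(d^{\ell_\lambda-\gamma})=O((n\lambda)^{-1})$ by (\ref{eqn:regular_para_upper}). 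So $A=\Theta((n\lambda)^{-1})$ both above and below.

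If $\ell_\lambda\ge\ell_\gamma$, then $\tilde\ell=\ell_\gamma$ and (ii) gives $M_{>\ell_\gamma}^{-1}=\Theta((n\lambda+1)^{-1})$.
\begin{itemize}
\item When $\gamma>\ell_\gamma$, the term $n^{-1}\Sigma^{-1}_{\le\ell_\gamma}=O(d^{\ell_\gamma-\gamma})$ is negligible, and also $O((n\lambda+1)^{-1})$ since $n\lambda=O(d^{\gamma-\ell_\gamma})$ up to constants. Then $A=\Theta((n\lambda+1)^{-1})$.
\item When $\gamma=\ell_\gamma$, the two terms are comparable. This is exactly $(n\lambda+1)^{-1}\overline D$ up to constant PSD sandwiching, and Proposition \ref{prop_lemma_3_in_xiao} gives $\overline D=\Theta(1)$.
\end{itemize}

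\textbf{Main obstacle.} The delicate step is (v) in the integer case. There $\Psi_{\ell_\gamma}^\top\Psi_{\ell_\gamma}/n$ is rank-deficient when $N(d,\ell_\gamma)>n$, so the lower bound on $A$ must come from the $\Sigma^{-1}$ term. This is why Proposition \ref{prop_lemma_3_in_xiao} is needed rather than a direct argument; I will rely on it in the form stated.
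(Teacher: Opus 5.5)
Your proposal is correct and follows essentially the same route as the paper: Weyl's inequality together with Lemma \ref{lemma psi psi top}, Lemmas \ref{lemma psi top psi}--\ref{lemma psi top psi_int}, the same three-way case analysis for (v), and Proposition \ref{prop_lemma_3_in_xiao} for the integer-$\gamma$ case with $\ell_\lambda\ge\ell_\gamma$. The local differences are both valid and slightly simpler. In (i) you bound the trace deterministically via $\mathrm{tr}(K)/n=\Phi(1)\le a_{-1}$. In (iii) you use the rank of $K_{\le \ell_\lambda}$ together with (iv), instead of the paper's SVD of $\Psi_{\le\ell_\gamma}$ and the eigenvalues of $\Sigma_{\le\ell_\gamma}$.
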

\begin{proof}
\noindent {\bf {(i)}} From Lemma \ref{lemma psi psi top}, we have 
\begin{equation*}
        \begin{aligned}
        \lambda_{\text{min}}\left( M\right) \overset{\text{Proposition } \ref{prop_weyl_ine}}{\geq}&~ \lambda_{\text{min}} \left( \Psi_{\leq \ell_{\gamma}} \Sigma_{\leq \ell_{\gamma}} \Psi_{\leq \ell_{\gamma}}^{\top} \right) + \lambda_{\text{min}} \left( \kappa_{1} \mathrm{I}_{n} \right) + \lambda_{\text{min}} \left( \kappa_{1} \Delta_{1} \right) + n\lambda\\
        \geq &~ \kappa_{1}(1-o_{d, \mathbb{P}}(1)) + n\lambda
        = \Omega_{d, \mathbb{P}}(n\lambda+1).
    \end{aligned}
\end{equation*}
Further, from Lemma \ref{lemma psi psi top}, we have
\begin{equation*}
        \begin{aligned}
\mathrm{tr}\left(\frac{M}{n}\right) 
=&~ \mathrm{tr}\left( \frac{K}{n} +\lambda \mathrm{I}_{n} \right)\\
=&~ \mathrm{tr}\left( \frac{1}{n} \Psi_{\leq \ell_{\gamma}} \Sigma_{\leq \ell_{\gamma}} \Psi_{\leq \ell_{\gamma}}^{\top} \right) + \mathrm{tr}\left( \frac{\kappa_{1}}{n} \mathrm{I}_{n}  \right) + \mathrm{tr}\left( \frac{\kappa_{1}}{n} \mathbf{\Delta}_{1}  \right) + n\lambda   \\
=&~ \mathrm{tr}\left( \frac{1}{n} \Psi_{\leq \ell_{\gamma}}^{\top} \Psi_{\leq \ell_{\gamma}} \Sigma_{\leq \ell_{\gamma}} \right) + \kappa_{1} \left( 1 + o_{d, \mathbb{P}}(1)\right) + n\lambda   \\
\overset{\text{Lemma } \ref{lemma psi top psi} \text{ and Lemma } \ref{lemma psi top psi_int}}{=}&~ O_{d, \mathbb{P}}\left(\mathrm{tr}\left(  \Sigma_{\leq \ell_{\gamma}}\right)\right) 
+ \kappa_{1} \left( 1 + o_{d, \mathbb{P}}(1)\right) + n\lambda   \\
=&~ O_{d, \mathbb{P}}(n\lambda+1),  
    \end{aligned}
\end{equation*}

\noindent {\bf {(ii)}} It is a direct corollary of Lemma \ref{lemma psi psi top}.

\noindent {\bf {(iii)}}  
Denote the SVD decomposition of $ \Psi_{\leq \ell_{\gamma}} $ as $ \Psi_{\leq \ell_{\gamma}} = n^{1/2} ~ O H V^{\top} $, where $ O \in \mathbb{R}^{n \times n}$ and $ V \in \mathbb{R}^{N_{\ell_{\gamma}} \times N_{\ell_{\gamma}}}$ are orthogonal matrices, $H = \left[ H_{\star}; 0 \right]^{\top} \in \mathbb{R}^{n \times N_{\ell_{\gamma}}} $, and $H_{\star} \in \mathbb{R}^{N_{\ell_{\gamma}} \times N_{\ell_{\gamma}}} $ is a diagonal matrix. Then, Lemma \ref{lemma psi top psi} and Lemma \ref{lemma psi top psi_int} imply that both $\lambda_{\text{max}}\left(H_{\star}\right)$ and $  \lambda_{\text{min}}\left(H_{\star}\right)$ are of order $\Theta_{d, \mathbb{P}}(1)$.\\
Recall the assumption that  $\ell_{\lambda}+1 \leq \ell_{\gamma}$. Hence, for any $i=N_{\ell_{\lambda}}+1, \cdots, \lfloor\min\{n, N_{\ell_{\lambda}+1}\}\rfloor$, from Lemma \ref{lemma psi psi top}, we have 
\begin{equation*}
        \begin{aligned}
        \lambda_{i}\left( M\right) 
        = &~ \lambda_{i}\left( \Psi_{\leq \ell_{\gamma}}\Sigma_{\leq \ell_{\gamma}} \Psi_{\leq \ell_{\gamma}}^{\top} + M_{>\ell_{\gamma}}\right)\\
        \overset{\text{Proposition~ \ref{prop_weyl_ine}}}{\leq}&~
        \lambda_{i} \left(  \Psi_{\leq \ell_{\gamma}}\Sigma_{\leq \ell_{\gamma}} \Psi_{\leq \ell_{\gamma}}^{\top}  \right) + \lambda_{1}\left(  M_{>\ell_{\gamma}} \right)\\
        \overset{\text{Part (ii)}}{\leq} &~
        \lambda_{i} \left(  \Psi_{\leq \ell_{\gamma}}\Sigma_{\leq \ell_{\gamma}} \Psi_{\leq \ell_{\gamma}}^{\top}  \right) + O_{d, \mathbb{P}}\left(n\lambda+1\right)\\
        =&~ \lambda_{i} \left(  H_{\star} V^{\top} \Sigma_{\leq \ell_{\gamma}} V H_{\star}^{\top}  \right) + O_{d, \mathbb{P}}\left(n\lambda+1\right)\\
        \leq &~ 
        \lambda_{\text{max}}\left(H_{\star}\right)^2 \lambda_{i} \left(\Sigma_{\leq \ell_{\gamma}} \right) + O_{d, \mathbb{P}}\left(n\lambda+1\right)\\
        =&~ O_{d, \mathbb{P}}(n\mu_{\ell_{\lambda}+1} + n\lambda+1) \overset{(\ref{eqn:regular_para_upper})}{=} O_{d, \mathbb{P}}(n\lambda+1).
    \end{aligned}
\end{equation*}



\noindent {\bf {(iv)}} When $\ell_{\lambda}<\ell_{\gamma}$, we have $
1 = o_d(n\lambda)$ and $nd^{-\ell_{\lambda}-1}=\Omega_d(1)$.
As in Case (i), we can show that $\lambda_{\text{min}}(M_{>\ell_{\lambda}}) = \Omega_{d, \mathbb{P}}(n\lambda+1) = \Omega_{d, \mathbb{P}}(n\lambda)$. On the other hand, we have
\begin{equation}\label{eqn_minmax_iv_1}
    \begin{aligned}
        \lambda_{\text{max}}(M_{>\ell_{\lambda}})
        \overset{\text{Proposition } \ref{prop_weyl_ine}}{\leq} &~
        \lambda_{\text{max}}(K_{\ell_{\lambda}+1, \ell_{\gamma}}) +
        \lambda_{\text{max}}(K_{>\ell_{\gamma}}) + n\lambda\\
        \overset{\text{Lemma } \ref{lemma psi psi top}}{=}&~
        \lambda_{\text{max}}(K_{\ell_{\lambda}+1, \ell_{\gamma}}) +
        O_{d, \mathbb{P}}(n\lambda)\\
    \end{aligned}
\end{equation}
 Denote the SVD decomposition of $ \Psi_{\ell_{\lambda}+1, \ell_{\gamma}} $ as $ \Psi_{\ell_{\lambda}+1, \ell_{\gamma}} = n^{1/2} ~ O H V^{\top} $, where $ O \in \mathbb{R}^{n \times n}$ and $ V \in \mathbb{R}^{(N_{\ell_{\gamma}}-N_{\ell_{\lambda}}) \times (N_{\ell_{\gamma}}-N_{\ell_{\lambda}})}$ are orthogonal matrices; $H = \left[ H_{\star}; 0 \right]^{\top} \in \mathbb{R}^{n \times (N_{\ell_{\gamma}}-N_{\ell_{\lambda}})} $ and $H_{\star} \in \mathbb{R}^{(N_{\ell_{\gamma}}-N_{\ell_{\lambda}}) \times (N_{\ell_{\gamma}}-N_{\ell_{\lambda}})} $ is a diagonal matrix. 
 Then, Lemma \ref{lemma psi top psi} and Lemma \ref{lemma psi top psi_int} imply that both $\lambda_{\text{max}}\left(H_{\star}\right)$ and $  \lambda_{\text{min}}\left(H_{\star}\right)$ are of order $\Theta_{d, \mathbb{P}}(1)$. Therefore, 
\begin{equation}\label{eqn_minmax_iv_2}
    \begin{aligned}
        \lambda_{\text{max}}(K_{\ell_{\lambda}+1, \ell_{\gamma}})
        =&~
        \lambda_{\text{max}}\left(\Psi_{\ell_{\lambda}+1, \ell_{\gamma}} \Sigma_{\ell_{\lambda}+1, \ell_{\gamma}} \Psi_{\ell_{\lambda}+1, \ell_{\gamma}}^{\top}\right)\\
         = &~
         O_{d, \mathbb{P}}\left(\frac{n}{d^{\ell_{\lambda}+1}}\right) \overset{(\ref{eqn:regular_para_upper})}{=} o_{d, \mathbb{P}}(n\lambda).
    \end{aligned}
\end{equation}
Combining all these, we have that both $\lambda_{\text{max}}(M_{>\ell_{\lambda}})$ and $ \lambda_{\text{min}}(M_{>\ell_{\lambda}})$ are of order $\Theta_{d, \mathbb{P}}(n\lambda)$.
Finally, as in (\ref{eqn_minmax_iv_1}) and (\ref{eqn_minmax_iv_2}), we can show that $\lambda_{\text{max}}(K_{>\ell_{\lambda}}) = O_{d, \mathbb{P}}(nd^{-\ell_{\lambda}-1}+ 1)=O_{d, \mathbb{P}}(nd^{-\ell_{\lambda}-1})=o_{d, \mathbb{P}}(n\lambda)$, finishing the proof of (iv).

\noindent {\bf {(v)}} We have $A = n^{-1}\Sigma_{\leq \tilde{\ell}}^{-1} + n^{-1}\Psi_{\leq \tilde{\ell}}^{\top} M_{>\tilde{\ell}}^{-1} \Psi_{\leq \tilde{\ell}}$. On the one hand, from Lemma \ref{lemma psi top psi}, Lemma \ref{lemma psi top psi_int}, (ii), and (iv), we have
\begin{equation*}
    \lambda_{\text{max}} \left(  A \right) = O_{d, \mathbb{P}}\left(\frac{d^{\tilde{\ell}}}{n} + \frac{1}{n(\lambda + n^{-1})}\right) \overset{(\ref{eqn:regular_para_upper})}{=} O_{d, \mathbb{P}}\left(\frac{1}{n\lambda + 1}\right);
\end{equation*}
On the other hand, to obtain the lower bound of $\lambda_{\text{min}} \left(  A \right)$, we consider the following three cases: 
(a) $\gamma >\ell_{\gamma}$;
(b) $\gamma = \ell_{\gamma}$ and $\ell_{\lambda} < \ell_{\gamma}$;
(b) $\gamma = \ell_{\gamma}$ and $\ell_{\lambda} \geq \ell_{\gamma}$.

\noindent {\bf Case (a). }
When $\gamma >\ell_{\gamma}$, from Lemma \ref{lemma psi top psi}, (ii), and (iv), we have
$$
\lambda_{\text{min}} \left(  A \right) \overset{\text{Proposition } \ref{prop_weyl_ine}}{\geq} \lambda_{\text{min}} \left(n^{-1}\Psi_{\leq \tilde{\ell}}^{\top} M_{>\tilde{\ell}}^{-1} \Psi_{\leq \tilde{\ell}}\right)
= \Omega_{d, \mathbb{P}}\left(\frac{1}{n\lambda + 1}\right).
$$

\noindent {\bf Case (b). }
When $\gamma = \ell_{\gamma}$ and $\tilde{\ell}= \ell_{\lambda} < \ell_{\gamma}$, 
from Lemma \ref{lemma psi top psi_int} and (iv), we have
$$
\lambda_{\text{min}} \left(  A \right) \geq \lambda_{\text{min}} \left(n^{-1}\Psi_{\leq \ell_{\lambda}}^{\top} M_{>\ell_{\lambda}}^{-1} \Psi_{\leq \ell_{\lambda}}\right)
= \Omega_{d, \mathbb{P}}\left(\frac{1}{n\lambda + 1}\right).
$$

\noindent {\bf Case (c). } When $\gamma = \ell_{\gamma}$ and $\ell_{\lambda} \geq \ell_{\gamma}$, we have
\begin{equation*}
    \begin{aligned}
\lambda_{\text{min}} \left(  A \right) \overset{\text{(ii)}}{\geq} 
&~\lambda_{\text{min}} \left(n^{-1}\Sigma_{\leq \ell_{\gamma}}^{-1} + (n\lambda + 1)^{-1}n^{-1}\Psi_{\leq \ell_{\gamma}}^{\top} \Psi_{\leq \ell_{\gamma}}\right) (1- o_{d, \mathbb{P}}(1))\\
\overset{\text{Proposition } \ref{prop_lemma_3_in_xiao}}{=} &~
\Omega_{d, \mathbb{P}}\left(\frac{1}{n\lambda + 1}\right),
    \end{aligned}
\end{equation*}
and we get the desired results.
\end{proof}

With some modifications, the following lemma restates Lemma 8, 9, and 10 in \cite{zhang2024phase}.

\begin{lemma}\label{lemma ge l l2}
We have
\begin{itemize}
    \item[(i)] $\left\| \Psi_{>\tilde{\ell}} \theta_{>\tilde{\ell}} \right\|_{2}^{2} = O_{d, \mathbb{P}}\left( n  \| \theta_{>\tilde{\ell}} \|_{2}^{2}\right)$.

    \item[(ii)] $\| \theta_{>\tilde{\ell}} \|_{2}^{2}  = \Theta_d \left(  d^{-(\tilde{\ell}+1)s} \right)$.

    \item[(iii)] $\left\| \Sigma_{\leq \tilde{\ell}}^{-\tau^{\prime}}  \theta_{\leq \tilde{\ell}} \right\|_{2}^{2}  = \Theta_{d} \left(  d^{(2\tau^{\prime}-\min\{s, 2\tau^{\prime}\})\tilde{\ell} } \right)$, $\tau^{\prime} \geq 1$.
\end{itemize}
\end{lemma}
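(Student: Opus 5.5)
The three items of the lemma concern only the coefficient vector $\theta$ of $f_{\star}$ and the feature matrix $\Psi$. I would prove them in the order (ii), (iii), (i), since (i) uses the deterministic bound from (ii).

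For (ii), write $\|\theta_{>\tilde{\ell}}\|_2^2=\sum_{k=\tilde{\ell}+1}^{\ell_{\gamma}+1}\|\theta_k\|_2^2+\sum_{j>N_{\ell_{\gamma}+1}}f_j^2$. By Proposition~\ref{prop:inner_edr}, every eigenvalue with index beyond $N_{\tilde{\ell}}$ is at most $\mu_{\tilde{\ell}+1}=\Theta_d(d^{-(\tilde{\ell}+1)})$. This includes the tail, since $\lambda_j\le \mu_{\ell_{\gamma}+1}$ there and $\lambda_j$ is non-increasing. Hence
\begin{equation*}
\|\theta_{>\tilde{\ell}}\|_2^2\le \mu_{\tilde{\ell}+1}^{s}\sum_{j>N_{\tilde{\ell}}}\lambda_j^{-s}f_j^2\le R\,\mu_{\tilde{\ell}+1}^s=O_d\bigl(d^{-(\tilde{\ell}+1)s}\bigr)
\end{equation*}
by Assumption~\ref{assump_function_calss}. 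For the matching lower bound, use the block $m=\tilde{\ell}+1$ of Assumption~\ref{assump_source condition_lower_bound}:
\begin{equation*}
\|\theta_{\tilde{\ell}+1}\|_2^2\ge c_0\,\mu_{\tilde{\ell}+1}^{s}=\Omega_d\bigl(d^{-(\tilde{\ell}+1)s}\bigr).
\end{equation*}

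For (iii), set $\tau'\ge1$ and $\tilde{s}=\min\{s,2\tau'\}$, and write
\begin{equation*}
\|\Sigma_{\le\tilde{\ell}}^{-\tau'}\theta_{\le\tilde{\ell}}\|_2^2=\sum_{k\le\tilde{\ell}}\mu_k^{-2\tau'}\|\theta_k\|_2^2 .
\end{equation*}
If $s\le2\tau'$, the $k$-th term equals $\mu_k^{s-2\tau'}\cdot\mu_k^{-s}\|\theta_k\|_2^2$. Here $\mu_k^{s-2\tau'}=\Theta_d(d^{(2\tau'-s)k})$, which is maximized at $k=\tilde{\ell}$, and $\sum_k\mu_k^{-s}\|\theta_k\|_2^2\le R$, so the sum is $O_d(d^{(2\tau'-s)\tilde{\ell}})$. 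The lower bound comes from the $k=\tilde{\ell}$ term, since $\mu_{\tilde{\ell}}^{-s}\|\theta_{\tilde{\ell}}\|_2^2\ge c_0$ by Assumption~\ref{assump_source condition_lower_bound}. If $s>2\tau'$, then $\mu_k^{-2\tau'}\|\theta_k\|_2^2\le\mu_k^{s-2\tau'}\mu_k^{-s}\|\theta_k\|_2^2$ and $\mu_k^{s-2\tau'}=O_d(1)$, so the sum is $O_d(1)$. The lower bound $\Omega_d(1)$ uses $\mu_k^{-2\tau'}\ge\mu_0^{-2\tau'}\gtrsim1$ together with the second part of Assumption~\ref{assump_source condition_lower_bound}, namely $\sum_{k\le\tilde{\ell}}\|\theta_k\|_2^2\ge c_0$. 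That part is stated for $s>2\tau$; when $\tau=\infty$ I would instead apply it through $\tau'$, and this needs care about which exponent the paper intends. In both cases the order is $d^{(2\tau'-\tilde{s})\tilde{\ell}}$.

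For (i), I would not go through $\Psi_{>\tilde{\ell}}^{\top}\Psi_{>\tilde{\ell}}$, whose operator norm can be large. Instead set $g=\sum_{j>N_{\tilde{\ell}}}f_j\phi_j$, so that $\Psi_{>\tilde{\ell}}\theta_{>\tilde{\ell}}=(g(x_1),\dots,g(x_n))^{\top}$. Then
\begin{equation*}
\mathbb{E}\|\Psi_{>\tilde{\ell}}\theta_{>\tilde{\ell}}\|_2^2=n\|g\|_{L^2}^2=n\|\theta_{>\tilde{\ell}}\|_2^2
\end{equation*}
by orthonormality of the $\phi_j$. Markov's inequality then gives the $O_{d,\mathbb{P}}$ bound. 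The only subtlety is that the blocks $>\tilde{\ell}$ must be unions of whole eigenspaces, which Proposition~\ref{prop:inner_edr} guarantees for $\tilde{\ell}\le\ell_{\gamma}+1$. Since (i) is an expectation argument and (ii) is deterministic, I expect the main obstacle to be the bookkeeping in (iii) between $\tau$ and $\tau'$ in the $s>2\tau'$ case, not any probabilistic step.
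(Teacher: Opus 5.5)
Your proposal is correct and matches the paper. Part (iii) is argued exactly as in the paper: split into $s\le 2\tau'$ and $s>2\tau'$, take the upper bound from Assumption~\ref{assump_function_calss}, and take the lower bound from the $k=\tilde{\ell}$ term or from the second part of Assumption~\ref{assump_source condition_lower_bound}. For (i) and (ii) the paper only cites Lemmas S8 and S10 of \cite{zhang2024phase}; your Markov bound on $\mathbb{E}\|\Psi_{>\tilde{\ell}}\theta_{>\tilde{\ell}}\|_2^2=n\|\theta_{>\tilde{\ell}}\|_2^2$, and your eigenvalue-ordering argument with Assumption~\ref{assump_source condition_lower_bound} at $m=\tilde{\ell}+1$, are the standard self-contained proofs of those facts.

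Your worry about $\tau$ versus $\tau'$ is vacuous for the $\tau'$ defined in \eqref{eqn_def_of_tau_prime}. When $\tau=\infty$, every branch gives $2\tau'>s$, so $s>2\tau'$ forces $\tau<\infty$ and $\tau'=\tau$, and the assumption applies verbatim. For a generic exponent such as $\tau'=1<\tau$, only the upper bound is available, and that is all the paper uses in that case.
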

\begin{proof}
(i) and (ii) are direct results of Lemma S8 and S10 in \cite{zhang2024phase}. Now let's prove (iii). We have
\begin{equation}\label{eq proof le l-1}
        \left\| \Sigma_{\leq \tilde{\ell}}^{-\tau^{\prime}}  \theta_{\leq \tilde{\ell}} \right\|_{2}^{2} = 
        \sum\limits_{m = 0}^{\tilde{\ell}} \mu_{m}^{-2\tau^{\prime}} \sum\limits_{j = 1 }^{N(d, m)} (\theta_{m,j})^{2} 
        = \sum\limits_{m = 0}^{\tilde{\ell}} \mu_{m}^{s-2\tau^{\prime}} \mu_{m}^{-s}\sum\limits_{j = 1 }^{N(d, m)} (\theta_{m,j})^{2}.
    \end{equation}
On the one hand, when $0 \leq s \leq 2\tau^{\prime}$, (\ref{eq proof le l-1}) implies
\begin{equation*}
\begin{aligned}
    \left\| \Sigma_{\leq \tilde{\ell}}^{-\tau^{\prime}}  \theta_{\leq \tilde{\ell}} \right\|_{2}^{2} & \overset{\text{Assumption } 3}{=} 
    O_{d} \left(  \mu_{\tilde{\ell}}^{s-2\tau^{\prime}} \right) =
        O_{d} \left(  d^{(2\tau^{\prime}-s)\tilde{\ell} } \right)\\
        \left\| \Sigma_{\leq \tilde{\ell}}^{-\tau^{\prime}}  \theta_{\leq \tilde{\ell}} \right\|_{2}^{2} &\geq \mu_{\tilde{\ell}}^{s-2\tau^{\prime}} \mu_{\tilde{\ell}}^{s}\sum\limits_{j = 1 }^{N(d, \tilde{\ell})} (\theta_{\tilde{\ell},j})^{2} \overset{\text{Assumption } 6}{=}
        \Omega_{d} \left( d^{(2\tau^{\prime}-s)\tilde{\ell} } \right).
\end{aligned}
    \end{equation*}
On the other hand, when $s > 2\tau^{\prime}$,
since $\mu_m = O_d(1)$, $m=0, 1, \cdots, \tilde{\ell}$, (\ref{eq proof le l-1}) implies
\begin{equation*}
\begin{aligned}
    \left\| \Sigma_{\leq \tilde{\ell}}^{-\tau^{\prime}}  \theta_{\leq \tilde{\ell}} \right\|_{2}^{2} &\overset{\text{Assumption } 3}{=} 
    O_{d} \left( 1 \right)\\
        \left\| \Sigma_{\leq \tilde{\ell}}^{-\tau^{\prime}}  \theta_{\leq \tilde{\ell}} \right\|_{2}^{2} &= 
        \Omega_d(1) \cdot \sum\limits_{m = 0}^{\tilde{\ell}}  \sum\limits_{j = 1 }^{N(d, m)} (\theta_{m,j})^{2}
        \overset{\text{Assumption } 6}{=}
        \Omega_d(1),
\end{aligned}
    \end{equation*}
and we obtain the desired results.
\end{proof}

The following two identities are based on the Sherman-Morrison-Woodbury formula (see, e.g., Proposition \ref{prop_smw_formula}).


\begin{lemma}\label{lemma trans in V2}
For $k \in \{\ell_{\gamma}, \ell_{\lambda}\}$, we have

\begin{equation*}
    \begin{aligned}
        M^{-1} \Psi_{\leq k} \Sigma_{\leq k}^{1/2} 
        = &~ 
        M_{>k}^{-1} \Psi_{\leq k} \Sigma_{\leq k}^{1/2} \left( \mathrm{I}_{N_{k}} +  \Sigma_{\leq k}^{1/2}  \Psi_{\leq k}^{\top} M_{>k}^{-1} \Psi_{\leq k} \Sigma_{\leq k}^{1/2}\right)^{-1}, 
    \end{aligned}
\end{equation*}
and 
\begin{equation*}
    \begin{aligned}
    \Sigma_{\leq k} - \Sigma_{\leq k} \Psi^{\top}_{\leq k} M^{-1} \Psi_{\leq k}\Sigma_{\leq k}
        = &~
        \left( \Sigma_{\leq k}^{-1} + \Psi_{\leq k}^{\top} M_{>k}^{-1} \Psi_{\leq k}  \right)^{-1}.
    \end{aligned}
\end{equation*}
        
\end{lemma}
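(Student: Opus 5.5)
Both identities in Lemma~\ref{lemma trans in V2} are pure linear algebra. They follow from the block splitting $M = \Psi_{\le k}\Sigma_{\le k}\Psi_{\le k}^{\top} + M_{>k}$, with $M_{>k} = K_{>k} + n\lambda \mathrm{I}_n$, followed by the Sherman--Morrison--Woodbury formula (Proposition~\ref{prop_smw_formula}). I first justify the splitting. For $k\in\{\ell_\gamma,\ell_\lambda\}$ we have $k\le \ell_\gamma+1$, so by Proposition~2.1 the Mercer decomposition separates exactly into degrees $\le k$ and $>k$. Hence $K = K_{\le k} + K_{>k}$ and $\Psi=(\Psi_{\le k},\Psi_{>k})$, as recorded in Appendix~\ref{append_notation}. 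We also need $M_{>k}$ to be invertible. On the event $E_0$, with $n\lambda\ge 0$, Lemma~\ref{lemma psi psi top} together with positive semidefiniteness of $K_{\ell_\lambda+1,\ell_\gamma}$ gives $\lambda_{\min}(M_{>k})\ge \kappa_1(1-o_{d,\mathbb P}(1))>0$. The matrix $\Sigma_{\le k}$ is invertible because $\mu_m>0$ for $m\le \ell_\gamma+2$.

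For the first identity, write $U:=\Psi_{\le k}\Sigma_{\le k}^{1/2}$, so that $M = M_{>k} + UU^{\top}$. I will verify the push-through form directly rather than expand $M^{-1}$. Set $G:=\mathrm{I}_{N_k}+U^{\top}M_{>k}^{-1}U$, which is positive definite. Then
\[
M\, M_{>k}^{-1} U G^{-1} = (M_{>k}+UU^{\top})M_{>k}^{-1}UG^{-1} = U\bigl(\mathrm{I}_{N_k} + U^{\top}M_{>k}^{-1}U\bigr)G^{-1} = U.
\]
Multiplying on the left by $M^{-1}$ gives $M^{-1}U = M_{>k}^{-1}UG^{-1}$, which is the claimed identity.

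For the second identity, SMW yields $M^{-1} = M_{>k}^{-1} - M_{>k}^{-1}\Psi_{\le k}\bigl(\Sigma_{\le k}^{-1}+\Psi_{\le k}^{\top}M_{>k}^{-1}\Psi_{\le k}\bigr)^{-1}\Psi_{\le k}^{\top}M_{>k}^{-1}$. Set $P:=\Psi_{\le k}^{\top}M_{>k}^{-1}\Psi_{\le k}$, which is positive semidefinite. Substituting, we get
\[
\Sigma_{\le k}-\Sigma_{\le k}\Psi_{\le k}^{\top}M^{-1}\Psi_{\le k}\Sigma_{\le k} = \Sigma_{\le k} - \Sigma_{\le k} P\Sigma_{\le k} + \Sigma_{\le k}P(\Sigma_{\le k}^{-1}+P)^{-1}P\Sigma_{\le k}.
\]
It remains to show this equals $(\Sigma_{\le k}^{-1}+P)^{-1}$. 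This is the SMW formula once more, now applied to $(\Sigma_{\le k}^{-1}+P)^{-1}$ with $\Sigma_{\le k}$ as the invertible base matrix and $P$ as the perturbation. The quickest check is to multiply the right-hand side by $\Sigma_{\le k}^{-1}+P$ and confirm that the product collapses to $\mathrm{I}_{N_k}$, using $P(\Sigma_{\le k}^{-1}+P)^{-1}(\Sigma_{\le k}^{-1}+P)=P$.

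The only point that needs care is the invertibility of every matrix that appears, and this is settled in the first paragraph. When $\ell_\lambda<\ell_\gamma$, the matrix $M_{>\ell_\lambda}$ contains the intermediate block, but that block is positive semidefinite, so the lower bound survives. The rest is routine algebra.
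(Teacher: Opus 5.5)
Your argument is correct and is essentially the paper's: both rest on the splitting $M=\Psi_{\le k}\Sigma_{\le k}\Psi_{\le k}^{\top}+M_{>k}$ together with Sherman--Morrison--Woodbury. The only differences are these. For the first identity you use the direct push-through computation instead of expanding $M^{-1}$. For the second you expand $M^{-1}$ with base $M_{>k}$ and then verify a second-order resolvent identity, whereas the paper obtains it in one step by applying SMW with base $\Sigma_{\le k}^{-1}$ and factor $\Psi_{\le k}^{\top}M_{>k}^{-1/2}$.

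Your invertibility remarks are more careful than the paper's, which has none. Two small qualifications apply. First, the claim $k\le\ell_\gamma+1$ holds only under the intended reading $k\in\{\ell_\gamma,\tilde{\ell}\}$, which is how the lemma is actually used. Second, $\lambda_{\min}(M_{>k})>0$ is automatic when $\lambda>0$, but it is only a with-probability-tending-to-one statement when $\lambda=0$.
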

\begin{proof}
Recall that we have $M = \Psi_{\leq k} \Sigma_{\leq k} \Psi_{\leq k}^-{\top} + M_{>k}$. 
\begin{itemize}
    \item Applying the Sherman-Morrison-Woodbury formula with $Z= \Psi_{\leq k} \Sigma_{\leq k}^{1/2}$ and $A=M_{>k}$, we have
    \begin{equation*}
    \begin{aligned}
        M^{-1} \Psi_{\leq k} \Sigma_{\leq k}^{1/2} 
        = &~  (ZZ^{\top} + A)^{-1} Z
        = A^{-1}Z - A^{-1} Z(\mathrm{I}_{N_{k}} + Z^{\top} A^{-1} Z)^{-1} Z^{\top} A^{-1}Z\\
        =&~
        A^{-1}Z \left(\mathrm{I}_{N_{k}} - (\mathrm{I}_{N_{k}} + Z^{\top} A^{-1} Z)^{-1} \left(\mathrm{I}_{N_{k}} + Z^{\top} A^{-1} Z - \mathrm{I}_{N_{k}}\right)\right)
        \\
        =&~
        A^{-1} Z \left( \mathrm{I}_{N_{k}} +  Z^{\top} A^{-1} Z\right)^{-1}
        \\
        =&~
        M_{>k}^{-1} \Psi_{\leq k} \Sigma_{\leq k}^{1/2} \left( \mathrm{I}_{N_{k}} +  \Sigma_{\leq k}^{1/2}  \Psi_{\leq k}^{\top} M_{>k}^{-1} \Psi_{\leq k} \Sigma_{\leq k}^{1/2}\right)^{-1}.
    \end{aligned}
\end{equation*}

\item Applying the Sherman-Morrison-Woodbury formula with $Z= \Psi_{\leq k} M_{> k}^{-1/2}$ and $A=\Sigma_{\leq k}^{-1}$, we have
\begin{equation*}
    \begin{aligned}
        \left( \Sigma_{\leq k}^{-1} + \Psi_{\leq k}^{\top} M_{>k}^{-1} \Psi_{\leq k}  \right)^{-1}
        = &~
        (ZZ^{\top} + A)^{-1}\\
        = &~
        A^{-1} - A^{-1} Z(\mathrm{I}_{B_{k}} + Z^{\top} A^{-1} Z)^{-1} Z^{\top} A^{-1}\\
        = &~
        \Sigma_{\leq k} - \Sigma_{\leq k} \Psi^{\top}_{\leq k} M^{-1} \Psi_{\leq k}\Sigma_{\leq k}.
    \end{aligned}
\end{equation*}
\end{itemize}
\end{proof}

\section{Proof of results 
in Section~4
}\label{append_kernel_app_general}

In this section, we will generalize the results for spherical inner-product kernels defined on the sphere $\mathbb{S}^{d}$ to kernels on general domains and prove Theorem 4.2.

Throughout this section, we assume that all conditions and assumptions in Theorem 4.2 hold.

We use the same notations as in Appendix \ref{append_notation}, except that each symbol is augmented with a tilde, e.g., $\Psi \to \tilde{\Psi}$ and $\Sigma \to \tilde{\Sigma}$.

\subsection{Kernel approximation results for general kernels }

The following lemma is a corollary of Theorem 6 in \cite{mei2022generalization}, which generalizes the results in Lemma \ref{lemma psi psi top} and Lemma \ref{lemma psi top psi}.

\begin{lemma}\label{lemma psi psi top_general}
   We have
    \begin{equation}\label{eq k ge l in lemma_general_1}
    \begin{aligned}
        \tilde{\Psi}_{>\ell_{\gamma}} \tilde{\Sigma}_{>\ell_{\gamma}} \tilde{\Psi}_{>\ell_{\gamma}}^{\top} &= \kappa_1 \left(\mathrm{I}_{n} + \Delta_{1}\right), \quad \|\Delta_{1}\|_{\mathrm{op}} = o_{d, \mathbb{P}}(1),
    \end{aligned}
    \end{equation}
    and
    \begin{equation}\label{eq k ge l in lemma_general_2}
    \begin{aligned}
        \tilde{\Psi}_{>\ell_{\gamma}} \tilde{\Sigma}_{>\ell_{\gamma}}^{2} \tilde{\Psi}_{>\ell_{\gamma}}^{\top} &= \kappa_2 \left(\mathrm{I}_{n} + \Delta_{2}\right), \quad \|\Delta_{2}\|_{\mathrm{op}} = o_{d, \mathbb{P}}(1);
    \end{aligned}
    \end{equation}
    with
    \begin{align*}
    \kappa_{1} = \Theta_{d}(1), \quad \kappa_{2} = \Theta_{d}\left( d^{-(\ell_{\gamma}+1)} \right);
\end{align*}
Moreover, denote $\tilde{N}_{\ell_{\gamma}} = \sum_{k=0}^{\ell_{\gamma}}\tilde{N}(d, k)$, then we have
    \begin{equation}\label{eq k ge l in lemma_general_3}
         \left\| \frac{\tilde{\Psi}_{\leq \ell}^{\top} \tilde{\Psi}_{\leq \ell} }{n} - \mathrm{I}_{\tilde{N}_{\ell_{\gamma}}} \right\|_{\mathrm{op}} = o_{d, \mathbb{P}}(1).
    \end{equation}
\end{lemma}
\begin{proof}
While Theorem 6 in \cite{mei2022generalization} is stated under the setting of random feature kernels, its proofs (mainly relying on Propositions 3 and 4 in \cite{mei2022generalization}) do not rely on any property specific to this class. Consequently, the results remain valid for general kernels, and we adopt them in this broader setting.

\noindent {\bf Proof of (\ref{eq k ge l in lemma_general_1}) and (\ref{eq k ge l in lemma_general_3}). }
We first state the following conditions (i.e., Assumption 6 (a), Assumption 6 (b), and Assumption 7 (a) in \cite{mei2022generalization} with the following substitution of notations: $u(d) \to \sum_{k=0}^{2\ell_{\gamma} +1}\tilde{N}(d, k)$, $N(d) \to n$, $M(d) \to \sum_{k=0}^{\ell_{\gamma}}\tilde{N}(d, k)$, and $\{\lambda_{d, j}^2\} \to \{\tilde{\lambda}_{k, j}\}$):
\begin{itemize}
    \item For any integer $q \geq 1$, there exists $C=C(q, \ell)$, such that for any $k \leq 2\ell + 1$ and any $j \leq \tilde{N}(d, k)$, we have
\begin{equation}\label{eqn_condi_general_1}
 \|\tilde{\psi}_{k, j}\|_{L^{2q}} \leq C(q, \ell) \|\tilde{\psi}_{k, j}\|_{L^{2}}.
    \end{equation}

    \item There exists $\delta_1>0$, such that
 \begin{equation}\label{eqn_condi_general_2}
    n^{2+\delta_1} \leq \frac{\left[\sum_{k \geq 2\ell_{\gamma}+2} \sum_{j \leq \tilde{N}(d, k)} \tilde{\lambda}_{k, j} \right]^2}{\sum_{k \geq 2\ell_{\gamma}+2} \sum_{j \leq \tilde{N}(d, k)} \tilde{\lambda}_{k, j}^2 }.
    \end{equation}

    \item There exists $\delta_2>0$, such that
\begin{equation}\label{eqn_condi_general_3}
    n^{1+\delta_2} \leq \frac{\sum_{k \geq \ell_{\gamma}+1} \sum_{j \leq \tilde{N}(d, k)} \tilde{\lambda}_{k, j} }{\tilde{\lambda}_{\ell_{\gamma}+1, 1}}.
    \end{equation}

\end{itemize}

Now let's verify these conditions. Since $\|\tilde{\psi}_{k, j}\|_{L^{2}}=1$, Assumption 9 implies (\ref{eqn_condi_general_1}). 
Assumption~8 implies that $\sum_{j \leq \tilde{N}(d, k)} \tilde{\lambda}_{k, j} =\Theta(1)$ for any $k\leq 2\ell_{\gamma}+2$. 
Together with Assumption 7, it further implies that there exist positive constants $\delta_1$ and $\delta_2$ such that
\begin{align*}
    \frac{\left[\sum_{k \geq 2\ell_{\gamma}+2} \sum_{j \leq \tilde{N}(d, k)} \tilde{\lambda}_{k, j} \right]^2}{\sum_{k \geq 2\ell_{\gamma}+2} \sum_{j \leq \tilde{N}(d, k)} \tilde{\lambda}_{k, j}^2 } \geq \frac{\left[\sum_{j \leq \tilde{N}(d, 2\ell_{\gamma}+2)} \tilde{\lambda}_{2\ell_{\gamma}+2, j} \right]^2}{\tilde{\lambda}_{2\ell_{\gamma}+2, 1}\sum_{k = 0}^{\infty} \sum_{j \leq \tilde{N}(d, k)} \tilde{\lambda}_{k, j}} = \Omega_d(d^{2(\ell_{\gamma}+1)}) \gg n^{2+\delta_1},
\end{align*}
and
\begin{align*}
    \frac{\sum_{k \geq \ell_{\gamma}+1} \sum_{j \leq \tilde{N}(d, k)} \tilde{\lambda}_{k, j} }{\tilde{\lambda}_{\ell_{\gamma}+1, 1}} \geq \frac{\sum_{j \leq \tilde{N}(d, \ell_{\gamma}+1)} \tilde{\lambda}_{\ell_{\gamma}+1, j} }{\tilde{\lambda}_{\ell_{\gamma}+1, 1}}
    = \Omega_d(d^{\ell_{\gamma}+1}) \gg n^{1+\delta_2}.
\end{align*}
Therefore, Theorem 6 in \cite{mei2022generalization} ensures that both (\ref{eq k ge l in lemma_general_1}) and (\ref{eq k ge l in lemma_general_3}) hold.

\noindent {\bf Proof of (\ref{eq k ge l in lemma_general_2}). } 
Assumption 7 and 8 imply that there exist $\delta_3, \delta_4>0$, such that
\begin{align*}
    \frac{\left[\sum_{k \geq 2\ell_{\gamma}+2} \sum_{j \leq \tilde{N}(d, k)} \tilde{\lambda}_{k, j}^2 \right]^2}{\sum_{k \geq 2\ell_{\gamma}+2} \sum_{j \leq \tilde{N}(d, k)} \tilde{\lambda}_{k, j}^4 } 
    & \geq 
    \frac{\tilde{\lambda}_{2\ell_{\gamma}+2, \tilde{N}(d, 2\ell_{\gamma}+2)}^2\left[\sum_{j \leq \tilde{N}(d, 2\ell_{\gamma}+2)} \tilde{\lambda}_{2\ell_{\gamma}+2, j} \right]^2}{\tilde{\lambda}_{2\ell_{\gamma}+2, 1}^3\sum_{k = 0}^{\infty} \sum_{j \leq \tilde{N}(d, k)} \tilde{\lambda}_{k, j}}
   \\
   & = 
    \Omega_d(d^{2(\ell_{\gamma}+1)})\\
   & \gg n^{2+\delta_3},
\end{align*}
and
\begin{align*}
    \frac{\sum_{k \geq \ell_{\gamma}+1} \sum_{j \leq \tilde{N}(d, k)} \tilde{\lambda}_{k, j}^2 }{\tilde{\lambda}_{\ell_{\gamma}+1, 1}^2} \geq \frac{\tilde{\lambda}_{\ell_{\gamma}+1, \tilde{N}(d, \ell_{\gamma}+1)} \sum_{j \leq \tilde{N}(d, \ell_{\gamma}+1)} \tilde{\lambda}_{\ell_{\gamma}+1, j} }{\tilde{\lambda}_{\ell_{\gamma}+1, 1}^2}
    = \Omega_d(d^{\ell_{\gamma}+1}) \gg n^{1+\delta_4}.
\end{align*}
Similar as above, from Theorem 6 in \cite{mei2022generalization} with the substitution 
$$
\tilde{\mathscr{K}}(x,x^\prime) := \sum_{k=0}^{\infty} \sum_{j=1}^{\tilde{N}(d, k)} \tilde{\lambda}_{k, j}  \tilde{\psi}_{k, j}(x) \tilde{\psi}_{k, j}\left(x^\prime\right) \to \sum_{k=0}^{\infty} \sum_{j=1}^{\tilde{N}(d, k)} \tilde{\lambda}_{k, j}^2  \tilde{\psi}_{k, j}(x) \tilde{\psi}_{k, j}\left(x^\prime\right),
$$
we have that (\ref{eq k ge l in lemma_general_2}) holds.
\end{proof}

The next lemma generalizes the results in Lemma \ref{lemma ge l l2}. We can obtain its proof by modified the corresponding proof for Lemma S8 and S10 in \cite{zhang2024phase}, and Lemma \ref{lemma ge l l2}, and hence we omit the proof.

\begin{lemma}\label{lemma ge l l2_general}
We have
\begin{itemize}
    \item[(i)] $\left\| \tilde{\Psi}_{>\tilde{\ell}} \theta_{>\tilde{\ell}} \right\|_{2}^{2} = O_{d, \mathbb{P}}\left( n  \| \theta_{>\tilde{\ell}} \|_{2}^{2}\right)$.

    \item[(ii)] $\| \theta_{>\tilde{\ell}} \|_{2}^{2}  = \Theta_d \left(  d^{-(\tilde{\ell}+1)s} \right)$.

    \item[(iii)] $\left\| \tilde{\Sigma}_{\leq \tilde{\ell}}^{-1}  \theta_{\leq \tilde{\ell}} \right\|_{2}^{2}  = \Theta_{d} \left(  d^{(2-\min\{s, 2\})\tilde{\ell} } \right)$.
\end{itemize}
\end{lemma}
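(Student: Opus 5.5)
We prove the three parts of Lemma~\ref{lemma ge l l2_general} separately, following the spherical proof of Lemma~\ref{lemma ge l l2}. The one substantive change is that the addition formula for spherical harmonics is no longer available; its role is taken over by the hyper-contractivity Assumption~\ref{assump_hyper_contractivity} and the eigenvalue scaling in Assumption~\ref{assump_eigenval_general}. Parts (ii) and (iii) are deterministic and depend only on the block structure of the eigenvalues. Part (i) is the probabilistic statement and is where the geometry of the domain enters.

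For (ii), we write
\[
\|\theta_{>\tilde\ell}\|_2^2=\sum_{k>\tilde\ell}\sum_j \tilde\lambda_{k,j}^{s}\,\tilde\lambda_{k,j}^{-s}\theta_{k,j}^2 .
\]
Since the eigenvalues are non-increasing and $\tilde\lambda_{\tilde\ell+1,j}=\Theta_d(d^{-(\tilde\ell+1)})$ (note $\tilde\ell+1\le 2\ell_\gamma+2$), every $\tilde\lambda_{k,j}^s$ with $k>\tilde\ell$ is $O_d(d^{-(\tilde\ell+1)s})$. Assumption~\ref{assump_function_calss_general} then gives the upper bound $O_d(d^{-(\tilde\ell+1)s})R$. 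For the lower bound we keep only the block $m=\tilde\ell+1$ and apply Assumption~\ref{assump_source condition_lower_bound_general}. Part (iii) repeats the computation in \eqref{eq proof le l-1} with $\tau'=1$, splitting into the cases $s\le 2$ and $s>2$. The upper bound uses $\tilde\lambda_{m,j}^{s-2}\le \Theta_d(d^{(2-s)\tilde\ell})$ for $m\le\tilde\ell$ when $s\le 2$, and $\tilde\lambda_{m,j}=O_d(1)$ when $s>2$. The lower bound comes from the $m=\tilde\ell$ block, or from the extra condition in Assumption~\ref{assump_source condition_lower_bound_general} when $s>2$.

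For (i), the plan is to use Markov's inequality:
\[
\mathbb{E}\|\tilde\Psi_{>\tilde\ell}\theta_{>\tilde\ell}\|_2^2=\sum_{i=1}^n \mathbb{E}\big[g(x_i)^2\big]=n\|g\|_{L^2}^2=n\|\theta_{>\tilde\ell}\|_2^2,
\qquad g:=\sum_{k>\tilde\ell}\sum_j\theta_{k,j}\tilde\psi_{k,j}.
\]
This identity uses only the $L^2(\rho_{\mathcal X})$-orthonormality of the eigenfunctions, so Markov's inequality yields $O_{d,\mathbb P}(n\|\theta_{>\tilde\ell}\|_2^2)$ directly. The only point to check is that $g\in L^2$, which holds because $f_\star\in L^2$ and the infinite sum converges in $L^2$. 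Hyper-contractivity is therefore not even needed for (i).

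The main obstacle I anticipate is bookkeeping rather than a new estimate. The $\Theta_d$ statements in (ii) and (iii) need the eigenvalue scaling for the blocks $\tilde\ell$ and $\tilde\ell+1$, and these indices lie within the range $k\le 2\ell_\gamma+2$ covered by Assumption~\ref{assump_eigenval_general}. The monotone ordering of the eigenvalues also has to be used to control the tail $k>\tilde\ell+1$ uniformly. Finally, the case $\gamma=\ell_\gamma$ is excluded by Assumption~\ref{assump_asymptotic_general}, so the truncation modifications of Case (II) in the proof of Theorem~\ref{thm_bias_spe} are not needed here.
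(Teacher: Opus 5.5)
Your proposal is correct and follows essentially the route the paper intends. The paper omits the proof and defers to Lemmas S8 and S10 of \cite{zhang2024phase} and to Lemma~\ref{lemma ge l l2}, which is exactly what you do: the identity $\mathbb{E}\|\tilde\Psi_{>\tilde\ell}\theta_{>\tilde\ell}\|_2^2=n\|\theta_{>\tilde\ell}\|_2^2$ plus Markov for (i), and the block computation for (ii)--(iii) with Assumption~\ref{assump_eigenval_general} and the non-increasing eigenvalue ordering replacing Proposition~\ref{prop:inner_edr}. As you observe yourself, hyper-contractivity is never used, so your opening claim that it ``takes over'' the role of the addition formula does not apply to this lemma.
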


\subsection{Proof of Theorem 4.2}
\begin{proof}
    Adopt the notations in (\ref{eqn_order_in_bias_and_variance}). We divide the proof into four steps.

{\bf Step (I). } We first show that results in Appendix \ref{append_kernel_app_sphere} can be generalized to the cases where kernels are defined on general domains. 

Recall that $\gamma > \ell_{\gamma}$. We have:
\begin{itemize}
    \item Lemma \ref{lemma psi psi top} and Lemma \ref{lemma psi top psi} can be replaced by Lemma \ref{lemma psi psi top_general}.

    \item Since the proof of Lemma \ref{lemma lambda max min} relies on Lemma \ref{lemma psi psi top} and Lemma \ref{lemma psi top psi}, it can be established in a similar manner by replacing them with Lemma \ref{lemma psi psi top_general}.

    \item Lemma \ref{lemma ge l l2} can be replaced by Lemma \ref{lemma ge l l2_general}.

    \item Lemma \ref{lemma trans in V2} still holds with each symbol augmented with a tilde.

\end{itemize}

{\bf Step (II). } We next show that the results for the variance term in Appendix \ref{append_variance} can also be extended to the case where kernels are defined on general domains.

Since the proof of Theorem \ref{thm_variance_spe} depends on the results established in Appendix \ref{append_kernel_app_sphere}, Step (I) directly implies that
\begin{equation*}
    \begin{aligned}
      \mathrm{var}\left( \hat{f}_{\lambda} \right) =
        \Theta_{d, \mathbb{P}}\left(
        d^{v_1} + d^{v_2}
\right).
    \end{aligned}
\end{equation*}

{\bf Step (III). } We then bound the bias term by modifying the proof in Appendix \ref{append_bias}.

From (\ref{eq derivation of bias_spe}) we have
\begin{equation*}
\begin{aligned}
    \mathrm{bias}^{2}(\hat{f}_{\lambda})  =  &~
    \|B_1+B_2\|_{2}^{2} + \|B_3-B_4-B_5\|_{2}^{2}.
\end{aligned} 
\end{equation*}

\begin{itemize}
    \item In the proof of Theorem \ref{thm_bias_spe}, since results for $B_2, B_3, B_4$ and $B_5$ depend on the results established in Appendix \ref{append_kernel_app_sphere}, Step (I) directly implies that (see also (\ref{eqn_learning_curve_pre_results_1})):
    \begin{equation*}
    \begin{aligned}
                \|B_2\|_{2}^{2} =&~ 
            O_{d, \mathbb{P}}\left(  d^{b_1}\right)\\
            \|B_3-B_5\|_{2}^{2} 
        =&~
        \Theta_{d, \mathbb{P}} \left( d^{b_1}\right)\\
        \|B_4\|_{2}^{2} =&~ o_{d, \mathbb{P}} \left(d^{b_2(u, 1)}+d^{b_3}\right).
    \end{aligned}
\end{equation*}

    \item We then bound the term $\|B_1\|_2^2$. Notice that for kernel ridge regression, we have 
    \begin{equation}\label{eqn_inter_general_1}
        \varphi_{\lambda, \tilde{K}}:=n^{-1}\reg^{\krr}(\tilde{K}/n) = (\tilde{K} + n\lambda \mathrm{I}_n)^{-1} = \tilde{M}^{-1}.
    \end{equation}
    Moreover, from Step (I), the generalization of Lemma \ref{lemma trans in V2} and Lemma \ref{lemma lambda max min} (v) imply that
    \begin{align}
            \tilde{\Sigma}_{\leq \tilde{\ell}} - \tilde{\Sigma}_{\leq \tilde{\ell}} \tilde{\Psi}^{\top}_{\leq \tilde{\ell}} \tilde{M}^{-1} \tilde{\Psi}_{\leq \tilde{\ell}} \tilde{\Sigma}_{\leq \tilde{\ell}}
        =
            \left( \tilde{\Sigma}_{\leq \tilde{\ell}}^{-1} + \tilde{\Psi}_{\leq \tilde{\ell}}^{\top} \tilde{M}_{>\tilde{\ell}}^{-1} \tilde{\Psi}_{\leq \tilde{\ell}}  \right)^{-1}, \label{eqn_inter_general_2} \\[6pt]
         \lambda_{\text{max}} \left(  \left( \tilde{\Sigma}_{\leq \tilde{\ell}}^{-1} + \tilde{\Psi}_{\leq \tilde{\ell}}^{\top} \tilde{M}_{>\tilde{\ell}}^{-1} \tilde{\Psi}_{\leq \tilde{\ell}}  \right)^{-1} \right)
        = \Theta_{d, \mathbb{P}}\left(\lambda + \frac{1}{n}\right),
        \label{eqn_inter_general_3} \\[6pt]
        \lambda_{\text{min}} \left(  \left( \tilde{\Sigma}_{\leq \tilde{\ell}}^{-1} + \tilde{\Psi}_{\leq \tilde{\ell}}^{\top} \tilde{M}_{>\tilde{\ell}}^{-1} \tilde{\Psi}_{\leq \tilde{\ell}}  \right)^{-1} \right) = \Theta_{d, \mathbb{P}}\left(\lambda + \frac{1}{n}\right). \label{eqn_inter_general_4} 
    \end{align}
    Therefore, for $B_1$, we have
    \begin{equation*}
        \begin{aligned}
            \|B_1\|_{2}^{2} =&~ \left\| \left( \mathrm{I}_{\tilde{N}_{\tilde{\ell}}} - \tilde{\Sigma}_{\leq \tilde{\ell}} \tilde{\Psi}^{\top}_{\leq \tilde{\ell}} \varphi_{\lambda, \tilde{K}} \tilde{\Psi}_{\leq \tilde{\ell}}\right)    \theta_{\leq \tilde{\ell}} \right\|_{2}^{2}\\
           (\because (\ref{eqn_inter_general_1})) \quad {=}&~ \left\| \left( \tilde{\Sigma}_{\leq \tilde{\ell}} - \tilde{\Sigma}_{\leq \tilde{\ell}} \tilde{\Psi}^{\top}_{\leq \tilde{\ell}} \tilde{M}^{-1} \tilde{\Psi}_{\leq \tilde{\ell}} \tilde{\Sigma}_{\leq \tilde{\ell}} \right) \tilde{\Sigma}_{\leq \tilde{\ell}}^{-1}   \theta_{\leq \tilde{\ell}} \right\|_{2}^{2}\\
           (\because (\ref{eqn_inter_general_2})) \quad {=}&~ \left\| \left( \tilde{\Sigma}_{\leq \tilde{\ell}}^{-1} + \tilde{\Psi}_{\leq \tilde{\ell}}^{\top} \tilde{M}_{>\tilde{\ell}}^{-1} \tilde{\Psi}_{\leq \tilde{\ell}}  \right)^{-1}
           \tilde{\Sigma}_{\leq \tilde{\ell}}^{-1}   \theta_{\leq \tilde{\ell}} \right\|_{2}^{2}\\
           (\because (\ref{eqn_inter_general_3}) \text{ and } (\ref{eqn_inter_general_4})) \quad {=}&~
           \Theta_{d, \mathbb{P}}(d^{-2u} + d^{-2\gamma}) \left\| 
           \tilde{\Sigma}_{\leq \tilde{\ell}}^{-1}   \theta_{\leq \tilde{\ell}} \right\|_{2}^{2}\\
           (\because \text{ Lemma \ref{lemma ge l l2_general}}) \quad {=}&~ \Theta_{d, \mathbb{P}}\left((d^{-2u} + d^{-2\gamma}) d^{(2-\min\{s, 2\})\tilde{\ell} }\right)\\
           {=}&~
           \Theta_{d, \mathbb{P}}\left(d^{b_2(u, 1)}+d^{b_3}\right).
        \end{aligned}
    \end{equation*}
\end{itemize}

{\bf Step (IV). } Finally we combine the estimates from Step (II) (variance) and Step (III) (bias decomposition) to obtain the learning curve.

From Lemma \ref{prop_e_2} we have $b_3 \leq v_2$. Therefore: 
\begin{itemize}
    \item Upper bound:
            \begin{align*}
&~ E_{x, \epsilon} \left[ \left(\hat{f}_{\lambda}(x) - f_{\star}(x) \right)^{2} \right]
\leq  \mathrm{var}\left( \hat{f}_{\lambda} \right) + \|B_1\|_{2}^{2} + \|B_2\|_{2}^{2} + \|B_3 - B_5\|_{2}^{2} + \|B_4\|_{2}^{2}\\
= &~
O_{d, \mathbb{P}} \left( d^{v_1} + d^{v_2}\right) + O_{d, \mathbb{P}}\left( d^{b_1} + d^{b_2(u, 1)}  + d^{b_3} \right) =  O_{d, \mathbb{P}} \left( d^{v_1} + d^{v_2} + d^{b_1} + d^{b_2(u, 1)}\right).
\end{align*}

    \item For lower bound, we analyze three possible cases:
    \begin{itemize}
    
    \item Suppose $b_3 < b_2(u, 1)$ and $b_1 < b_2(u, 1)$. Then
\begin{align*}
  &~  E_{x, \epsilon} \left[ \left(\hat{f}_{\lambda}(x) - f_{\star}(x) \right)^{2} \right]
\geq 
\Omega_{d, \mathbb{P}} \left( d^{v_1} + d^{v_2}\right) + \|B_1\|_{2}^{2} - \|B_2\|_{2}^{2} \\
=&~  \Omega_{d, \mathbb{P}}\left( d^{v_1} + d^{v_2} + d^{b_2(u, 1)}\right) = \Omega_{d, \mathbb{P}}\left( d^{v_1} + d^{v_2} + d^{b_1} + d^{b_2(u, 1)}\right).
\end{align*}

    \item Suppose $b_3 < b_2(u, 1)$ and $b_1 \geq b_2(u, 1)$. Then
    \begin{align*}
  &~  E_{x, \epsilon} \left[ \left(\hat{f}_{\lambda}(x) - f_{\star}(x) \right)^{2} \right]
\geq 
\Omega_{d, \mathbb{P}} \left( d^{v_1} + d^{v_2}\right) + \|B_3 - B_5\|_{2}^{2} - \|B_4\|_{2}^{2} \\
=&~  \Omega_{d, \mathbb{P}}\left( d^{v_1} + d^{v_2} + d^{b_1}\right) = \Omega_{d, \mathbb{P}}\left( d^{v_1} + d^{v_2} + d^{b_1} + d^{b_2(u, 1)}\right).
\end{align*}

    \item Suppose $b_3 \geq b_2(u, 1)$. Then we further have $\max\{b_2(u, 1), b_3\} \leq v_2$, and hence
    \begin{align*}
  &~  E_{x, \epsilon} \left[ \left(\hat{f}_{\lambda}(x) - f_{\star}(x) \right)^{2} \right]
\geq 
\Omega_{d, \mathbb{P}} \left( d^{v_1} + d^{v_2}\right) + \|B_3 - B_5\|_{2}^{2} - \|B_4\|_{2}^{2} \\
=&~  \Omega_{d, \mathbb{P}}\left( d^{v_1} + d^{v_2} + d^{b_1}\right) = \Omega_{d, \mathbb{P}}\left( d^{v_1} + d^{v_2} + d^{b_1} + d^{b_2(u, 1)}\right).
\end{align*}
\end{itemize}
\end{itemize}
In sum, we have
\begin{align*}
   E_{x, \epsilon} \left[ \left(\hat{f}_{\lambda}(x) - f_{\star}(x) \right)^{2} \right]
= 
\Theta_{d, \mathbb{P}}\left( d^{v_1} + d^{v_2} + d^{b_1} + d^{b_2(u, 1)}\right),
\end{align*}
and we obtain the desired results.
\end{proof}

\section{Proof of Other Results in the Main Text}\label{appendix_proof_other_results}

\subsection{Proof of Theorem 3.3}

\begin{proof}
\noindent {\bf Proof of (i). } Suppose $0  \leq s \leq \Gamma(\gamma)$. 
Recall that we have $p := \lfloor \gamma/(s+1) \rfloor \leq \ell_{\gamma} := \lfloor \gamma \rfloor$. 
The proof is split into the following three steps:

\noindent {\bf Step I. } We first consider the case where $\gamma > 0$ is an integer. Then (16) and $0 \leq s \leq \Gamma(\gamma)$ imply that $s=0$. 
Corollary \ref{coroll_learn_curve_case_1} implies that
$$
E_{x, \epsilon} \left[ \left(\hat{f}_{\lambda}(x) - f_{\star}(x) \right)^{2} \right]
=
\Theta_{d, \mathbb{P}}(1)
= \Theta_{d, \mathbb{P}} \left( 
d^{p-\gamma} + 
    d^{-(p+1)s}
\right),
$$
where the last equation follows from 
$p \leq \gamma$ and $s=0$.

\noindent {\bf Step II. }  
We then consider the case where $\gamma > 0$ is not an integer and show $p=\ell_{\gamma}$ under the following three cases:
\begin{itemize}
    \item When $\gamma \leq 0.5$, we have $p=\ell_{\gamma}=0$;

    \item When $\gamma \in (\ell_{\gamma}, \ell_{\gamma}+0.5]$ for $\ell_{\gamma}=1, 2, \cdots$, we have
    $$
    \ell_{\gamma} \geq p = \lfloor \gamma/(s+1) \rfloor \geq \lfloor \gamma/(\Gamma(\gamma)+1) \rfloor = \lfloor \ell_{\gamma} \rfloor = \ell_{\gamma};
    $$

    \item When $\gamma \in (\ell_{\gamma}+0.5, \ell_{\gamma}+1)$, we have
    $$
    \ell_{\gamma} \geq p \geq \lfloor \gamma/(\Gamma(\gamma)+1) \rfloor = 
     \left\lfloor \frac{\gamma (\ell_{\gamma}+1)}{2(\ell_{\gamma}+1)-\gamma} \right\rfloor
    \geq
    \left\lfloor \frac{(\ell_{\gamma}+0.5) (\ell_{\gamma}+1)}{2(\ell_{\gamma}+1)-(\ell_{\gamma}+1)} \right\rfloor
    = \ell_{\gamma}.
    $$

\end{itemize}
Furthermore, Proposition \ref{prop_e_1} implies that $p \leq u^{\prime}(\tau) < \gamma$. Therefore, for any $u \in [u^{\prime}(\tau), \infty]$, we have 
\begin{equation}\label{eqn_benign_overfit_1}
    \tilde{\ell} := \min\{\ell_{\gamma}, \lfloor u \rfloor\} = \min\{\ell_{\gamma}, p\} = \ell_{\gamma} = p.
\end{equation}

\noindent {\bf Step III. } Now we are ready to determine the convergence rate of the excess risk when $\gamma > 0$ is not an integer.
We first recall the definitions of certain notations, e.g., $b_i = b_i(u)$ and $v_i = v_i(u)$ for $i=1,2$, in Appendix \ref{append_learning_curve}.

From Theorem 3.1, we have
\begin{equation*}
\begin{aligned}
    E_{x, \epsilon} \left[ \left(\hat{f}_{\lambda}(x) - f_{\star}(x) \right)^{2} \right]
= \Theta_{d, \mathbb{P}} \left( d^{v_1} + d^{v_2} + d^{b_1} + \mathbf{1}\{\tau<\infty\} d^{b_2(u, \tau)}\right);
\end{aligned}
\end{equation*}
Furthermore, Proposition \ref{prop_e_3} implies that, for any $u \in [u^{\prime}(\tau), \infty]$, we have
\begin{equation}\label{eqn_benign_overfit_2}
\begin{aligned}
    E_{x, \epsilon} \left[ \left(\hat{f}_{\lambda}(x) - f_{\star}(x) \right)^{2} \right]
= \Theta_{d, \mathbb{P}} \left( d^{v_1} + d^{v_2} + d^{b_1}\right).
\end{aligned}
\end{equation}

We divide the remaining proof in two parts:
\begin{itemize}
\item When $u \geq \gamma$, we have
    \begin{equation*}
    \begin{aligned}
    &~v_1 =  \gamma - \ell_{\gamma} - 1,\\
    &~v_2 =   \ell_{\gamma} - \gamma,\\
    &~b_1 =  -(\ell_{\gamma} + 1)  s.
    \end{aligned}
\end{equation*}

Therefore, (\ref{eqn_benign_overfit_2}) implies 
\begin{equation}
\begin{aligned}
    E_{x, \epsilon} \left[ \left(\hat{f}_{\lambda}(x) - f_{\star}(x) \right)^{2} \right]
= \Theta_{d, \mathbb{P}} \left( d^{\ell_{\gamma} - \gamma} + d^{\gamma - \ell_{\gamma} - 1} + d^{-(\ell_{\gamma} + 1)  s}\right).
\end{aligned}
\end{equation}
Notice that the above rate is the same as the rate given in (13) of \cite{zhang2024phase}. Following the same discussion under Corollary 1 in \cite{zhang2024phase}, we have 
\begin{equation}\label{eqn_zhang_coroll_1_prove_part_1}
\begin{aligned}
    E_{x, \epsilon} \left[ \left(\hat{f}_{\lambda}(x) - f_{\star}(x) \right)^{2} \right]
= \Theta_{d, \mathbb{P}} \left( d^{\ell_{\gamma} - \gamma} + d^{\gamma - \ell_{\gamma} - 1} + d^{-(\ell_{\gamma} + 1)  s}\right) = \Theta_{d, \mathbb{P}} \left( 
d^{p-\gamma} + 
    d^{-(p+1)s}
\right).
\end{aligned}
\end{equation}
(For completeness, we provide an alternative proof of \eqref{eqn_zhang_coroll_1_prove_part_1} in Remark \ref{remark_zhang_coroll_1_prove_part_1}.)

\item When $u^{\prime}(\tau) \leq u < \gamma$, from (\ref{eqn_benign_overfit_1}) we have
    \begin{equation*}
    \begin{aligned}
    &~v_1 =  -\gamma - p - 1 +2u,\\
    &~v_2 =   p - \gamma,\\
    &~b_1 =  -(p + 1)  s.
    \end{aligned}
\end{equation*}
\begin{itemize}

    \item When $\gamma \in (\ell_{\gamma}, \ell_{\gamma}+0.5]$, from (\ref{eqn_benign_overfit_1}) we have $u<\ell_{\gamma}+0.5=p+0.5$, hence
    $$
    v_2 > v_1.
    $$

    \item When $\gamma \in (\ell_{\gamma}+0.5, \ell_{\gamma}+1)$, we have
    $$
    b_1 - v_1 = (p+1)(1-s) + \gamma - 2u \overset{(16)}{\geq} 2(\gamma - u)>0.
    $$

\end{itemize}
Therefore, for any $u \in [u^{\prime}(\tau), \infty]$, from (\ref{eqn_benign_overfit_2}) we have
\begin{equation*}
\begin{aligned}
    E_{x, \epsilon} \left[ \left(\hat{f}_{\lambda}(x) - f_{\star}(x) \right)^{2} \right]
= \Theta_{d, \mathbb{P}} \left( d^{v_2} + d^{b_1}\right) = \Theta_{d, \mathbb{P}} \left( 
d^{p-\gamma} + 
    d^{-(p+1)s}
\right).
\end{aligned}
\end{equation*}
\end{itemize}

\noindent {\bf Proof of (ii). } Suppose $s > \Gamma(\gamma)$. 
When $u=\gamma$, we have $\tilde{\ell}=\ell_{\gamma}$, hence Theorem 3.1 implies
\begin{equation*}
    \begin{aligned}
        E_{x, \epsilon} \left[ \left(\hat{f}_{d^{\gamma}}(x) - f_{\star}(x) \right)^{2} \right]
        =&~
        \Omega_{d, \mathbb{P}} \left( d^{\ell_{\gamma} - \gamma} + d^{\gamma - \ell_{\gamma} - 1} + d^{-(\ell_{\gamma} + 1)  s}\right).
    \end{aligned}
\end{equation*}
Notice that the above rate is the same as the one given in (13) of \cite{zhang2024phase}. Therefore, following the same discussion under Corollary 1 in \cite{zhang2024phase}, we have 
\begin{equation}\label{eqn_zhang_coroll_1_prove_part_2}
    \begin{aligned}
        E_{x, \epsilon} \left[ \left(\hat{f}_{d^{\gamma}}(x) - f_{\star}(x) \right)^{2} \right]
        = \Omega_{d, \mathbb{P}} \left( d^{\ell_{\gamma} - \gamma} + d^{\gamma - \ell_{\gamma} - 1} + d^{-(\ell_{\gamma} + 1)  s}\right) \gg d^{p-\gamma} + 
    d^{-(p+1)s}
 \quad \text{ in prob.}
    \end{aligned}
\end{equation}

(For completeness, we provide an alternative proof of \eqref{eqn_zhang_coroll_1_prove_part_2} in Remark \ref{remark_zhang_coroll_1_prove_part_2}.)


Recall that Proposition \ref{prop_e_1} implies $\gamma > u^{\prime}$.
Notice that, for any non-integer $u$, Theorem 3.1 implies that the convergence rate of the excess risk $E_{x, \epsilon} \left[ \left(\hat{f}_{\lambda}(x) - f_{\star}(x) \right)^{2} \right]$ is continuous with respect to $u$, hence there exists $\tilde{u} \in (u^{\prime}, \gamma)$, such that for $\tilde{\lambda}=d^{-\tilde{u}}$, we have
\begin{equation*}
\begin{aligned}
    E_{x, \epsilon} \left[ \left(\hat{f}_{\tilde{\lambda}}(x) - f_{\star}(x) \right)^{2} \right]
 \gg  
d^{p-\gamma} + 
    d^{-(p+1)s} \quad \text{ in prob.}
\end{aligned}
\end{equation*}
\end{proof}

\vspace{10pt}

\begin{remark}\label{remark_zhang_coroll_1_prove_part_1}
    
    Here we provide an alternative proof for (\ref{eqn_zhang_coroll_1_prove_part_1}).
    Notice that
    \begin{align*}
        \gamma - \ell_{\gamma} - 1 
        \left\{\begin{matrix}
%
\leq
[1-(\gamma - \ell_{\gamma})]-1 = \ell_{\gamma} - \gamma & \text{ if } \gamma \in (\ell_{\gamma}, \ell_{\gamma}+0.5] ~\text{ and }~ \ell_{\gamma} \geq 0, \\
\vspace{3pt}
= -(\ell_{\gamma}+1)\Gamma(\gamma) \leq -(\ell_{\gamma}+1)s & \text{ if } \gamma \in (\ell_{\gamma}+0.5, \ell_{\gamma}+1),
\end{matrix}\right.
    \end{align*}
    where the first case follows from $0< \gamma - \ell_{\gamma} \leq 0.5$ and the second case follows from the definition of $\Gamma(\cdot)$ and the assumption that $s\geq \Gamma(\gamma)$. 
Recall that in {\bf Step II} we have shown that $p = \ell_{\gamma}$, therefore we have
$$
\Theta_{d, \mathbb{P}} \left( d^{\ell_{\gamma} - \gamma} + d^{\gamma - \ell_{\gamma} - 1} + d^{-(\ell_{\gamma} + 1)  s}\right) 
= 
\Theta_{d, \mathbb{P}} \left( d^{\ell_{\gamma} - \gamma} + d^{-(\ell_{\gamma} + 1)  s}\right)
= \Theta_{d, \mathbb{P}} \left( 
d^{p-\gamma} + 
    d^{-(p+1)s}
\right).
$$
\end{remark}

\vspace{10pt}

\begin{remark}\label{remark_zhang_coroll_1_prove_part_2}
    Here we provide an alternative proof for (\ref{eqn_zhang_coroll_1_prove_part_2}). We consider four different cases given by (16).

\noindent {\bf Case 1: $\gamma \in (0, 0.5]$. } In this case we have $\Gamma(\gamma)=\infty$, hence it is impossible to let $s>\Gamma(\gamma)$.

\noindent {\bf Case 2: $\gamma \in  (\ell_{\gamma}, \ell_{\gamma}+0.5] ~\text{ and }~ \ell_{\gamma} \geq 1$. } In this case we have $\gamma / (s+1) < \gamma / (\Gamma(\gamma)+1)=\ell_{\gamma}$, and hence $p<\ell_{\gamma}$. Therefore, we have

$$
\left\{
\begin{aligned}
\ell_{\gamma} - \gamma
&\ge \gamma - \ell_{\gamma} - 1,
&&\\[4pt]
\ell_{\gamma} - \gamma
&= -\ell_{\gamma}\Gamma(\gamma)
   > -(\ell_{\gamma}+1)\Gamma(\gamma)
   > -(\ell_{\gamma}+1)s,\\[4pt]
\ell_{\gamma} - \gamma
&> p - \gamma,
\end{aligned}\right.
$$
where the first inequality follows from $0 < \gamma - \ell_{\gamma} \le \tfrac{1}{2}$. 
Therefore, we have 
$$\Omega_{d, \mathbb{P}} \left( 
d^{\gamma - \ell_{\gamma} - 1}
+ d^{\ell_{\gamma} - \gamma}
+ d^{-(\ell_{\gamma} + 1)  s}
\right)
=
\Omega_{d, \mathbb{P}} \left( 
d^{\ell_{\gamma} - \gamma}
\right). 
$$

Furthermore, we have
$$
\left\{
\begin{aligned}
-(p+1)s 
&< -\ell_{\gamma}\Gamma(\gamma) = \ell_{\gamma} - \gamma, 
&& \text{if } p = \ell_{\gamma} - 1,\\
-(p+1)s 
&< p+1-\gamma < \ell_{\gamma} - \gamma,
&& \text{if } p < \ell_{\gamma} - 1, 
\end{aligned}
\right.
$$
where the first case uses $s>\Gamma(\gamma)$ and (16), and the second case uses $\gamma < (p+1)(s+1)$ by the definition of $p$. 


Together with $p<\ell_\gamma$, we have
$$
\Omega_{d, \mathbb{P}} \left( 
d^{\ell_{\gamma} - \gamma}
\right)
\gg  
d^{p-\gamma} + 
    d^{-(p+1)s}
 \quad \text{ in prob.}
$$

\noindent {\bf Case 3: $\gamma \in (\ell_{\gamma}+0.5, \ell_{\gamma}+1)$. } In this case we have

$$
\left\{
\begin{aligned}
\gamma - \ell_{\gamma} - 1
&> \ell_{\gamma} - \gamma,
\qquad (\text{since } \gamma - \ell_{\gamma} > 0.5)\\[4pt]
\gamma - \ell_{\gamma} - 1
&= -(\ell_{\gamma}+1)\Gamma(\gamma)
> -(\ell_{\gamma}+1)s.
\end{aligned}
\right.
$$

On one hand, when $p=\ell_{\gamma}$, we have
$$
\Omega_{d, \mathbb{P}} \left( 
d^{\gamma - \ell_{\gamma} - 1}
+ d^{\ell_{\gamma} - \gamma}
+ d^{-(\ell_{\gamma} + 1)  s}
\right)
\gg  
d^{\ell_{\gamma} - \gamma}
+ d^{-(\ell_{\gamma} + 1)  s}
=
d^{p-\gamma} + 
    d^{-(p+1)s}
 \quad \text{ in prob.}
$$
On the other hand, when $p<\ell_{\gamma}$, we have
$$
\left\{\begin{aligned}
& \ell_{\gamma} - \gamma > p - \gamma, \\
\vspace{3pt}
 &\ell_{\gamma} - \gamma \geq p + 1 - \gamma > -(p+1)s,
\end{aligned}\right.
$$
where the last inequality follows from $\gamma<(p+1)(s+1)$. 
Therefore, we have
$$
\Omega_{d, \mathbb{P}} \left( 
d^{\gamma - \ell_{\gamma} - 1}
+ d^{\ell_{\gamma} - \gamma}
+ d^{-(\ell_{\gamma} + 1)  s}
\right)
\gg  
d^{\ell_{\gamma} - \gamma}
\gg
d^{p-\gamma} + 
    d^{-(p+1)s}
 \quad \text{ in prob.}
$$

\noindent {\bf Case 4: $\gamma =1, 2, \cdots$. } In this case we have $s>\Gamma(\gamma)=0$. Since $\gamma / (s+1) < \gamma = \ell_{\gamma}$, we further have $p<\ell_{\gamma}$. Therefore, $d^{\ell_{\gamma} - \gamma}$ is of order $\Theta_d(1)$ while $d^{p-\gamma} + d^{-(p+1)s}$ is of order $o_d(1)$. Hence we have
$$
\Omega_{d, \mathbb{P}} \left( 
d^{\gamma - \ell_{\gamma} - 1}
+ d^{\ell_{\gamma} - \gamma}
+ d^{-(\ell_{\gamma} + 1)  s}
\right)
=  
\Omega_{d, \mathbb{P}}(1)
\gg
d^{p-\gamma} + 
    d^{-(p+1)s}
 \quad \text{ in prob.}
$$

\end{remark}

\subsection{Estimator of sequence model in (17)}
\begin{proposition}\label{prop_sequence_estimators}
    Consider the sequence model in (17). 
    Suppose $\lambda_j$'s are known and consider the parametrization $f_j = \lambda_j^{1/2} \beta_j$, $j=1, 2, \cdots$.
    Then, applying ridge regression with regularization parameter $\lambda$ or gradient flow with early-stopping time $t = \lambda^{-1}$ yields estimators of the form 
    $$
    \hat{f}_j^{\lambda} = \lambda_j \reg(\lambda_j) z_j,\quad j=1, 2, \cdots,
    $$
    where $\reg$ is the regularization function $\reg^{\krr}$ or $\reg^{\gf}$ defined in Section 2.3.1.
\end{proposition}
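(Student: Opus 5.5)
The plan is to treat the sequence model as a penalized least-squares problem that decouples across coordinates, and to solve each coordinate in closed form. In the parametrization $f_j=\lambda_j^{1/2}\beta_j$, the natural least-squares objective is the quadratic loss on the observations $z_j$, where each $z_j$ is regressed on the "feature" $\lambda_j^{1/2}$. The analogue of the RKHS norm is the squared norm of $\beta$. This mirrors how kernel regression becomes linear regression on features $\lambda_j^{1/2}\phi_j$ with coefficient vector $\beta$ in $\ell^2$. I would therefore set up
\begin{equation*}
L(\beta)=\frac12\sum_{j\ge1}\bigl(z_j-\lambda_j^{1/2}\beta_j\bigr)^2,
\end{equation*}
and then derive the two estimators from it in turn.

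For ridge regression, I minimize $L(\beta)+\frac{\lambda}{2}\sum_j\beta_j^2$. The objective separates coordinatewise, so each coordinate is a scalar quadratic. Its first-order condition is
\begin{equation*}
-\lambda_j^{1/2}\bigl(z_j-\lambda_j^{1/2}\beta_j\bigr)+\lambda\beta_j=0,
\end{equation*}
which gives $\hat\beta_j=\lambda_j^{1/2}z_j/(\lambda_j+\lambda)$. Mapping back, $\hat f_j^{\lambda}=\lambda_j^{1/2}\hat\beta_j=\lambda_j z_j/(\lambda_j+\lambda)$, and this equals $\lambda_j\reg^{\krr}(\lambda_j)z_j$.

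For gradient flow, I run $\dot\beta_j(t)=-\partial_{\beta_j}L=\lambda_j^{1/2}\bigl(z_j-\lambda_j^{1/2}\beta_j(t)\bigr)$ from $\beta(0)=0$. Again the system decouples into scalar linear ODEs. Writing $g_j(t)=\lambda_j^{1/2}\beta_j(t)$, each coordinate satisfies $\dot g_j=\lambda_j(z_j-g_j)$ with $g_j(0)=0$, whose solution is $g_j(t)=(1-e^{-\lambda_j t})z_j$. Stopping at $t=\lambda^{-1}$ yields
\begin{equation*}
\hat f_j^{\lambda}=(1-e^{-\lambda_j/\lambda})z_j=\lambda_j\reg^{\gf}(\lambda_j)z_j,
\end{equation*}
using $\reg^{\gf}(z)=(1-e^{-z/\lambda})/z$.

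The only point needing care is well-posedness in infinite dimensions. For ridge, I would note that the minimizer of $\sum_j$ of nonnegative separable terms is obtained coordinatewise. Also, $\hat\beta\in\ell^2$ whenever $(z_j)$ is square-summable; otherwise the objective is interpreted as a formal coordinatewise criterion. For gradient flow, the flow is defined coordinatewise, which is consistent with the definition of the estimators in (18). I expect no real obstacle beyond stating this convention; the computation itself is routine.
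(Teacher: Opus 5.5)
Your proposal is correct and follows essentially the same route as the paper: the same separable loss $\frac12\sum_j(z_j-\lambda_j^{1/2}\beta_j)^2$, solved coordinatewise via the first-order condition for ridge, and via a scalar linear ODE started at $\beta_j(0)=0$ and stopped at $t=\lambda^{-1}$ for gradient flow. Your gradient-flow solution $\lambda_j^{1/2}\beta_j(t)=(1-e^{-\lambda_j t})z_j$ has the correct sign; the paper's displayed solution writes $1+e^{-\lambda_j t}$, which is a typo. Your remark on reading the infinite-dimensional objective coordinatewise is a harmless addition that the paper leaves implicit.
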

\begin{proof}
    \noindent {\bf Ridge regression. } Considering the loss $L(\hat{\beta})=\frac{1}{2}\sum_{j=1}^{\infty} (z_j - \lambda_j^{1/2} \hat{\beta}_j)^2 + \frac{\lambda}{2} \sum_{j=1}^{\infty} \hat{\beta}_j^2$. Minimizing the above loss with respect to each $\hat{\beta}_j$ yields the closed-form solution $\hat{\beta}_j = \lambda_j^{1/2} (\lambda_j + \lambda)^{-1} z_j = \lambda_j^{1/2} \reg^{\krr}(\lambda_j) z_j$. Consequently, the estimator for $\theta_j$ is given by:
    $$
        \hat{f}_j^{\lambda} = \lambda_j \reg^{\krr}(\lambda_j) z_j.
    $$
    \noindent {\bf Gradient flow. }
    Considering the loss $L(\hat{\beta})=\frac{1}{2}\sum_{j=1}^{\infty} (z_j - \lambda_j^{1/2} \hat{\beta}_j)^2$. 
    The gradient flow dynamics, initialized at $\hat{\beta}_j(0) = 0$, are governed by the differential equation:
\[
\frac{\mathsf d}{\mathsf d t} \hat{\beta}_j(t) = -\frac{\partial L}{\partial \hat{\beta}_j} = \lambda_j^{1/2} (z_j - \lambda_j^{1/2} \hat{\beta}_j(t)),
\]
and the solution to this equation is  $\hat{\beta}_j = \lambda_j^{-1/2} (1+e^{-\lambda_j t}) z_j= \lambda_j^{1/2} \reg^{\gf}(\lambda_j) z_j$. Therefore, the corresponding estimator for $\theta_j$ at time $t = \lambda^{-1}$ is:
    $$
        \hat{f}_j^{\lambda} = \lambda_j \reg^{\gf}(\lambda_j) z_j,
    $$
    and we obtain the desired results.
\end{proof}

\subsection{Proof of Proposition 3.4}
\begin{proof}
Denote
$$
\mathcal{N}_{2, \varphi}(\lambda) = \sum_{j =1}^\infty \left[ \lambda_j \reg(\lambda_j) \right]^2
\quad \text{ and } \quad
\mathcal{M}_{2, \varphi}(\lambda) = \sum\limits_{j=1}^{\infty} \left( \rem(\lambda_j) f_{j}\right)^{2}.
$$
Then, when $\lambda > 0$, Lemma D.14 in \cite{lu2024saturation} implies that
$$
\mathcal{N}_{2, \varphi}(\lambda) = \Theta_{d} \left( 
d^{2u - \ell_{\lambda} - 1}
+ d^{\ell_{\lambda}}
\right)
\quad \text{ and } \quad
\mathcal{M}_{2, \varphi}(\lambda) = \Theta_{d} \left( 
d^{-(\ell_{\lambda} + 1)  s}
+ \mathbf{1}\{\tau<\infty\} d^{-2  \tau  u + (2  \tau - \tilde{s})  \ell_{\lambda}}
\right).
$$
Therefore, (19) implies
$$
E_{\xi}\left[\sum_{j=1}^{\infty}(\hat{f}_j^{\lambda} - f_j)^2\right]
= 
\Theta_{d} \left( 
d^{2u - \gamma - \ell_{\lambda} - 1}
+ d^{\ell_{\lambda} - \gamma}
+ d^{-(\ell_{\lambda} + 1)  s}
+ \mathbf{1}\{\tau<\infty\} d^{-2  \tau  u + (2  \tau - \tilde{s})  \ell_{\lambda}}
\right).
$$
\end{proof}

\subsection{Examples of Assumption 8}\label{appendix_kernel_satisfy_assump_eigenval_general}

In this subsection, we first define the Product Laguerre kernels. Then, we will verify that Assumption 8 holds for the kernels mentioned below it.

\subsubsection{Definition of Product Laguerre kernels on Gamma measures}\label{appendix_Laguerre_polynomials}

In this subsection, we will define a type of kernels based on Laguerre polynomials and Gamma measures, and show that it satisfies Assumption 8.

Fix positive constants $\{\alpha_i >0\}_{i=1}^{\infty}$ and $r > 0$, such that we have that $\sup_{i \geq 1} \alpha_i < \infty$.
For dimensions $d \geq 2r$, let $\mathcal{X} = [0, \infty)^d$ be the positive orthant of $\mathbb{R}^d$, and let $\rho_{\mathcal{X}}$ be the product measure where each coordinate $x_{(i)}$ follows an independent Gamma distribution $\Gamma(\alpha_i + 1, 1)$, $i \leq d$. 
Denote $\alpha = (\alpha_1, \cdots, \alpha_d)^{\top}$.

For any $c>0$, let $I_{c}(z):=\sum_{m=0}^{\infty} [m!\Gamma(m+c+1)]^{-1}(z/2)^{2m+c}$ be the modified Bessel function of the first kind. Define
\begin{equation*}
    \begin{aligned}
        \mathscr{K}_{i}(z, z^{\prime}) =
\frac{\Gamma(\alpha_i+1)}{\left(1- r /d\right)}
\left(\frac{r z z^{\prime}}{d}\right)^{-\alpha_i/2}
\exp\left\{-\frac{ \frac{r}{d}(z + z^{\prime})}{1- r / d}\right\}
I_{\alpha_i}\left(\frac{2\sqrt{ \frac{r}{d}z z^{\prime}}}{1- r /d}\right), \quad z, z^{\prime} \in [0, \infty), \quad i \leq d.
    \end{aligned}
\end{equation*}
Finally, let's define a mapping from $\mathcal{X} \times \mathcal{X}$ to $\mathbb{R}$:
\begin{equation}\label{eqn_def_product_Laguerre_kernels}
    \tilde{\mathscr{K}}_{\alpha, r}(x,x^\prime) := \prod_{i=1}^{d} \mathscr{K}_{i}(x_{(i)}, x_{(i)}^{\prime}), \quad x, x^{\prime} \in [0, \infty)^{d}.
\end{equation}

\subsubsection{Verifying Assumption 8}

In this part, we will show that Assumption 8 holds for several types of common kernels in large dimensions, that is, for these kernels, there exists an absolute constant $\tilde{\mathfrak{C}}$ such that $\sum_{k=0}^{\infty} \sum_{j=1}^{\tilde{N}(d, k)} \tilde{\lambda}_{k, j}\leq \tilde{\mathfrak{C}}$; moreover, for any $k  = 0, 1, \cdots, 2\ell_{\gamma} +2$, we assume that
\begin{equation}\label{eqn_appendix_eigen_gap_condition}
    \tilde{N}(d, k) = \Theta_d(d^{k}) \quad \text{ and } \quad 
        \tilde{\lambda}_{k, j} = \Theta_d(d^{-k}), \quad \text{ for } 1\leq j \leq \tilde{N}(d, k).
\end{equation}

\noindent {\bf Inner product kernels defined on $\mathbb{S}^{d-1}$ with uniform distribution. } 
Assumption 2 assumes that $\sum\nolimits_{i=0}^{\infty} a_i \leq a_{-1} < \infty$, hence $\sup_{x \in \mathcal X} \mathscr{K}(x, x) 
    = \sup_{x \in \mathcal X} \Phi(\left\langle x, x \right\rangle) 
    \leq \sum\nolimits_{i=0}^{\infty} a_i \leq a_{-1}$. Therefore, the integral operator defined in (3) is bounded:
    $$
        \sum_{k=0}^{\infty} \sum_{j=1}^{\tilde{N}(d, k)} \tilde{\lambda}_{k, j}  = \text{Tr}(T) :=\int \mathscr{K}(x, x)  ~\mathsf{d} \rho_{\calX}(x) \leq a_{-1}.
    $$
    Moreover, since Proposition 2.1 implies (\ref{eqn_appendix_eigen_gap_condition}), we know that Assumption 8 holds for inner product kernels.

\noindent {\bf Random feature kernels defined on the hypercube $\{-1, 1\}^{d}$ with uniform distribution. } 
From Appendix D.2 and E.2 in \cite{mei2022generalization}, eigen-decomposition for the random feature kernels defined on the hypercube is similar to inner-product kernels on the sphere. Especially, the addition formula holds for both of them (see, e.g., Appendix E.2.2 in \cite{mei2022generalization}).
    This means that we can write $\mathscr{K}(x, x^\prime) = \Phi(\left\langle x, x^\prime \right\rangle)$, since
    $$
        \mathscr{K}(x, x^\prime) = \sum_{k=0}^{\infty} \mu_{k} \sum_{j=1}^{\tilde{N}(d, k)}  Y_{k,j}(x) Y_{k,j}(x^\prime) \overset{\text{Addition formula}}{=} \sum_{k=0}^{\infty} \mu_{k} \tilde{N}(d, k) Q_k^{(d)}(<x, x^\prime>),
    $$
    where $\tilde{N}(d, k)$ is the multiplicity of $\mu_{k}$ defined in \cite{mei2022generalization}, and $Q_k^{(d)}(\cdot)$ is the $k$-th degree  hyper-cubic Gegenbauer defined as Appendix E.2.2 in \cite{mei2022generalization}. Therefore, if we assume Assumption 2 again,
    then the proof for the random feature kernels defined on the hypercube follows from the same proof as for the inner-product kernels on the sphere, and thus Assumption 8 holds for random feature kernels.

\noindent {\bf Gaussian kernels defined on $\mathbb{R}^d$ with standard Gaussian distribution. } 
Notice that from the definition of Gaussian kernels, we have $\sup_{x \in \mathcal X} \mathscr{K}(x, x) \leq \tilde{\mathfrak{C}}$. Hence
    $$
        \sum_{k=0}^{\infty} \sum_{j=1}^{\tilde{N}(d, k)} \tilde{\lambda}_{k, j}  = \text{Tr}(T) :=\int \mathscr{K}(x, x)  ~\mathsf{d} \rho_{\calX}(x) \leq \tilde{\mathfrak{C}}.
    $$
    Moreover, from Section 4.3 in \cite{rasmussen2003gaussian}, we know that the eigenspaces of Gaussian kernels defined on $\mathbb{R}^d$ with standard Gaussian distribution have $\Theta(d^k)$ eigenvalues of order $\Theta(d^{-k})$.
    Therefore, Assumption 8 holds for Gaussian kernels.

\noindent {\bf Product Laguerre kernels defined in (\ref{eqn_def_product_Laguerre_kernels}). }
The following lemma shows that $\tilde{\mathscr{K}}_{\alpha, r}$ defined in (\ref{eqn_def_product_Laguerre_kernels}) is a kernel satisfies Assumption 8.

\begin{lemma}
    Let $\tilde{\mathscr{K}}_{\alpha, r}$ be defined as (\ref{eqn_def_product_Laguerre_kernels}). Then it is a continuous and positive-definite kernel on $\mathcal{X} \times \mathcal{X}$ with respect to $\rho_{\mathcal{X}}$, where $\mathcal{X} = [0, \infty)^d$, and $\rho_{\mathcal{X}}$ is the product measure with each coordinate $x_{(i)}$ following an independent Gamma distribution $\Gamma(\alpha_i + 1, 1)$. Moreover, it admits a Mercer decomposition given in (22), with eigenvalues and eigenfunctions satisfying Assumption 8 with $\tilde{\mathfrak{C}} = 4^r$.
\end{lemma}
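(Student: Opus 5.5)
The plan is to build the kernel from its one-dimensional factors via the Hille--Hardy (Poisson kernel) formula for Laguerre polynomials, and then read off the tensor-product spectrum. Let $L_m^{(\alpha_i)}$ be the generalized Laguerre polynomials and $\ell^{(i)}_m := L_m^{(\alpha_i)}/\|L_m^{(\alpha_i)}\|_{L^2(\Gamma(\alpha_i+1,1))}$ their orthonormalized versions, with $\|L_m^{(\alpha)}\|^2 = \Gamma(m+\alpha+1)/(m!\,\Gamma(\alpha+1))$. The Hille--Hardy formula, applied with $t=r/d\in(0,1/2]$ (recall $d\ge 2r$), gives
\[
\sum_{m=0}^\infty \frac{m!}{\Gamma(m+\alpha+1)} L_m^{(\alpha)}(z)L_m^{(\alpha)}(z')\, t^m
= \frac{1}{(1-t)\,(zz't)^{\alpha/2}}\exp\!\Big\{-\frac{t(z+z')}{1-t}\Big\} I_\alpha\!\Big(\frac{2\sqrt{zz't}}{1-t}\Big).
\]
Multiplying by $\Gamma(\alpha_i+1)$ shows that $\mathscr K_i(z,z')=\sum_m (r/d)^m \ell^{(i)}_m(z)\ell^{(i)}_m(z')$. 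This is a Mercer expansion with eigenvalues $(r/d)^m$ and eigenfunctions forming an orthonormal basis of $L^2(\Gamma(\alpha_i+1,1))$, since Laguerre polynomials are complete. The series converges absolutely and locally uniformly because $|t|<1$, and each factor is continuous, so continuity and positive (semi-)definiteness follow from the expansion.

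Next, take products. The functions $\prod_i \ell^{(i)}_{m_i}(x_{(i)})$ form an orthonormal basis of $L^2(\rho_{\mathcal X})$, and
\[
\tilde{\mathscr K}_{\alpha,r}(x,x')=\sum_{m\in\mathbb N^d}(r/d)^{|m|}\prod_i\ell^{(i)}_{m_i}(x_{(i)})\ell^{(i)}_{m_i}(x'_{(i)}).
\]
Grouping by total degree $k=|m|$ gives the decomposition (22) with $\tilde N(d,k)=\binom{d+k-1}{k}$ and eigenvalues all equal to $(r/d)^k$, which is non-increasing in $k$. For fixed $k$ and $d\to\infty$, we have $\binom{d+k-1}{k}=\Theta_d(d^k)$ and $(r/d)^k=\Theta_d(d^{-k})$, with constants depending only on $r$ and $k$. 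This gives the scaling part of Assumption 8 for $k\le 2\ell_\gamma+2$.

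For the trace bound,
\[
\sum_k \tilde N(d,k)(r/d)^k=\prod_{i=1}^d\sum_{m\ge0}(r/d)^m=(1-r/d)^{-d}.
\]
Using $-\log(1-x)\le x+x^2\le 2x$ for $x\in[0,1/2]$, we get $(1-r/d)^{-d}\le e^{2r}\le 4^r$ when $d\ge2r$. (Strictly, $e^{2r}>4^r$, so I would need the sharper estimate $-\log(1-x)\le 2\log 2\cdot x$ on $[0,1/2]$. This holds by convexity: equality at $x=1/2$ and $0$ at $x=0$, so the chord lies above the convex function. Hence $(1-r/d)^{-d}\le e^{2\log2\, r}=4^r$.)

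I expect the main obstacle to be rigor in interchanging the infinite products and sums. The plan is to justify it via pointwise absolute convergence together with $L^2$ convergence, using $\sum_m t^m \ell_m(z)^2=\mathscr K_i(z,z)<\infty$, and to check that $\mathscr K_i(z,z)$ is integrable, which follows directly from the trace computation by Fubini. A further detail is positive definiteness in the strict sense: all eigenvalues are positive and the eigenfunctions form a complete system, so the integral operator is strictly positive.
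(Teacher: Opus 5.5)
Your proposal is correct and follows essentially the same route as the paper. It normalizes the Laguerre polynomials in $L^2(\Gamma(\alpha_i+1,1))$ and applies the Hille--Hardy formula with $t=r/d$ to diagonalize each factor $\mathscr{K}_i$. It then tensorizes and groups by total degree to get $\tilde{\lambda}_{k,j}=(r/d)^k$ and $\tilde N(d,k)=\binom{k+d-1}{k}$, and bounds the trace by $(1-r/d)^{-d}\le 4^r$. The paper only asserts several steps that you actually justify: the bound $-\log(1-x)\le 2\log 2\cdot x$ on $[0,1/2]$ via convexity, and the convergence, completeness, and strict positivity.
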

\begin{proof}
It is obvious that $\tilde{\mathscr{K}}_{\alpha, r}$ is a continuous and positive-definite kernel. 
We then show that
$$
\tilde{\mathscr{K}}_{\alpha, r}(x,x^\prime) = \sum_{k=0}^{\infty} \sum_{j=1}^{\tilde{N}(d, k)} \tilde{\lambda}_{k, j}  \tilde{\psi}_{k, j}(x) \tilde{\psi}_{k, j}\left(x^\prime\right),
$$
and provide explicit expressions of $\tilde{\lambda}_{k, j}$'s and $\tilde{\psi}_{k, j}$'s. 
For any $c>0$, let $\{L_{k}^{(c)}(\cdot)\}_{k = 0}^{\infty}$ be the generalized Laguerre polynomials (see, e.g., Chapter 6.2 in \cite{andrews1999special})
and $\tilde{L}_{k}^{(c)}:=\sqrt{k! \Gamma(c+1) / \Gamma(k+c+1)}L_{k}^{c}(\cdot)$, $k \geq 0$, then we have (see, e.g., Eq.~(6.2.3) in \cite{andrews1999special}):
\begin{align*}
    &~\int_{0}^{\infty} \tilde{L}_{k}^{(c)}(x) \tilde{L}_{m}^{(c)}(x) \frac{1}{\Gamma(c+1)} x^{c}  e^{-x} ~ \mathrm{d} x\\
    =&~
    \sqrt{\frac{k! m!}{\Gamma(k+c+1)\Gamma(m+c+1)}}\int_{0}^{\infty} L_{k}^{(c)}(x) L_{m}^{(c)}(x) x^{c}  e^{-x} ~ \mathrm{d} x\\
    =&~
    \sqrt{\frac{k! m!}{\Gamma(k+c+1)\Gamma(m+c+1)}} \frac{\Gamma(k+c+1)}{k!} \mathbf{1}\{k=m\} = \mathbf{1}\{k=m\}.
\end{align*}
Furthermore, it is known that the generalized Laguerre polynomials satisfy the Hardy–Hille formula (see, e.g., Eq. (2.13) in \cite{chatterjea1966operational}), that is, 
\begin{equation}\label{eqn_Hardy_Hille_Laguerre_poly}
\begin{aligned}
    \mathscr{K}_{i}(z, z^{\prime}) =&~  \Gamma(\alpha_i+1) \cdot \sum_{k=0}^{\infty} \frac{k! (r/d)^{k}}{\Gamma(k+\alpha_i+1)} L_{k}^{(\alpha_i)}(z) L_{k}^{(\alpha_i)}(z^{\prime})\\
    =&~ \sum_{k=0}^{\infty}  (r/d)^{k} \tilde{L}_{k}^{(\alpha_i)}(z) \tilde{L}_{k}^{(\alpha_i)}(z^{\prime}).
\end{aligned}
\end{equation}
Combining (\ref{eqn_def_product_Laguerre_kernels}) and (\ref{eqn_Hardy_Hille_Laguerre_poly}), we have
\begin{equation*}
    \begin{aligned}
        \tilde{\mathscr{K}}_{\alpha, r}(x,x^\prime) =&~ \sum_{k_1=0}^{\infty} \cdots \sum_{k_d=0}^{\infty} (r/d)^{k_1 + \cdots + k_d} \prod_{i=1}^{d} \tilde{L}_{k_i}^{(\alpha_i)}(x_{(i)}) \tilde{L}_{k_i}^{(\alpha_i)}(x_{(i)}^{\prime})\\
        =&~
        \sum_{k=0}^{\infty} (r/d)^{k} \sum_{\substack{k_1, \cdots, k_d \geq 0 \\ \sum_{i \leq d} k_{i} = k}} \left[\prod_{i=1}^{d} \tilde{L}_{k_i}^{(\alpha_i)}(x_{(i)}) \right]
        \left[\prod_{i=1}^{d} \tilde{L}_{k_i}^{(\alpha_i)}(x_{(i)}^{\prime}) \right],
    \end{aligned}
\end{equation*}
implying that $\tilde{\lambda}_{k, j} = (r/d)^{k}$ and $\tilde{N}(d, k) = C_{k+d-1}^{d-1}$. Therefore, we have
    $$
    \sum_{k=0}^{\infty} \sum_{j=1}^{\tilde{N}(d, k)} \tilde{\lambda}_{k, j} = \sum_{k=0}^{\infty} (r/d)^{k} C_{k+d-1}^{d-1} \overset{\text{Generating function identity}}{=} (1-r/d)^{-(d-1+1)} \overset{d \geq 2r}{\leq} 4^r;
    $$


    Moreover, for any $k  = 0, 1, \cdots, 2\ell_{\gamma} +2$, we have 
    $$
        \tilde{\lambda}_{k, j} = (r/d)^{k}= \Theta_d(d^{-k}), \quad j \leq \tilde{N}(d, k) = C_{k+d-1}^{d-1} = \Theta_d(d^{k}).
    $$
\end{proof}

\subsection{Assumption 9 may hold even when the assumption in \cite{misiakiewicz2024non} fails}\label{appendix_counter_mis}

We present an example that Assumption 9 holds while the assumption in \cite{misiakiewicz2024non} does not hold. 

Consider $\calX=[0, 1]$. We have that $\{e_j(x):=e^{2\pi i j x}\}_{j \in \mathbb{Z}}$ form an orthonormal basis of $L^2([0, 1])$, and that
$$
\sup_{j \in \mathbb{Z}} \|e_j\|_{L^{q}} = \left(\int_0^{1} |e_j(x)|^q ~ \mathsf{d} x \right)^{\frac{1}{q}} = 1, \quad q \geq 2,
$$
hence Assumption 9 holds.

On the contrary, define $N=\lfloor (4C)^{2(\ell_{\gamma}+1)} \pi^4 / 8 \rfloor +1$ and $f_{\star}(x):= N^{-1/2} \sum_{j=1}^{N} e_j(x)$. Then $\left\|f_{\star}\right\|_{L^2}^2 ={\sum_{j=1}^{N} N^{-1}} = 1$. However, for $q=4$, we have
\begin{align*}
    \left\|f_{\star}\right\|_{L^4}^4 
    = &~
    \frac{1}{N^2} \int_0^{1} \left|\frac{e^{2\pi i (N+1)x} - e^{2\pi i x}}{e^{2\pi i x} - 1}\right|^4 ~ \mathsf{d} x = 
    \frac{1}{N^2} \int_0^{1} \left|
    \frac{e^{\pi i (N+2)x}(e^{\pi i Nx} - e^{-\pi i Nx})}{e^{\pi i x}(e^{\pi i x} - e^{-\pi i x})}
    \right|^4 ~ \mathsf{d} x\\
    = &~
    \frac{1}{N^2} \int_0^{1} \left|
    e^{\pi i (N+1)x}
    \right|^4\left|
    \frac{\sin(\pi N x)}{\sin(\pi x)}
    \right|^4 ~ \mathsf{d} x = \frac{1}{N^2} \int_0^{1} \left|
    \frac{\sin(\pi N x)}{\sin(\pi x)}
    \right|^4 ~ \mathsf{d} x\\
    \geq &~
    \frac{1}{N^2} \int_0^{(2N)^{-1}} \left|
    \frac{\sin(\pi N x)}{\sin(\pi x)}
    \right|^4 ~ \mathsf{d} x\\
    \overset{\text{(A)}}{\geq} &~ 
    \frac{1}{N^2} \int_0^{(2N)^{-1}} \left|
    \frac{\frac{2}{\pi}\pi N x}{\pi x}
    \right|^4 ~ \mathsf{d} x = \frac{8}{\pi^4} N > \left[(C q)^{(\ell_{\gamma}+1) / 2}\left\|f_{\star}\right\|_{L^2}\right]^4,
\end{align*}
 where the inequality (A) holds since for any $\theta \in [0, \pi /2]$, we have $2\theta / \pi \leq \sin(\theta) \leq \theta$.
 This implies that the assumption in \cite{misiakiewicz2024non} does not hold.

\section{Properties of $\reg$}\label{appendix_filter_property}

\subsection{Additional requirements on $\reg$}\label{appendix_addtion_require_of_reg}

We list two additional requirements on $\reg$ that are borrowed from \cite{lu2024saturation}.

\begin{itemize}
    \item If $\tau<\infty$, then there exist positive constants $\mathfrak{C}_{7}$ and $\mathfrak{C}_{8}$ only depending on $\tau$, such that we have
    \begin{align*}
    \label{eq:Filter_Rem_finite_case2}
        (z/\lambda)^{2\tau} \rem^2(z)  \geq \mathfrak{C}_{7},
        \quad 
        &\forall \lambda \in (0,1), z > \lambda\\
        (z/\lambda)^{2\tau} \rem^2(z)  \leq \mathfrak{C}_{8} z  \reg(z),
        \quad 
        &\forall \lambda \in (0,1), z \leq \lambda.
    \end{align*}

    \item Let
  \begin{align*}
    D_{\lambda} &= \left\{ z \in \bbC : \Re z \in [-\lambda/2,\kappa^2], ~ \abs{\Im z} \leq \Re z + \lambda/2 \right\} \\
    & \quad \cup \left\{ z \in \bbC : \abs{z - \kappa^2} \leq \kappa^2 + \lambda/2,~ \Re z \geq \kappa^2  \right\};
  \end{align*}
  Then $\reg(z)$ can be extended to be an analytic function on some domain containing $D_\lambda$
  and the following conditions hold for all $\lambda \in (0,1)$:
  \begin{enumerate}
    \item[(C1)] $\left|(z+\lambda)\reg(z)\right| \leq \tilde{E}$ for all $z \in D_\lambda$;
    \item[(C2)] $\left| (z+\lambda)\rem(z)\right| \leq \tilde{F} \lambda$ for all  $z \in D_\lambda$;
  \end{enumerate}
  where  $\tilde{E}, \tilde{F}$ are positive constants.
\end{itemize}

\subsection{An identity of $\reg$ as a matrix function}

The following identity will be used in the proof of our main theorem.

\begin{proposition}\label{prop_matrix_prop_1_of_filter}
    Suppose $A, B$ are semi-positive definite matrices, $A+B$ is a positive definite matrix. Then we have
    \begin{align*}
        \mathrm{I}-A\reg(A+B) =&~ A(A+B)^{-1}\rem(A+B) + B(A+B)^{-1}.
    \end{align*}
\end{proposition}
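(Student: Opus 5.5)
The plan is purely algebraic. Write $S := A+B$, which is positive definite and therefore invertible. By the definition of $\rem$ extended to matrices, we have $\rem(S) = \mathrm{I} - S\reg(S)$. This follows from the power-series definition \eqref{eqn_def_filter_matrix_function} together with \eqref{eq:Filter_Rem_finite_case3}: diagonalize $S = BDB^{\top}$, apply the scalar identity $\rem(z) = 1 - z\reg(z)$ eigenvalue by eigenvalue, and conjugate back.

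The first key step is that $S$ commutes with $\reg(S)$ and with $S^{-1}$. Both are functions of the same symmetric matrix and share its eigenbasis. Hence
\[
S^{-1}\rem(S) = S^{-1} - S^{-1} S \reg(S) = S^{-1} - \reg(S).
\]

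The second step is to substitute this into the right-hand side:
\[
A S^{-1}\rem(S) + B S^{-1} = A S^{-1} - A\reg(S) + B S^{-1}.
\]
Combining the first and last terms gives $(A+B)S^{-1} - A\reg(S) = \mathrm{I} - A\reg(A+B)$, which is the left-hand side.

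The only point needing care is the commutation step, namely justifying $S^{-1}S\reg(S) = \reg(S)$ and that $\rem(S)$ is well defined as $\mathrm{I} - S\reg(S)$. This is handled by the spectral calculus above, since $\reg$ is analytic on a neighbourhood of the spectrum of $S$ by Definition~\ref{def:filter}. Note that $A$ and $B$ need not commute with $S$; the argument never requires this, because $A$ and $B$ only ever multiply from the left.
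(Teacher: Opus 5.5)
Your proof is correct and is the same computation as the paper's: substitute $\rem(A+B)=\mathrm{I}-(A+B)\reg(A+B)$, cancel $(A+B)^{-1}(A+B)$, and regroup $A(A+B)^{-1}+B(A+B)^{-1}=\mathrm{I}$. The commutation you single out as the key step is not actually needed, because $S^{-1}S\reg(S)=\reg(S)$ holds by associativity alone.
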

\begin{proof}
Denote $M:=A+B$. Recall that $\rem(M) = \mathrm{I} - M \reg(M)$. Then
\begin{align*}
    &\quad~ A M^{-1}\rem(M) + B M^{-1} \\
    &= A M^{-1}\big(\mathrm{I} - M \reg(M)\big) + B M^{-1} \\
    &= A M^{-1} - A M^{-1} M \reg(M) + B M^{-1} \\
    &= (A+B) M^{-1} - A \reg(M) \\
    &= \mathrm{I} - A \reg(M).
\end{align*}
\end{proof}

\subsection{Verification of Assumption 5 for Common Spectral Algorithms}\label{appendix_verify_assump_filter}

In this subsection, we demonstrate that several commonly used spectral algorithms, as listed in Section 2.3.1, satisfy Assumption 5.

Given $0 \leq u < \gamma <1$, we have $n^{-1}=o_d(\lambda)$. By Proposition \ref{lemma psi psi top}, there exist constants $\mathfrak{C}_5, \mathfrak{C}_6 >0$, such that we have $\mathfrak{C}_5 \mathrm{I}_n \leq K_{>0} \leq \mathfrak{C}_6 \mathrm{I}_n$. Recalling the decomposition that $K = K_{\leq 0} + K_{>0} = K_{0} + K_{>0}$, we now verify the assumption for each algorithm. 

\begin{itemize}
    \item {\bf KRR.} We have
    \begin{equation*}
    \begin{aligned}
        \rem(K/n) = \lambda (K_{0}/n + K_{> 0}/n + \lambda \mathrm{I}_n)^{-1} 
        =\Omega_{d, \mathbb{P}}(1)\lambda (K_{0}/n  + \lambda \mathrm{I}_n)^{-1}
        = \Omega_{d, \mathbb{P}}(1)\rem(K_{0}/n).
    \end{aligned}
\end{equation*}

    \item {\bf Kernel gradient flow.} We have
    \begin{equation*}
    \begin{aligned}
        \rem(K/n) = e^{- (K_{0}/n + K_{> 0}/n)/\lambda} 
        = \Omega_{d, \mathbb{P}}(1)\rem(K_{0}/n).
    \end{aligned}
\end{equation*}

\item {\bf Iterated ridge regression.} We have
\begin{equation*}
    \begin{aligned}
        \rem(K/n) = \lambda^{q} (K_{0}/n + K_{> 0}/n + \lambda \mathrm{I}_n)^{-q} 
        =\Omega_{d, \mathbb{P}}(1)\lambda^{q} (K_{0}/n  + \lambda \mathrm{I}_n)^{-q}
        = \Omega_{d, \mathbb{P}}(1)\rem(K_{0}/n).
    \end{aligned}
\end{equation*}

\item {\bf Kernel gradient descent.} Denote $\Delta = -\eta K_{> 0}/n$, $S_0=\mathrm{I}_n-\eta K_{0}/n$, and $S_1=S_0+\Delta$. Therefore, when $\eta$ is sufficiently small (depending on $\mathfrak{C}_5, \mathfrak{C}_6$), we have $\lambda_{\min}(S_0)=\Omega_{d, \mathbb{P}}(1)$ and $\Delta=o_{d, \mathbb{P}}(1)$, hence from Taylor's expansion, we have
\begin{equation*}
    \begin{aligned}
        \ln(S_1) \geq \ln(S_0) + \int_{0}^{1} (S_0 + t\Delta)^{-1} \Delta (S_0 + t\Delta)^{-1} ~\mathrm{d}t - O_{d, \mathbb{P}}(\|\Delta\|_{2}^{2})\mathrm{I}_n.
    \end{aligned}
\end{equation*}
Recall that $\rem^{\mr{GD}}(z) = (1-\eta z)^{1/(\eta \lambda)}$ 
and notice that $\frac{1}{\eta \lambda}\|\Delta\|_2^2 =  \eta \cdot (n^2\lambda)^{-1} \|K_{>0}\|_2^2 \to 0$ in probability, since we have $n^2\lambda = d^{2\gamma-u} \to \infty$ by the assumption that $u<\gamma$. 
Hence,
\begin{equation*}
    \begin{aligned}
        \rem(K/n) = &~ e^{\frac{1}{\eta \lambda}\ln(S_1)}\\
        \geq &~ 
        \Omega_{d, \mathbb{P}}(1) 
        e^{\frac{1}{\eta \lambda}\ln(S_0)}
        \exp\left\{-\frac{1}{n\lambda}
        \int_{0}^{1} (S_0 + t\Delta)^{-1} K_{>0} (S_0 + t\Delta)^{-1} ~\mathrm{d}t
        \right\}\\
        \geq &~
        \Omega_{d, \mathbb{P}}(1) 
        e^{\frac{1}{\eta \lambda}\ln(S_0)}.
    \end{aligned}
\end{equation*}

\end{itemize}

\section{Auxiliary Lemmas}\label{appendix_auxiliary}

The following proposition is known as Weyl's inequality.

\begin{proposition}[Weyl's inequality]\label{prop_weyl_ine}
Let $A_1$ and $A_2$ be symmetric matrices on $\mathbb{R}^{n \times n}$, with their eigenvalues ordered in descending order. Then, for any indices $i$ and $j$, we have:
\[
\lambda_{i+j-1}(A_1 + A_2) \leq \lambda_i(A_1) + \lambda_j(A_2) \leq \lambda_{i+j-n}(A_1 + A_2).
\]
\end{proposition}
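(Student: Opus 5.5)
Weyl's inequality is a classical statement about Hermitian matrices, so I would prove it directly from the Courant--Fischer min-max characterization of eigenvalues rather than from anything earlier in the paper. Write the eigenvalues of a symmetric matrix $A$ in descending order $\lambda_1(A)\ge\cdots\ge\lambda_n(A)$. Courant--Fischer gives
\[
\lambda_k(A)=\max_{\dim V=k}\ \min_{x\in V,\ \|x\|=1} x^{\top}Ax=\min_{\dim W=n-k+1}\ \max_{x\in W,\ \|x\|=1} x^{\top}Ax .
\]
Only the indices $i,j$ for which the relevant subscript lies in $\{1,\dots,n\}$ are meaningful: $i+j-1\le n$ for the left inequality and $i+j-n\ge 1$ for the right one. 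I would state this restriction explicitly.

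For the left inequality $\lambda_{i+j-1}(A_1+A_2)\le\lambda_i(A_1)+\lambda_j(A_2)$, let $W_1$ be the span of the eigenvectors of $A_1$ belonging to $\lambda_i,\dots,\lambda_n$. It has dimension $n-i+1$, and $x^\top A_1x\le\lambda_i(A_1)$ for every unit $x\in W_1$. Define $W_2$ analogously for $A_2$, with dimension $n-j+1$. A dimension count shows that $W=W_1\cap W_2$ has dimension at least $n-(i+j-1)+1$. Take any subspace $W'\subseteq W$ of exactly that dimension. On $W'$ one has $x^\top(A_1+A_2)x\le\lambda_i(A_1)+\lambda_j(A_2)$, and the min-max form of Courant--Fischer with $k=i+j-1$ then gives the bound.

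For the right inequality $\lambda_i(A_1)+\lambda_j(A_2)\le\lambda_{i+j-n}(A_1+A_2)$, I would apply the left inequality to $-A_1$ and $-A_2$. This uses $\lambda_k(-A)=-\lambda_{n-k+1}(A)$. Setting $i'=n-i+1$ and $j'=n-j+1$, the left inequality gives $-\lambda_{i+j-n}(A_1+A_2)\le-\lambda_i(A_1)-\lambda_j(A_2)$, since $i'+j'-1=2n+1-i-j$ and $n-(2n+1-i-j)+1=i+j-n$. Rearranging yields the claim.

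I expect no step to be hard. The only point needing care is the index bookkeeping: checking the intersection dimension, and confirming that the substitution maps the admissible range of the left inequality exactly onto that of the right one.
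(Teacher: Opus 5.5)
The paper does not prove this proposition. It lists Weyl's inequality among the auxiliary lemmas as a classical fact, so there is no route in the paper to compare against. Your Courant--Fischer argument is the standard textbook proof, and it is correct.

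\begin{itemize}
\item The intersection of the two trailing eigenspaces satisfies $\dim(W_1\cap W_2)\ge (n-i+1)+(n-j+1)-n = n-(i+j-1)+1$.
\item The min--max form of Courant--Fischer at $k=i+j-1$, applied to any subspace of that dimension inside $W_1\cap W_2$, yields the lower inequality.
\item The identity $\lambda_k(-A)=-\lambda_{n-k+1}(A)$, with the substitution $i\mapsto n-i+1$, $j\mapsto n-j+1$, converts the lower inequality into the upper one. It also maps the admissible range $i+j-1\le n$ exactly onto $i+j-n\ge 1$.
\end{itemize}

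Stating the index restriction explicitly, as you do, is a genuine clarification of the paper's phrase ``for any indices $i$ and $j$.'' Every use of the inequality in the paper lies inside the admissible range: $i=j=1$ for bounds on $\lambda_{\max}$, $i=j=n$ for bounds on $\lambda_{\min}$, and $j=1$ in part (iii) of Lemma~\ref{lemma lambda max min}.
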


The following proposition is a restatement of von Neumann's trace inequality; see also \cite{marshall1979inequalities}.

\begin{proposition}\label{prop_ri}
    For Hermitian $n \times n$ positive semi-definite complex matrices $A$ and $B$, where now the eigenvalues are sorted decreasingly ($a_1 \geq a_2 \geq \cdots \geq a_n \geq 0$ and $b_1 \geq b_2 \geq \cdots \geq b_n \geq 0$, respectively), we have
$$
\mathrm{tr}\left( AB \right) \geq \sum_{i=1}^{n} a_i b_{n-i+1}.
$$
\end{proposition}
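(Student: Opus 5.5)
The statement to be proved is the von Neumann-type lower bound $\mathrm{tr}(AB)\ge\sum_{i=1}^n a_i b_{n-i+1}$ for Hermitian positive semi-definite $A,B$. I will reduce it to a rearrangement inequality for doubly stochastic matrices.

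\textbf{Reduction.} Diagonalize $A=U\,\mathrm{diag}(a)\,U^*$ and $B=V\,\mathrm{diag}(b)\,V^*$ with unitary $U,V$. Put $W=U^*V$. Then
\[
\mathrm{tr}(AB)=\sum_{i,j} a_i b_j |W_{ij}|^2 .
\]
The matrix $P=(|W_{ij}|^2)$ is doubly stochastic, because the rows and columns of a unitary matrix have unit norm. So $\mathrm{tr}(AB)=a^\top P b$ with $P$ in the Birkhoff polytope.

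\textbf{Optimization.} The map $P\mapsto a^\top P b$ is linear. By Birkhoff's theorem, its minimum over doubly stochastic matrices is attained at a permutation matrix. It therefore suffices to show $\sum_i a_i b_{\sigma(i)}\ge\sum_i a_i b_{n-i+1}$ for every permutation $\sigma$. This is the classical rearrangement inequality. Whenever $i<k$ and $b_{\sigma(i)}>b_{\sigma(k)}$, swapping $\sigma(i)$ and $\sigma(k)$ changes the sum by $(a_i-a_k)(b_{\sigma(k)}-b_{\sigma(i)})\le0$. Repeating such swaps reaches the reversed order and never increases the sum, which gives the bound.

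\textbf{Alternative and main obstacle.} Birkhoff's theorem can be avoided by Abel summation. Write $a_i=\sum_{k\ge i}(a_k-a_{k+1})$ with $a_{n+1}:=0$; all coefficients are nonnegative because the $a_i$ are sorted decreasingly. Then
\[
\mathrm{tr}(AB)=\sum_k (a_k-a_{k+1})\,\mathrm{tr}(P_k B),
\]
where $P_k$ is the orthogonal projection onto the top-$k$ eigenvectors of $A$. By the Ky Fan / Courant–Fischer minimum principle, $\mathrm{tr}(P_k B)\ge b_n+\cdots+b_{n-k+1}$. Summing these bounds back with the nonnegative weights reproduces $\sum_i a_i b_{n-i+1}$.

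The only substantive step is the extremal principle, whether that is Birkhoff's theorem or Ky Fan's minimum principle. Positive semi-definiteness is not actually needed beyond the matrices being Hermitian with sorted eigenvalues.
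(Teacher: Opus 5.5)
Your proof is correct, and it goes further than the paper: the paper gives no argument for this proposition. It only cites it as a restatement of von Neumann's trace inequality, with a pointer to Marshall--Olkin.

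The classical von Neumann inequality is an \emph{upper} bound: $\mathrm{tr}(AC)\le\sum_i a_i c_i$ for positive semi-definite $A,C$ with decreasingly sorted eigenvalues. The lower bound stated here follows from it by a shift. Take $C=b_1\mathrm{I}-B$, which is positive semi-definite with decreasingly sorted eigenvalues $b_1-b_{n-i+1}$. Then $b_1\,\mathrm{tr}(A)-\mathrm{tr}(AB)\le b_1\sum_i a_i-\sum_i a_i b_{n-i+1}$, and the $b_1$ terms cancel.

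Your route proves the lower bound directly. You reduce to the unistochastic matrix $(|W_{ij}|^2)$ and then use either Birkhoff's theorem with the rearrangement swap argument, or Abel summation with the Ky Fan minimum principle. This makes the argument self-contained up to Birkhoff or Ky Fan instead of von Neumann. It also shows transparently that only Hermitian-ness is needed.

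One small correction to your Abel-summation variant. The last coefficient $a_n-a_{n+1}=a_n$ is nonnegative because $A$ is positive semi-definite, not because the $a_i$ are sorted. This does not affect the proof, and it is consistent with your closing remark. For $k=n$ you have $P_n=\mathrm{I}$, so $\mathrm{tr}(P_nB)=b_1+\cdots+b_n$ holds with equality, and the sign of that coefficient never matters.
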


The following proposition is an extension of the symmetric matrix Bernstein inequality.

\begin{proposition}[Restate Exercise 6.10 in \cite{wainwright2019high}]\label{prop_non_sym_matrix_bernstein}
Let $\{A_i\}_{i=1}^n$ be a collection of $n$ independent, zero-mean matrices in $\mathbb{R}^{d_1 \times d_2}$, such that $\|A_i\|_2 \leq b$ almost surely. 
Let 
\[
\sigma^2 := \max\left\{
\left\|\frac{1}{n}\sum_{i=1}^n \mathbb{E}[A_i A_i^\top]\right\|_2,
\left\|\frac{1}{n}\sum_{i=1}^n \mathbb{E}[A_i^\top A_i]\right\|_2
\right\}.
\]
Then, for any $\delta > 0$, the following tail bound holds:
\[
\mathbb{P}\left(\left\|\sum_{i=1}^n A_i\right\|_2 \geq \delta\right) \leq 2(d_1 + d_2) \exp\left(-\frac{\delta^2}{2(n\sigma^2 + b\delta)}\right).
\]
\end{proposition}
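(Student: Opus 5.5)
The plan is to reduce to the symmetric (Hermitian) matrix Bernstein inequality through the standard Hermitian dilation. For each $i$, define the symmetric matrix
\[
D_i := \begin{pmatrix} 0 & A_i \\ A_i^{\top} & 0 \end{pmatrix} \in \mathbb{R}^{(d_1+d_2)\times(d_1+d_2)}.
\]
The $D_i$ are independent and have zero mean because the $A_i$ do.

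First I will record the spectral facts about the dilation. The nonzero eigenvalues of $D_i$ are $\pm$ the singular values of $A_i$, so $\|D_i\|_2=\|A_i\|_2\le b$ almost surely. By linearity, $\sum_i D_i$ is the dilation of $\sum_i A_i$, hence $\|\sum_i D_i\|_2=\|\sum_i A_i\|_2$. Squaring gives $D_i^2=\mathrm{Diag}\{A_iA_i^{\top},A_i^{\top}A_i\}$, which is block diagonal. Therefore
\[
\Big\|\sum_{i=1}^n \mathbb{E}D_i^2\Big\|_2=\max\Big\{\Big\|\sum_i\mathbb{E}[A_iA_i^{\top}]\Big\|_2,\Big\|\sum_i\mathbb{E}[A_i^{\top}A_i]\Big\|_2\Big\}=n\sigma^2 .
\]

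Next I will apply the symmetric matrix Bernstein inequality, in the form of Theorem 6.17 in \cite{wainwright2019high}, to $\{D_i\}$. The ambient dimension is $d_1+d_2$. That inequality gives
\[
\mathbb{P}\Big(\Big\|\sum_i D_i\Big\|_2\ge\delta\Big)\le 2(d_1+d_2)\exp\Big(-\frac{\delta^2}{2(n\sigma^2+b\delta/3)}\Big).
\]
The factor $2$ comes from controlling the largest and smallest eigenvalues separately. Since $b\delta/3\le b\delta$, this bound implies the stated one with denominator $2(n\sigma^2+b\delta)$. Translating back via $\|\sum_iD_i\|_2=\|\sum_iA_i\|_2$ completes the argument.

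The only delicate point is bookkeeping of constants: the exact form of the symmetric inequality invoked (the Bernstein moment condition versus a plain almost-sure bound, and the $b\delta/3$ term). I will check that the almost-sure bound $\|D_i\|_2\le b$ verifies the Bernstein moment condition $\mathbb{E}[D_i^k]\preceq \tfrac{k!}{2}b^{k-2}\mathbb{E}[D_i^2]$. This holds because $D_i^k\preceq b^{k-2}D_i^2$ for a symmetric matrix with $\|D_i\|_2\le b$. With that check in place, the weaker constant in the statement leaves slack, so the argument goes through.
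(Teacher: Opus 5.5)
Your proposal is correct and is the standard Hermitian-dilation reduction that the cited Exercise 6.10 intends; the paper only restates the result with a citation and gives no proof of its own. One small point on constants: in Wainwright's normalization, Theorem 6.17 bounds $\mathbb{P}(\|\frac{1}{n}\sum_i Q_i\|_2\ge\delta)$ by $2\,\mathrm{rank}(\cdot)\exp\{-n\delta^2/(2(\sigma^2+b\delta))\}$ under the Bernstein condition with parameter $b$, so the condition $\mathbb{E}[D_i^k]\preceq\frac{k!}{2}b^{k-2}\mathbb{E}[D_i^2]$ you verify gives the stated bound exactly after replacing $\delta$ by $\delta/n$; the sharper $b\delta/3$ form you quote is Tropp's bounded-case version rather than what Theorem 6.17 yields under parameter $b$, and it is valid but not needed.
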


The following proposition restates the Sherman-Morrison-Woodbury formula.

\begin{proposition}[Sherman-Morrison-Woodbury formula]\label{prop_smw_formula}
    Suppose $k < n$, $A \in \mathbb{R}^{n \times n}$ is an invertible matrix, and $Z \in \mathbb{R}^{n \times k}$ is such that $ZZ^{\top} + A$ is invertible. Then we have
    \[
(ZZ^{\top} + A)^{-1} = A^{-1} - A^{-1} Z(\mathrm{I}_{k} + Z^{\top} A^{-1} Z)^{-1} Z^{\top} A^{-1}.
\]
\end{proposition}

\section{Further Experiments and Supporting Proofs}\label{appendix_experiments}

\subsection{Additional Experiments}\label{appendix_addtional_experiments}

Figures \ref{experiment_1_settings_3} and \ref{experiment_1_settings_4} report results of the remaining Type 1 experiments with optimal $C_{\lambda}$ value.

Figure \ref{experiment_1_settings_1_all}, \ref{experiment_1_settings_2_all}, \ref{experiment_1_settings_3_all}, and \ref{experiment_1_settings_4_all} report results of all Type 1 experiments with different $C_{\lambda}$ values.

\subsection{Verification of the regression function in (23)}\label{append_proof_true_function_in_experiments}

In this subsection, we show that $f_{\star}$ defined as (23) satisfies Assumption 3 and Assumption 6. Without loss of generality, we assume that $d \geq \mathfrak{C}$ such that results in Proposition 2.1 hold.

In the following proof, we will frequently use the addition formula (see, e.g., Proposition 1.18 in \cite{gallier2009notes}, page 56 in \cite{ghorbani2021linearized}, or (39) in \cite{lu2024pinsker}):
\begin{equation}\label{eqn_addition_formula}
    P_{k, d}(\langle x, x^{\prime}\rangle) = \frac{1}{N(d, k)} \sum_{j=1}^{N(d, k)} \psi_{k,j}(x) \psi_{k,j}(x^{\prime}),\quad k=0, 1, \cdots
\end{equation}
where $P_{k, d}(\cdot)$ is the $k$-th normalized Gegenbauer polynomial (such that $P_{k, d}(1)=1$) with parameter $(d-2)/2$, and we have
\begin{equation}\label{eqn_former_gegenbauer}
    \begin{aligned}
    P_{0, d}(t)=1, \quad P_{1, d}(t)=t, \quad P_{2, d}(t)=\frac{dt^2-1}{d-1}.
    \end{aligned}
\end{equation}

\noindent {\bf Case $s=1$. } In this case we have
\begin{equation*}
    \begin{aligned}
        f_{\star}(x) = &~ \sum_{i=1}^{3} \mathscr{K}(\xi_{i},x)
        \overset{(4)}{=}
        \sum_{j = 1}^{\infty} \left[\sum_{i=1}^{3} \lambda_j \phi_{j}(\xi_{i}) \right]\phi_{j}(x)\\
        (\because \text{Proposition 2.1} ) \quad = &~
        \sum_{k=0}^{\ell_{\gamma}+1} \sum_{j=1}^{N(d,k)} \left[\sum_{i=1}^{3} \mu_{k} \psi_{k,j}(\xi_{i})\right]\psi_{k,j}(x) + \sum_{j =N_{\ell_{\gamma}+1}+1}^{\infty} \left[\sum_{i=1}^{3} \lambda_j \phi_{j}(\xi_{i}) \right] \phi_{j}(x).
    \end{aligned}
\end{equation*}
We first note that 
\begin{align*}
 \|f\|_{[\calH]}^2 =&~ \sum_{j=1}^{\infty} \lambda_j^{-1} \left[\sum_{i=1}^{3} \lambda_j \phi_{j}(\xi_{i}) \right]^2 \leq 3\sum_{i=1}^{3}\sum_{j=1}^{\infty} \lambda_j \phi_{j}(\xi_{i}) \phi_{j}(\xi_{i}) = 3\sum_{i=1}^{3} \mathscr{K}(\xi_{i}, \xi_{i}) \leq 9K_{\max},
\end{align*}
thus $f_{\star}$ satisfies Assumption 3. 

Since we have set $\gamma<2$ and $d>10$ in our experiments, we have $\ell_{\gamma} \leq 1$ and $3-6/(d-1)>2$. Therefore, for any $m \leq \ell_{\gamma}+1 \leq 2$, we have

\begin{equation}\label{eqn_verify_assump_6_for_experiments}
    \begin{aligned}
    &~ \mu_{m}^{-1} \sum\limits_{j=1}^{N(d, m)}  \theta_{m,j}^{2}\\
    =&~ \mu_{m}^{-1} \sum\limits_{j=1}^{N(d, m)} \left[\sum_{i=1}^{3} \mu_{m} \psi_{m,j}(\xi_{i})\right]^2\\
    =&~ \mu_{m} \left[\sum_{i=1}^{3} \sum\limits_{j=1}^{N(d, m)} \psi_{m,j}^2(\xi_{i}) + \sum_{i \neq i^{\prime}}^{3} \sum\limits_{j=1}^{N(d, m)} \psi_{m,j}(\xi_{i}) \psi_{m,j}(\xi_{i^{\prime}}) \right]\\
    \overset{(\ref{eqn_addition_formula})}{=} &~ \mu_{m} N(d, m) \left[ \sum_{i=1}^{3} 
    P_{k, d}(\langle \xi_{i}, \xi_{i}\rangle) + \sum_{i \neq i^{\prime}}^{3} P_{k, d}(\langle \xi_{i}, \xi_{i^{\prime}}\rangle)\right]\\
    \overset{(\ref{eqn_former_gegenbauer})}{=}&~
    \left\{\begin{matrix}
\mu_{0} N(d, 0) [3+6] = 9 \mu_{0} N(d, 0), &  m=0;\\
\mu_{1} N(d, 1) [3 + \sum_{i \neq i^{\prime}}^{3} \langle \xi_{i}, \xi_{i^{\prime}}\rangle] \overset{(24)}{\geq} 0.1 \mu_{1} N(d, 1), &  m=1;\\
\mu_{2} N(d, 2) [3 + \sum_{i \neq i^{\prime}}^{3} \frac{d\langle \xi_{i}, \xi_{i^{\prime}}\rangle^2-1}{d-1}] \geq \mu_{2} N(d, 2) [3 - 6/(d-1)] > 2\mu_{2} N(d, 2), &  m=2;\\
\end{matrix}\right.
\end{aligned}
\end{equation}
Therefore, from Proposition 2.1 we have $\mu_{m}^{-1} \sum_{j=1}^{N(d, m)}  \theta_{m,j}^{2} = \Omega_d(1)$, $m=0, 1, 2$. 
Furthermore, 
\begin{align*}
    \sum\limits_{k=0}^{\tilde{\ell}} \sum\limits_{j =1}^{N(d, k)}  \theta_{k, j}^{2} \geq &~ \theta_{0, 1}^{2} = \left[\sum_{i=1}^{3} \mu_{0} \psi_{0,1}(\xi_{i})\right]^2 = 9\mu_0^2 =\Theta_d(1),
\end{align*}
where the last equation follows from Proposition~2.1. 
We conclude that $f_{\star}$ satisfies Assumption 6.

\noindent {\bf Case $s \neq 1$ and $\mathscr{K}=\mathscr{K}^{\mathrm{rbf}}$. } In this case we have
\begin{equation*}
    \begin{aligned}
        f_{\star}(x) = &~ \sum_{i=1}^{3} \sum_{k=0}^{2} d^{k(1-s)/2} P_{k, d}(\langle\xi_{i}, x\rangle)
        \overset{(\ref{eqn_addition_formula})}{=}
        \sum_{k=0}^{2} \sum_{j=1}^{N(d, k)} \left[\sum_{i=1}^{3} \frac{d^{k(1-s)/2}}{N(d, k)} \psi_{k,j}(\xi_{i}) \right]  \psi_{k,j}(x) 
    \end{aligned}
\end{equation*}
By Proposition 2.1, we have
$$
\begin{aligned}
 \|f\|_{[\calH]^{s}}^2 =&~ \sum_{k=0}^{2} \sum_{j=1}^{N(d, k)} \mu_k^{-s} \left[\sum_{i=1}^{3} \frac{d^{k(1-s)/2}}{N(d, k)} \psi_{k,j}(\xi_{i}) \right]^2 \overset{(\ref{eqn_addition_formula})}{\leq}
        3 \sum_{k=0}^{2} \frac{\mu_k^{-s}d^{k(1-s)}}{N(d, k)} = O_d(1),
\end{aligned}
$$
which implies that $f_{\star}$ satisfies Assumption 3.

Since we have set $\gamma<2$ and $d>10$ in our experiments, we have $\ell_{\gamma} \leq 1$ and $3-6/(d-1)>2$. 
Therefore, for any $m \leq \ell_{\gamma}+1 \leq 2$, similar to (\ref{eqn_verify_assump_6_for_experiments}), we have
\begin{equation*}
    \begin{aligned}
        \mu_{m}^{-s} \sum\limits_{j=1}^{N(d, m)}  \theta_{m,j}^{2} =&~ \mu_{m}^{-s} \sum\limits_{j=1}^{N(d, m)}  \left[\sum_{i=1}^{3} \frac{d^{m(1-s)/2}}{N(d, m)} \psi_{m,j}(\xi_{i}) \right]^2\\
        =&~  \frac{\mu_{m}^{-s} d^{m(1-s)}}{N(d, m)^2} \left[\sum_{i=1}^{3} \sum\limits_{j=1}^{N(d, m)} \psi_{m,j}^2(\xi_{i}) + \sum_{i \neq i^{\prime}}^{3} \sum\limits_{j=1}^{N(d, m)} \psi_{m,j}(\xi_{i}) \psi_{m,j}(\xi_{i^{\prime}}) \right]\\
        =&~ \Omega_d\left(
        \frac{\mu_{m}^{-s} d^{m(1-s)}}{N(d, m)}
        \right). 
    \end{aligned}
\end{equation*}
 By Proposition 2.1, the right-hand side of the last display is $\Omega_d(1)$. Furthermore, 
$$
\begin{aligned}
\sum\limits_{k=0}^{\tilde{\ell}} \sum\limits_{j =1}^{N(d, k)}  \theta_{k, j}^{2} \geq &~ \theta_{0, 1}^{2} = \left[\sum_{i=1}^{3} \frac{d^{0(1-s)/2}}{N(d, 0)} \psi_{0,1}(\xi_{i}) \right]^2 = \frac{9}{N(d, 0)^2} = \Theta_d(1),
\end{aligned}
$$
where the last equation follows from Proposition 2.1. 
Therefore, $f_{\star}$ satisfies Assumption 6.

        

\noindent {\bf Case $s \neq 1$ and $\mathscr{K}=\mathscr{K}^{\mathrm{ntk}}$. } 
In our NTK experiments, we always set $\ell_{\gamma} \leq 1$. Moreover, for the NTK kernel we have $a_{j}>0$ for $j \in \{0,1\}\cup \{2,4,6,\cdots\}$. Therefore, the argument follows the same steps as in the RBF kernel case.

\clearpage

\begin{figure}[!htbp]
\centering
\subfigure{\includegraphics[width=0.9\columnwidth]{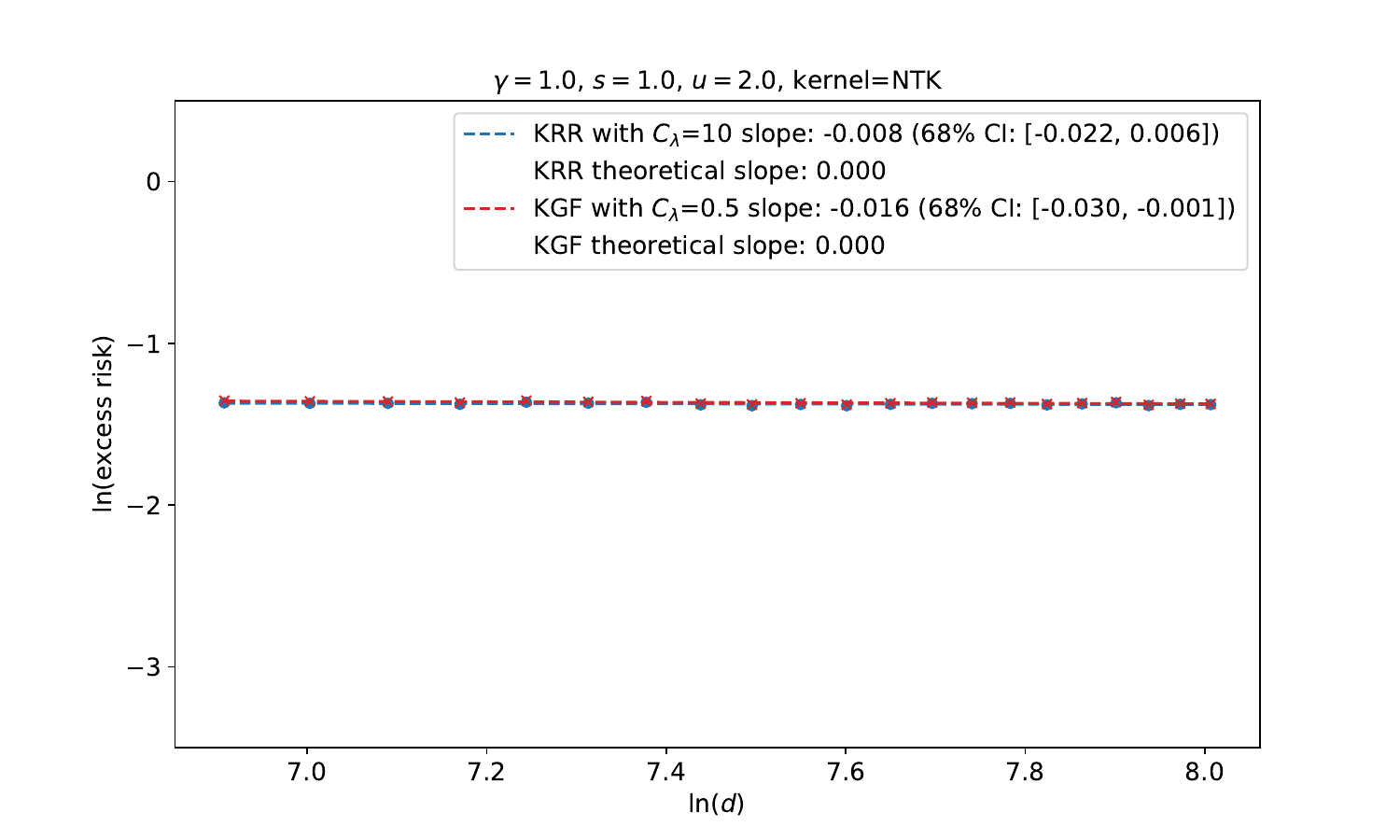}}

\vspace{-15pt}

\subfigure{\includegraphics[width=0.9\columnwidth]{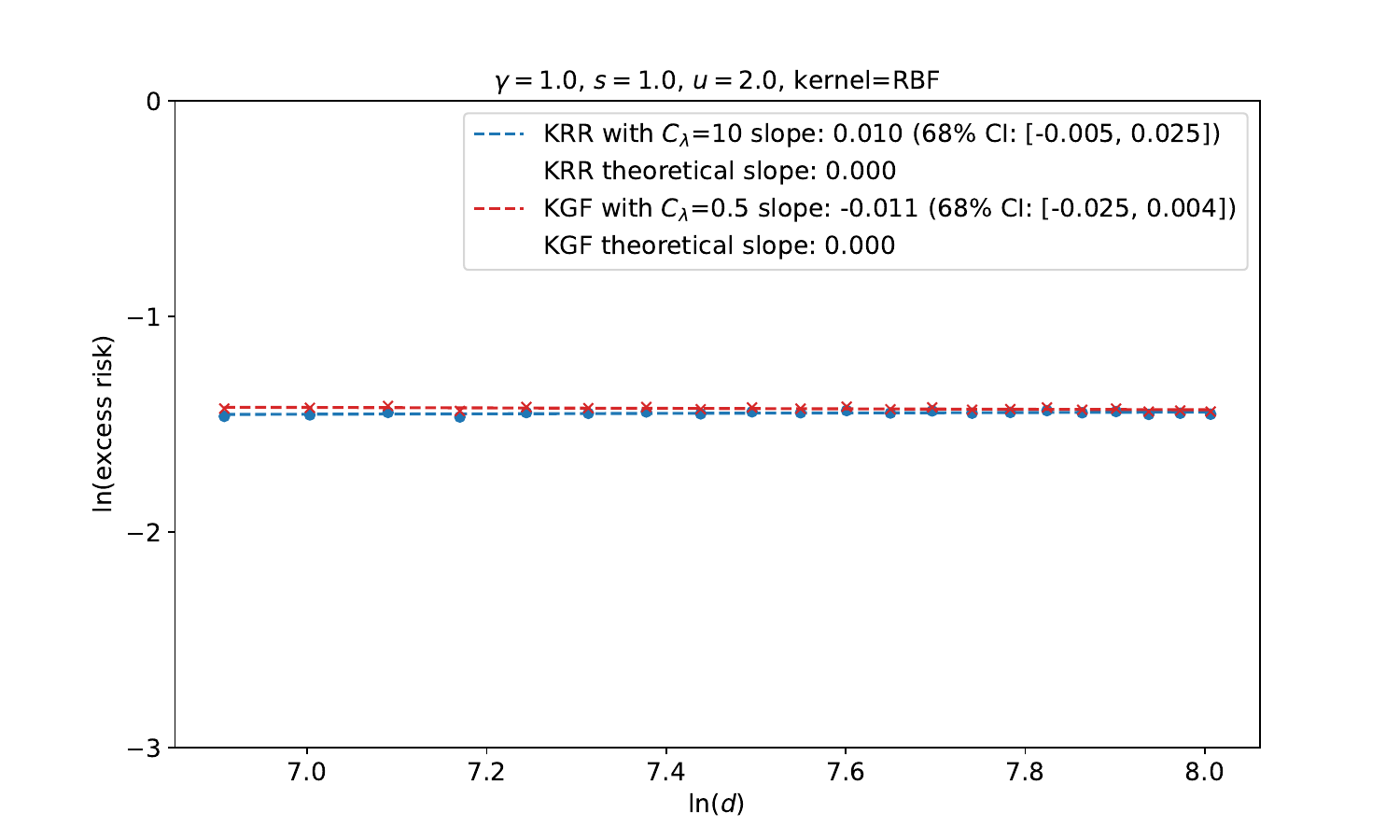}}

\caption{
Type 1 experiments with $(\gamma, s, u) = (1.0, 1.0, 2.0)$. The setup and analysis are the same as in Figure 3. Results for additional $C_{\lambda}$ values are shown in Figure \ref{experiment_1_settings_3_all}.
}
\label{experiment_1_settings_3}
\end{figure}

\begin{figure}[!htbp]
\centering
\subfigure{\includegraphics[width=0.9\columnwidth]{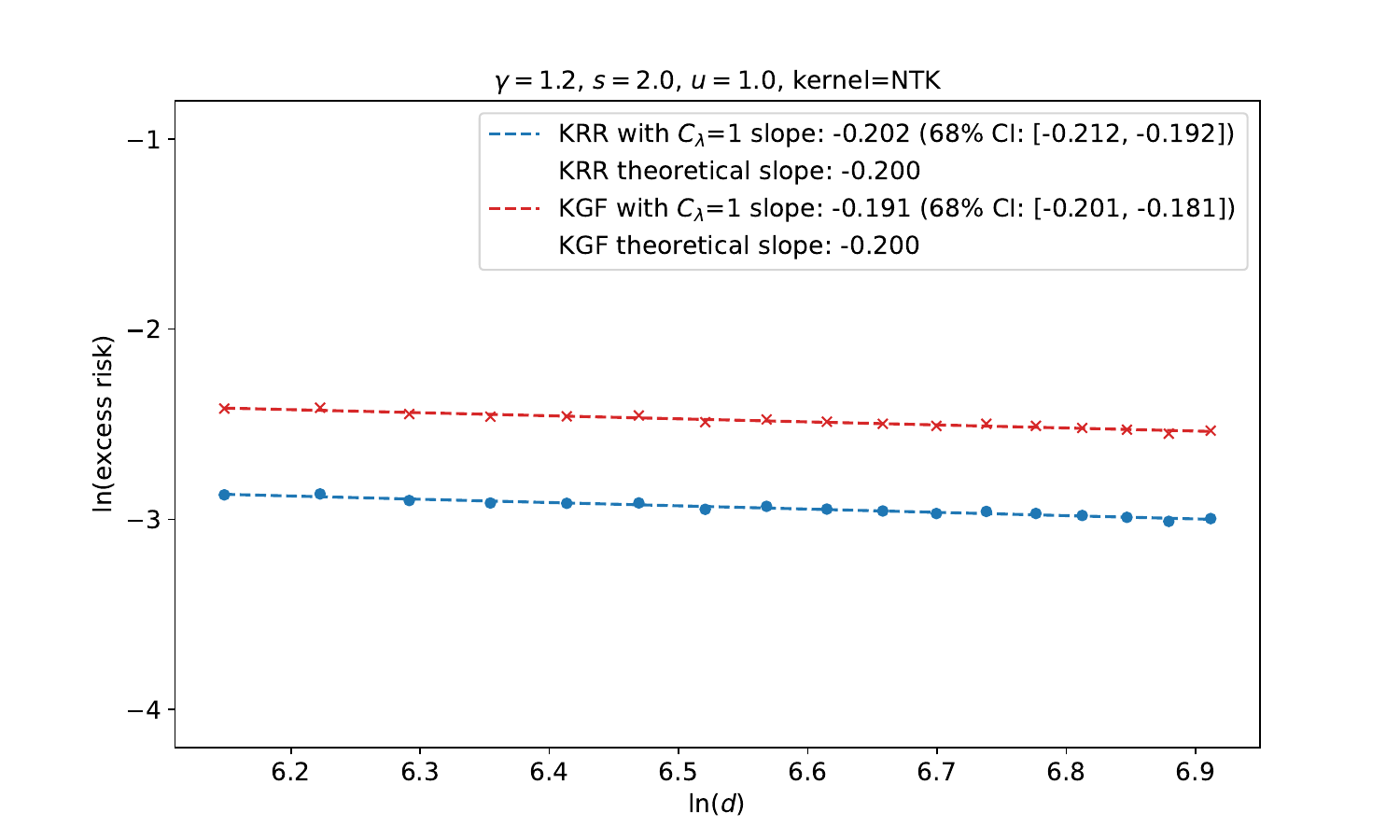}}

\vspace{-15pt}

\subfigure{\includegraphics[width=0.9\columnwidth]{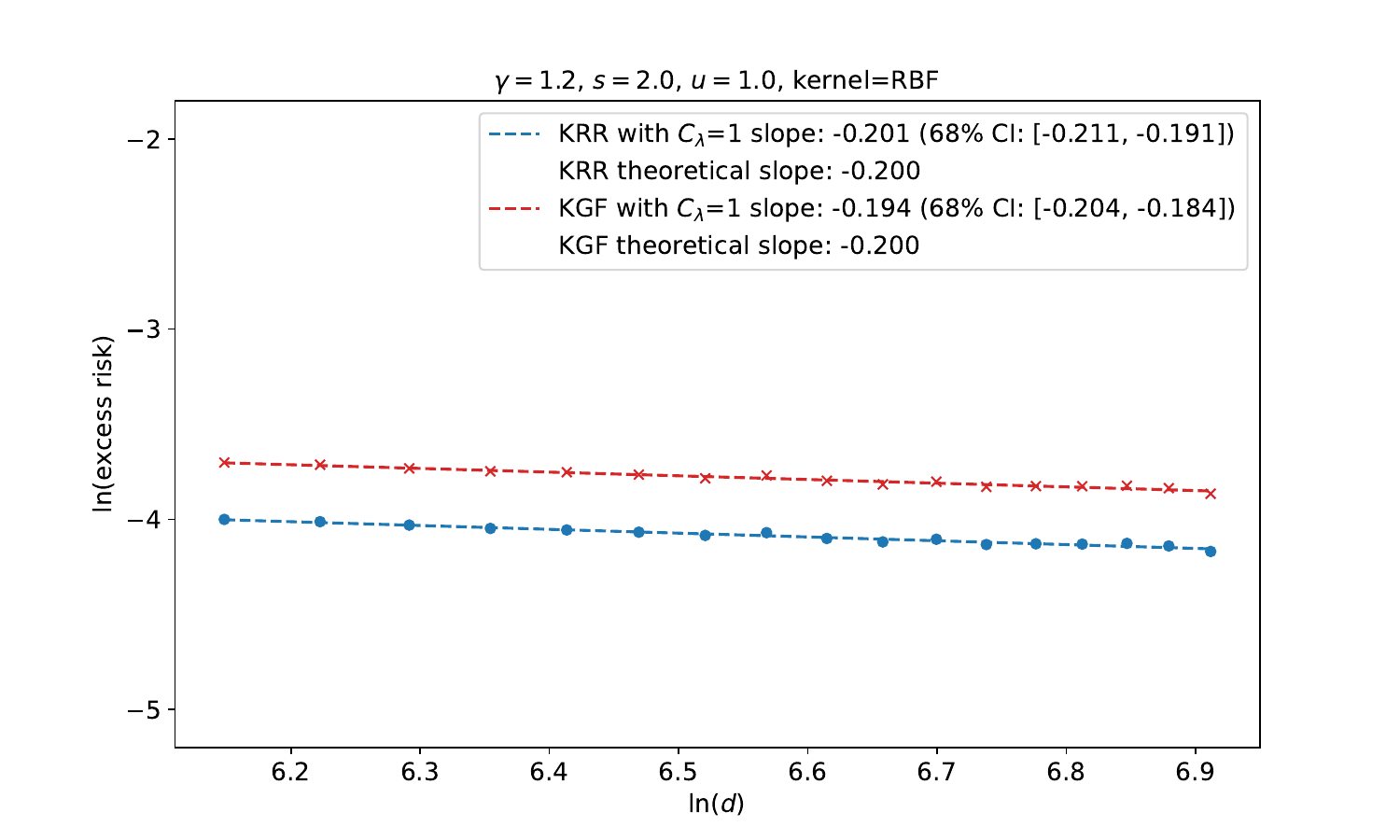}}

\caption{
Type 1 experiments with $(\gamma, s, u) = (1.2, 2.0, 1.0)$. The setup and analysis are the same as in Figure 3. Results for additional $C_{\lambda}$ values are shown in Figure \ref{experiment_1_settings_4_all}.
}
\label{experiment_1_settings_4}
\end{figure}

\begin{figure}
\centering
\subfigure{\includegraphics[width=0.9\columnwidth]{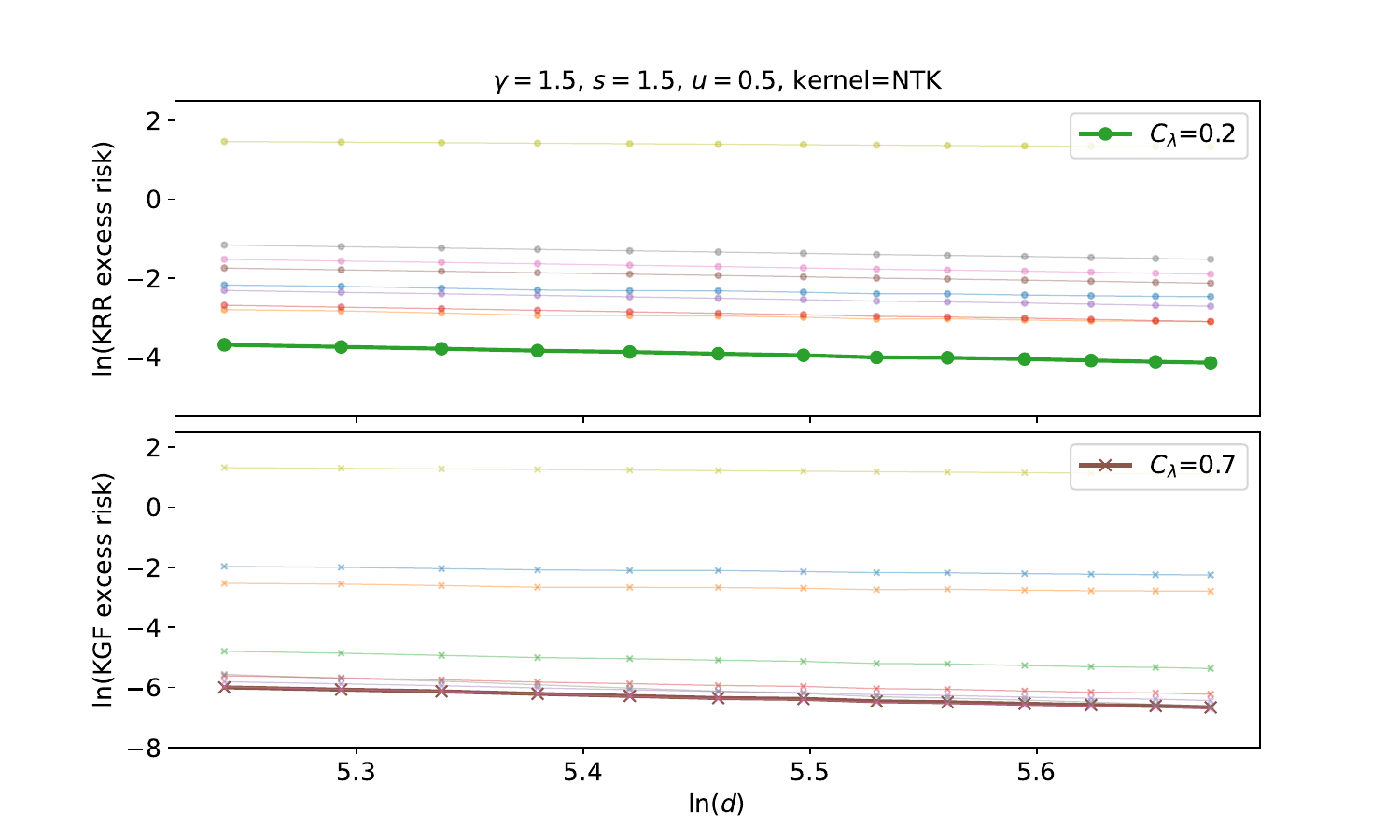}}

\vspace{-15pt}

\subfigure{\includegraphics[width=0.9\columnwidth]{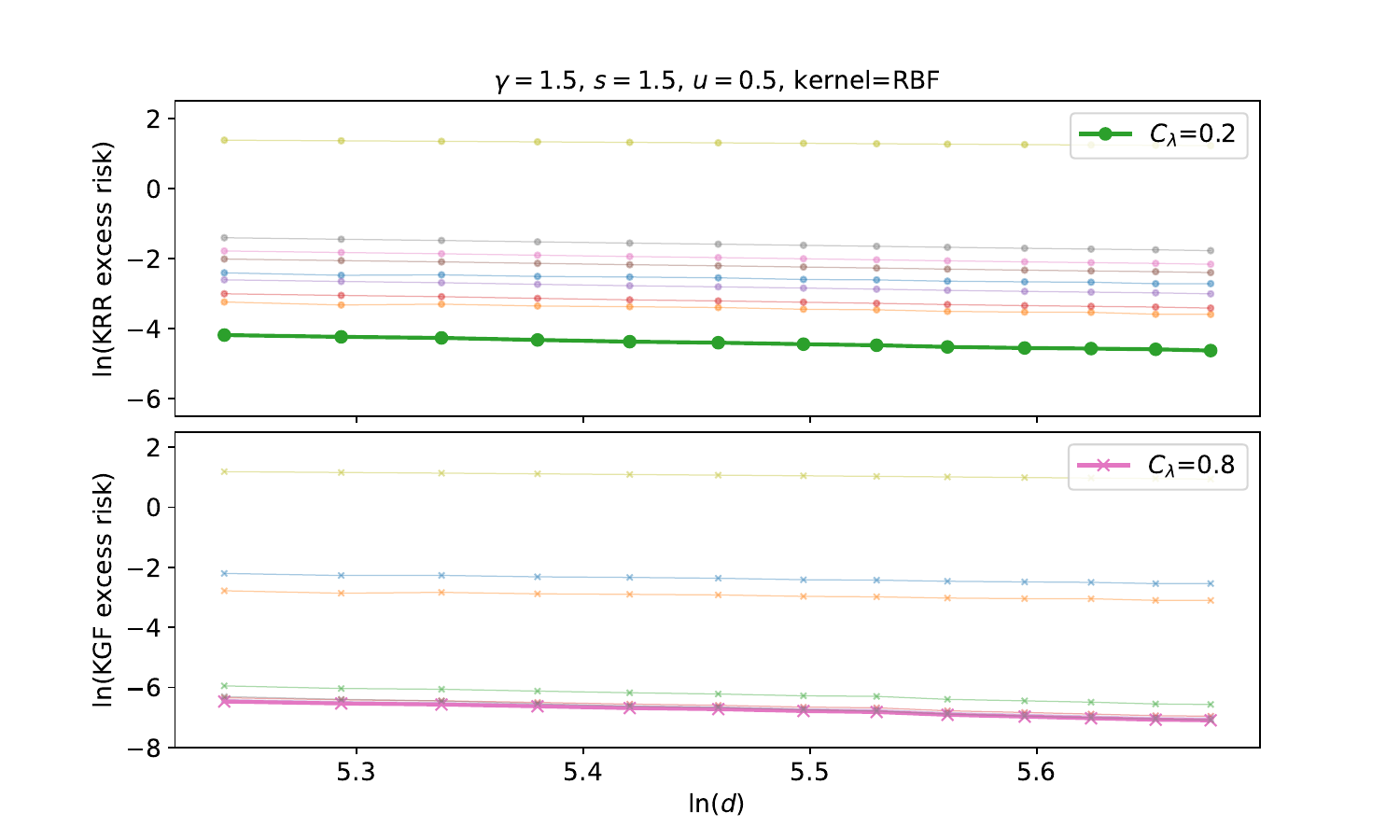}}

\caption{Complete results for Type 1 experiments with $(\gamma, s, u) = (1.5, 1.5, 0.5)$.}
\label{experiment_1_settings_1_all}
\end{figure}

\begin{figure}
\centering
\subfigure{\includegraphics[width=0.9\columnwidth]{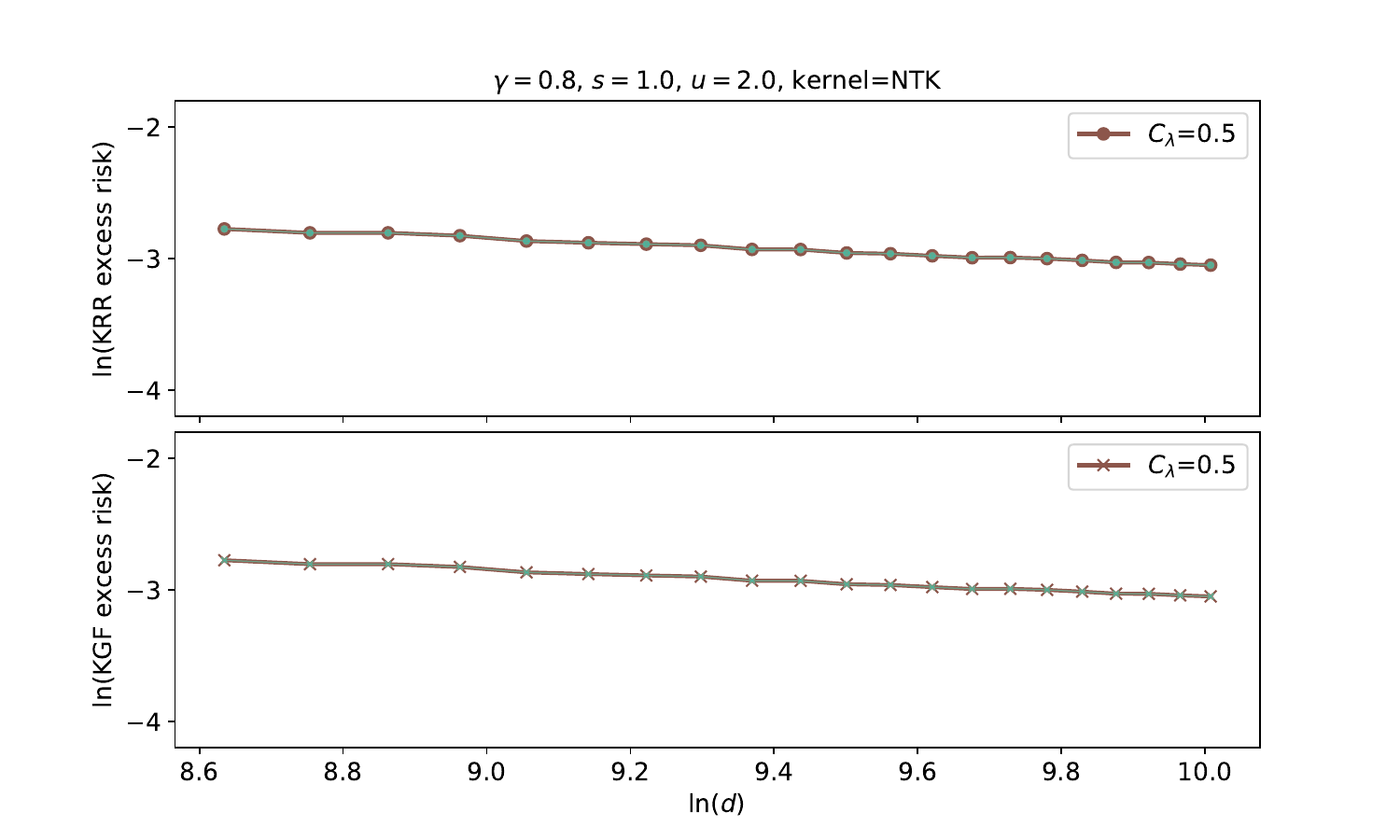}}

\vspace{-15pt}

\subfigure{\includegraphics[width=0.9\columnwidth]{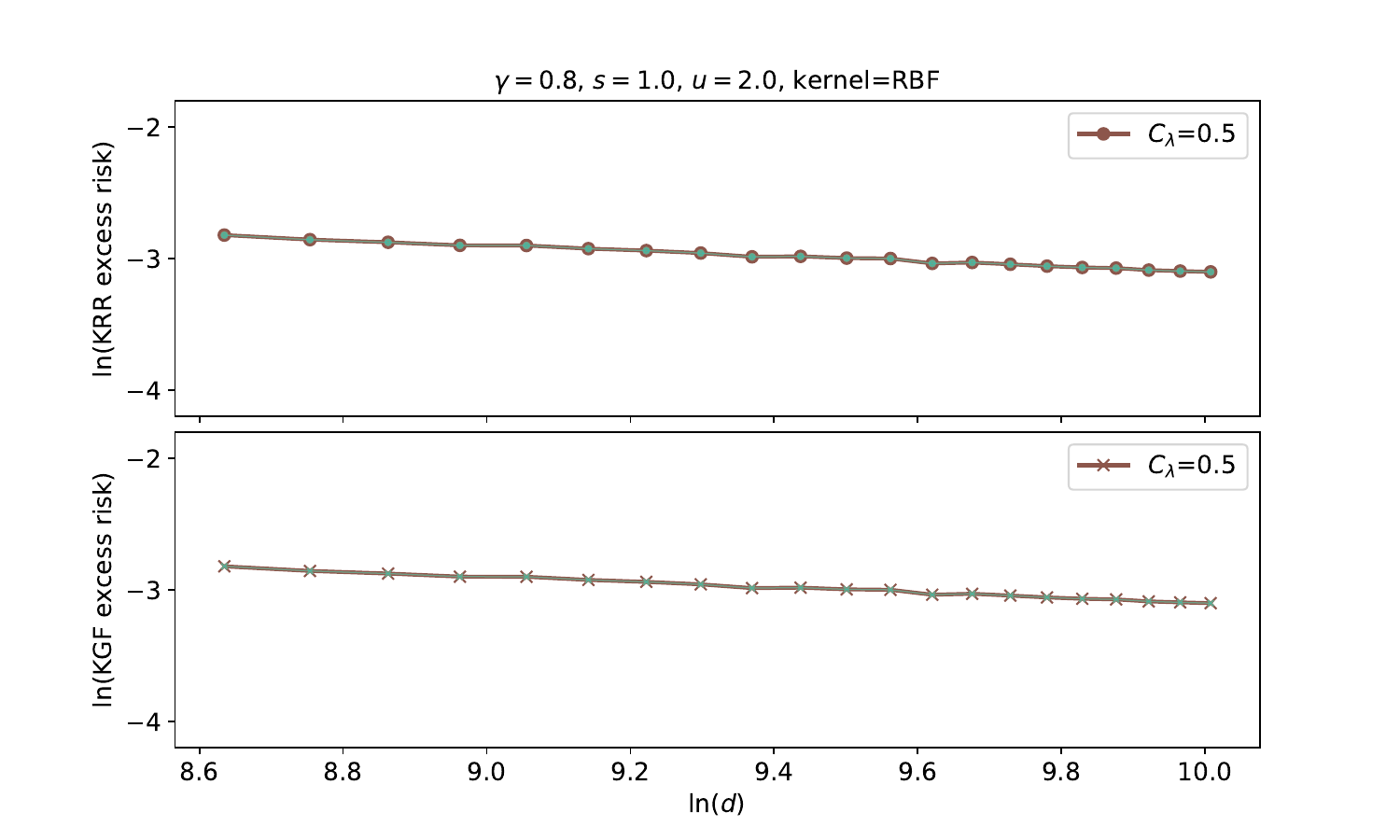}}

\caption{Complete results for Type 1 experiments with $(\gamma, s, u) = (0.8, 1.0, 2.0)$.}
\label{experiment_1_settings_2_all}
\end{figure}

\begin{figure}
\centering
\subfigure{\includegraphics[width=0.9\columnwidth]{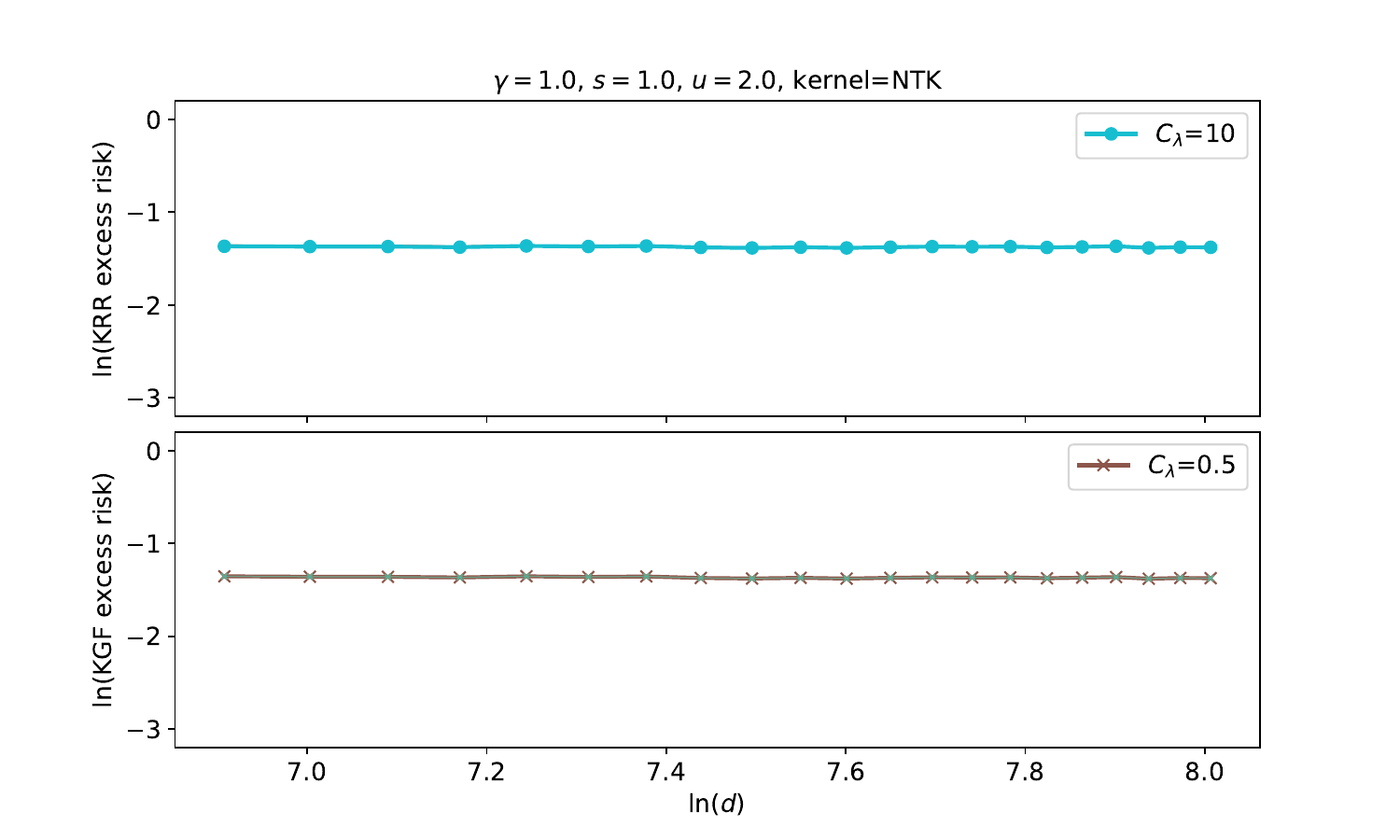}}

\vspace{-15pt}

\subfigure{\includegraphics[width=0.9\columnwidth]{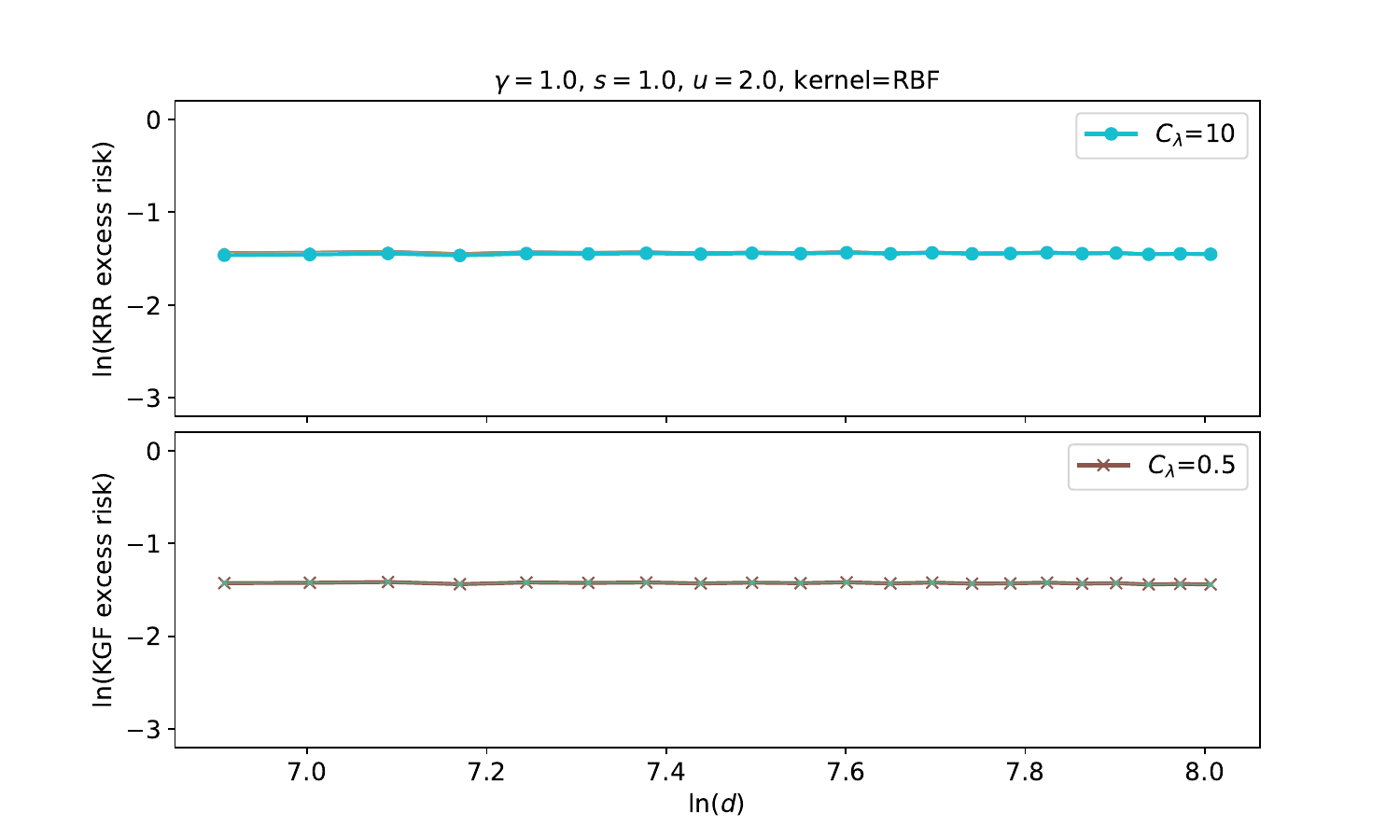}}

\caption{Complete results for Type 1 experiments with $(\gamma, s, u) = (1.0, 1.0, 2.0)$.}
\label{experiment_1_settings_3_all}
\end{figure}

\begin{figure}
\centering
\subfigure{\includegraphics[width=0.9\columnwidth]{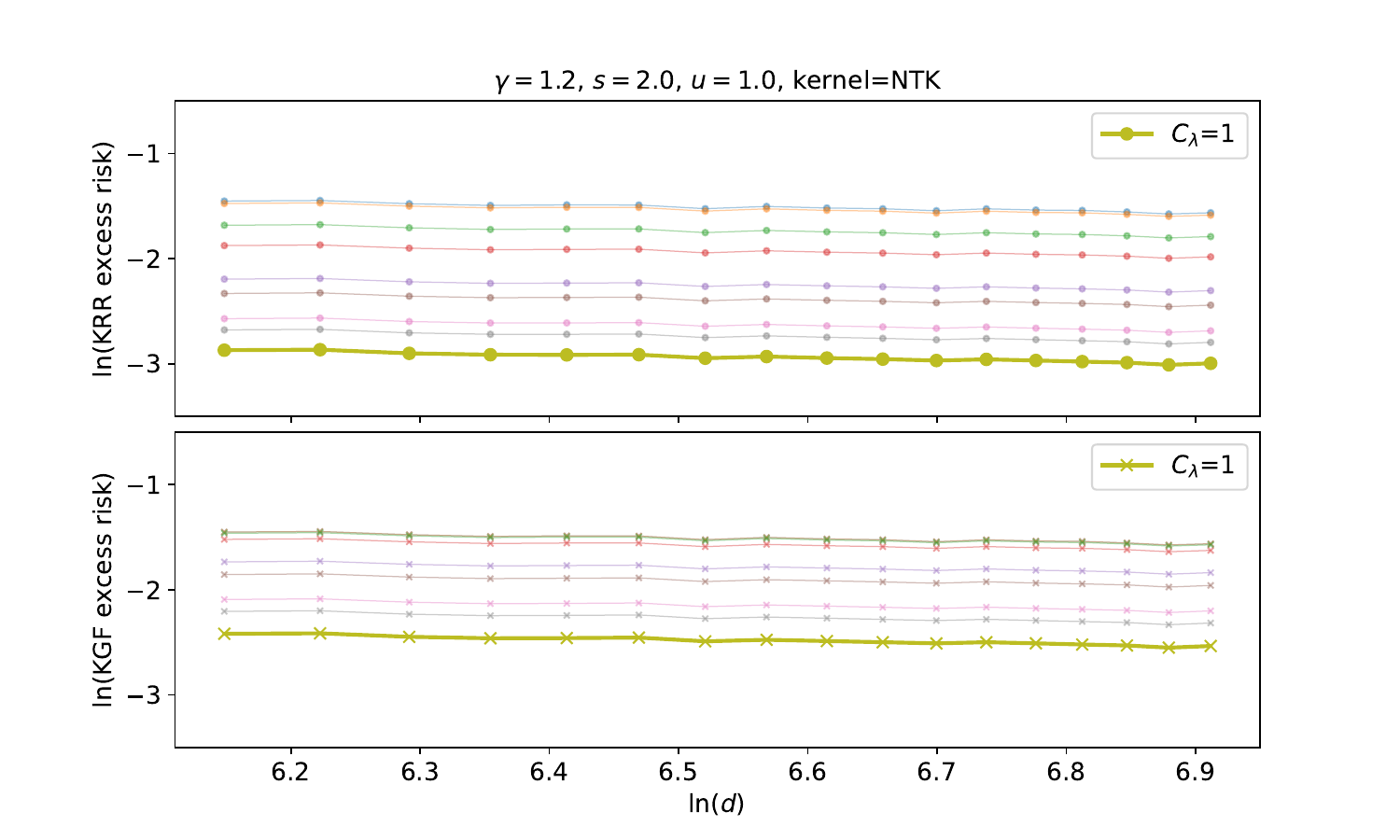}}

\vspace{-15pt}

\subfigure{\includegraphics[width=0.9\columnwidth]{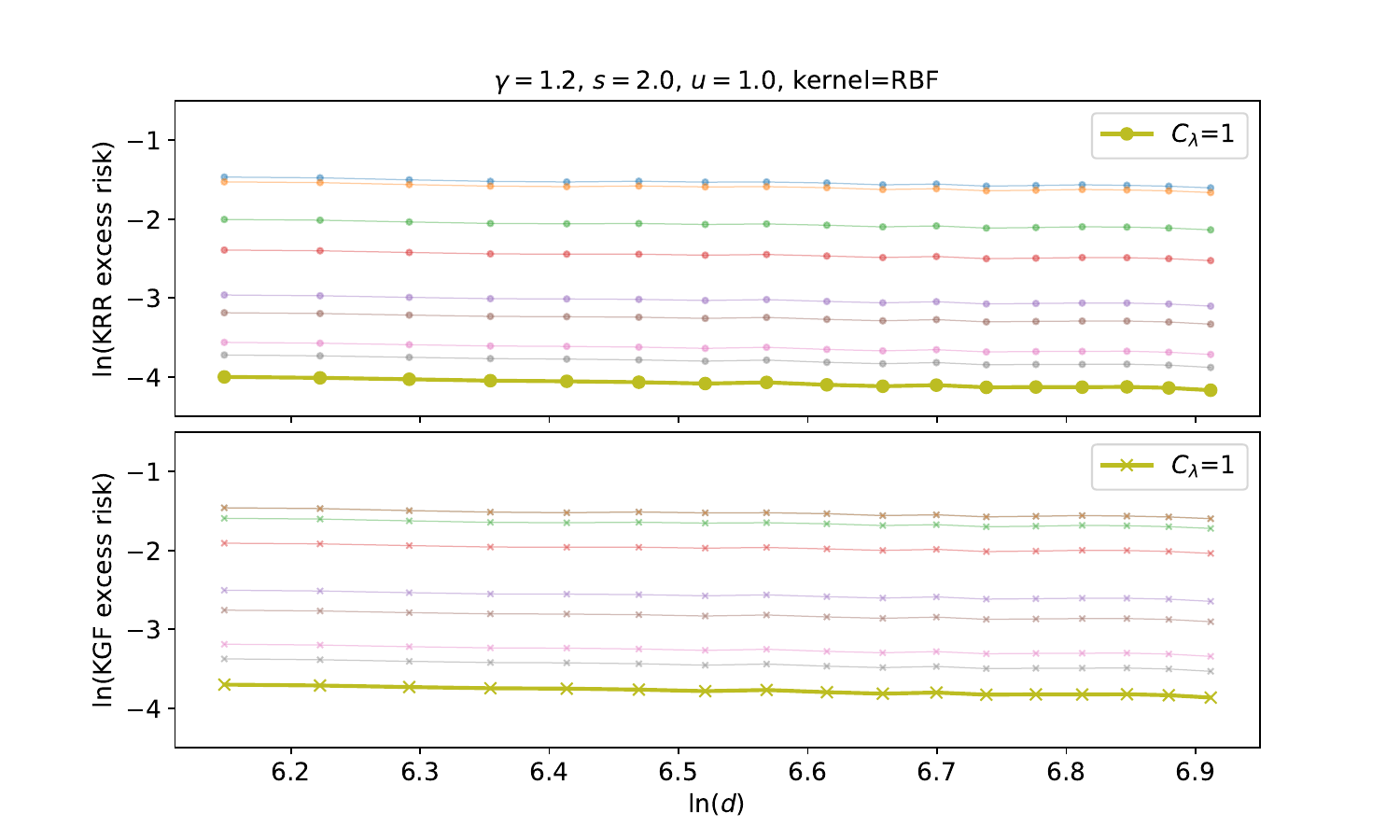}}

\caption{Complete results for Type 1 experiments with $(\gamma, s, u) = (1.2, 2.0, 1.0)$.}
\label{experiment_1_settings_4_all}
\end{figure}

\clearpage

\end{appendix}

\end{document}